\newtheorem{definition}{Definition}[section]
\newtheorem{theorem}{Theorem}[section]
\theoremstyle{remark}
\definecolor{cream}{RGB}{255,253,208}
\definecolor{verylightcream}{HTML}{FFFEF2}
\title{\huge\textbf{Dynamics and Computational Principles of Echo State Networks: A Mathematical Perspective}
}
\author[
]{%
    Pradeep Singh\,\orcidlink{0000-0002-5372-3355}\thanks{\texttt{pradeep.cs@sric.iitr.ac.in}}, 
     Ashutosh Kumar\,\orcidlink{0009-0005-8975-5032}\thanks{\texttt{a\_kumar1@cs.iitr.ac.in}},
     Sutirtha Ghosh\,\orcidlink{0009-0004-9780-0545}\thanks{\texttt{s\_ghosh@cs.iitr.ac.in}}\\
    Hrishit B P\,\orcidlink{0009-0006-7314-9553}\thanks{\texttt{h\textunderscore bp@cs.iitr.ac.in}} ,
    Balasubramanian Raman\,\orcidlink{0000-0001-6277-6267}\thanks{\texttt{bala@cs.iitr.ac.in}}
}
\affil[]{%
    \small Machine Intelligence Lab\\ 
    Department of Computer Science and Engineering\\
    
    Indian Institute of Technology Roorkee\\
    Roorkee-247667, India
}
\date{}
\begin{document}
\maketitle


\begin{abstract}
\noindent Reservoir computing (RC) represents a class of state-space models (SSMs) characterized by a fixed state transition mechanism (the reservoir) and a flexible readout layer that maps from the state space. It is a paradigm of computational dynamical systems that harnesses the transient dynamics of high-dimensional state spaces for efficient processing of temporal data. Rooted in concepts from recurrent neural networks, RC achieves exceptional computational power by decoupling the training of the dynamic reservoir from the linear readout layer, thereby circumventing the complexities of gradient-based optimization. This work presents a  systematic  exploration of RC, addressing its foundational properties such as the echo state property, fading memory, and reservoir capacity through the lens of dynamical systems theory. We formalize the interplay between input signals and reservoir states, demonstrating the conditions under which reservoirs exhibit stability and expressive power. Further, we delve into the computational trade-offs and robustness characteristics of RC architectures, extending the discussion to their applications in signal processing, time-series prediction, and control systems. The analysis is complemented by theoretical insights into optimization, training methodologies, and scalability, highlighting open challenges and potential directions for advancing the theoretical underpinnings of RC.

\end{abstract}

\tableofcontents

\section{Introduction}
Reservoir computing (RC) is a paradigm for efficient recurrent neural network (RNN) training that emerged in the early 2000s, offering a brain-inspired alternative to traditional backpropagation-through-time.
 It leverages the natural ability of a high-dimensional dynamical system, called the \textit{reservoir}, to project input signals into a rich feature space. The reservoir's internal dynamics encode a temporal representation of the input data, which can be used by a linear readout layer to solve a wide variety of tasks. This idea was independently introduced by Jaeger in 2001 as the Echo State Network (ESN) \cite{jaeger2001echo} 
 and by Maass et al. in 2002 as the Liquid State Machine (LSM) \cite{maass2002real}. Both approaches achieved surprisingly accurate results on complex temporal tasks (e.g. chaotic time series prediction and speech processing) without training the recurrent weights \cite{yan2024emerging}. The term “reservoir computing” was later coined to unify these concepts by Verstraeten et al. (2007) \cite{Verstraeten2007}.
Both models exploit the inherent properties of the reservoir to achieve computational efficiency and simplicity, avoiding the complexities associated with training traditional RNNs.

\begin{figure}[!ht]
    \centering
    \includegraphics[width=\linewidth]{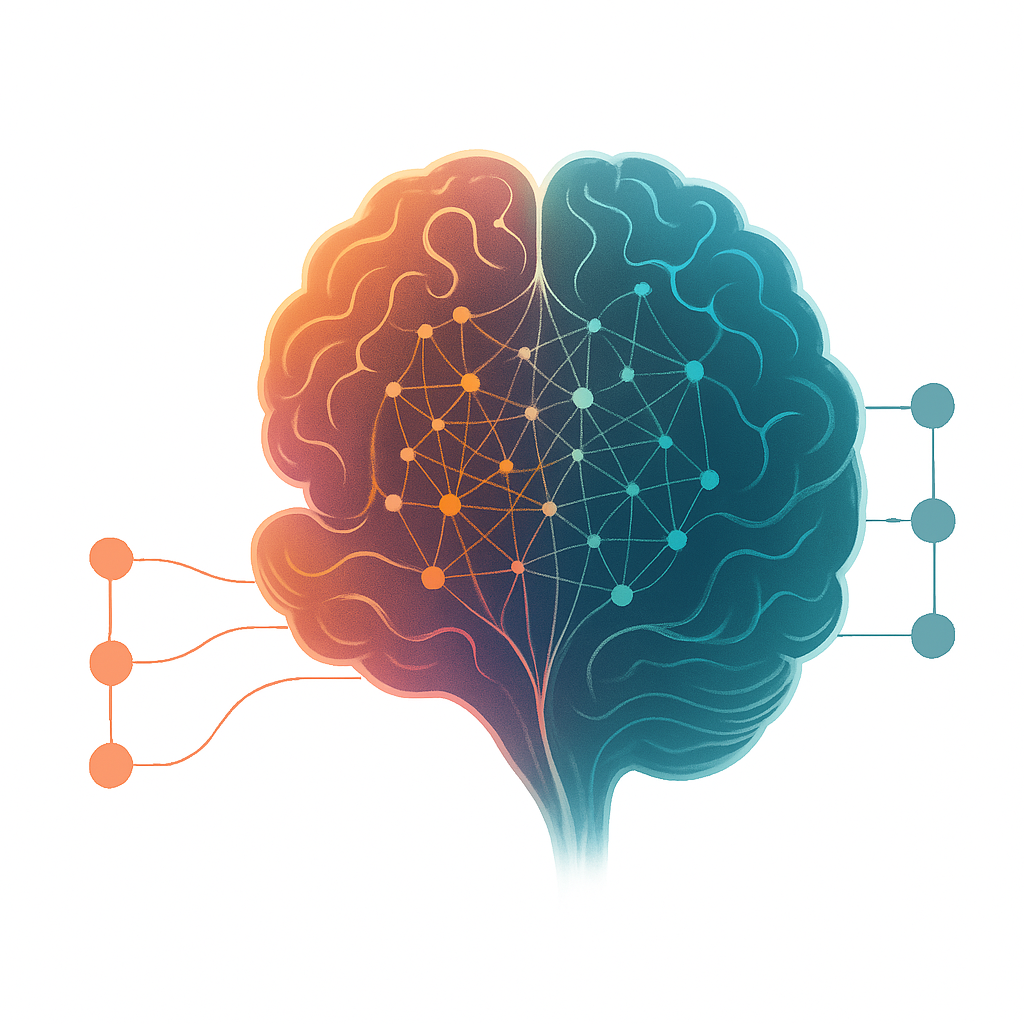}
    \caption{A brain-inspired Echo State Network, showing input nodes (left), a recurrent nonlinear reservoir (center), and output nodes (right). The architecture reflects cortical microcircuitry, enabling high-dimensional transient dynamics for temporal processing.}
    \label{fig:brain_esn}
\end{figure}

The motivation for this paper lies in the theoretical richness of reservoir computing and its wide range of applications, from time-series prediction \cite{lukovsevivcius2009reservoir} to robotics and neuroscience \cite{tanaka2019recent}. Jaeger et al. demonstrated its capability to predict chaotic systems and optimize energy usage in communication networks \cite{jaeger2004harnessing}. Similarly, Maass et al. highlighted its potential for real-time computation by exploring its neural basis \cite{maass2002real}. Over the years, RC has been successfully applied to time-series prediction \cite{lukovsevivcius2009reservoir}, signal processing \cite{Verstraeten2007}, and even physical systems modeling \cite{tanaka2019recent}. While the practical benefits of RC have been extensively documented, there remains a gap in the literature concerning the  mathematical foundations underpinning its success. These applications illustrate the versatility of RC, yet its theoretical principles—such as the Echo State Property (ESP) \cite{yildiz2012re} and fading memory \cite{schrauwen2007overview}—demand deeper exploration to fully understand the conditions under which RC systems perform optimally. For instance, the conditions that ensure the ESP or the trade-offs between memory capacity and computational power are not fully resolved \cite{gallicchio2017deep, lukovsevivcius2009reservoir}. Additionally, the interplay between the reservoir's dynamical properties and task-specific requirements is an area of active research.

The objective of this paper is threefold: first, to establish a  mathematical framework for analyzing reservoir dynamics; second, to explore the key properties and capacities of RC models—such as memory and computational power—from a theoretical perspective; and third, to review the advances and open problems in the field. In pursuit of these goals, we also organize and benchmark a comprehensive set of baseline reservoir computing methods, bringing together popular variants (including cycle reservoirs, random orthogonal ESNs, and minimum-complexity interaction ESNs) in a unified experimental platform.

This paper is structured as follows: we begin by tracing the historical evolution of reservoir computing in \S~\ref{sec:history}, followed by a detailed exposition of its mathematical foundations, including dynamical systems theory and the formal modeling of reservoirs. We then discuss the architectural variations and properties of RC models in \S~\ref{sec:archi}, leading to an analysis of their optimization in \S~\ref{sec:analysis}. Next, we present training methodologies alongside a comprehensive set of baseline experiments in \S~\ref{sec:training} to compare and benchmark key reservoir variants. Applications of RC are explored in \S~\ref{sec:applications}, following which we address challenges and open questions—including potential future directions—and conclude in \S~\ref{sec:conclusion}.

\section{Historical Context, Evolution and Background}\label{sec:history}

The key insight behind reservoir computing is that a randomly connected nonlinear dynamical system can serve as a rich “computational substrate,” whose driven responses to inputs can be linearly combined to perform useful computations \cite{yan2024emerging}. Early precursors of this idea appeared in the 1990s in neural modeling studies. For example, Dominey et al. described cortico-striatal circuits where fixed recurrent networks provided temporal context for learning sequence tasks in language \cite{dominey1995corticostriatal}. However, RC as a unified framework began with two landmark papers: Jaeger's 2001 technical report introducing the Echo State Network \cite{jaeger2001echo}, and Maass et al.’s 2002 paper introducing the Liquid State Machine \cite{maass2002real}. Jaeger demonstrated that a large random RNN, if it satisfies the Echo State Property, can be used to model dynamical systems by training only the output weights. Around the same time, Maass and colleagues, motivated by cortical microcircuits, showed that a network of random spiking neurons (an LSM) could serve as a “liquid filter” whose instantaneous spiking activity (the liquid state) could be read out to perform complex temporal computations \cite{yan2024emerging}. 
Both approaches share the fundamental idea of using a fixed, high-dimensional dynamical system—the reservoir—to transform input signals into a space where they can be linearly separable. This separation allows the downstream task to be solved efficiently with a simple linear readout layer, bypassing the complexities of training the recurrent connections within the network.  These works revealed that RNNs often “work well enough” even without training all weights– a radical departure from fully-trained recurrent networks. In the following years, researchers unified the theory, showing that any fading-memory dynamical system can in principle be approximated by a reservoir with a suitable readout, giving RC a firm theoretical footing \cite{lukovsevivcius2009reservoir}. By 2007, the term reservoir computing was adopted to encompass ESNs, LSMs, and similar approaches.

ESNs introduced the concept of the ``echo state property”, which ensures that the influence of the initial state of the reservoir fades over time, thereby stabilizing the reservoir dynamics. Jaeger demonstrated that by initializing the recurrent connections of the reservoir randomly and carefully scaling the spectral radius of the weight matrix, the system could exhibit rich temporal dynamics while maintaining stability \cite{jaeger2001echo}. This property allowed ESNs to perform tasks such as time-series prediction and chaotic system modeling with remarkable efficiency \cite{jaeger2004harnessing}. Concurrently, LSMs were proposed as biologically inspired models that employed spiking neural networks as reservoirs. LSMs captured the real-time dynamics of spiking activity in the reservoir, emphasizing their application to temporal processing tasks such as speech recognition and motor control \cite{maass2002real}.

The early 2000s saw a surge in interest in RC, fueled by its ability to perform computations using the inherent dynamics of the reservoir. Researchers began to systematically investigate the mathematical properties of reservoirs, including their memory capacity, expressiveness, and generalization abilities \cite{lukovsevivcius2009reservoir}. The work by Lukoševičius and Jaeger highlighted the computational efficiency of RC and its practical advantages over fully trainable RNNs. This period also marked the development of theoretical tools to better understand the ESP and its implications for stability and learning \cite{yildiz2012re}.
Over the years, RC has undergone significant evolution. The experimental unification of RC methods by Verstraeten et al. \cite{Verstraeten2007} provided a comparative analysis of various reservoir designs, establishing benchmarks for performance across diverse tasks. Advances in physical reservoir computing have further expanded the scope of RC by implementing reservoirs in hardware systems such as optical and memristive devices \cite{tanaka2019recent}. These developments have not only enhanced the computational efficiency of RC but have also opened new avenues for energy-efficient computing.
Key milestones in RC include the formalization of the fading memory property, the introduction of hybrid reservoir architectures, and the development of deep reservoir computing models. The fading memory property, analyzed by Schrauwen et al. \cite{schrauwen2007overview}, established a mathematical framework for understanding how reservoirs retain and process temporal information. Hybrid architectures that combine reservoirs with feedforward and convolutional layers have been proposed to enhance feature extraction and scalability \cite{gallicchio2017deep}. Moreover, deep reservoir computing has emerged as a promising approach to leveraging the hierarchical representation capabilities of deep networks within the RC framework.

Deep reservoir computing extends the principles of traditional reservoir computing into a hierarchical framework, enabling the extraction of progressively abstract and expressive features from input data. By stacking multiple reservoir layers, deep reservoir architectures aim to leverage the compositional nature of representations, similar to deep neural networks, while retaining the computational advantages of untrained reservoirs. Each layer in a deep reservoir receives inputs from the preceding layer, effectively forming a multi-scale temporal representation of the data. Gallicchio et al. \cite{gallicchio2017deep} provided a  analysis of deep reservoir computing, demonstrating its ability to model complex temporal dependencies and improve generalization across tasks. The introduction of multiple layers enhances the reservoir's capacity to capture both short- and long-term dependencies by combining the dynamics of individual reservoirs. Despite these advancements, the theoretical understanding of stability and expressiveness in deep reservoirs remains a subject of active research. Challenges such as the preservation of ESP across layers and the computational trade-offs associated with deeper architectures are critical areas for further investigation. The hierarchical structure of deep reservoirs positions them as a powerful tool for tasks requiring complex temporal feature extraction, bridging the gap between shallow reservoir models and fully trainable deep networks.

\subsection{Dynamical Systems Theory} \label{subsec:dynamicalsys}

Dynamical systems theory provides a rigorous framework for analyzing how systems evolve over time under deterministic rules \cite{Hirsch2013,Katok1995,Poincare1890}. Formally, a dynamical system is given by a pair \((\mathcal{M}, \phi_{t})\), where \(\mathcal{M}\) is a manifold representing the possible states of the system, and \(\phi_{t}: \mathcal{M} \to \mathcal{M}\) is a family of evolution maps parametrized by the continuous-time variable \(t \in \mathbb{R}\) or the discrete-time variable \(t \in \mathbb{Z}\). The evolution of a state \(\mathbf{x}_0 \in \mathcal{M}\) over time is described by \(\mathbf{x}(t) = \phi_{t}(\mathbf{x}_0)\). In continuous time, one often expresses this in the form \(\dot{\mathbf{x}}(t) = \mathbf{f}(\mathbf{x}(t))\), while in discrete time one writes \(\mathbf{x}(t+1) = \mathbf{f}(\mathbf{x}(t))\).

Among the simplest classes of dynamical systems are linear time-invariant (LTI) systems. A continuous-time LTI system on \(\mathbb{R}^n\) can be written as:
\begin{equation}
    \dot{\mathbf{x}}(t) = A\,\mathbf{x}(t),
\end{equation}
where \(A \in \mathbb{R}^{n \times n}\) is a constant matrix. 
The solution is given by \(\mathbf{x}(t) = e^{A\, t} \mathbf{x}_0\). In such linear systems, stability is closely tied to the eigenvalues of \(A\).
If all eigenvalues of \(A\) have negative real parts, trajectories converge to the origin; if at least one eigenvalue has a positive real part, trajectories diverge. Linear systems admit closed-form solutions, and their behavior is relatively straightforward to predict and classify using classical tools such as diagonalization and the Jordan canonical form \cite{coddington1955theory, kailath1980linear, zabczyk2008mathematical}. However, most real-world phenomena exhibit nonlinearity, leading to richer and often more complex dynamics. A canonical example is the Lorenz system \cite{lorenz1963deterministic, sparrow1982lorenz, tucker1999rigorous}:
\begin{equation}
\begin{aligned}
    \dot{x} &= \sigma (y - x), \\
    \dot{y} &= x (\rho - z) - y, \\
    \dot{z} &= x y - \beta z,
\end{aligned}
\end{equation}
where parameters \(\sigma\), \(\rho\), and \(\beta\) can induce chaotic behavior characterized by sensitivity to initial conditions and intricate strange attractors as depicted in Figure \ref{fig:lorenz}.
Further examples of chaotic nonlinear systems include the Rössler system \cite{rossler1976equation} (cf. Figure \ref{fig:rossler}), 
\begin{equation}
\begin{aligned}
    \dot{x} &= - (y + z), \\
    \dot{y} &= x + a\,y, \\
    \dot{z} &= b + z(x - c),
\end{aligned}
\end{equation}
the Chen system \cite{chen1999yet} (cf. Figure \ref{fig:chen}),
\begin{equation}
\begin{aligned}
    \dot{x} &= a(y - x), \\
    \dot{y} &= (c+a)x - xz + c\,y, \\
    \dot{z} &= x\,y - b\,z,
\end{aligned}
\end{equation}
and the Chua’s circuit \cite{chua1986chaos} (cf. Figure \ref{fig:chua}), often studied in the form
\begin{equation}
\begin{aligned}
    \dot{x} &= \alpha \bigl( y - x - f(x) \bigr), \\
    \dot{y} &= x - y + z, \\
    \dot{z} &= -\beta\, y,
\end{aligned}
\end{equation}
where \(f(x)\) is typically a piecewise-linear function defining the nonlinear resistor characteristic and is given by:
   
\begin{equation}
      f(x) = m1\cdot x + 0.5\ (m0 - m1)\ (|x+1| - |x-1|)
\end{equation}

Each of these systems can display periodic orbits, quasiperiodic motion, or chaos, depending on their parameter regimes.

\begin{figure}[!ht]
    \centering
    \begin{subfigure}[b]{0.49\textwidth}
        \centering
        \includegraphics[width=0.99\linewidth]{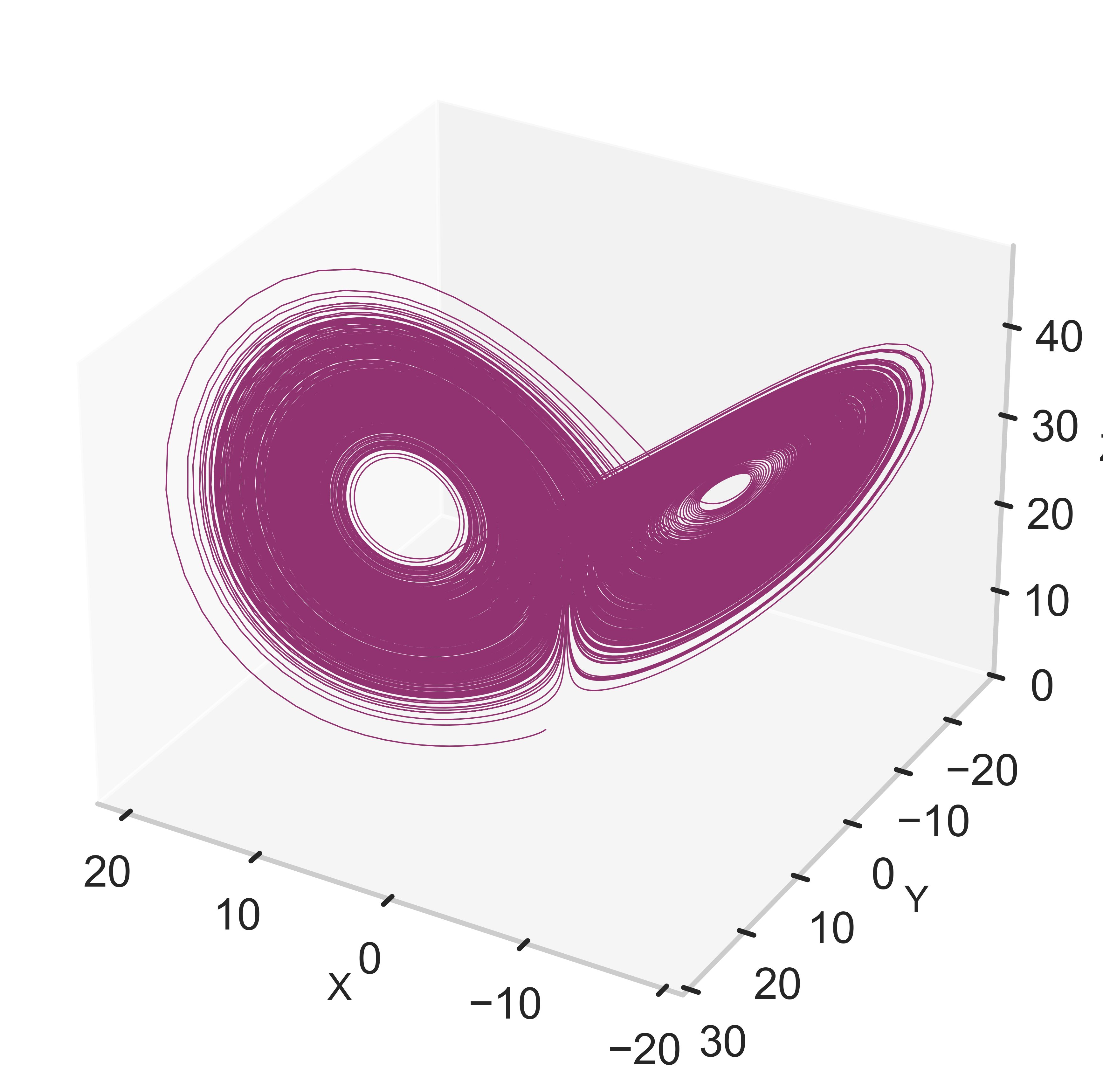}
        \caption{Lorenz attractor with butterfly-like geometry, highlighting sensitivity to initial conditions. Parameters: \(\sigma = 10\), \(\rho = 28\), \(\beta = 8/3\).}
        \label{fig:lorenz}
    \end{subfigure}
    \hfill
    \begin{subfigure}[b]{0.49\textwidth}
        \centering
        \includegraphics[width=0.99\linewidth]{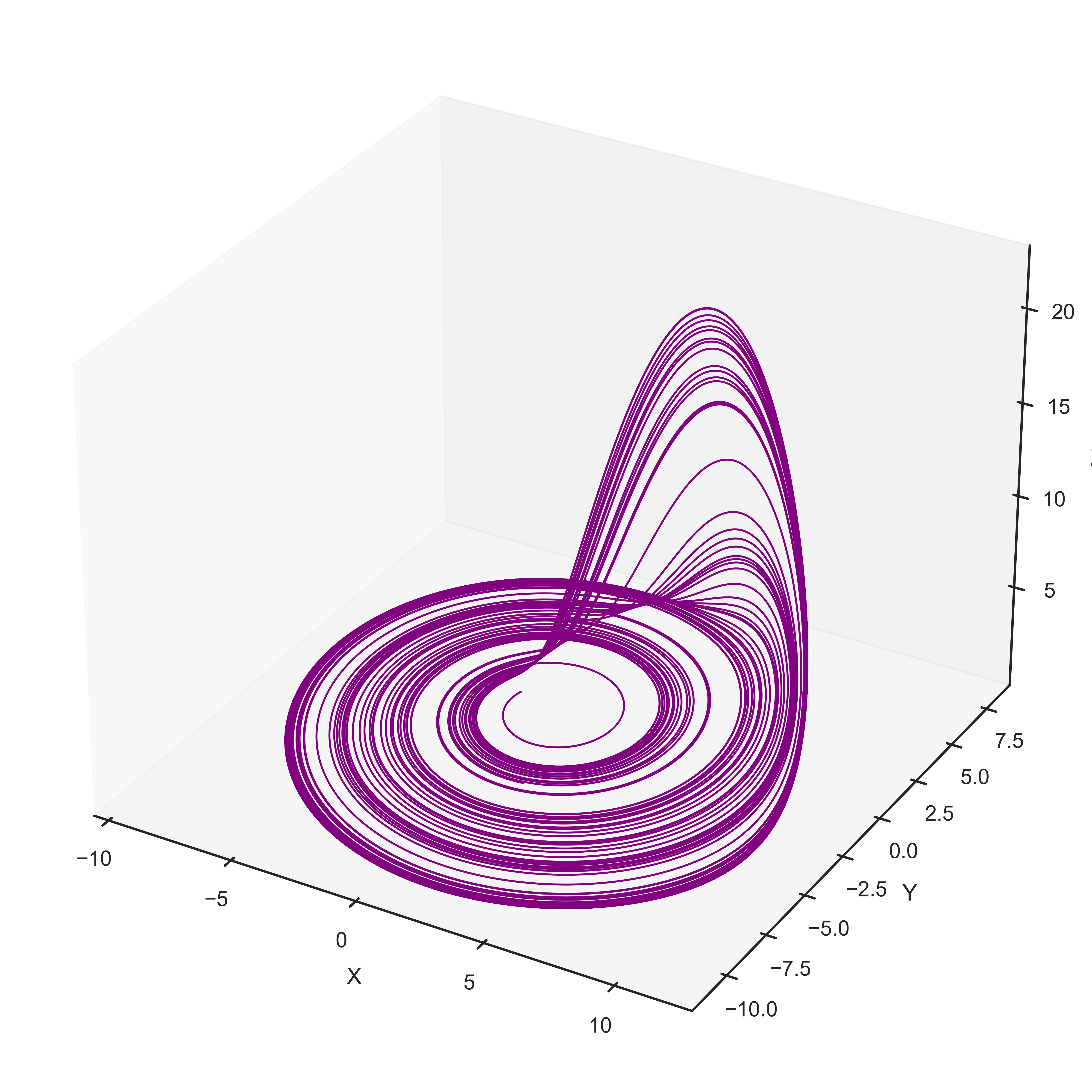}
        \caption{Rössler attractor with spiral dynamics and simpler topology. Parameters: \(a = 0.2\), \(b = 0.2\), \(c = 5.7\).}
        \label{fig:rossler}
    \end{subfigure}

    \vspace{0.5em} 

    \begin{subfigure}[b]{0.49\textwidth}
        \centering
        \includegraphics[width=0.99\linewidth]{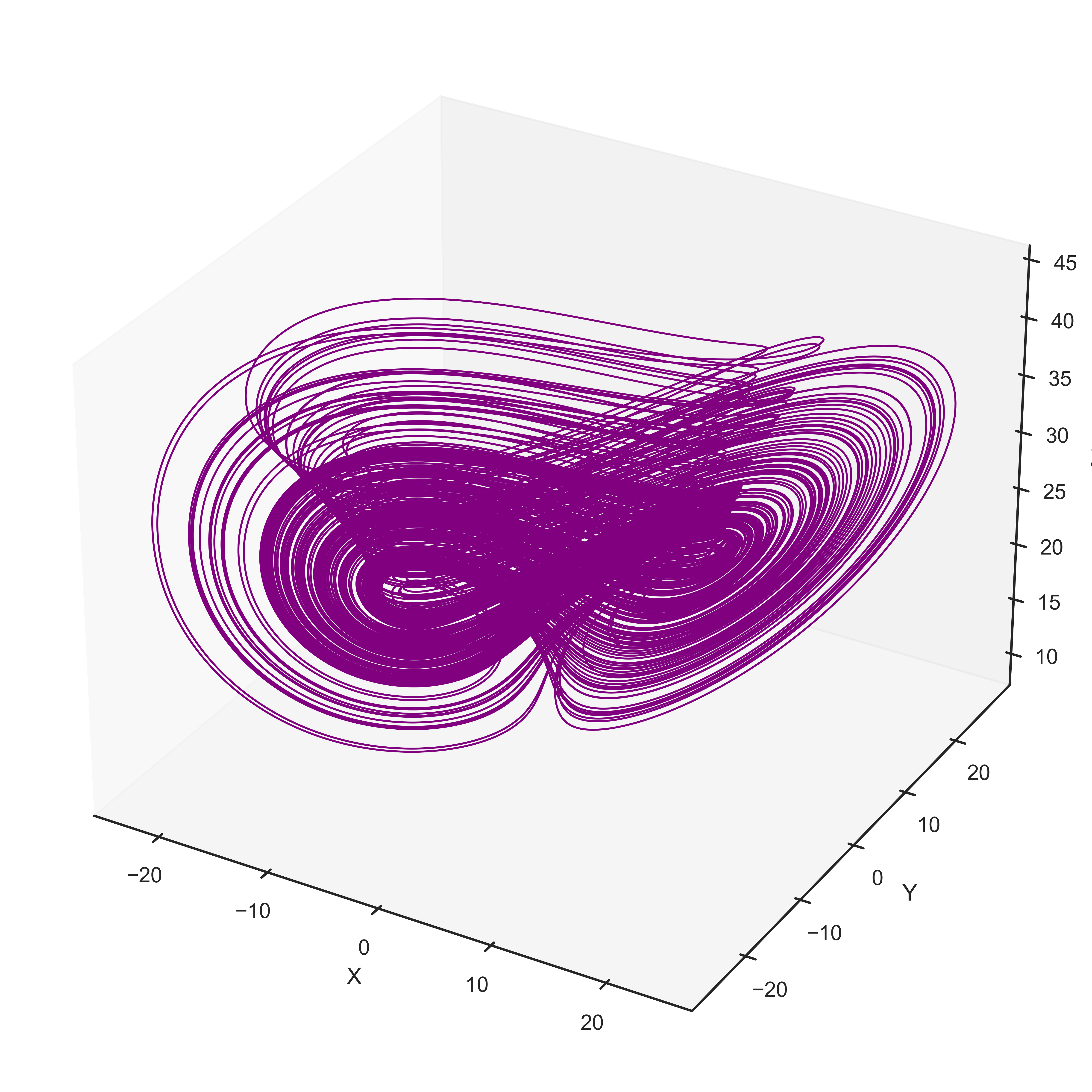}
        \caption{Chen attractor resembles Lorenz but shows distinct structure. Parameters: \(a = 35\), \(b = 3\), \(c = 28\).}
        \label{fig:chen}
    \end{subfigure}
    \hfill
    \begin{subfigure}[b]{0.49\textwidth}
        \centering
        \includegraphics[width=0.99\linewidth]{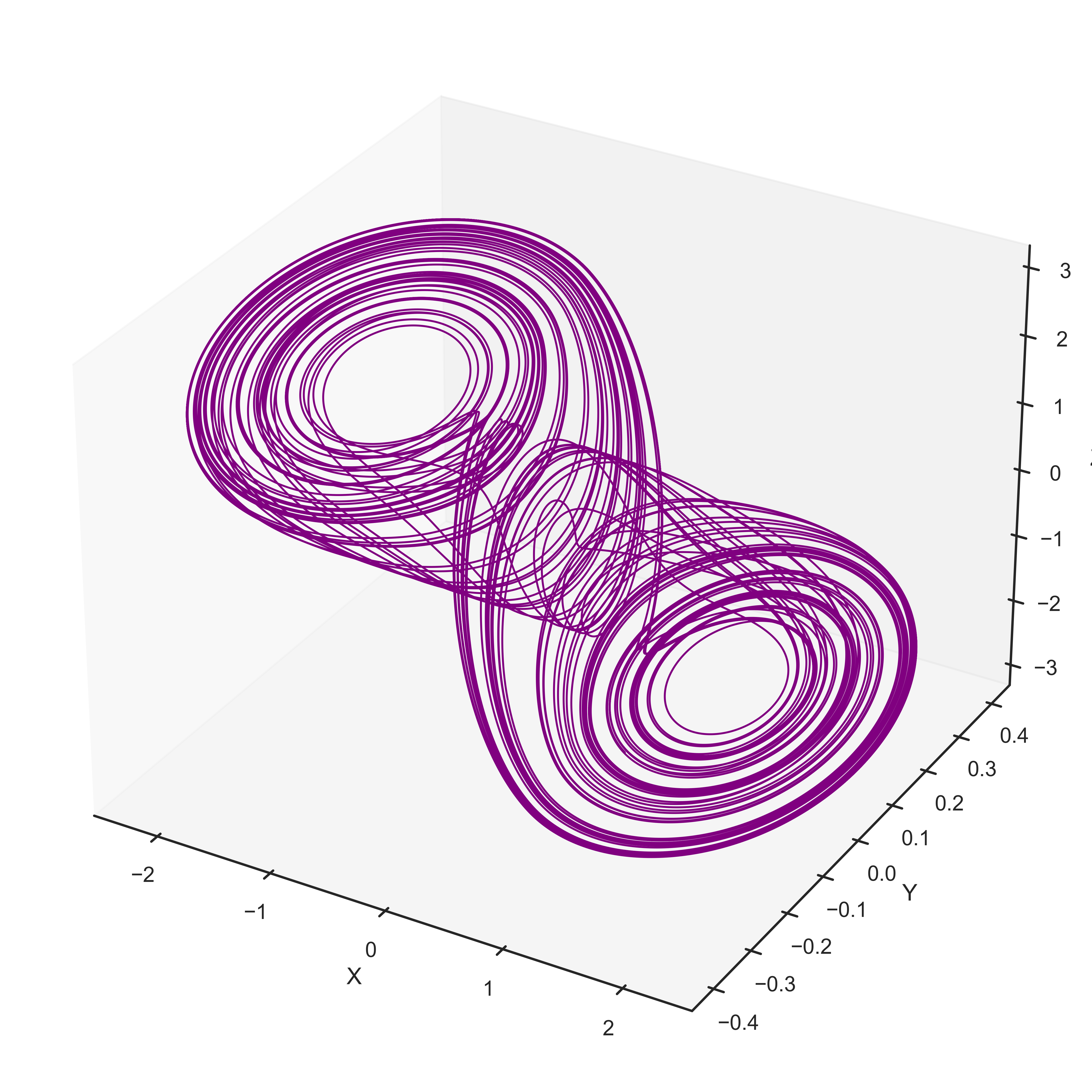}
        \caption{Chua attractor from an electronic circuit, with characteristic double-scroll shape. Parameters: \(\alpha = 15.6\), \(\beta = 28\).}
        \label{fig:chua}
    \end{subfigure}

    \vspace{0.5em} 


    \caption{Comparison of four canonical chaotic attractors—Lorenz, Rössler, Chen, and Chua—illustrating their distinct geometries and dynamics.}
    \label{fig:chaos_grid}
\end{figure}

Time-delay systems constitute yet another important class of nonlinear dynamical models. A classic example is the Mackey–Glass equation \cite{mackey1977oscillation},
\begin{equation}
    \frac{d\,x(t)}{d\,t} \;=\; \beta \,\frac{x\bigl(t-\tau\bigr)}{1 + x\bigl(t-\tau\bigr)^n} \;-\; \gamma \,x(t),
\end{equation}
where \(\tau\) is a delay parameter, and the interplay between \(\tau\), \(\beta\), \(\gamma\), and \(n\) can yield complex oscillatory and chaotic dynamics. Unlike finite-dimensional ODEs, delay-differential equations are inherently infinite-dimensional because their state at time \(t\) depends on the function \(x\) over an interval \([t-\tau, \, t]\).

In each of these nonlinear examples, small perturbations in initial conditions can lead to vastly different trajectories, a property commonly referred to as \emph{sensitive dependence on initial conditions}, which underlies chaotic behavior. Studying these systems involves classical tools such as Lyapunov exponents, bifurcation analysis, and Poincaré maps, which reveal transitions between stable equilibria, limit cycles, and chaotic attractors \cite{guckenheimer2013nonlinear, strogatz2018nonlinear}. These advanced methods serve to understand how orbits in the state space \(\mathcal{M}\) can cluster, diverge, or settle into attractors under parameter changes or external forcing.

In \emph{Figure \ref{fig:ftle_lorenz_xz}}, we examine the \emph{finite-time Lyapunov exponent} (FTLE) field of the Lorenz system on the two-dimensional slice \(\bigl\{(x,0,z)\,\mid\,x\in[-20,20],\,z\in[0,50]\bigr\}\).  For each point \(\mathbf{u}_{0}=(x_{0},\,0,\,z_{0})\), we introduce a small perturbation \(\delta\) along the \(x\)-axis, integrate both initial conditions forward over a short horizon \(T=2\), then measure
\begin{equation}
\text{FTLE}(\mathbf{u}_{0})
\;=\;
\frac{1}{T}\,\ln \Bigl(\tfrac{\lVert \mathbf{u}(T)\,-\,\mathbf{u}^{\prime}(T)\rVert}
{\lVert \mathbf{u}(0)\,-\,\mathbf{u}^{\prime}(0)\rVert}\Bigr),
\end{equation}
where \(\mathbf{u}(t)\) and \(\mathbf{u}^{\prime}(t)\) satisfy the Lorenz equations with initial conditions \(\mathbf{u}_{0}\) and \(\mathbf{u}_{0}+(\delta,0,0)\), respectively.  A positive FTLE value signals exponential divergence of nearby trajectories (local instability), whereas negative or smaller positive values indicate weaker expansion.  Such finite-time exponents offer a more nuanced view of chaos than classical Lyapunov exponents, as they capture transient expansions or contractions over a user-defined integration window.  In the broader context of echo state networks, analyzing FTLE fields of chaotic benchmarks like the Lorenz system provides insight into how localized perturbations may be amplified or damped, which is directly related to how reservoirs handle small signal variations and maintain echoes of their input in transient regimes.

\begin{figure}[!ht]
    \centering
    \includegraphics[width=0.98\linewidth]{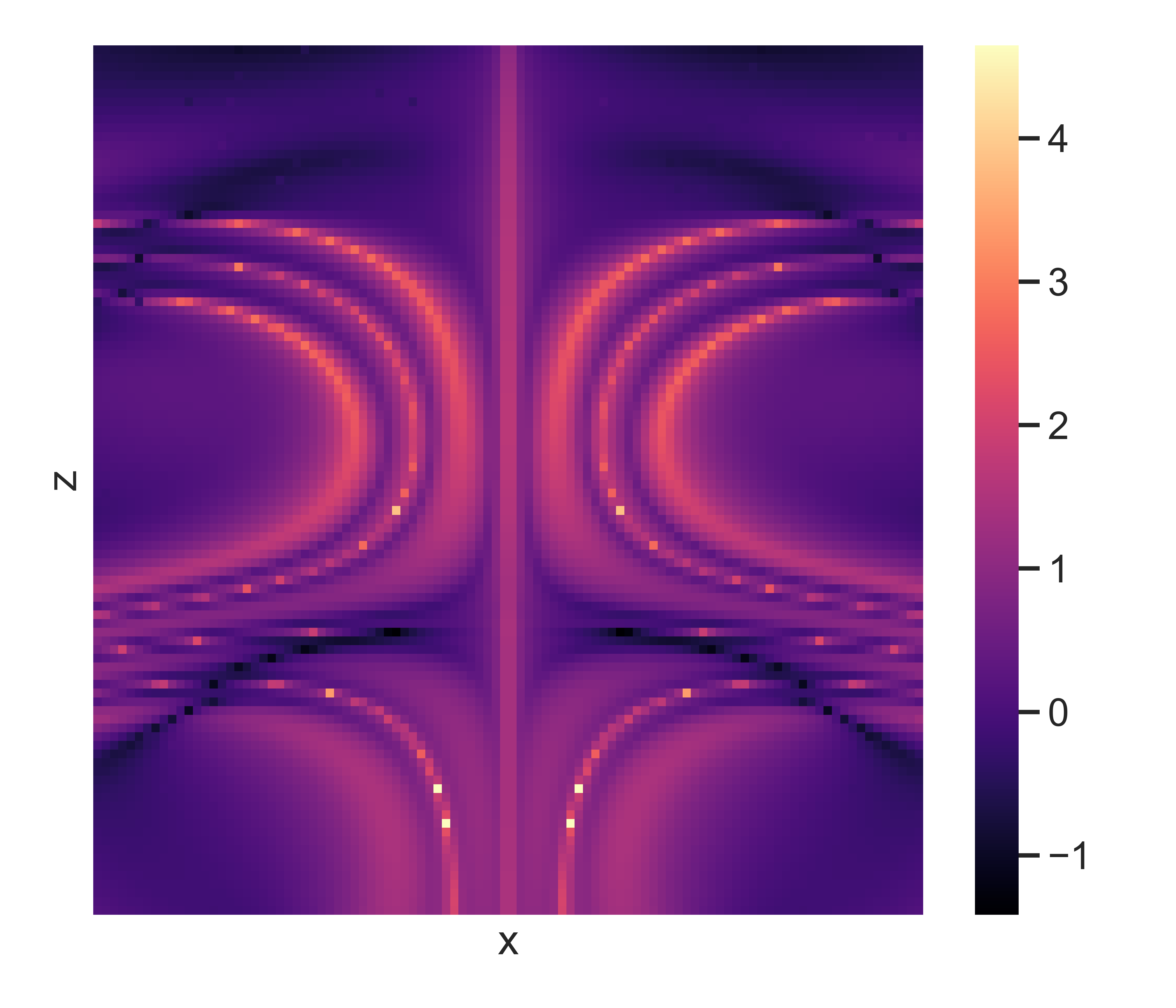}
    \caption{Finite-time Lyapunov exponent field of the Lorenz system, showing how a small perturbation in the \(x\)-direction evolves over a short horizon (\(T=2\)) from each \((x,z)\) initial condition with \(y=0\).  Warmer colors indicate stronger local divergence rates.
}
    \label{fig:ftle_lorenz_xz}
\end{figure}

\paragraph{Bifurcation Analysis.}
Bifurcation analysis is concerned with how the qualitative structure of a dynamical system’s trajectories changes as one or more parameters vary. Consider a continuous-time system of the form: 
\begin{equation}
  \dot{\mathbf{x}} = \mathbf{f}(\mathbf{x}, \boldsymbol{\mu}),
\end{equation}
where \(\mathbf{x} \in \mathbb{R}^n\) and \(\boldsymbol{\mu} \in \mathbb{R}^k\) is a parameter vector. A \emph{bifurcation} occurs when a small, continuous variation in \(\boldsymbol{\mu}\) induces a sudden, qualitative change in the set of equilibria or periodic orbits and their stability \cite{guckenheimer2013nonlinear, strogatz2018nonlinear, kuznetsov2013elements}.
A common starting point is to locate equilibrium (or fixed) points by solving: 
\begin{equation}
  \mathbf{f}(\mathbf{x}^*, \boldsymbol{\mu}) = \mathbf{0}.
\end{equation}
To assess local stability, one examines the Jacobian matrix evaluated at \(\mathbf{x}^*\):
\begin{equation}
  D_{\mathbf{x}} \mathbf{f}(\mathbf{x}^*, \boldsymbol{\mu}) \;=\; 
  \begin{pmatrix}
    \frac{\partial f_1}{\partial x_1} & \cdots & \frac{\partial f_1}{\partial x_n}\\
    \vdots & \ddots & \vdots \\
    \frac{\partial f_n}{\partial x_1} & \cdots & \frac{\partial f_n}{\partial x_n}
  \end{pmatrix}_{\mathbf{x} = \mathbf{x}^*}.
\end{equation}

If all eigenvalues of this Jacobian have negative real parts, the equilibrium \(\mathbf{x}^*\) is asymptotically stable; if at least one eigenvalue has a positive real part, \(\mathbf{x}^*\) is unstable. A \emph{bifurcation point} \(\boldsymbol{\mu}^*\) arises when one or more eigenvalues cross the imaginary axis in the complex plane.  

Different classes of bifurcations can occur depending on how eigenvalues and invariant sets (e.g., limit cycles) emerge or collide. For instance, a \emph{pitchfork bifurcation} often appears in symmetric systems, where a single stable equilibrium either splits into multiple equilibria (supercritical pitchfork) or merges into one (subcritical pitchfork). A \emph{Hopf bifurcation} arises when a complex conjugate pair of eigenvalues passes through the imaginary axis, giving rise to stable or unstable limit cycles \cite{marsden2013introduction, kuznetsov2013elements}. Formally, in a Hopf bifurcation at \(\boldsymbol{\mu} = \boldsymbol{\mu}^*\), one has a pair of eigenvalues of 
\(\displaystyle D_{\mathbf{x}} \mathbf{f}(\mathbf{x}^*, \boldsymbol{\mu}^*)\)
of the form \(\alpha(\boldsymbol{\mu}) \pm i\,\omega(\boldsymbol{\mu})\) that satisfies:
\begin{equation}
  \alpha(\boldsymbol{\mu}^*) = 0, 
  \quad 
  \omega(\boldsymbol{\mu}^*) \neq 0, 
\end{equation}
and \(\alpha(\boldsymbol{\mu})\) changes sign as \(\boldsymbol{\mu}\) crosses \(\boldsymbol{\mu}^*\). The birth of a small-amplitude periodic orbit (limit cycle) is then guaranteed near \(\boldsymbol{\mu}^*\), with the direction of the bifurcation (supercritical or subcritical) determining whether the cycle is stable or unstable.
\begin{figure}[!ht]
    \centering
   \includegraphics[width=0.99\linewidth]{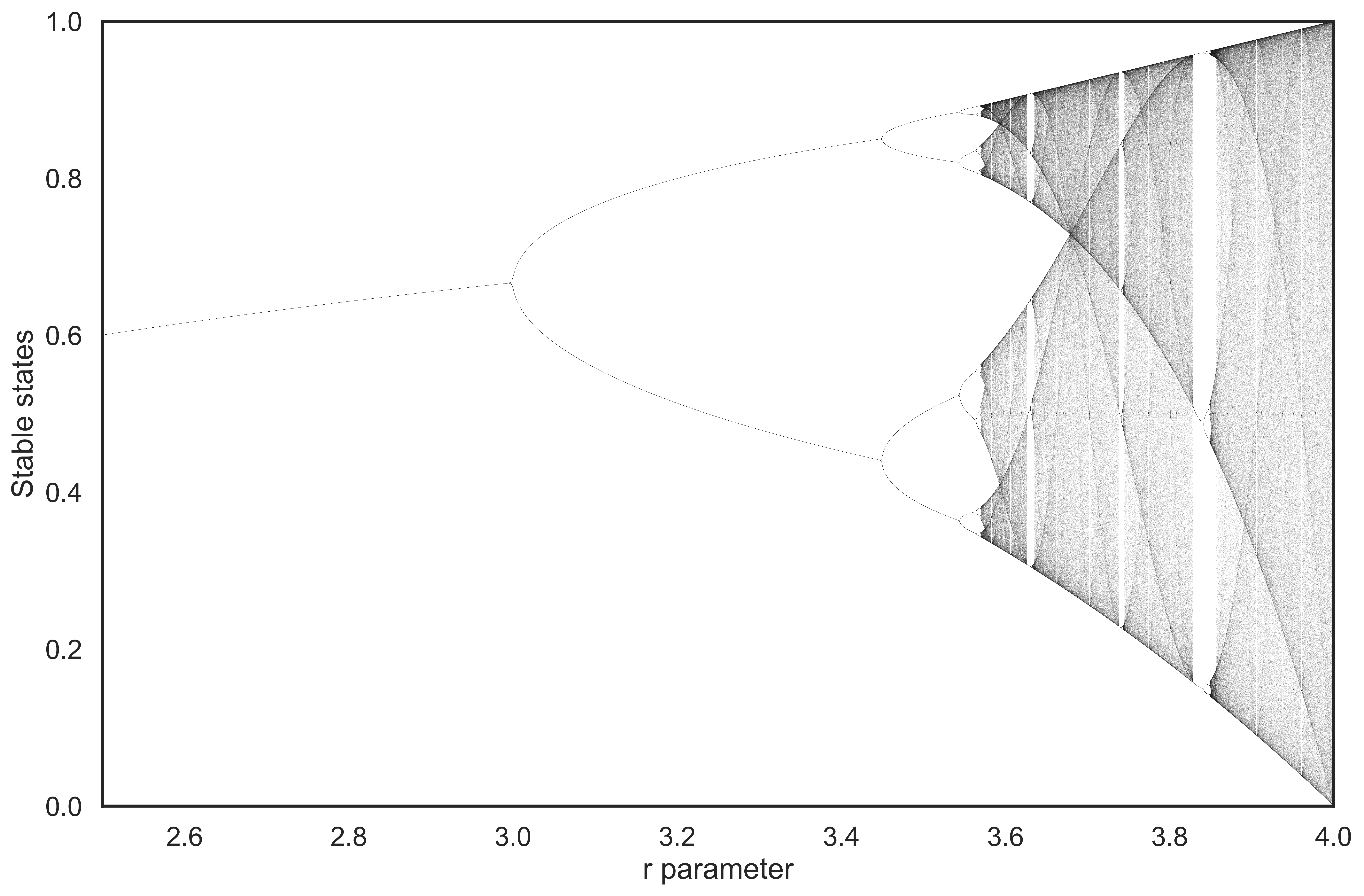}
     \caption{Bifurcation diagram of the logistic map defined by \( x_{n+1} = r x_n (1 - x_n) \). The diagram illustrates how varying the parameter \( r \) between 2.5 and 4.0 leads to transitions from stable fixed points to periodic oscillations and chaotic dynamics, highlighting the period-doubling bifurcation route to chaos.}
     \label{fig:bifurcation}
\end{figure}

Bifurcation analysis has been carried out on numerous classical nonlinear systems. In the Lorenz system, for instance, varying \(\rho\) can cause transitions from steady-state solutions to chaotic attractors. The Rössler system, Chen system, Chua’s circuit, and even time-delay systems like Mackey–Glass likewise exhibit rich cascades of period-doubling or other routes to chaos as parameters shift \cite{sparrow1982lorenz, rossler1976equation, chen1999yet, chua1986chaos, mackey1977oscillation}. These bifurcations demarcate fundamental changes in the dynamics—such as the sudden emergence of a strange attractor—capturing the essence of complexity in nonlinear phenomena.
\begin{figure}[!ht]
    \centering
    \includegraphics[width=0.8\linewidth]{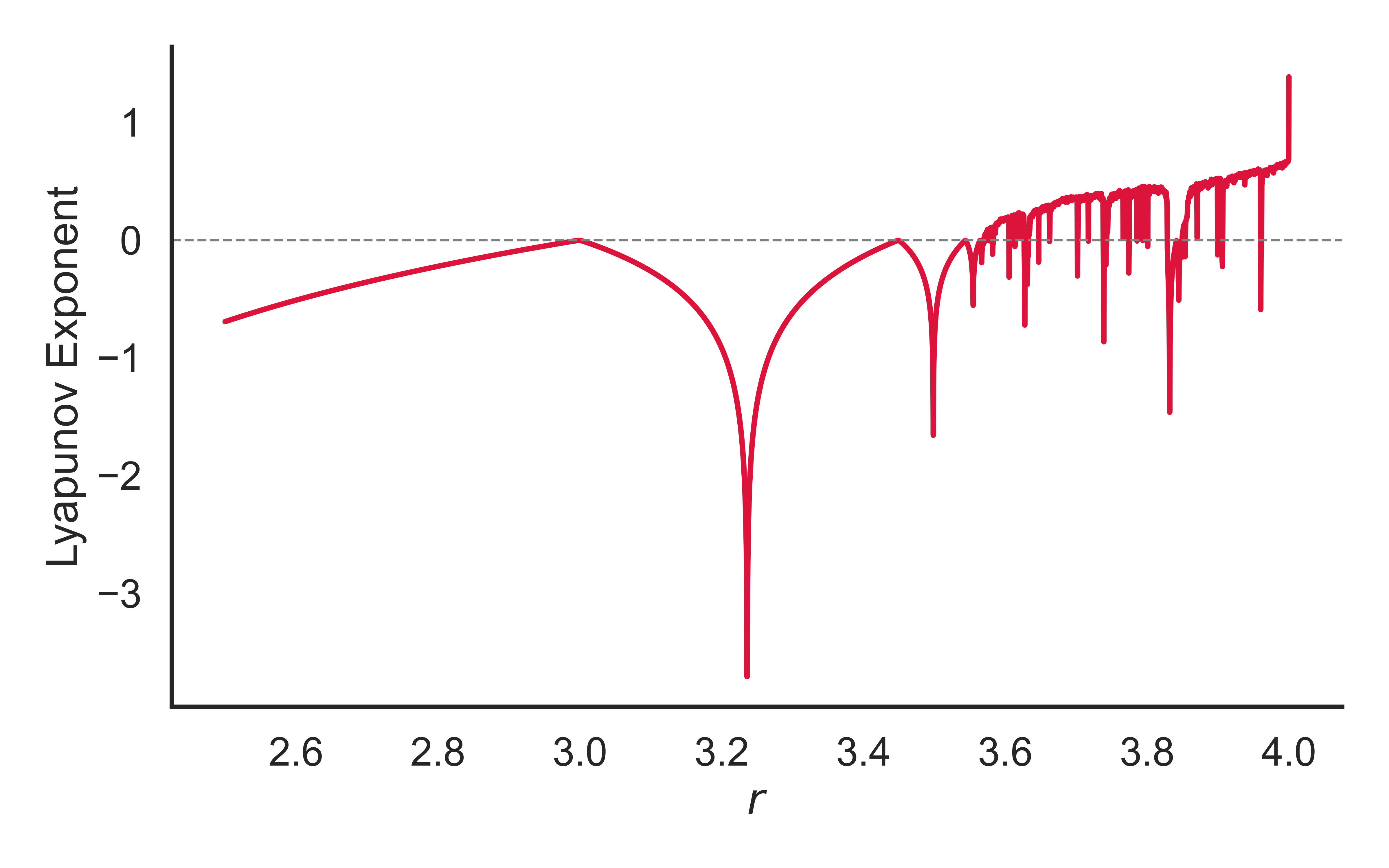}
    \caption{Lyapunov Exponent Spectrum of Logistic Map.}
    \label{fig:lyapunov_logistic}
\end{figure}
In \emph{Figure \ref{fig:lyapunov_logistic}}, we plot the \emph{largest Lyapunov exponent} \(\lambda\) of the logistic map  as a function of its parameter \(r\).  Mathematically, \(\lambda\) characterizes the exponential rate at which nearby trajectories diverge (positive \(\lambda\)) or converge (negative \(\lambda\)), thereby signaling the transition between stable and chaotic dynamics.  Here, we iteratively approximate
\begin{equation}
\lambda \;\approx\; \frac{1}{N_{\text{eff}}}\;\sum_{i=\text{trans}}^{n-1} 
\ln\ \!\Bigl\lvert r \,-\, 2\,r\,x_{i}\Bigr\rvert,
\end{equation}
which follows directly from the derivative \(\tfrac{d}{dx}\!\bigl(r\,x\,(1-x)\bigr)=r\,(1-2x)\).  As \(r\) increases beyond \(\approx 3.57\), \(\lambda\) becomes strictly positive, indicating the onset of chaos in the map.  From the perspective of echo state networks, studying such canonical chaotic regimes is essential: the logistic map’s well-known period-doubling route to chaos and the corresponding Lyapunov analysis provide a fundamental benchmark for understanding how reservoirs, trained on or inspired by chaotic processes, may encode, predict, or replicate increasingly complex dynamics.

 We perform a \emph{delay-coordinate embedding} of the logistic map at \(r=3.9\) in \emph{Figure \ref{fig:delay_embedding_logistic}}.  By taking the univariate time series \(\{x(t)\}\) and constructing the vectors
\(
\bigl(x(t),\,x(t+\tau),\,x(t+2\tau)\bigr),
\)
we embed the scalar dynamics into a higher-dimensional space, thus unfolding the system’s hidden geometric structure.  Underpinning this technique is the classical \emph{Takens’ theorem} \cite{takens1981detecting}, which ensures that for a generically chosen embedding dimension and delay, the reconstructed attractor preserves the underlying topological dynamics of the original system.  Practically, each point in the resulting scatter plot corresponds to \((x(t),\,x(t+\tau))\) on the axes, and is color-coded by the third coordinate \(x(t+2\tau)\).  This visualization offers a powerful way to analyze, characterize, and even forecast chaotic processes—an approach also relevant for understanding how echo state networks encode and process time-lagged dependencies in their internal state representations.
\begin{figure}[!ht]
    \centering
    \includegraphics[width=0.8\linewidth]{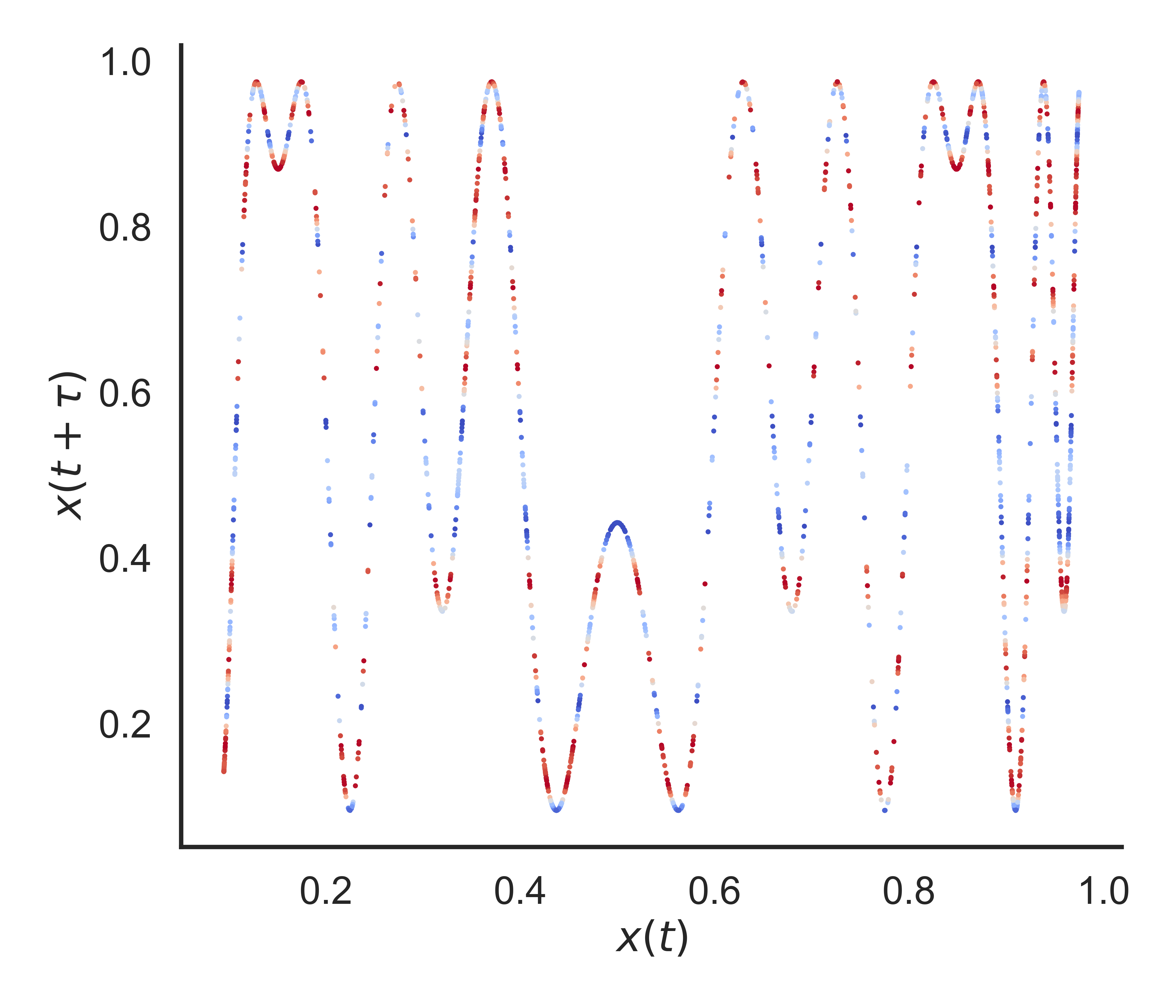}
    \caption{Delay-coordinate embedding of the logistic map ( \(r=3.9\) ), with each point \(\bigl(x(t),\,x(t+\tau)\bigr)\) colored by \(x(t+2\tau)\).}
    \label{fig:delay_embedding_logistic}
\end{figure}

We show a \emph{recurrence plot} of the logistic map at \(r=3.9\) by marking points \(\bigl(i,j\bigr)\) where \(\lvert x_i - x_j\rvert < \epsilon\) (see \emph{Figure \ref{fig:recurrence_plot_logistic}}).  This binary matrix highlights how the chaotic trajectory repeatedly visits similar regions of state space, revealing both local quasi-periodicity and irregular patterns indicative of sensitive dependence on initial conditions.  Meanwhile, \emph{Figure \ref{fig:ordinal_pattern_frequencies}} displays an \emph{ordinal pattern distribution} for length-\(3\) permutations of the same time series.  Here, each triplet of consecutive values is replaced by a pattern denoting the rank order \(\bigl(\arg\min,\,\ldots,\,\arg\max\bigr)\).  The resulting bar plot quantifies how often each distinct permutation appears, providing a purely order-based representation of the dynamical complexity.  Together, these two viewpoints—recurrence structure and ordinal patterns—offer complementary perspectives on the underlying chaos, each emphasizing different aspects of how state values recur or permute over time.

\begin{figure}[!ht]
    \centering
    \includegraphics[width=0.7\linewidth]{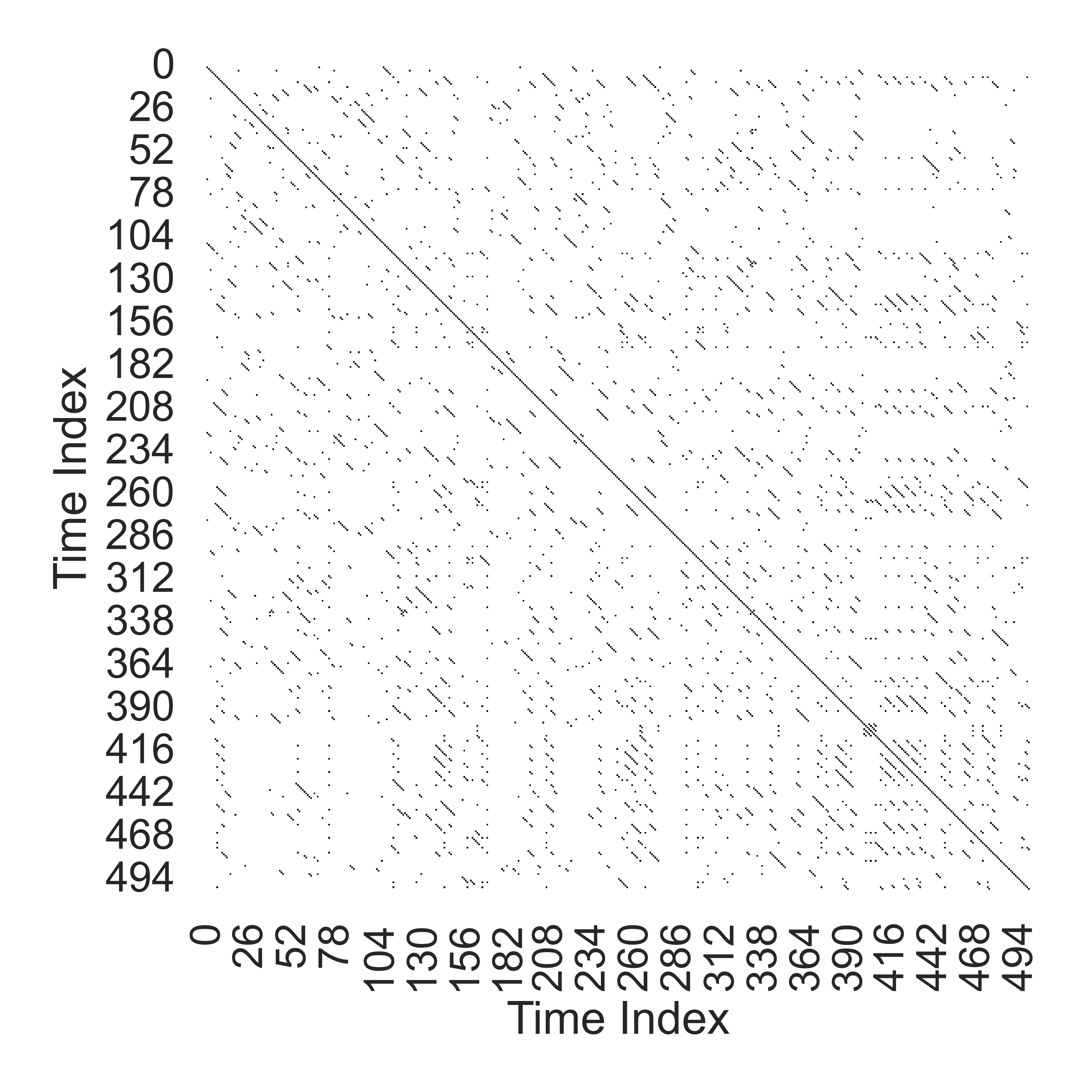}
    \caption{Recurrence Plot of Logistic Map (r = 3.9).}
    \label{fig:recurrence_plot_logistic}
\end{figure}

\begin{figure}[!ht]
    \centering
    \includegraphics[width=0.7\linewidth]{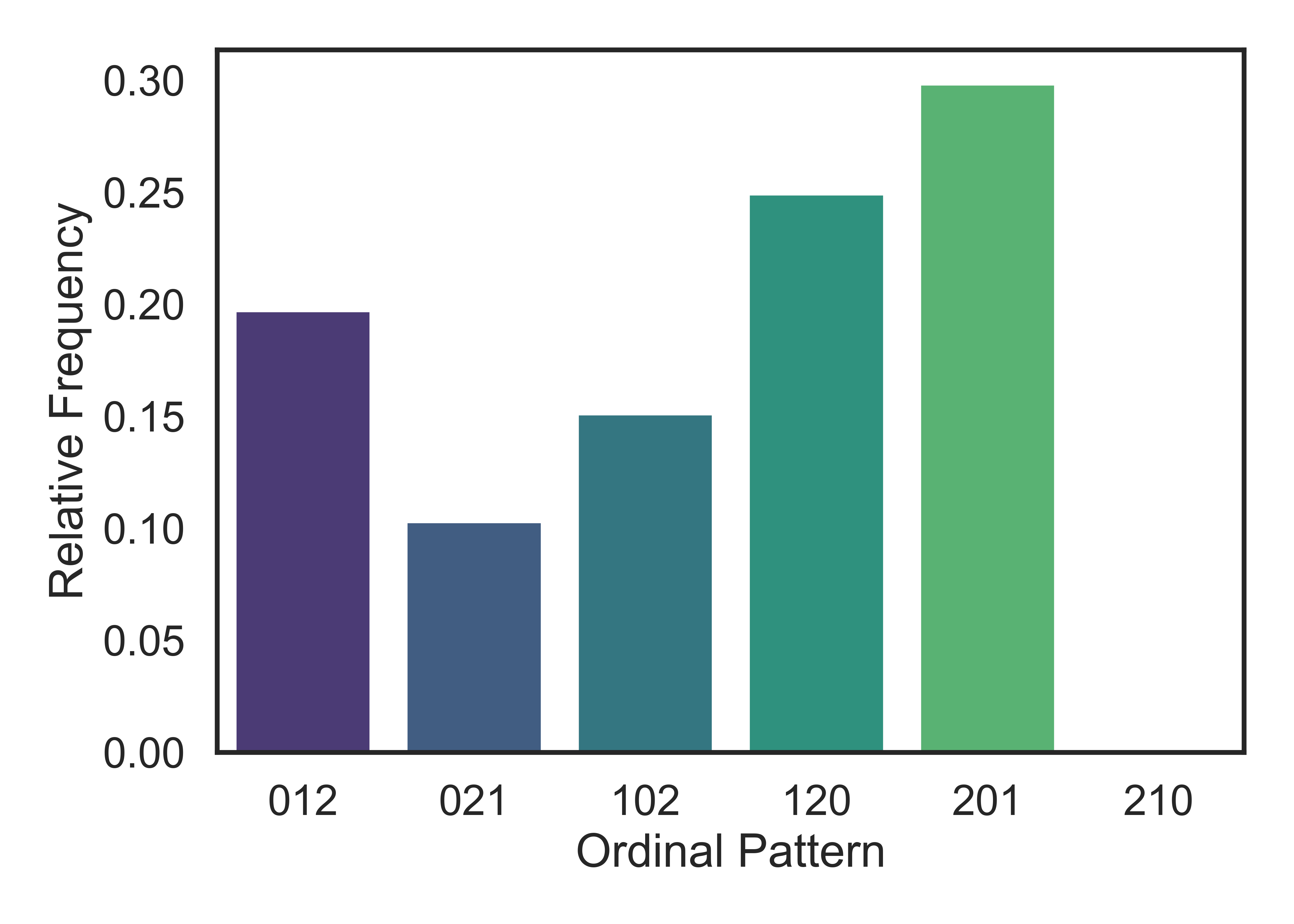}
    \caption{Ordinal Pattern Distribution (Logistic Map, d=3).}
    \label{fig:ordinal_pattern_frequencies}
\end{figure}

In \emph{Figure \ref{fig:mandelbrot_zoom}}, we illustrate a zoomed region of the Mandelbrot set, which is defined by the quadratic iteration \(z \mapsto z^{2} + c\) in the complex plane.  Specifically, \(c\in\mathbb{C}\) lies in the Mandelbrot set if the sequence
\(
z_{0}=0,\quad z_{n+1} \;=\; z_{n}^2 \;+\; c
\)
remains \emph{bounded} for all \(n\).  Equivalently, if \(\lvert z_{n}\rvert\) ever exceeds \(2\), we conclude that the orbit diverges, and hence \(c\) is not in the set.  The boundary of this set exhibits fractal complexity: every point on the boundary serves as a seed for infinitely many miniature replicas of the Mandelbrot shape, reflecting an unbounded degree of self-similarity under repeated magnifications.  From a dynamical-systems viewpoint, such fractal geometry underscores the interplay between stable and unstable behaviors within a simple nonlinear map, offering a quintessential example of emergent complexity.  In the context of echo state networks, understanding these delicate fractal boundaries—where orbits transition from bounded to unbounded—is conceptually relevant to how small variations in parameter space can yield significant shifts in a system’s long-term behavior.

\begin{figure}[!ht]
    \centering
    \includegraphics[width=0.99\linewidth]{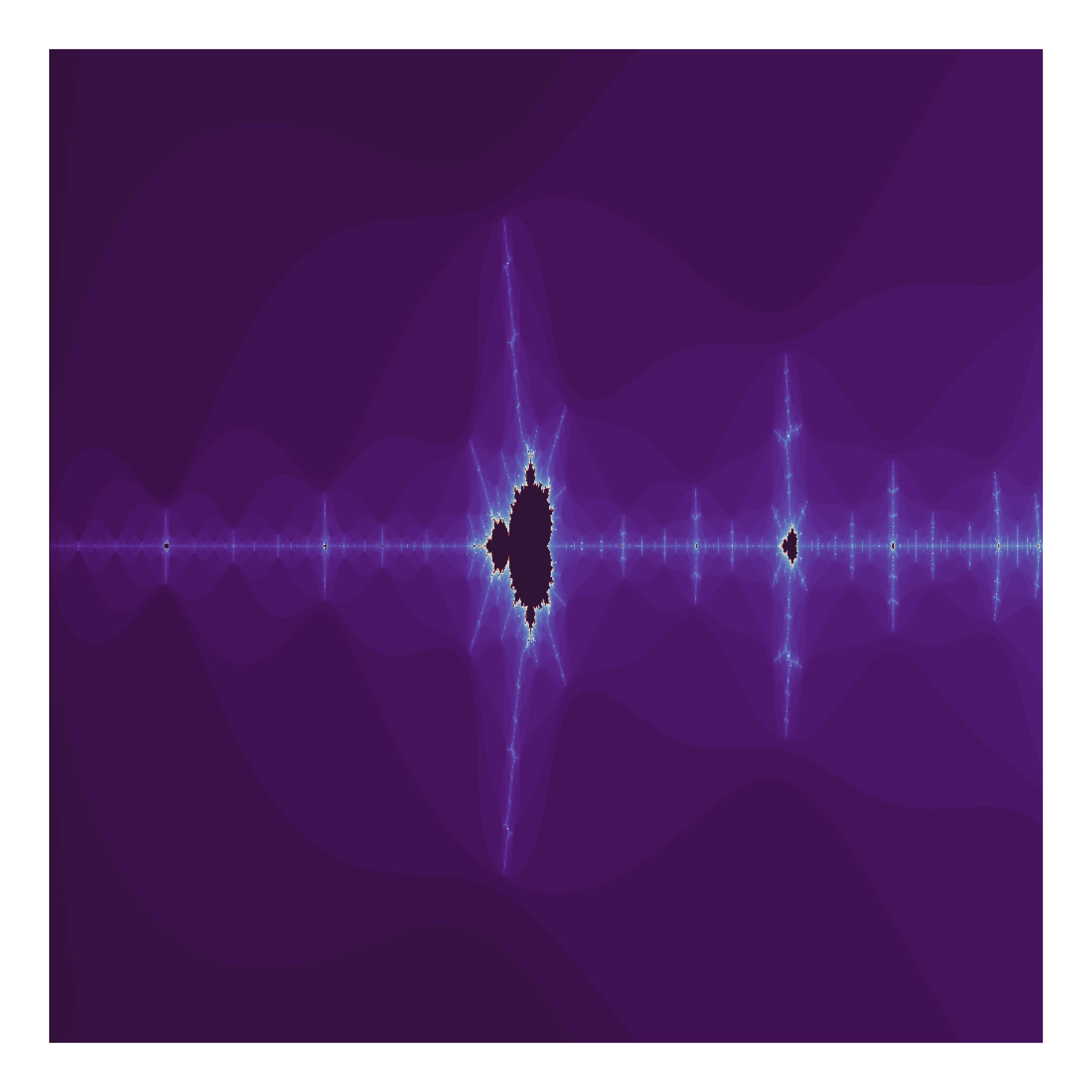}
    \caption{Zoomed Mandelbrot Set.}
    \label{fig:mandelbrot_zoom}
\end{figure}

\paragraph{Poincaré Maps.}
Whereas bifurcation analysis can reveal how equilibrium and periodic orbits change, \emph{Poincaré maps} (also called return maps) provide a powerful tool to reduce continuous-time systems to discrete iterations, facilitating the study of periodic and quasi-periodic orbits \cite{guckenheimer2013nonlinear, strogatz2018nonlinear}. 
Suppose \(\dot{\mathbf{x}} = \mathbf{f}(\mathbf{x})\) is a flow on \(\mathbb{R}^n\). A \emph{Poincaré section} \(\Sigma \subset \mathbb{R}^n\) is a \((n-1)\)-dimensional hypersurface transverse to the flow (so typical trajectories intersect \(\Sigma\) repeatedly). One defines the Poincaré map: 
\begin{equation}
  \mathbf{P}: \Sigma \to \Sigma,
\end{equation}
by tracking the intersection of trajectories with \(\Sigma\). For a point \(\mathbf{x}_0 \in \Sigma\), the trajectory \(\mathbf{x}(t)\) starting at \(\mathbf{x}_0\) will re-enter \(\Sigma\) after some minimal time \(T(\mathbf{x}_0)\). Set
\begin{equation}
  \mathbf{P}\bigl(\mathbf{x}_0\bigr) \;=\; \mathbf{x}\Bigl(T\!\bigl(\mathbf{x}_0\bigr)\Bigr).
\end{equation}

Thus, \(\mathbf{P}\) converts the continuous-time flow into a discrete map. In many systems, analyzing the fixed points or periodic points of \(\mathbf{P}\) is easier than working directly with the original flow. For example, if \(\mathbf{x}^*\) is a fixed point of \(\mathbf{P}\), that means
\begin{equation}
  \mathbf{P}\bigl(\mathbf{x}^*\bigr) = \mathbf{x}^*,
\end{equation}
which corresponds to a periodic orbit of the original system, because starting at \(\mathbf{x}^*\) on the section \(\Sigma\) returns there after one full period of the cycle. One can study the linearization \(D_{\mathbf{x}}\mathbf{P}\bigl(\mathbf{x}^*\bigr)\) to deduce stability properties of that orbit. If all eigenvalues of \(D_{\mathbf{x}}\mathbf{P}\bigl(\mathbf{x}^*\bigr)\) lie in the unit circle of the complex plane, the corresponding limit cycle of the flow is stable (asymptotically attracting); if at least one eigenvalue lies outside the unit circle, the cycle is unstable \cite{guckenheimer2013nonlinear, wiggins2003introduction}.

This viewpoint is especially invaluable for chaotic systems where direct time-domain analysis may be intractable. By constructing a 1D or 2D Poincaré map (depending on the dimension of the chosen section), complex dynamics such as period-doubling cascades, intermittency, or attractor merging can be identified. For instance, in the Lorenz system or the Rössler system, one commonly selects a plane \(\Sigma = \{\mathbf{x} \in \mathbb{R}^3 : h(\mathbf{x}) = \text{const}\}\) that is intersected repeatedly by the chaotic trajectory; successive intersections reveal the structure of the underlying map \(\mathbf{P}\), such as stable and unstable manifolds, and give insights into the fractal geometry of the attractor.  

Bifurcation diagrams constructed via Poincaré maps form a central tool in the numerical and theoretical exploration of chaotic phenomena. They show how fixed points of \(\mathbf{P}\) (equivalently, periodic orbits of the original flow) appear, collide, or disappear as parameters change, neatly encapsulating the main routes to chaos \cite{sparrow1982lorenz, ott2002chaos, strogatz2018nonlinear}.

From the standpoint of reservoir computing, especially ESNs, these approaches—bifurcation analysis and Poincaré sections—offer a language for discussing and designing high-dimensional recurrent networks whose internal dynamics capture a desired range of time-dependent behaviors. Whether the system remains in a stable regime or transitions to more complex oscillations or even low-level chaos can have profound consequences for the network’s representational capacity, memory, and computational performance. In effect, the dynamical systems viewpoint emphasizes how the \emph{internal} recurrence of an RC network must be balanced or tuned so that it does not saturate in an overly stable or overly chaotic region, a principle well-reflected in the notion of the “edge of chaos” for maximizing expressive power \cite{bertschinger2004real, boedecker2012information, Lukosevicius2012}.

Reservoir computing, particularly via ESNs, leverages insights from dynamical systems by harnessing the capacity of high-dimensional recurrent networks to emulate or approximate these complex behaviors. By appropriately embedding the input streams into the nonlinear state-space of the reservoir, RC approaches exploit the manifold structure of the underlying dynamical system and aim to provide robust time-series predictions, classification, or control in both linear and nonlinear regimes. The mixture of linear components (readouts) and nonlinear internal recurrence in RC draws heavily on the distinction between simpler linear systems and these more intricate nonlinear and chaotic examples, bridging theory with practical learning tasks \cite{jaeger2001echo, Lukosevicius2012, maass2002real}.

A central concept in dynamical systems theory is the classification of systems into chaotic and non-chaotic regimes. A dynamical system is said to be chaotic if it satisfies three key properties: (1) sensitive dependence on initial conditions, (2) topological mixing, and (3) dense periodic orbits \cite{strogatz2018nonlinear}. Mathematically, sensitive dependence implies that there exists a constant $\delta > 0$ such that, for any two points $\mathbf{x}_0, \mathbf{y}_0 \in \mathcal{M}$ and $\|\mathbf{x}_0 - \mathbf{y}_0\| < \epsilon$, there exists a time $t > 0$ such that $\|\phi_t(\mathbf{x}_0) - \phi_t(\mathbf{y}_0)\| > \delta$. This behavior is often quantified using Lyapunov exponents, $\lambda_i$, which measure the exponential divergence of nearby trajectories:
\begin{equation}
\lambda_i = \lim_{t \to \infty} \frac{1}{t} \ln \|\mathbf{J} \cdot \mathbf{v}_i\|,
\end{equation}
where $\mathbf{J}$ is the Jacobian of $\phi_t$ and $\mathbf{v}_i$ is an eigenvector of $\mathbf{J}$.

Non-chaotic systems, in contrast, exhibit stable and predictable behavior. For example, linear systems with eigenvalues of $A$ having strictly negative real parts converge to a fixed point. The phase space trajectories of such systems form well-defined patterns, such as fixed points, limit cycles, or tori, in contrast to the fractal-like structures associated with chaotic attractors.

The analysis of dynamical systems often relies on embedding theorems, which enable the reconstruction of system dynamics from observed time series. Takens' embedding theorem \cite{takens1981detecting} states that, for a generic smooth dynamical system $(\mathcal{M}, \phi_t)$ of dimension $d$, the \textit{delayed coordinate map} $\Psi: \mathcal{M} \to \mathbb{R}^m$, given by:
\begin{equation}
\Psi(\mathbf{x}) = \big(x(t), x(t + \tau), \ldots, x(t + (m-1)\tau)\big),
\end{equation}
is an embedding for $m > 2d$, where $\tau$ is the embedding delay. This result implies that the dynamics of the original system can be faithfully represented in a higher-dimensional Euclidean space, facilitating analysis from time-series data.

These foundational concepts are essential for understanding the behavior of reservoirs in RC. The reservoir itself can be viewed as a high-dimensional dynamical system that maps input sequences into a state space, preserving critical temporal information. The interplay between chaotic and non-chaotic dynamics in the reservoir determines its ability to process and encode temporal dependencies, making dynamical systems theory a cornerstone of reservoir computing.

\subsection{Mathematical Model of RC}

RC is fundamentally grounded in the theory of dynamical systems and linear algebra, leveraging a high-dimensional dynamical system, referred to as the reservoir, to encode temporal information from input sequences. Formally, the reservoir is modeled as a discrete-time dynamical system with a state-space representation. Let $\mathbf{u}(t) \in \mathbb{R}^k$ denote the input at time $t$, $\mathbf{x}(t) \in \mathbb{R}^n$ the reservoir state vector at time $t$, and $\mathbf{y}(t) \in \mathbb{R}^m$ the output. The dynamical evolution of the reservoir is given by the state update equation:
\begin{equation}
\mathbf{x}(t) = f\big(W_\text{in} \mathbf{u}(t) + W_\text{res} \mathbf{x}(t-1) + \mathbf{b}\big),
\end{equation}
where $W_\text{in} \in \mathbb{R}^{n \times k}$ is the input weight matrix,
$W_\text{res} \in \mathbb{R}^{n \times n}$ is the recurrent weight matrix,
$\mathbf{b} \in \mathbb{R}^n$ is the bias vector, and
 $f: \mathbb{R}^n \to \mathbb{R}^n$ is a pointwise nonlinear activation function, such as $\tanh$.

The output of the reservoir is computed via a linear readout layer:
\begin{equation}
\mathbf{y}(t) = W_\text{out} \mathbf{x}(t),
\end{equation}
where $W_\text{out} \in \mathbb{R}^{m \times n}$ is the output weight matrix. In standard RC models, $W_\text{out}$ is the only trainable parameter, while $W_\text{in}$, $W$, and $\mathbf{b}$ are initialized randomly and kept fixed.

\paragraph{State-Space Representation}

The reservoir can also be described in state-space form, which is commonly used in control theory and signal processing. Define the state vector $\mathbf{x}(t) \in \mathbb{R}^n$ as the internal representation of the system at time $t$. The state-space equations are:
\begin{equation}
\begin{aligned}
    \mathbf{x}(t+1) &= A \mathbf{x}(t) + B \mathbf{u}(t), \\
    \mathbf{y}(t) &= C \mathbf{x}(t),
\end{aligned}
\end{equation}
where
 $A = W_\text{res}$ represents the state transition matrix, incorporating the recurrent dynamics,
$B = W_\text{in}$ maps the input to the reservoir state space, and
 $C = W_\text{out}$ maps the reservoir states to the output.

This representation emphasizes the modular structure of the reservoir and its separation from the input and output mappings. The nonlinear function $f$ ensures that the reservoir exhibits rich dynamics, allowing it to encode complex temporal patterns.  The fixed nature of the reservoir weights simplifies the training process, as only the output weights $W_\text{out}$ are optimized. This separation of training from reservoir dynamics allows RC to achieve a balance between computational efficiency and expressiveness.

\paragraph{Echo State Property (ESP).}

Consider the discrete-time reservoir dynamical system 
\begin{equation}
\mathbf{x}(t+1) \;=\; F \bigl( \mathbf{W}_{\mathrm{res}}\;\mathbf{x}(t) \;+\; \mathbf{W}_{\mathrm{in}}\;\mathbf{u}(t+1) \;+\; \mathbf{b} \bigr),
\label{eq:rc_state}
\end{equation}
where
$\mathbf{x}(t) \in \mathbb{R}^N, 
\mathbf{u}(t) \in \mathbb{R}^m,
\mathbf{W}_{\mathrm{res}} \in \mathbb{R}^{N\times N}, 
\mathbf{W}_{\mathrm{in}} \in \mathbb{R}^{N\times m}, 
\mathbf{b} \in \mathbb{R}^N,$
and \(F: \mathbb{R}^N \to \mathbb{R}^N\) is a componentwise nonlinear activation (e.g.\ \(\tanh\), ReLU, etc.). Let \(\|\cdot\|\) be a suitable norm on \(\mathbb{R}^N\), and assume \(F\) is \emph{Lipschitz continuous} with constant \(L>0\), i.e.\ for any \(\mathbf{z}_1,\mathbf{z}_2 \in \mathbb{R}^N\),
\begin{equation}
\bigl\|F(\mathbf{z}_1) - F(\mathbf{z}_2)\bigr\| \;\le\; L \,\|\mathbf{z}_1 - \mathbf{z}_2\|.
\end{equation}
Define the \emph{spectral radius} of \(\mathbf{W}_{\mathrm{res}}\) by \(\rho(\mathbf{W}_{\mathrm{res}})\). 

\medskip

\begin{theorem}[Echo State Property \cite{jaeger2001echo, Lukosevicius2012}]
\label{thm:esp}
If \(\rho(\mathbf{W}_{\mathrm{res}}) < \frac{1}{L}\), then the reservoir system \eqref{eq:rc_state} possesses the \emph{echo state property}, namely:
\begin{enumerate}

 \item  For any bounded input sequence \(\{\mathbf{u}(t)\}\), there exists a unique \emph{reservoir trajectory} \(\{\mathbf{x}^*(t)\}\) consistent with that input, regardless of initial conditions.
 \item  If \(\mathbf{x}_1(t)\) and \(\mathbf{x}_2(t)\) are solutions driven by the same input sequence \(\{\mathbf{u}(t)\}\) but different initial states, then
\[
\lim_{t \to \infty} \|\mathbf{x}_1(t) - \mathbf{x}_2(t)\| \;=\; 0.
\]
\end{enumerate}
In other words, the influence of initial conditions vanishes, and the reservoir state is eventually determined uniquely by the input history.
\end{theorem}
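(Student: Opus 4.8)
The plan is to recognize the one-step state update as a family of maps that are uniform contractions under a suitably chosen norm, and then to extract both assertions from the Banach-style geometric decay this provides. Concretely, for each $t$ I would define the update operator $G_t(\mathbf{x}) = F(\mathbf{W}_{\mathrm{res}}\mathbf{x} + \mathbf{W}_{\mathrm{in}}\mathbf{u}(t+1) + \mathbf{b})$, so that a reservoir trajectory is exactly a sequence satisfying $\mathbf{x}(t+1) = G_t(\mathbf{x}(t))$. Using the Lipschitz bound on $F$ together with submultiplicativity of an induced norm gives $\|G_t(\mathbf{x}_1) - G_t(\mathbf{x}_2)\| \le L\,\|\mathbf{W}_{\mathrm{res}}\|\,\|\mathbf{x}_1 - \mathbf{x}_2\|$, a bound independent of both $t$ and the input. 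The goal is then to force the factor $q := L\,\|\mathbf{W}_{\mathrm{res}}\|$ to be strictly less than one.

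First I would bridge the gap between the hypothesis, which controls the spectral radius $\rho(\mathbf{W}_{\mathrm{res}})$, and the contraction estimate, which a priori needs an operator norm $\|\mathbf{W}_{\mathrm{res}}\|$. Since $\rho(\mathbf{W}_{\mathrm{res}}) \le \|\mathbf{W}_{\mathrm{res}}\|$ for every induced norm but equality can fail, I would invoke the classical matrix-analysis result (via the Jordan form, i.e.\ Householder's theorem) that for every $\epsilon > 0$ there exists an induced norm $\|\cdot\|_*$ with $\|\mathbf{W}_{\mathrm{res}}\|_* \le \rho(\mathbf{W}_{\mathrm{res}}) + \epsilon$. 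Choosing $\epsilon$ small enough that $L(\rho(\mathbf{W}_{\mathrm{res}}) + \epsilon) < 1$ — which is possible precisely because $\rho(\mathbf{W}_{\mathrm{res}}) < 1/L$ — renders each $G_t$ a $q$-contraction in $\|\cdot\|_*$ with $q < 1$.

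With the contraction in hand, claim (2) is immediate: if $\mathbf{x}_1,\mathbf{x}_2$ are driven by the same input, then $\|\mathbf{x}_1(t) - \mathbf{x}_2(t)\|_* \le q\,\|\mathbf{x}_1(t-1) - \mathbf{x}_2(t-1)\|_*$, so iterating yields $\|\mathbf{x}_1(t) - \mathbf{x}_2(t)\|_* \le q^{t}\,\|\mathbf{x}_1(0) - \mathbf{x}_2(0)\|_* \to 0$, and equivalence of norms on $\mathbb{R}^N$ transfers this to the original norm. For claim (1) I would first establish uniform boundedness of trajectories: from $\|G_t(\mathbf{x})\|_* \le q\,\|\mathbf{x}\|_* + C$ with $C := \sup_t \|F(\mathbf{W}_{\mathrm{in}}\mathbf{u}(t+1)+\mathbf{b})\|_*$ finite by boundedness of the input, one obtains $\|\mathbf{x}(t)\|_* \le q^{t-s}\,\|\mathbf{x}(s)\|_* + C/(1-q)$. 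I would then construct the distinguished trajectory $\mathbf{x}^*$ as the limit, for each fixed $t$, of trajectories launched from a reference state at an initial time $s \to -\infty$: comparing two such launches and applying the contraction over the elapsed interval shows they form a Cauchy sequence, so completeness of $\mathbb{R}^N$ yields a limit $\mathbf{x}^*(t)$ that satisfies the recursion and is independent of the launch data, with uniqueness of this attracting trajectory following from claim (2).

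The main obstacle I anticipate is the norm bookkeeping in the contraction step, not the iteration. The delicate point is that the Lipschitz constant $L$ of $F$ is specified in a fixed norm, whereas contractivity of $\mathbf{W}_{\mathrm{res}}$ is only attainable in the adapted Householder norm $\|\cdot\|_*$, and these two norms need not be compatible — passing from $\|\cdot\|$ to $\|\cdot\|_*$ can inflate the effective Lipschitz constant of $F$ by the norm-equivalence ratio. Making the argument fully rigorous therefore requires either restricting to activations (such as componentwise $\tanh$) whose Lipschitz behavior is stable across the relevant norms, or tracking the equivalence constants and absorbing them so that the net one-step factor still falls below one; this reconciliation of the spectral-radius hypothesis with the operator-norm requirement is the crux of the proof.
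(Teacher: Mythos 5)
Your core argument for claim (2) is essentially the paper's: bound the one-step difference by \(L\,\|\mathbf{W}_{\mathrm{res}}\|\) via Lipschitz continuity and submultiplicativity, pass to a norm adapted to the spectral radius so the per-step factor drops below one, and iterate the geometric decay. Where you genuinely go beyond the paper is claim (1): the paper only proves contraction of trajectory differences and then asserts existence and uniqueness of the distinguished trajectory, whereas your pullback construction (launching from a reference state at initial times \(s \to -\infty\), showing the resulting states at each fixed \(t\) form a Cauchy sequence, and obtaining a limit trajectory independent of the launch data) is the argument actually needed to produce \(\mathbf{x}^*\). This is the standard non-autonomous, pullback-attractor route, and it is the correct way to complete the statement; the paper's own appendix gestures at it but its proof of the theorem does not carry it out.

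The obstacle you flag in your final paragraph is not a defect of your write-up relative to the paper — it is a genuine gap in the paper's own proof, and you are right that it is the crux. The paper derives \(\|\boldsymbol{\delta}(t+1)\| \le L\,\|\mathbf{W}_{\mathrm{res}}\|\,\|\boldsymbol{\delta}(t)\|\) in a fixed norm and then switches to an adapted norm satisfying \(\|\mathbf{W}_{\mathrm{res}}\| \le r < 1/L\) while silently reusing the same constant \(L\); this is unjustified, because the adapted (Householder) norm arises from a generally non-diagonal similarity, and a componentwise Lipschitz bound does not survive such a change of coordinates with the same constant. Of your two proposed repairs, note that the second one (absorbing the norm-equivalence ratio) cannot work at the level of a one-step contraction: the ratio \(C/c\) multiplies the per-step factor and can be arbitrarily large, and norm equivalence only transfers decay \emph{after} contraction is established in some norm. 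The first repair is the viable one, but in a restricted form: for componentwise Lipschitz \(F\), diagonally weighted sup-norms \(\|\mathbf{x}\|_{*} = \max_i s_i |x_i|\) do preserve the constant \(L\), so the theorem holds whenever some diagonal similarity achieves \(\|S\,\mathbf{W}_{\mathrm{res}}\,S^{-1}\|_{\infty} < 1/L\). However, diagonal similarities cannot in general push the induced norm down to \(\rho(\mathbf{W}_{\mathrm{res}}) + \epsilon\), which is exactly why the bare spectral-radius hypothesis is known to be insufficient for the ESP with \(\tanh\) reservoirs — this is the central point of \cite{yildiz2012re}, which the paper itself cites. So your plan is sound and your suspicion is correct; the honest conclusion is that the contraction argument must be run under an operator-norm (or diagonal Schur-stability) hypothesis rather than under the spectral-radius hypothesis as literally stated in the theorem.
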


\begin{proof}
Let \(\{\mathbf{x}_1(t)\}\) and \(\{\mathbf{x}_2(t)\}\) be two trajectories satisfying \eqref{eq:rc_state} with identical inputs \(\mathbf{u}(t)\) but different initial states \(\mathbf{x}_1(0)\neq \mathbf{x}_2(0)\). We define
\[
\boldsymbol{\delta}(t) \;=\; \mathbf{x}_1(t) \;-\; \mathbf{x}_2(t).
\]
Subtract the state-update equations for \(\mathbf{x}_1\) and \(\mathbf{x}_2\):
\[
\boldsymbol{\delta}(t+1) 
= 
F\Bigl(\mathbf{W}_{\mathrm{res}}\,\mathbf{x}_1(t) + \mathbf{W}_{\mathrm{in}}\,\mathbf{u}(t+1) + \mathbf{b}\Bigr)
- 
F\Bigl(\mathbf{W}_{\mathrm{res}}\,\mathbf{x}_2(t) + \mathbf{W}_{\mathrm{in}}\,\mathbf{u}(t+1) + \mathbf{b}\Bigr).
\]
By the Lipschitz continuity of \(F\) with constant \(L\),
\[
\|\boldsymbol{\delta}(t+1)\| 
\;\le\; 
L \,\bigl\|\mathbf{W}_{\mathrm{res}}\bigl(\mathbf{x}_1(t) - \mathbf{x}_2(t)\bigr)\bigr\|
\;\le\; 
L \;\|\mathbf{W}_{\mathrm{res}}\|\;\|\boldsymbol{\delta}(t)\|,
\]
where \(\|\mathbf{W}_{\mathrm{res}}\|\) is any chosen operator norm of \(\mathbf{W}_{\mathrm{res}}\). By submultiplicative properties of matrix norms, \(\|\mathbf{W}_{\mathrm{res}}\|\ge \rho(\mathbf{W}_{\mathrm{res}})\); thus, if \(\rho(\mathbf{W}_{\mathrm{res}})<\frac{1}{L}\), we can choose a norm with \(\|\mathbf{W}_{\mathrm{res}}\|\le r < \frac{1}{L}\). Then there exists a constant \(\kappa = L\,r < 1\) such that
\[
\|\boldsymbol{\delta}(t+1)\| \;\le\; \kappa \,\|\boldsymbol{\delta}(t)\|.
\]
Iterating from \(t=0\) to \(t=T\) yields
\[
\|\boldsymbol{\delta}(T)\|
\;\le\; \kappa^T\,\|\boldsymbol{\delta}(0)\|.
\]
Since \(\kappa<1\), \(\kappa^T \to 0\) as \(T\to\infty\). Therefore, 
\[
\lim_{T\to\infty} \|\mathbf{x}_1(T) - \mathbf{x}_2(T)\| \;=\; 0,
\]
which establishes uniqueness of the reservoir state for any bounded input sequence and proves the ESP.
\end{proof}

\paragraph{Fading Memory Property (FMP).}

In addition to the ESP, one typically requires that the reservoir possesses the \emph{fading memory property (FMP)} for input-driven computations. Informally, the FMP means that older inputs have exponentially diminishing influence on the present state, so the reservoir “forgets” distant past inputs in a controlled manner. A standard formalization involves \emph{fading-memory filters} \cite{Boyd1985, grigoryeva2018echo}:

\begin{definition}[Fading Memory Property]
\label{def:fmp}
A discrete-time operator \(\Phi: (\mathbb{R}^m)^\mathbb{Z} \to (\mathbb{R}^N)^\mathbb{Z}\), which maps an input sequence \(\{\mathbf{u}(t)\}_{t\in\mathbb{Z}}\) to a state/output sequence \(\{\mathbf{x}(t)\}_{t\in\mathbb{Z}}\), is said to have the \emph{fading memory property} if, for every pair of input sequences \(\mathbf{u}, \mathbf{v}\) that coincide on a large time interval \([T, \infty)\) (i.e.\ differ only in the remote past), the outputs \(\Phi(\mathbf{u})\) and \(\Phi(\mathbf{v})\) become arbitrarily close as \(T \to \infty\). Formally, for any \(\varepsilon>0\), there exists a \(\delta>0\) such that if \(\|\mathbf{u}(t)-\mathbf{v}(t)\| < \delta\) for all \(t\ge T\), then
\[
\sup_{t\ge T} \|\mathbf{x}_{\mathbf{u}}(t) - \mathbf{x}_{\mathbf{v}}(t)\| \;<\; \varepsilon,
\]
where \(\mathbf{x}_{\mathbf{u}}(t)\) and \(\mathbf{x}_{\mathbf{v}}(t)\) are the states generated by \(\Phi\) from inputs \(\mathbf{u}\) and \(\mathbf{v}\), respectively.
\end{definition}

In the reservoir setting \eqref{eq:rc_state}, if \(\rho(\mathbf{W}_{\mathrm{res}}) < \tfrac{1}{L}\) and \(F\) is Lipschitz, one can show that \(\mathbf{x}(t)\) depends most strongly on recent inputs, with the influence of earlier time steps decaying exponentially \cite{jaeger2001echo, grigoryeva2018echo}. Concretely, small input perturbations in the distant past produce negligible changes in the current state. Coupled with the ESP, the FMP ensures that the reservoir reliably encodes relevant temporal information while remaining robust to small perturbations or drifting initial conditions.

\begin{theorem}[Fading Memory Property]
\label{thm:fmp}
Consider the discrete-time reservoir system
$\mathbf{x}(t)$ given by \eqref{eq:rc_state},
where $\mathbf{x}(t) \in \mathbb{R}^N$, 
$\mathbf{u}(t) \in \mathbb{R}^m$, 
$\mathbf{W}_{\mathrm{res}} \in \mathbb{R}^{N\times N}$, 
$\mathbf{W}_{\mathrm{in}} \in \mathbb{R}^{N\times m}$, 
$\mathbf{b} \in \mathbb{R}^N$, and $F: \mathbb{R}^N \to \mathbb{R}^N$ is a componentwise Lipschitz function with Lipschitz constant $L>0$.  
Assume $\rho(\mathbf{W}_{\mathrm{res}}) < \frac{1}{L}$, where $\rho(\cdot)$ is the spectral radius.  
Then the corresponding input-to-state map $\Phi$, which takes any input sequence $\{\mathbf{u}(t)\}$ and produces the reservoir states $\{\mathbf{x}(t)\}$, possesses the \emph{fading memory property}: if two input sequences coincide or become arbitrarily close after some large time index, then their corresponding reservoir state trajectories also become arbitrarily close after that time.
\end{theorem}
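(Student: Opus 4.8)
The plan is to reuse the one-step contraction estimate from the proof of Theorem~\ref{thm:esp}, but now allow the two trajectories to be driven by \emph{different} inputs. Writing $\boldsymbol{\delta}(t) = \mathbf{x}_{\mathbf{u}}(t) - \mathbf{x}_{\mathbf{v}}(t)$ and $\Delta\mathbf{u}(t) = \mathbf{u}(t) - \mathbf{v}(t)$, I would subtract the two update equations \eqref{eq:rc_state} and apply the Lipschitz bound on $F$ together with the triangle inequality and submultiplicativity of the operator norm to obtain the affine recurrence
\[
\|\boldsymbol{\delta}(t+1)\| \;\le\; L\,\|\mathbf{W}_{\mathrm{res}}\|\,\|\boldsymbol{\delta}(t)\| \;+\; L\,\|\mathbf{W}_{\mathrm{in}}\|\,\|\Delta\mathbf{u}(t+1)\|.
\]
Exactly as in the ESP proof, the hypothesis $\rho(\mathbf{W}_{\mathrm{res}}) < 1/L$ lets me pick an operator norm with $\|\mathbf{W}_{\mathrm{res}}\| \le r < 1/L$, so that $\kappa := Lr < 1$; setting $\gamma := L\,\|\mathbf{W}_{\mathrm{in}}\|$ gives the clean contraction-with-forcing inequality $\|\boldsymbol{\delta}(t+1)\| \le \kappa\,\|\boldsymbol{\delta}(t)\| + \gamma\,\|\Delta\mathbf{u}(t+1)\|$.

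The second step is to unroll this linear recurrence. Iterating from an initial index $t_0$ yields the discrete variation-of-constants bound
\[
\|\boldsymbol{\delta}(t)\| \;\le\; \kappa^{\,t-t_0}\,\|\boldsymbol{\delta}(t_0)\| \;+\; \gamma\sum_{s=t_0+1}^{t}\kappa^{\,t-s}\,\|\Delta\mathbf{u}(s)\|,
\]
which cleanly separates the influence of the initial state from the accumulated effect of the input discrepancy through a geometrically decaying convolution kernel $\kappa^{\,t-s}$. The first term is precisely the ESP contribution and tends to $0$ as $t \to \infty$.

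The heart of the argument is to estimate the convolution sum, and I would split it at the index $T$ beyond which the inputs are assumed close. Using boundedness of the inputs — so that $\|\Delta\mathbf{u}(s)\| \le M$ on the remote part where they may differ substantially — and the hypothesis $\|\Delta\mathbf{u}(s)\| \le \delta$ for $s \ge T$, I would write
\[
\gamma\sum_{s=t_0+1}^{t}\kappa^{\,t-s}\,\|\Delta\mathbf{u}(s)\| \;\le\; \gamma M\sum_{s=t_0+1}^{T}\kappa^{\,t-s} \;+\; \gamma\delta\sum_{s=T+1}^{t}\kappa^{\,t-s}.
\]
The remote-past block carries an overall factor $\kappa^{\,t-T}$ and hence vanishes as $t$ grows, while the recent block is bounded for all $t$ by the geometric series $\gamma\delta/(1-\kappa)$. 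Choosing $\delta = (1-\kappa)\varepsilon/\gamma$ then forces $\limsup_{t\to\infty}\|\boldsymbol{\delta}(t)\| < \varepsilon$, which is exactly the fading-memory conclusion of Definition~\ref{def:fmp}.

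The main obstacle I anticipate is not the mechanical unrolling but the careful handling of the remote past: because the definition permits the two inputs to disagree arbitrarily before time $T$, one must invoke boundedness of the input sequences to control the corresponding tail of the convolution and then exploit the exponential kernel $\kappa^{\,t-T}$ to annihilate it as $t \to \infty$. This is also where the proof must be honest about the quantifiers — strictly speaking one obtains $\|\boldsymbol{\delta}(t)\| < \varepsilon$ for $t$ \emph{sufficiently large} rather than uniformly for every $t \ge T$, so I would either restate the conclusion as a $\limsup$ bound or absorb the transient remote-past term by pushing the evaluation window further forward, consistent with the ``become arbitrarily close after that time'' phrasing of the theorem.
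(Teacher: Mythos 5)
Your proposal is correct and follows essentially the same route as the paper's proof: the identical one-step bound $\|\boldsymbol{\delta}(t+1)\| \le L r\,\|\boldsymbol{\delta}(t)\| + L\,\|\mathbf{W}_{\mathrm{in}}\|\,\|\Delta\mathbf{u}(t+1)\|$, unrolled geometrically to yield an exponentially decaying transient plus the steady-state term $\tfrac{L\,\|\mathbf{W}_{\mathrm{in}}\|\,\delta}{1-Lr}$ (the paper simply starts its iteration at $T_0$ and absorbs the entire remote past into the initial state difference, whereas you unroll from the start and split the convolution sum, which is an equivalent reorganization). Your closing remark about the quantifiers is a genuine refinement: the paper's claim that the bound holds \emph{for all} $t \ge T_0$ quietly ignores the transient term, and your $\limsup$ formulation (or pushing the evaluation window forward) is the honest statement.
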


\begin{proof}  
From the ESP argument, we know that if $\rho(\mathbf{W}_{\mathrm{res}}) < \tfrac{1}{L}$, then there exists a norm $\|\cdot\|$ such that $\|\mathbf{W}_{\mathrm{res}}\|\le r$ and $L\,r < 1$.  
For a single input sequence $\{\mathbf{u}(t)\}$, if $\mathbf{x}_1(t)$ and $\mathbf{x}_2(t)$ are two solutions with the same inputs but different initial states, their difference satisfies  
\[
\|\mathbf{x}_1(t+1) - \mathbf{x}_2(t+1)\|
\;\le\; 
L\,\|\mathbf{W}_{\mathrm{res}}\|\;\|\mathbf{x}_1(t) - \mathbf{x}_2(t)\|
\;\le\;
L\,r\;\|\mathbf{x}_1(t) - \mathbf{x}_2(t)\|.
\]
Hence $\|\mathbf{x}_1(t) - \mathbf{x}_2(t)\|\to 0$ exponentially fast, showing uniqueness of the trajectory in the limit (the ESP).

\medskip

Let $\mathbf{u}$ and $\mathbf{v}$ be two bounded input sequences, with reservoir state trajectories $\mathbf{x}_{\mathbf{u}}(t)$ and $\mathbf{x}_{\mathbf{v}}(t)$, respectively.  Suppose that for some large $T_0$, the inputs coincide (or are very close) for $t \ge T_0$.  Concretely:
\[
\sup_{t \,\ge\, T_0}\|\mathbf{u}(t) - \mathbf{v}(t)\| \;\le\; \delta
\quad(\text{for a chosen }\delta\ge 0).
\]
Since
\[
\mathbf{x}_{\mathbf{u}}(t+1)
=
F\!\Big(
\mathbf{W}_{\mathrm{res}}\;\mathbf{x}_{\mathbf{u}}(t)
+
\mathbf{W}_{\mathrm{in}}\;\mathbf{u}(t+1)
+
\mathbf{b}
\Big),
\quad
\mathbf{x}_{\mathbf{v}}(t+1)
=
F\!\Big(
\mathbf{W}_{\mathrm{res}}\;\mathbf{x}_{\mathbf{v}}(t)
+
\mathbf{W}_{\mathrm{in}}\;\mathbf{v}(t+1)
+
\mathbf{b}
\Big),
\]
subtract and apply the Lipschitz property:
\[
\|\mathbf{x}_{\mathbf{u}}(t+1) - \mathbf{x}_{\mathbf{v}}(t+1)\|
\;\le\;
L\,\bigl\|\mathbf{W}_{\mathrm{res}}(\mathbf{x}_{\mathbf{u}}(t)-\mathbf{x}_{\mathbf{v}}(t))
\;+\;
\mathbf{W}_{\mathrm{in}}(\mathbf{u}(t+1)-\mathbf{v}(t+1))
\bigr\|.
\]
Using the operator norms, let $\|\mathbf{W}_{\mathrm{res}}\|\le r$ and set $M=\|\mathbf{W}_{\mathrm{in}}\|$; then
\[
\|\mathbf{x}_{\mathbf{u}}(t+1) - \mathbf{x}_{\mathbf{v}}(t+1)\|
\;\le\;
L\,r\,\|\mathbf{x}_{\mathbf{u}}(t) - \mathbf{x}_{\mathbf{v}}(t)\|
\;+\;
L\,M\,\|\mathbf{u}(t+1)-\mathbf{v}(t+1)\|.
\]

\medskip

\emph{Case A: Perfect coincidence for $t \ge T_0$.}  
If $\mathbf{u}(t)=\mathbf{v}(t)$ for all $t \ge T_0$, then the second term above is zero for $t\ge T_0$.  Hence
\[
\|\mathbf{x}_{\mathbf{u}}(t+1) - \mathbf{x}_{\mathbf{v}}(t+1)\|
\;\le\;
(L\,r)\,\|\mathbf{x}_{\mathbf{u}}(t) - \mathbf{x}_{\mathbf{v}}(t)\|.
\]
Iterating from $t=T_0$ shows that 
\[
\|\mathbf{x}_{\mathbf{u}}(t) - \mathbf{x}_{\mathbf{v}}(t)\|
\;\le\;
(L\,r)^{\,t - T_0}\;\|\mathbf{x}_{\mathbf{u}}(T_0) - \mathbf{x}_{\mathbf{v}}(T_0)\|,
\]
which converges to $0$ as $t\to\infty$ since $L\,r<1$.  
Thus if the inputs match from $T_0$ onward, the state trajectories also become arbitrarily close, i.e.\ \emph{past differences are exponentially “forgotten.”}

\medskip

\emph{Case B: Small (rather than exact) differences for $t \ge T_0$.}  
If $\|\mathbf{u}(t)-\mathbf{v}(t)\|\le \delta$ for $t\ge T_0$, we get
\[
\|\mathbf{x}_{\mathbf{u}}(t+1)-\mathbf{x}_{\mathbf{v}}(t+1)\|
\;\le\;
L\,r\,\|\mathbf{x}_{\mathbf{u}}(t)-\mathbf{x}_{\mathbf{v}}(t)\|
+
L\,M\,\delta.
\]
A standard contraction-plus-constant argument (unfolding this recursion) implies that for any $t\ge T_0$,
\[
\|\mathbf{x}_{\mathbf{u}}(t)-\mathbf{x}_{\mathbf{v}}(t)\|
\;\le\;
(L\,r)^{\,t - T_0}\,\|\mathbf{x}_{\mathbf{u}}(T_0)-\mathbf{x}_{\mathbf{v}}(T_0)\|
\;+\;
\frac{L\,M\,\delta}{\,1 - L\,r\,}.
\]
Thus by choosing $\delta$ sufficiently small (and $T_0$ large enough), one can make $\|\mathbf{x}_{\mathbf{u}}(t)-\mathbf{x}_{\mathbf{v}}(t)\|$ as small as desired for all $t \ge T_0$.  
In other words, remote past input differences vanish exponentially, and recent slight differences lead only to a bounded shift in the states.  

\medskip

Hence the system has the \emph{fading memory property}:
for any $\varepsilon>0$, one can pick $\delta>0$ and $T_0$ such that whenever the inputs satisfy 
\[
\sup_{t\ge T_0}\|\mathbf{u}(t)-\mathbf{v}(t)\|<\delta,
\]
then
\[
\sup_{t \ge T_0}\|\mathbf{x}_{\mathbf{u}}(t)-\mathbf{x}_{\mathbf{v}}(t)\|<\varepsilon.
\]
\end{proof}

Together, ESP and FMP provide theoretical guarantees in reservoir computing. They ensure that (1) there is a unique, well-defined reservoir trajectory given a particular input sequence, and (2) the reservoir “forgets” remote history in a controlled manner, allowing the system to focus on the most relevant (recent) temporal patterns for the downstream learning task.

In \emph{Figure \ref{fig:esn_state_pca_flow}}, we project a sequence of reservoir states \(\{\mathbf{x}(t)\}\subset\mathbb{R}^{N}\) into the plane spanned by the first two principal components.  Each reservoir update follows an iterative nonlinear map
\begin{equation}
\mathbf{x}(t+1)
\;=\;
\tanh\Bigl(
    \alpha\,\mathbf{x}(t)
    \;+\;\boldsymbol{\xi}(t)
\Bigr),
\end{equation}
where \(\alpha\) is a contraction factor (here set to \(0.9\)) and \(\boldsymbol{\xi}(t)\sim\mathcal{N}(\mathbf{0},\sigma^{2}\mathbf{I})\) introduces a small stochastic input.  By performing principal component analysis (PCA) on the resulting high-dimensional trajectories, we retain the directions of largest variance and reduce the reservoir’s state manifold to a two-dimensional plane.  The coloration from blue to cyan encodes time progression, capturing how the system moves smoothly through its reduced-phase space.  Such dimensionality-reduction techniques are useful in  analysis, where reservoirs often have hundreds or thousands of neurons.  Viewing the states in a lower-dimensional embedding reveals whether the dynamics explore a narrow subset of configurations or spread broadly throughout the network’s state space—information central to understanding the echo property and the representation of temporal signals.

\begin{figure}[!ht]
    \centering
    \includegraphics[width=0.72\linewidth]{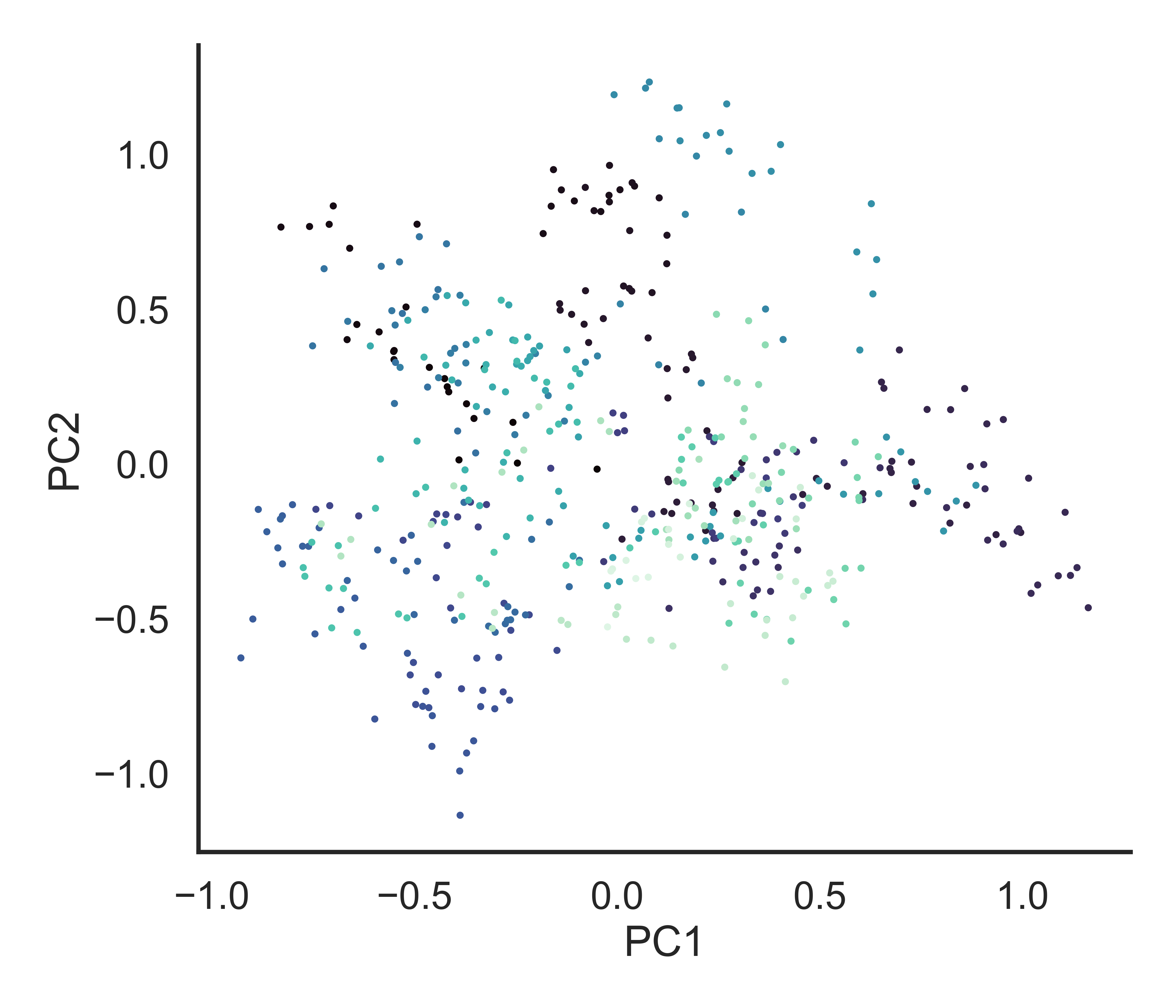}
    \caption{PCA-based two-dimensional projection of the reservoir’s high-dimensional state flow, with color indicating time from earliest (blue) to latest (cyan).  This reduced-phase portrait visually illustrates how the network’s states evolve under recurrent tanh dynamics perturbed by small random noise.}
    \label{fig:esn_state_pca_flow}
\end{figure}

We construct a two-dimensional delay embedding of the logistic map  at parameter \(r=3.8\) (see \emph{Figure \ref{fig:delay_embedding_curvature}}), where each point \(\bigl(x(t),\,x(t+\tau)\bigr)\) is color-coded by \(\lvert x^{\prime\prime}(t)\rvert\).  Here, \(x^{\prime\prime}(t)\) is approximated by a finite difference of the time series (i.e., a discrete second derivative), thus capturing the local \emph{curvature} or inflection in the trajectory.  In chaotic regimes, large curvature often signals rapid transitions between rising and falling branches, indicative of heightened sensitivity in the underlying map.  Delay embedding reveals important topological structure—how the sequence cycles, folds, and separates over time—while coloring by curvature highlights precisely where the orbit accelerates or decelerates most abruptly.  Within the framework of echo state networks, understanding such abrupt transitions and local maxima in curvature is essential for effectively encoding and predicting chaotic signals, since the reservoir must possess sufficient nonlinearity and memory to track these heightened twists and turns in the input dynamics.

\begin{figure}[!ht]
    \centering
    \includegraphics[width=0.8\linewidth]{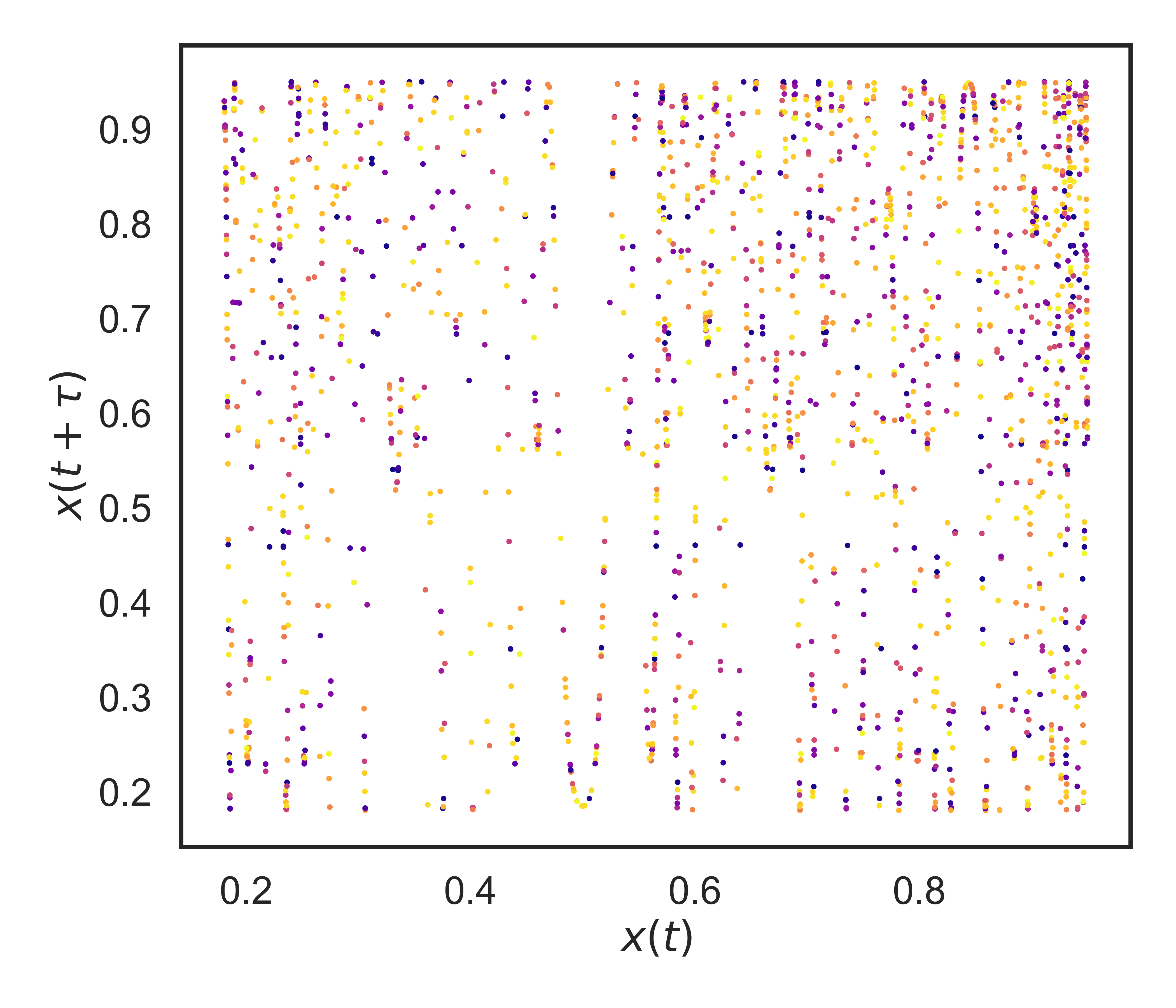}
    \caption{Two-dimensional delay embedding of the logistic map at \(r=3.8\), color-coded by the discrete second derivative \(\lvert x^{\prime\prime}(t)\rvert\).  Warmer hues reflect regions of higher curvature, pinpointing rapid changes in the orbit’s local slope.}
    \label{fig:delay_embedding_curvature}
\end{figure}

In \emph{Figure \ref{fig:spiral_attractor}}, we present a forced, damped oscillator with state equations
\begin{equation}
\begin{aligned}
&\dot{x} \;=\; \dot{x},\\
&\ddot{x} \;=\; -\,0.05\,\dot{x} \;-\; x \;+\; 0.01\,\sin\bigl(5\,t\bigr).
\end{aligned}
\end{equation}
A moderate damping term \(\bigl(-0.05\,\dot{x}\bigr)\) and a small sinusoidal forcing \(\bigl(0.01\,\sin(5t)\bigr)\) produce a \emph{spiral limit cycle} in the \((x,\dot{x})\)-plane.  As the flow evolves, trajectories converge onto a closed orbit whose geometry is slightly perturbed by the external drive, yielding a quasi-harmonic spiral structure with gradual inward drift.  In stark contrast to chaotic systems, such a limit cycle implies that small initial variations decay to the same stable loop, demonstrating phase coherence under forced oscillations.  From an echo state network perspective, having a known stable attractor of this form provides a valuable test for capturing and reproducing mild nonlinearity while maintaining consistent amplitude and phase—a property important in tasks requiring stable and predictable oscillatory outputs.

\begin{figure}[!ht]
    \centering
    \includegraphics[width=0.72\linewidth]{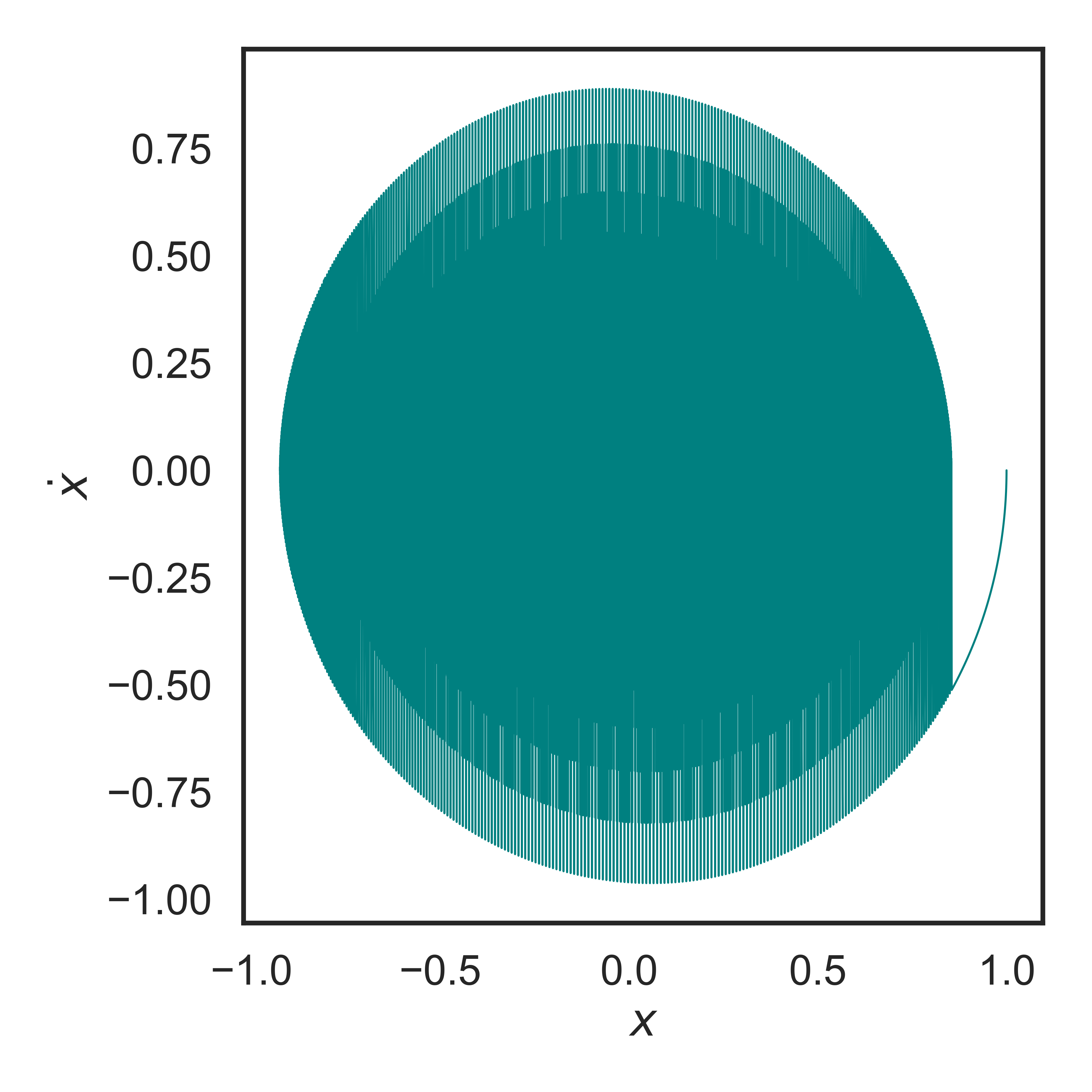}
    \caption{Spiral limit cycle of a forced, damped oscillator in the \((x,\,\dot{x})\)-plane, showing gradual convergence to a stable, spiral-like orbit under moderate damping and periodic forcing.}
    \label{fig:spiral_attractor}
\end{figure}

We  visualize how an 
 ESN  internally encodes the chaotic input \(x(t)\) from the Lorenz system in \emph{Figures \ref{fig:esn_state_pca_lorenz}, \ref{fig:esn_state_pca3_lorenz}}.  Specifically, if each reservoir state \(\mathbf{x}(t)\in \mathbb{R}^{N}\) evolves by:
\begin{equation}
\mathbf{x}(t+1)
\,=\;
\tanh\Bigl(
W\,\mathbf{x}(t)
\;+\;
W_{\mathrm{in}}\;x(t)
\Bigr),
\end{equation}
where \(W\) is scaled to have a spectral radius less than unity (here set to \(0.95\)) and \(W_{\mathrm{in}}\) maps the single-dimensional Lorenz drive into the reservoir.  We then reduce the high-dimensional state trajectory \(\{\mathbf{x}(t)\}_{t=0}^{T-1}\) to two principal components via PCA, yielding a low-dimensional embedding that captures the dominant variance directions of the reservoir’s internal dynamics.  Color-coding by \(x(t)\) (the same Lorenz component used to drive the network) reveals how regions in the projected reservoir manifold correspond to distinct input amplitudes, thus providing an interpretable view of how chaotic signals are represented and unfolded across the internal states of the ESN.  This geometric structure often underlies the network’s ability to perform tasks involving chaotic or time-varying data.

\begin{figure}[!ht]
    \centering
    \includegraphics[width=0.8\linewidth]{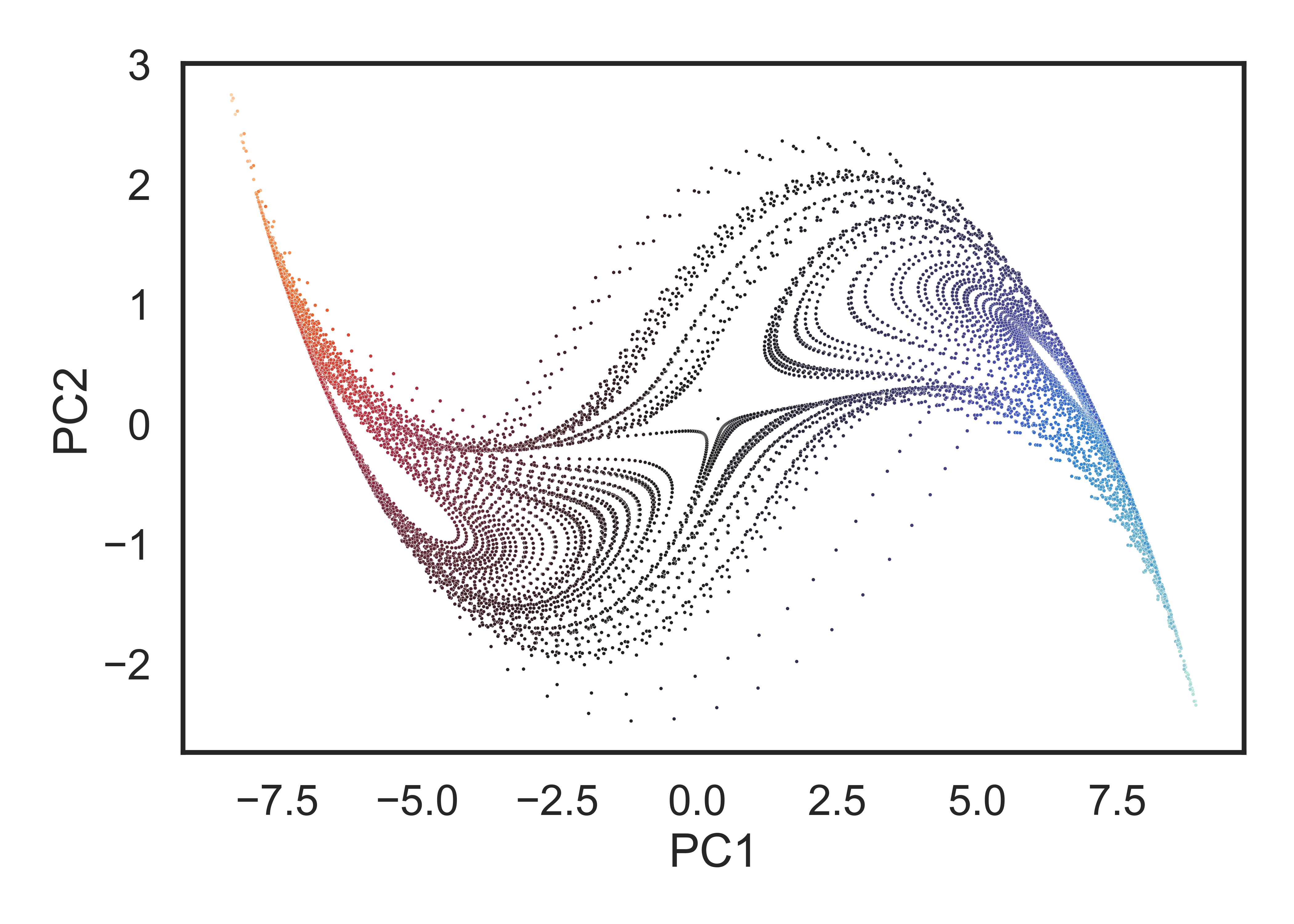}
    \caption{Two-dimensional PCA projection of an ESN’s reservoir states, driven by the Lorenz system’s \(x\)-component.  Points are colored according to the input amplitude \(x(t)\), illustrating the manifold geometry induced by the chaotic driving signal within the reservoir’s high-dimensional state space.}
    \label{fig:esn_state_pca_lorenz}
\end{figure}

\begin{figure}[!ht]
    \centering
    \includegraphics[width=0.99\linewidth]{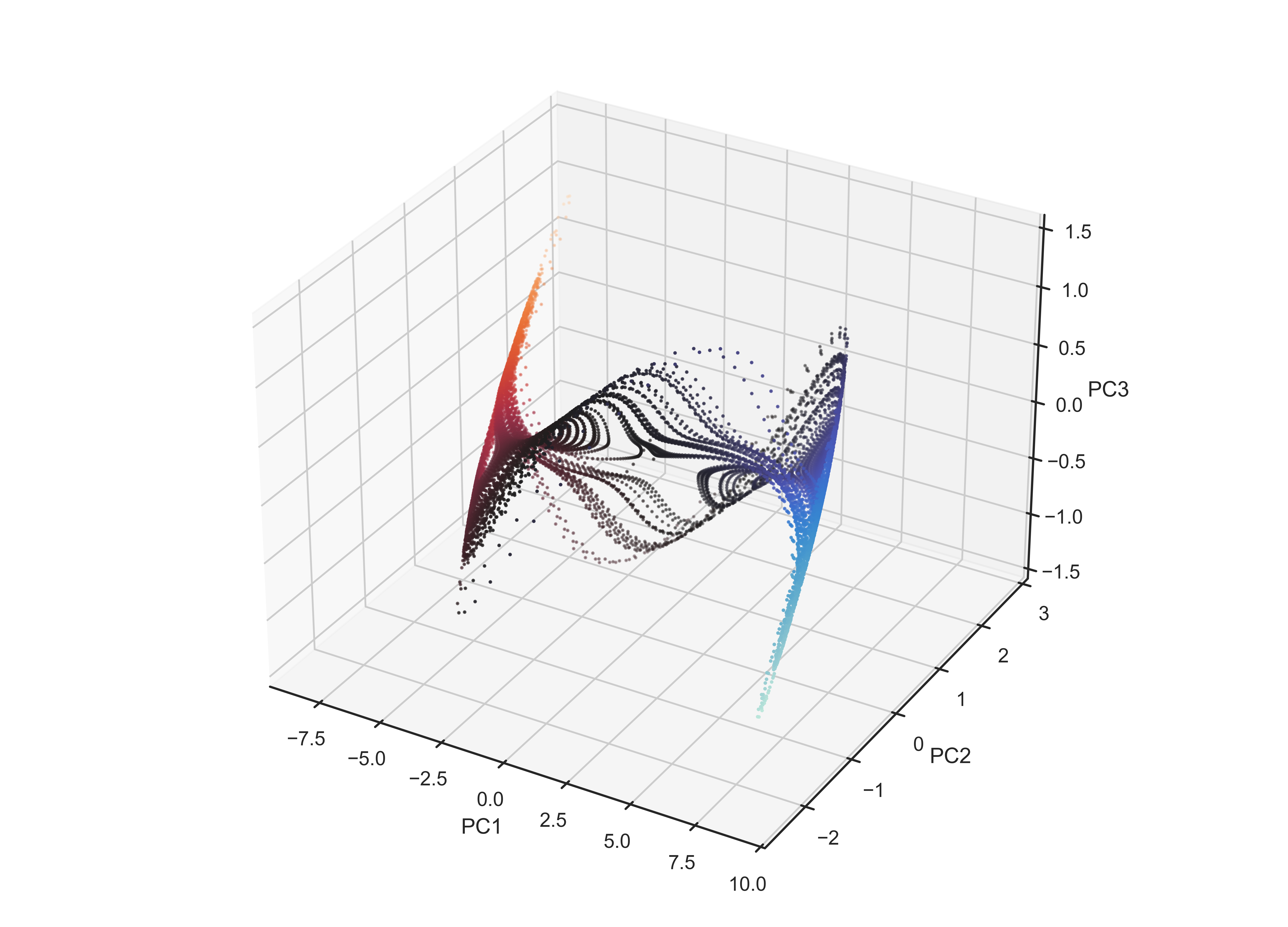}
    \caption{Three-dimensional PCA projection of an ESN’s reservoir states, driven by the Lorenz system’s \(x\)-component.  Points are colored according to the input amplitude \(x(t)\).}
    \label{fig:esn_state_pca3_lorenz}
\end{figure}

We can apply a \emph{symbolic derivative} transformation to an ESN neuron’s time series \(\{s(t)\}\) (see \emph{Figure \ref{fig:esn_symbolic_derivative}}).  Concretely, we compare successive states \(s(t+1) - s(t)\) against a small threshold \(\varepsilon\), classifying the local slope as either rising \((+1)\), falling \((-1)\), or neutral \((0)\).  This step-based encoding highlights rapid sign changes and stagnations in the neuron’s activation pattern, shedding light on how the network’s internal state transitions reflect input-driven or recurrent-driven fluctuations.  In \emph{Figure \ref{fig:esn_state_entropy}}, we further track the \emph{sliding window Shannon entropy} of the reservoir state distribution.  Within each window of length \(W\), we compute a histogram of states and apply the formula
\begin{equation}
H \;=\; -\,\sum_{k=1}^{B} p_{k}\,\ln p_{k},
\end{equation}
where \(p_{k}\) is the empirical probability mass in bin \(k\).  Higher entropy signifies a more dispersed or uniformly populated region of state space, reflecting greater diversity in neuronal activations, whereas lower entropy indicates tighter clustering and more predictable dynamics.  Both the symbolic derivative representation and the windowed entropy profile offer complementary perspectives on the reservoir’s nonlinear processing, elucidating how echoes of past inputs and internal recurrent effects conspire to yield a spectrum of dynamical motifs.

\begin{figure}[!ht]
    \centering
    \includegraphics[width=0.9\linewidth]{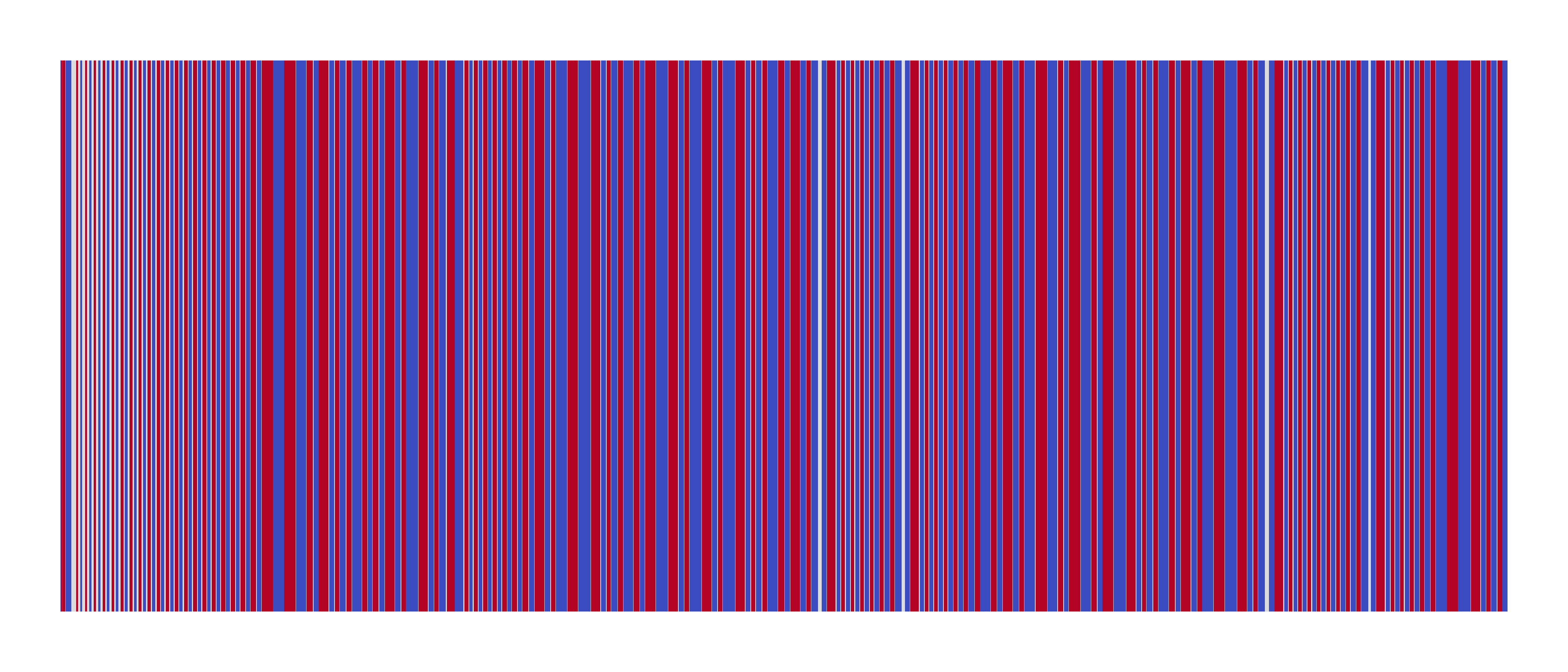}
    \caption{Symbolic Derivative Pattern of ESN Neuron 0.}
    \label{fig:esn_symbolic_derivative}
\end{figure}

\begin{figure}[!ht]
    \centering
    \includegraphics[width=0.8\linewidth]{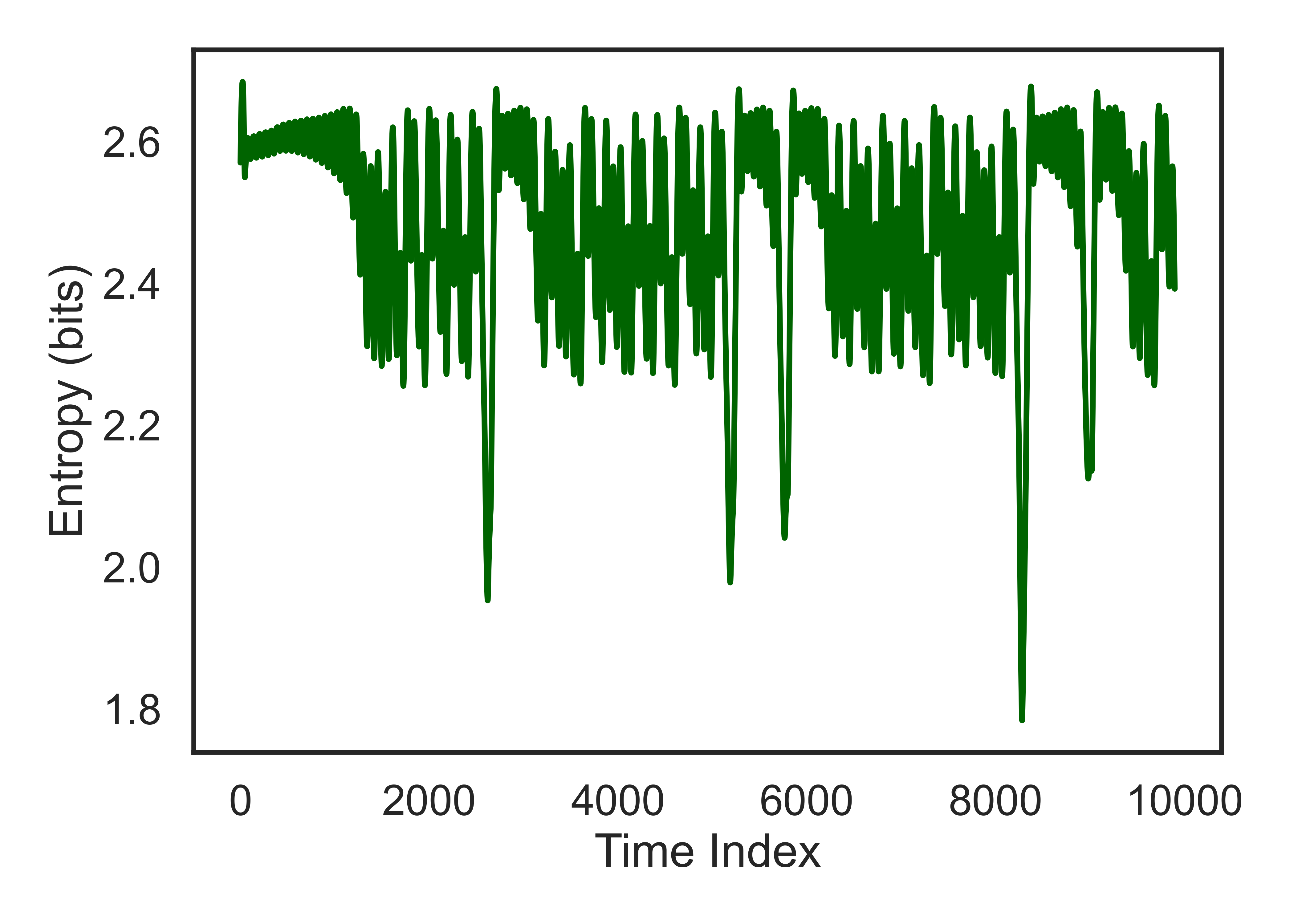}
    \caption{Shannon Entropy of ESN State (Sliding Window).}
    \label{fig:esn_state_entropy}
\end{figure}

We observe the \emph{echo response} of a 20-neuron reservoir subjected to a \(\delta\)-impulse at time \(t=10\) (see \emph{Figure \ref{fig:esn_echo_response}}).  Initially, the input \(u(t)\) is identically zero, until a single impulse of amplitude \(1.0\) perturbs the network.  Each row in the heatmap shows how a particular neuron’s activation evolves for subsequent timesteps.  The resulting transient wave of activations (darker or lighter regions) gradually decays over time, illustrating the reservoir’s capacity to \emph{retain} and \emph{fade out} the impulse as a short-term memory trace.  Such echo responses are central to ESN functionality, where relevant information in the input is mapped into a high-dimensional state space and left to reverberate—often crucial for tasks requiring the reconstruction or prediction of temporally extended signals.

\begin{figure}[!ht]
    \centering
    \includegraphics[width=0.99\linewidth]{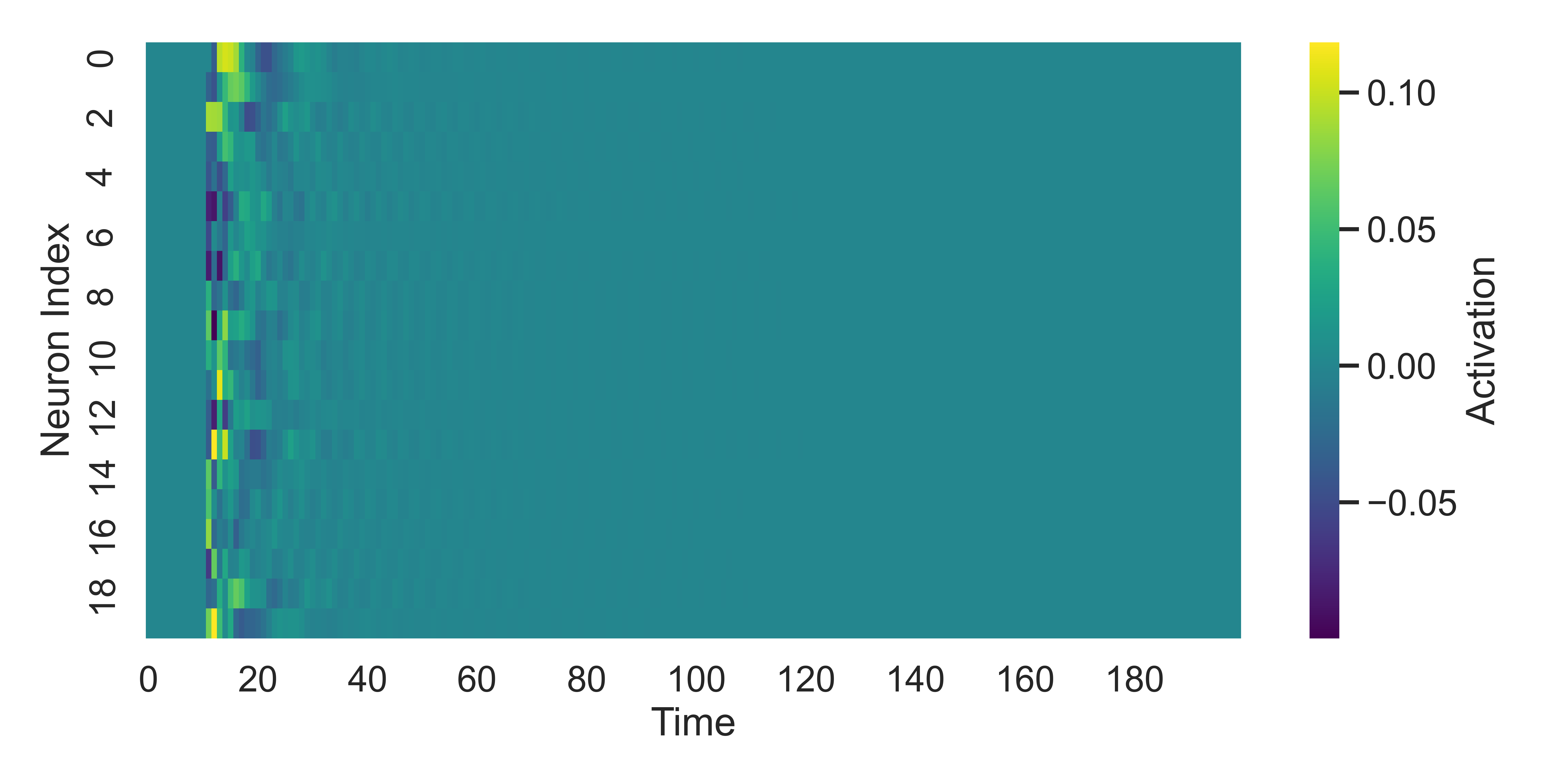}
    \caption{Echo Response of Reservoir to Delta Input.}
    \label{fig:esn_echo_response}
\end{figure}

In \emph{Figure \ref{fig:esn_input_state_correlation}}, we visualize the \emph{input–state cross-correlation matrix} for an ESN driven by a sinusoidal input \(u(t)=\sin(\omega\,t)\).  Specifically, for each neuron \(i\) and each delay \(\tau\in\{0,\dots,29\}\), we compute the Pearson correlation between \(\{\,X[t,\,i]\}_{t}\) and the shifted input signal \(\{\,u[t-\tau]\}_{t}\).  The resulting matrix thus quantifies how each neuron’s activation correlates with past (or near-past) values of the input, revealing the \emph{lag-specific memory} embedded within the reservoir dynamics.  In a well-configured ESN, different subsets of neurons specialize in different temporal offsets, enabling the network as a whole to preserve and process a broad temporal context.  This property plays a central role in tasks such as time-series forecasting, where the reservoir’s collective encoding of past input amplitudes (through these cross-correlations) underpins the system’s predictive power.

\begin{figure}[!ht]
    \centering
    \includegraphics[width=0.96\linewidth]{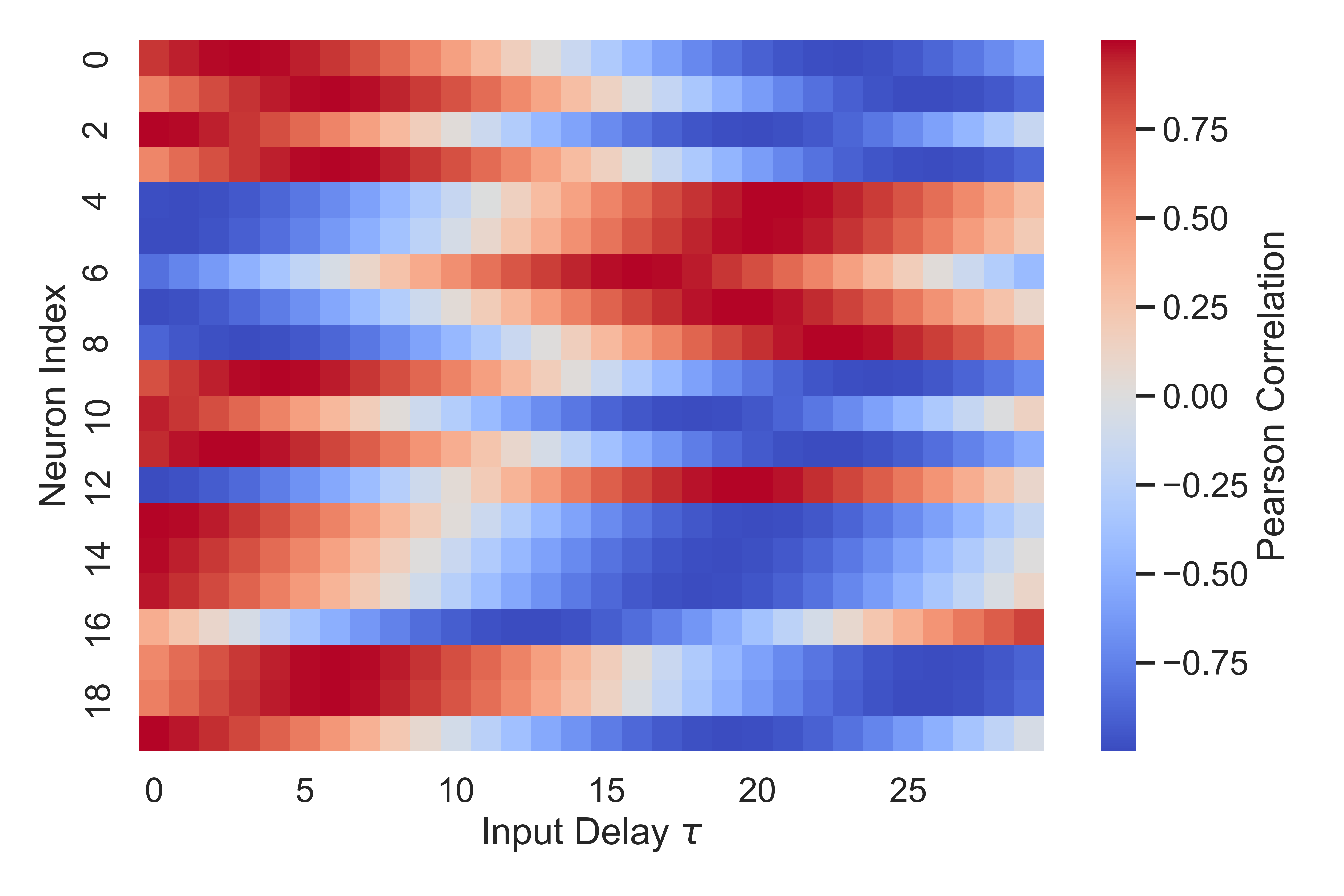}
    \caption{Input–State Cross-Correlation Matrix.}
    \label{fig:esn_input_state_correlation}
\end{figure}

In \emph{Figure \ref{fig:esn_activation_variance}}, we visualize the distribution of \emph{per-neuron activation variance} in an ESN.  For each neuron \(i\), we compute 
\begin{equation}
\mathrm{Var}\bigl[X_{\cdot,i}\bigr]
\;=\;
\frac{1}{T}\sum_{t=1}^{T}\Bigl(x_{t,i}\;-\;\bar{x}_{\cdot,i}\Bigr)^{2},
\end{equation}
where \(x_{t,i}\) is the activation of neuron \(i\) at time \(t\), and \(\bar{x}_{\cdot,i}\) is the empirical mean across all time steps.  Larger variance indicates a broader range of dynamic responses, implying that the neuron explores multiple regions of its activation function, while near-zero variance suggests that it either remains quiescent or saturates near a fixed output level.  In the broader context of reservoir computing, this variance distribution serves as a diagnostic tool for assessing how well the network’s representational capacity is being utilized: if too many neurons exhibit minimal or extreme variance, the ESN may underutilize its state space or risk numerical instability.

\begin{figure}[!ht]
    \centering
    \includegraphics[width=0.8\linewidth]{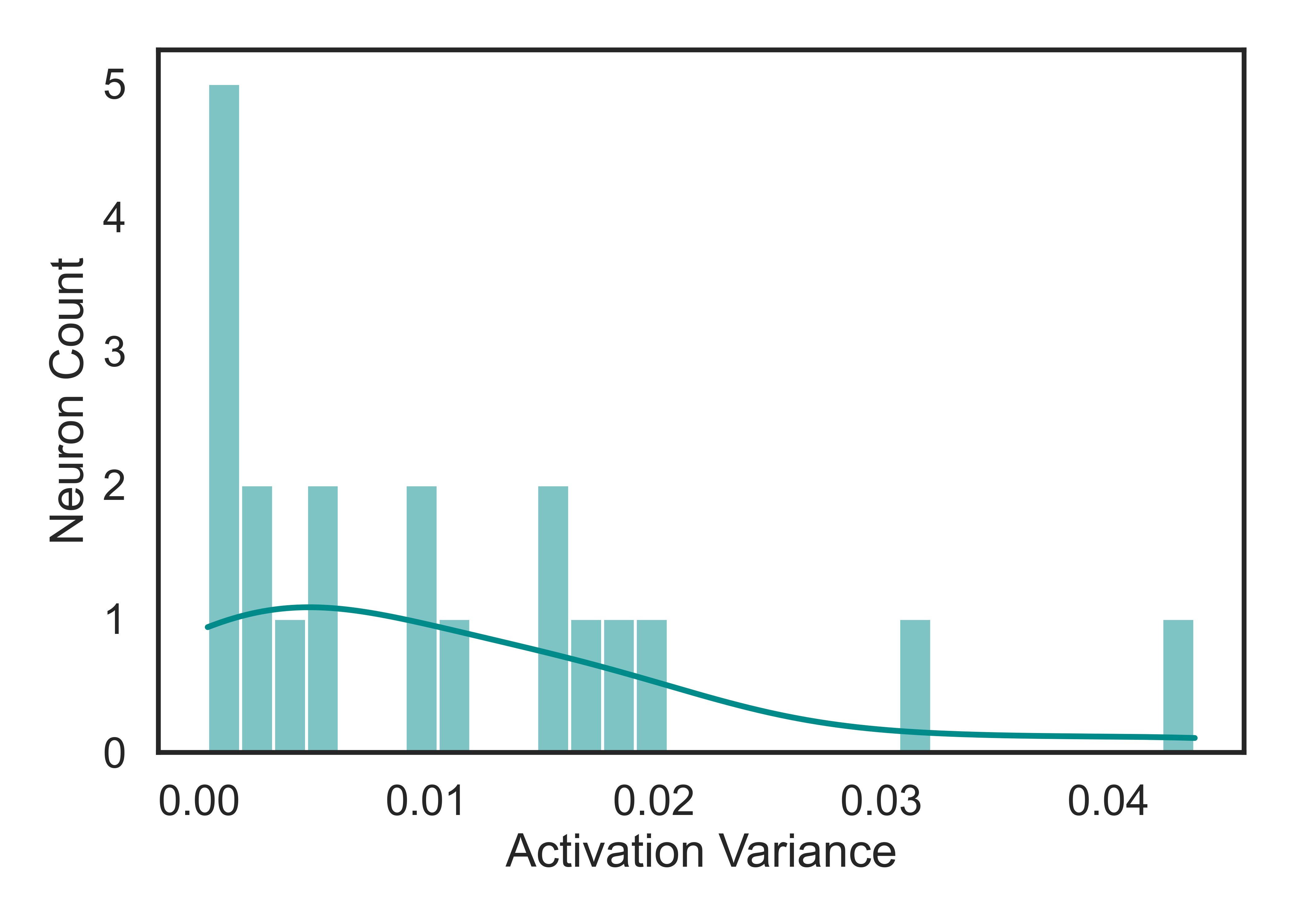}
    \caption{Input–State Cross-Correlation Matrix.}
    \label{fig:esn_activation_variance}
\end{figure}

In Figure \ref{fig:esn_recurrence_density}, we quantify how often the reservoir states revisit (or remain close to) previously visited regions of the state space.  Concretely, for each time point \(t\), we compute the fraction of points \(\mathbf{x}(s)\) (with \(s\) ranging from \(1\) to \(T\)) that lie within an \(\varepsilon\)-ball of \(\mathbf{x}(t)\).  Formally, letting \(\mathbf{x}(t)\) denote the reservoir state in \(\mathbb{R}^{N}\) at time \(t\), we define a \emph{recurrence density}
\begin{equation}
\text{RD}(t) \;=\; \frac{1}{T}\;\#\Bigl\{\,s:\bigl\lVert \mathbf{x}(s)-\mathbf{x}(t)\bigr\rVert < \varepsilon\Bigr\}.
\end{equation}
High values signify local clustering or extended “stickiness” in certain state-space neighborhoods, whereas low values imply frequent excursions away from prior orbits.  Such behavior can reveal whether the reservoir, under sinusoidal drive, forms quasi-stationary regions or transitions rapidly among diverse states.
\begin{figure}[!ht]
    \centering
    \includegraphics[width=0.8\linewidth]{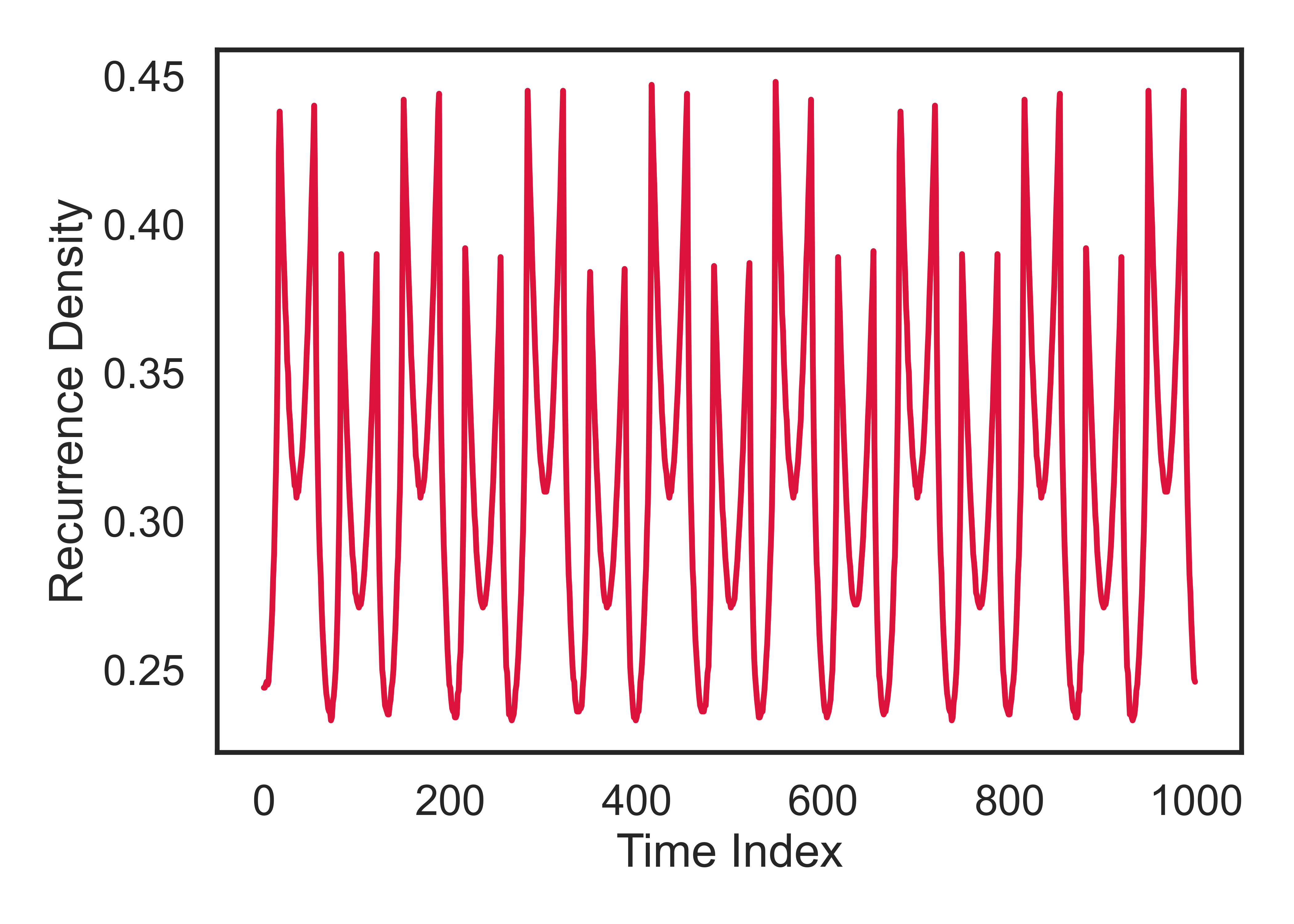}
    \caption{A time series of recurrence density in the reservoir, indicating how much each state \(\mathbf{x}(t)\) re-encounters itself (within distance \(\varepsilon\)) over the course of the simulation.  Peaks highlight stronger local re-visitation, tied to memory-like or attractor-like subdynamics.}
    \label{fig:esn_recurrence_density}
\end{figure}

\begin{figure}[!ht]
    \centering
    \includegraphics[width=0.8\linewidth]{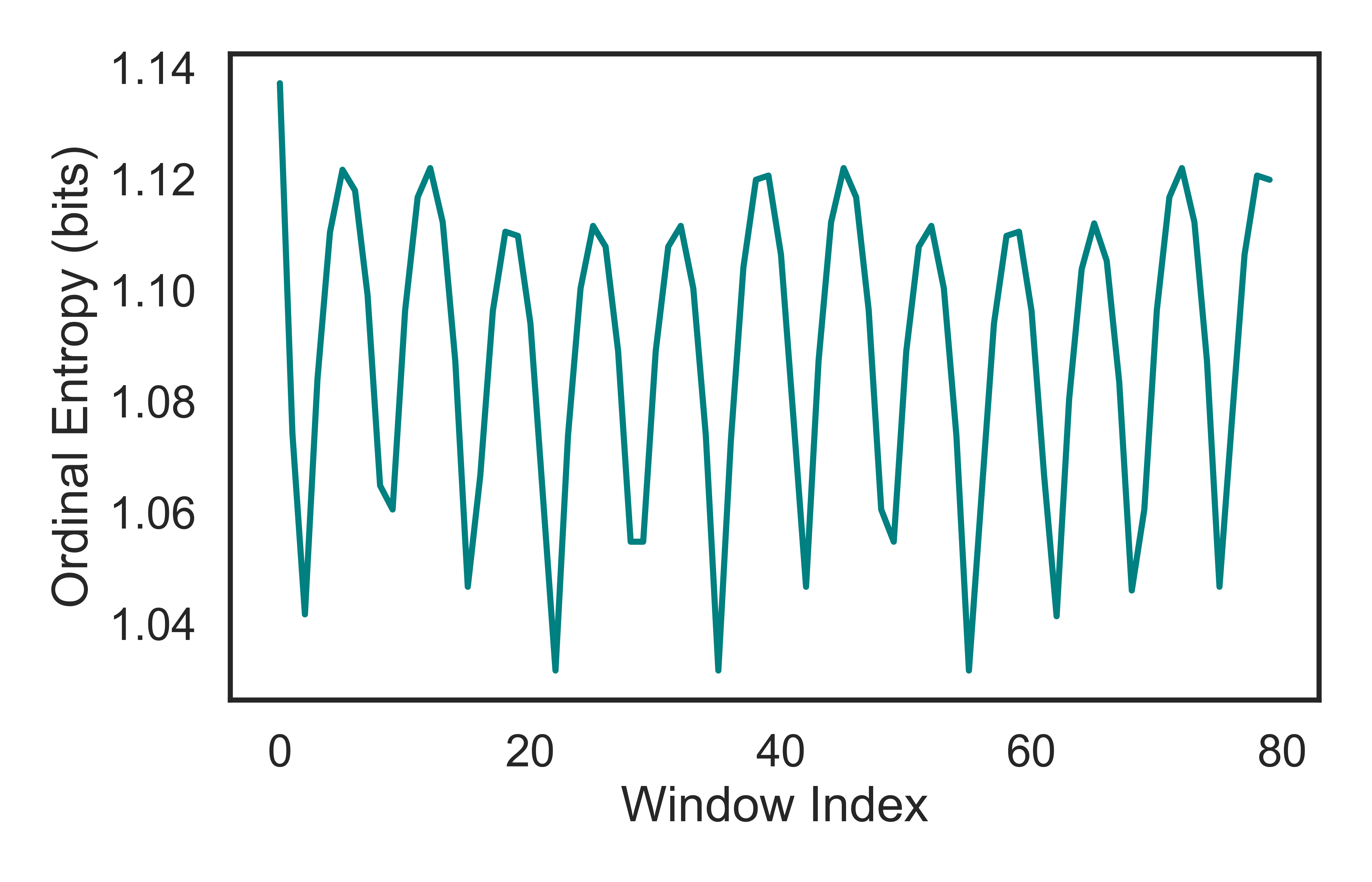}
    \caption{A sliding-window ordinal-entropy trace of the reservoir’s first neuron.  Fluctuations in this measure track time-varying complexity in the neuron’s local firing pattern, where higher entropy correlates with more diverse rank-order subsequences over each window.
}
    \label{fig:esn_symbolic_entropy}
\end{figure}

\begin{figure}[!ht]
    \centering
    \includegraphics[width=0.75\linewidth]{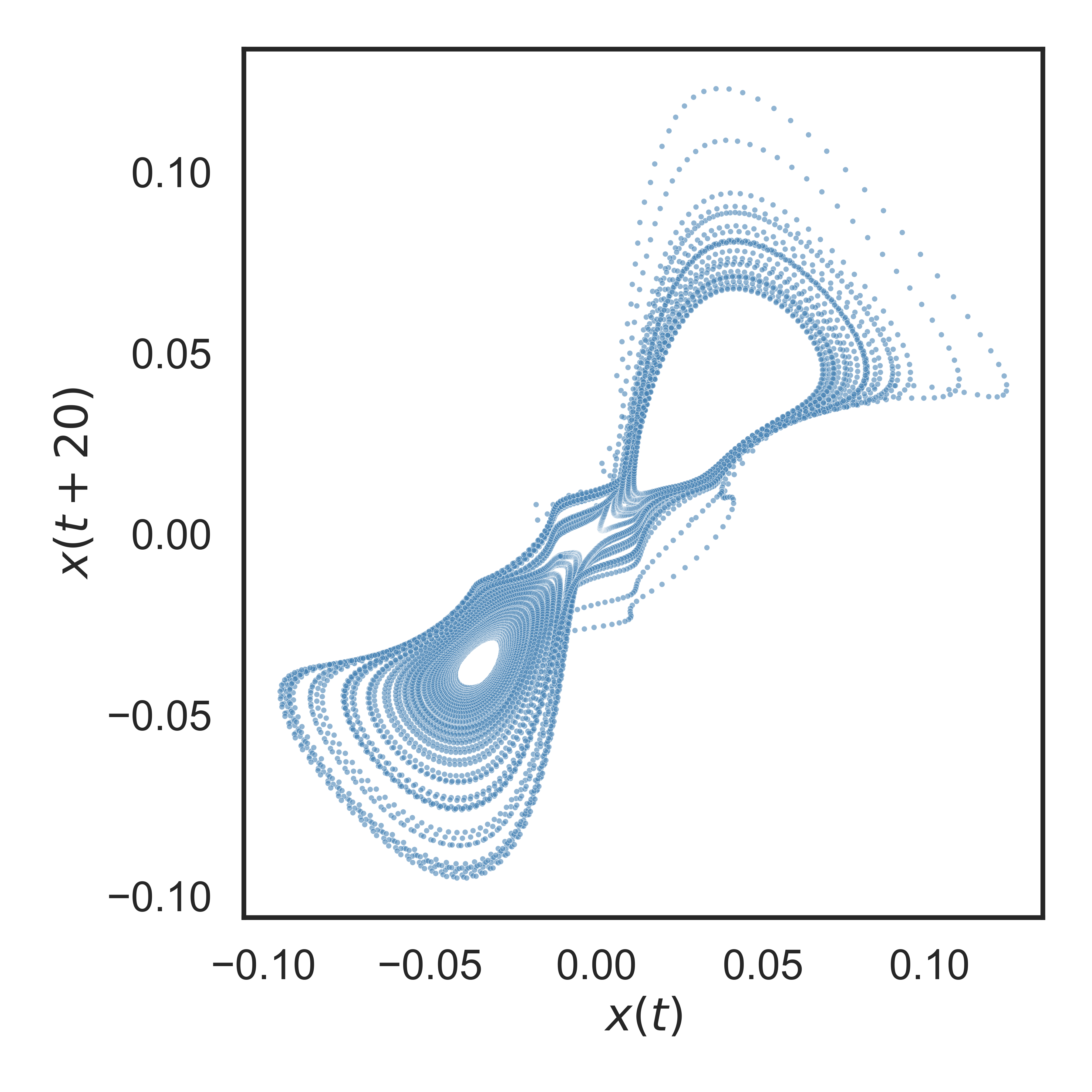}
    \caption{Two-dimensional delay map of a single reservoir neuron’s activation, plotting \(x(t)\) on the horizontal axis vs. \(x(t+\tau)\) on the vertical axis.  The resulting scatter reveals the local geometry of the neuron’s temporal evolution over a fixed lag \(\tau\).
}
    \label{fig:esn_return_map}
\end{figure}

\begin{figure}[!ht]
  \centering
  \begin{subfigure}{0.98\textwidth}
    \centering
    \includegraphics[width=\linewidth]{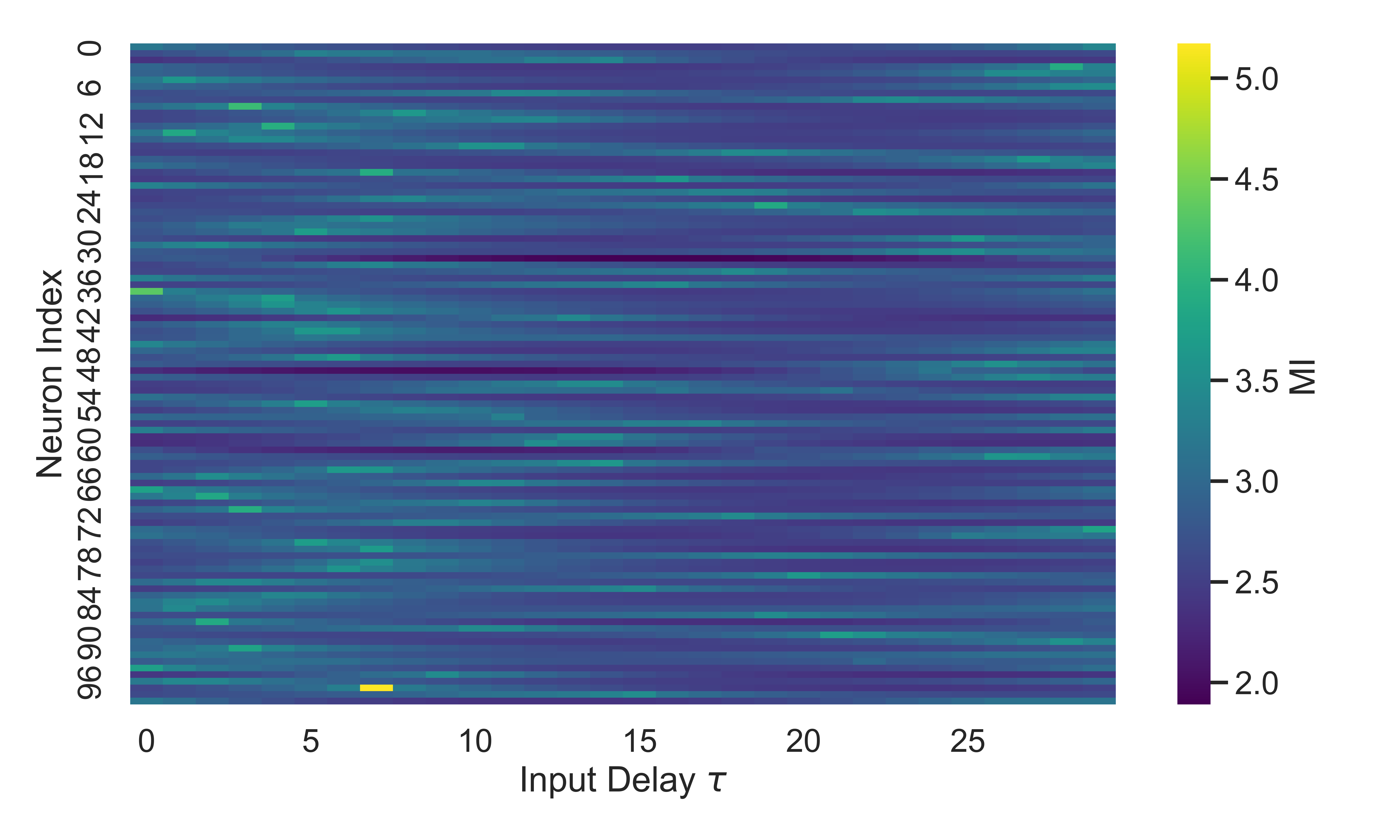}
    \caption{Heatmap of input–neuron mutual information.}
    \label{fig:input_neuron_map}
  \end{subfigure}

  \begin{subfigure}{0.88\textwidth}
    \centering
    \includegraphics[width=\linewidth]{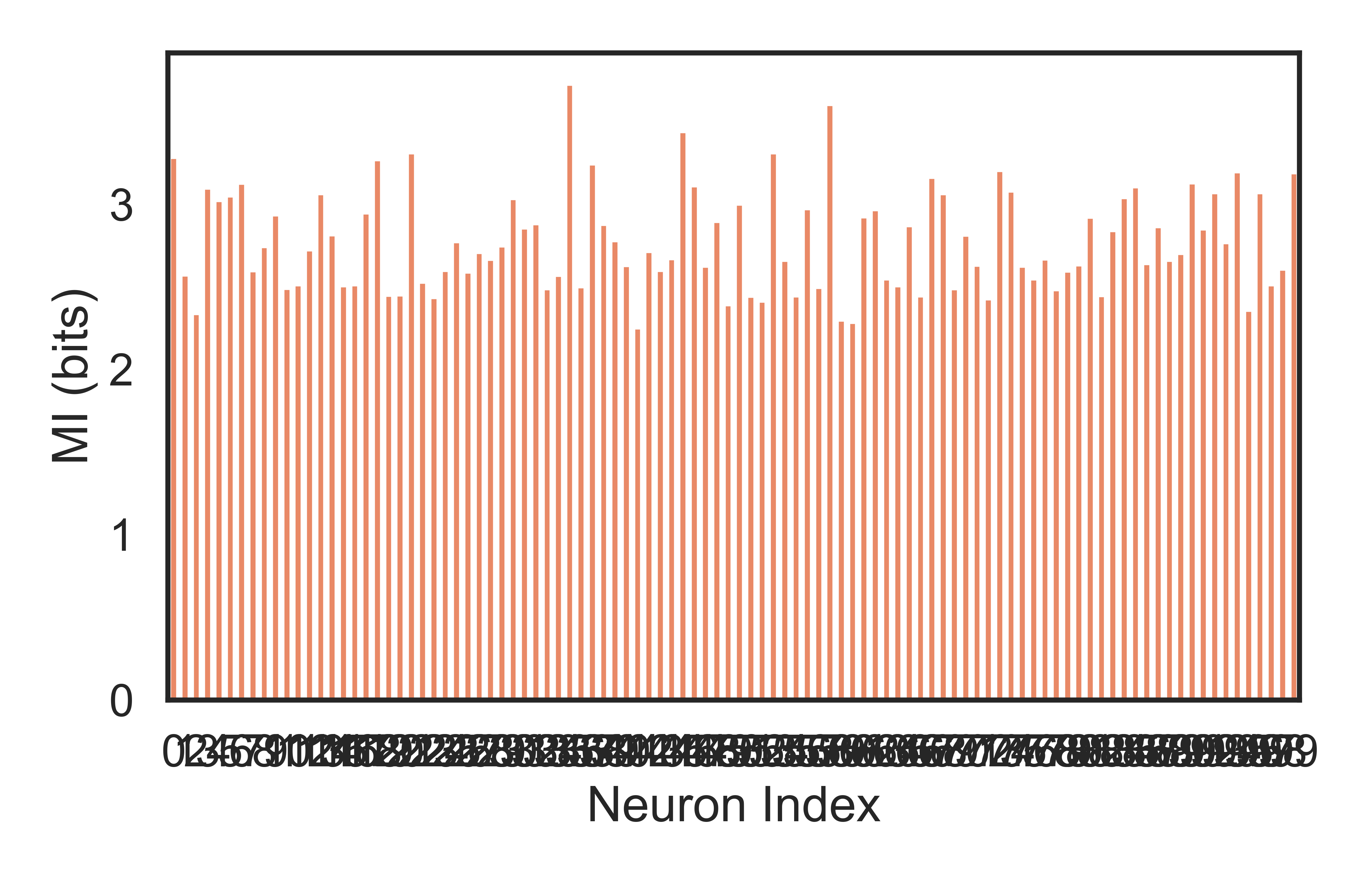}
    \caption{Neuron–output mutual information.}
    \label{fig:neuron_output_map}
  \end{subfigure}
  \caption{(a) Input-lag vs.\ neuron mutual information matrix. 
           (b) Bar plot of neuron–output mutual information, highlighting 
           which neurons strongly influence the reconstructed output.}
  \label{fig:mi_figures}
\end{figure}

In Figure \ref{fig:esn_symbolic_entropy}, we consider the first neuron’s time series \(\{x_{1}(t)\}\) within a sliding window of length \(W\), transform each segment into \emph{ordinal patterns} of dimension \(d=3\), and compute the resulting \emph{symbolic} (permutation) entropy:
\begin{equation}
H_{\mathrm{ord}}(S)
\;=\;
-\;\sum_{\pi\in\Pi_{d}} \;p_{\pi}\;\log_{2}\!\bigl(p_{\pi}\bigr),
\end{equation}
where \(\Pi_{d}\) is the set of all permutations of \(\{0,\dots,d-1\}\), and \(p_{\pi}\) is the empirical probability of pattern \(\pi\).  Higher entropy indicates more uniform coverage of available rank-order patterns, signifying richer, more aperiodic activity in the reservoir neuron, whereas lower entropy suggests repetitive or highly structured firing sequences.  Such symbolic analyses probe the \emph{intrinsic complexity} of neuronal activations, connecting straightforward statistical measures to the deeper chaotic or quasi-periodic dynamics often exploited by echo state networks.

In \emph{Figure \ref{fig:esn_return_map}}, we construct a \emph{delay return map} for a single reservoir neuron’s activation \(x(t)\) by plotting \(\bigl(x(t),\,x(t+\tau)\bigr)\) for a fixed time lag \(\tau\).  The dataset
\(
\Bigl\{\bigl[x(t),\,x(t+\tau)\bigr]\Bigr\}_{t=0}^{T-\tau}
\)
provides a two-dimensional projection of the neuron’s temporal evolution.  Such a map is a classical diagnostic in nonlinear dynamics: periodic or quasi-periodic sequences often yield closed or near-closed loops, while chaos or other complex behavior tends to fill out a region in the plane with intricate geometry.  In the context of echo state networks, examining the delay return map of an individual neuron highlights how that neuron internally “encodes” short-term correlations of the input or recurrent feedback.  Observing whether \(\bigl[x(t),\,x(t+\tau)\bigr]\) forms a narrow band, a loop, or a diffuse cloud can inform us about the neuron’s active regime (e.g., near saturation versus exploring the entire activation range) and thus its potential contribution to the overall memory capacity or predictive power of the reservoir.

In \emph{Figure \ref{fig:mi_figures}}, we measure the \emph{mutual information} (MI) flowing from a time-lagged sinusoidal input \(u(t-\tau)\) to each reservoir neuron’s state \(x_{i}(t)\), assembling a matrix that spans \(\tau=1\) to \(\tau=\max_{\tau}\) and \(i=1,\dots,N\).  Each cell in that matrix thus quantifies how strongly neuron \(i\) encodes or “remembers” the input signal \(\tau\) steps in the past.  Since different neurons typically specialize in different temporal offsets, this plot uncovers how the ESN’s high-dimensional state collectively captures a range of time delays, thereby forming a robust short-term memory mechanism.  We then train a simple linear readout via ridge regression to map the entire reservoir state \(\mathbf{x}(t)\) back to the original input \(u(t)\), and compute MI between each neuron’s activation and the resulting output \(\hat{y}(t)\).  Neurons with higher output MI are more influential in driving the readout to reconstruct the input.  In essence, these two figures reveal that certain neuron–delay combinations exhibit strong correlations with past inputs, while others play a more pivotal role in final output prediction, illustrating both the memory diversity and the partial redundancy that characterizes a well-structured reservoir.

\section{Architectures and Variants}\label{sec:archi}
\subsection{Echo State Networks}
ESNs, pioneered by Jaeger et al. \cite{jaeger2001echo}, sought to address the notorious challenges and inefficiencies in training RNNs. Since their inception, RNNs have been used for sequential pattern recognition, utilizing feedback connections and gradient-based learning methods like backpropagation through time (BPTT) \cite{rumelhart1985learning} and real-time recurrent learning (RTRL) \cite{williams1989learning, catfolis1993method}. However, these methods fail to handle long-term dependencies, require significant computational resources, and experience convergence issues due to bifurcations. 

The ESN model employs a reservoir of leaky-integrated discrete-time continuous-value units for RNN-based computation. At any moment in time \( t \), the network operates on three principal components: the input vector \( \mathbf{u}(t) \in \mathbb{R}^k \), the reservoir state vector \( \mathbf{x}(t) \in \mathbb{R}^n \) and the output vector \( \mathbf{y}(t) \in \mathbb{R}^m \). The reservoir state \( \mathbf{x}(t) \) evolves based on nonlinear functions of the current input \( \mathbf{u}(t) \), the preceding reservoir state \( \mathbf{x}(t-1) \) and optionally the prior output \(\mathbf{y}(t-1)\).  Formally, this update is described as
\begin{equation}
\mathbf{x}(t+1) = (1-\alpha)\mathbf{x}(t) + \alpha f(W_{\text{in}} \mathbf{u}(t) + W \mathbf{x}(t) + W_{\text{back}}\mathbf{y}(t-1) + \mathbf{b}),
\end{equation}
where \( f \) represents a nonlinear activation function, \(\alpha \in (0,1] \) represents the leaking rate and \(\mathbf{b} \in \mathbb{R}^n \) is the bias vector. The matrix \( W \in \mathbb{R}^{n \times n} \) represents the internal connectivity of the reservoir, governing the interactions between its internal units, \( W_{\text{in}} \in \mathbb{R}^{n \times k} \) encapsulates the weights governing the connections between the input and the reservoir, and \( W_{\text{back}} \in \mathbb{R}^{n \times m} \) denotes the connections that feedback from the output units to the internal units. The leaking rate  $\alpha$ is tuned to strike a balance between memory retention and adaptability. A low $\alpha$ retains more historical information, while a high $\alpha$ makes the model more responsive to recent inputs. ESNs can also be used without leaky integration ($\alpha$ = 1) and without feedback from the output to the reservoir, simplifying the update equation to
\begin{equation}
\mathbf{x}(t+1) = f(W_{\text{in}} \mathbf{u}(t) + W \mathbf{x}(t) + \mathbf{b}).
\end{equation}
The output vector $ \mathbf{y}(t)$ is computed as a linear transformation of the reservoir state $\mathbf{x}(t)$, \begin{equation}
\mathbf{y}(t) = W_{\text{out}} \mathbf{x}(t),
\end{equation}
where \( W_{\text{out}} \in \mathbb{R}^{m \times n} \) represents the readout weight matrix, which defines the mapping from the reservoir state to the output.  \( W_{\text{out}} \) constitutes the only set of trainable parameters within the network, making the training process significantly more computationally efficient compared to RNNs, where all weight matrices are typically optimized. 

\begin{figure}[!ht]
    \centering
    \captionsetup{justification=centering}
    \includegraphics[width=0.9\textwidth]{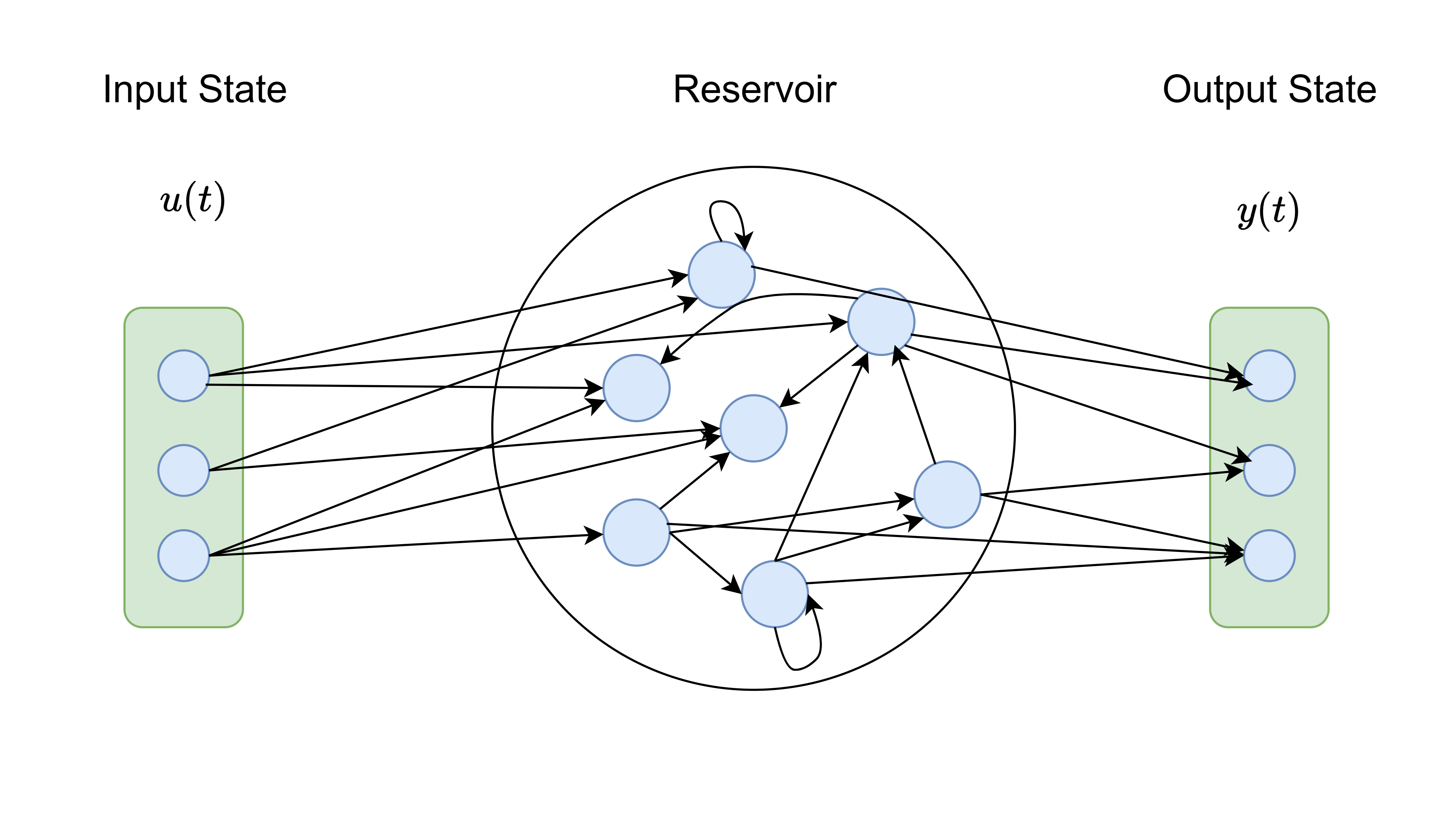}
    \caption{An illustration of a classical Echo State Network. Here $u(t)$ is the input state and $y(t)$ is the output state, the input state is acted upon by $W_{in}$ and the output state is obtained when the reservoir state, $x(t)$ is acted upon by $W_{out}$.}
    \label{fig:Echo State Network}
\end{figure}

The readout weights $W_{\text{out}}$ are computed through linear regression by minimizing the mean squared error (MSE) between the predicted output \( \mathbf{y}(t) \) and the target output \( \mathbf{y}_{\text{target}}(t) \)
\begin{equation}
\text{MSE} = \frac{1}{T - T_{warm-up}} \sum_{t = T_{warm-up} + 1}^{T} \left( \mathbf{y}(t) - \mathbf{y}_{\text{target}}(t) \right)^2,
\end{equation}
where \( T \) is the total number of time steps and \( T_{\text{warm-up}} \) is the transient period required for the reservoir state $\mathbf{x}(t)$ to satisfy the ESP, ensuring it depends solely on the input history rather than the initial state  $\mathbf{x}(0)$. During this warm-up period, the outputs of the reservoir are not used for training or prediction. 

The MSE can be reformulated in matrix form for the sake of compactness. The reservoir states collected after the warm-up period can be stacked into a matrix
\begin{equation}
\mathbf{X} = [\mathbf{x}(T_{warm-up}+1) \ \ \mathbf{x}(T_{warm-up}+2) \ \ ...\ \ \mathbf{x}(T)]^\top,
\end{equation}
where $\mathbf{X} \in \mathbb{R}^{(T - T_{warm-up}) \times n}$. Similarly, the target outputs over the training period can be represented as
\begin{equation}
\mathbf{Y}_{target} = [\mathbf{y}_{target}(T_{warm-up}+1) \ \ \mathbf{y}_{target}(T_{warm-up}+2) \ \ ...\ \ \mathbf{y}_{target}(T)]^\top,
\end{equation}
where $\mathbf{Y}_{target} \in \mathbb{R}^{(T - T_{warm-up}) \times m}$. The predicted output  $\mathbf{Y} \in \mathbb{R}^{(T - T_{warm-up}) \times m}$ can then be expressed as
\begin{equation}
\mathbf{Y} = \mathbf{X} W_{\text{out}}^\top. 
\end{equation}
This allows the MSE to be compactly written as
\begin{equation}
\text{MSE} = \frac{1}{T - T_{warm-up}} \|\mathbf{Y} - \mathbf{Y}_{target} \|^2
\end{equation}
To minimize the MSE, the optimal readout weights $W_{\text{out}}^*$ are computed as
\begin{equation}
W_{\text{out}}^* = \underset{W_{\text{out}}}{\arg \min} \ \|\mathbf{X} W_{\text{out}}^\top - \mathbf{Y}_{target} \|^2
\end{equation}
This leads to the closed-form solution \begin{equation}
W_{\text{out}}^* =  \mathbf{X}^\dagger \mathbf{Y}_{\text{target}},
\end{equation}
where the pseudo-inverse \( \mathbf{X}^\dagger = (\mathbf{X}^\top \mathbf{X})^{-1} \mathbf{X}^\top \). With $L_2$-regularization applied, the updated expression for $W_{\text{out}}^*$ is given by
\begin{equation}
W_{\text{out}}^* = \underset{W_{\text{out}}}{\arg \min} \ (\|\mathbf{X} W_{\text{out}}^\top - \mathbf{Y}_{target} \|^2 + \lambda\|W_{\text{out}}\|^2),
\end{equation}
where $\lambda>0$ is the regularization parameter. The analytical solution for the ridge regression yields
\begin{equation}
W_{\text{out}}^* = (\mathbf{X}^\top\mathbf{X} + \lambda I )^{-1} \mathbf{X}^\top \mathbf{Y}_{\text{target}},
\end{equation}
where $I \in \mathbb{R}^{n \times n}$ is the identity matrix. The regularization term $\lambda I$ ensures $\mathbf{X}^\top\mathbf{X} + \lambda I$ is invertible, thus guaranteeing a unique solution while also penalizing overfitting. This modified objective function ensures that the model not only minimizes the error on the training data but also prevents excessively large weights, thereby improving stability and generalization \cite{ehlers2025improving,Jaurigue2024, Hulser2022}.

In ESNs, the reservoir’s internal connections \( W \), the input weights \( W_{\text{in}} \) and the feedback weights \( W_{\text{back}}\) are randomly initialized and remain fixed during training. This fixed-reservoir design streamlines training by decoupling the reservoir's dynamic behavior from the optimization of the output weights \( W_{\text{out}} \).  This achieves an optimal balance between computational efficiency and dynamic expressiveness, making ESNs well-suited for a variety of sequence modeling tasks.

\subsection{Liquid State Machines}
LSMs, introduced by Maass et al. \cite{maass2002real}, extend the principles of reservoir computing to the domain of spiking neural networks (SNNs) \cite{maass1997networks, malcolm2023snn}. SNNs, considered the \textit{third generation of artificial neural networks}, outperform traditional neural networks by mimicking biological processes in the human brain on many tasks. The development of SSNs was inspired by the flexibility of the neocortex, a key part of the brain responsible for functions such as sensory perception and motor control \cite{kumarasinghe2021brain}.  In contrast to traditional neural networks which require constant weight adjustments via backpropagation, SNNs activate neurons only when they reach a threshold, thus reducing both energy usage and processing time.  Unlike ESNs, which feature continuous-valued activations, LSMs use a reservoir composed of spiking neurons, whose dynamics are governed by discrete events or spikes. These liquid neurons interact in a fluid-like reservoir, where the state of the system changes in response to the input.  The state of the reservoir at time $t$ is represented by the vector $\mathbf{x}(t) \in \mathbb{R}^n$, where each component corresponds to the firing activity of a neuron. The evolution of the reservoir state is described by:
\begin{equation}
\mathbf{x}(t+1) = f_\text{spike}(W_\text{in} \mathbf{u}(t) + W _\text{res}\mathbf{x}(t)),
\end{equation}
where $f_\text{spike}$ models the spiking dynamics of neurons, such as integrate-and-fire \cite{dutta2017leaky} or Hodgkin-Huxley models \cite{hodgkin1952quantitative}. The input weight matrix $W_\text{in}$ and recurrent weight matrix $W_\text{res}$ are initialized randomly and remain fixed, as in ESNs.

The theoretical foundation of LSMs lies in their ability to approximate any fading memory functional with arbitrary precision, provided the reservoir is sufficiently high-dimensional and exhibits rich dynamics \cite{maass2002real}. This universality is mathematically guaranteed by the separation property, which states that distinct input sequences produce distinct trajectories in the reservoir state space. Additionally, the fading memory property ensures that the reservoir's response to inputs decays over time, making it suitable for processing temporal information.

A key advantage of LSMs is their biological plausibility, as their structure and dynamics mimic neural circuits in the brain. The spiking activity of neurons enables LSMs to encode temporal information in a sparse and efficient manner, making them well-suited for real-time processing tasks such as speech recognition, motor control, and robotics. The nonlinearity introduced by spiking dynamics enriches the reservoir's representational power, allowing it to capture complex temporal patterns.

However, the use of spiking neurons introduces additional mathematical challenges in analyzing the dynamics of LSMs. The reservoir dynamics depend on the precise timing of spikes, which are influenced by both the input and the recurrent connections. This requires advanced tools from dynamical systems theory and stochastic processes to model and understand. Despite these complexities, LSMs have demonstrated strong performance in tasks requiring temporal precision and have inspired further research into neuromorphic computing.

Together, ESNs and LSMs represent two complementary approaches to reservoir computing, differing in their mathematical formulations and computational properties. While ESNs focus on simplicity and efficiency, LSMs emphasize biological realism and temporal precision, making both models foundational to the broader RC paradigm.

\subsection{Structured Reservoirs}

While the canonical  ESN architecture employs a random reservoir with a large, sparsely connected recurrent weight matrix \(\mathbf{W}_{\text{res}}\), a variety of alternative topologies have been proposed to enhance theoretical tractability or reduce complexity. These \emph{structured reservoirs} typically replace fully random connectivity with more constrained, deterministic connection patterns that can still support the echo state property and rich dynamical behavior. 

\paragraph{Cycle Reservoirs.}
A \emph{cycle reservoir}, often called a \emph{ring reservoir}, arranges the reservoir neurons in a single directed cycle. Concretely, one may set \(\mathbf{W}_{\text{res}}\) to be an \(N\times N\) matrix such that:
\begin{equation}
    W_{\text{res}}(i, (i+1)\,\bmod\,N) \;=\; \alpha, 
    \quad 
    \text{and} 
    \quad
    W_{\text{res}}(i, j) \;=\; 0 
    \; \text{for all other entries},
\end{equation}
where \(\alpha\) is a scalar regulating the spectral radius. Thus, each neuron sends activation only to the next neuron in a unidirectional loop, and the adjacency matrix is essentially a single off-diagonal of constant \(\alpha\). The cycle reservoir was introduced in part to provide a reduced parameter footprint and to simplify the analysis of state updates. Despite this simplicity, cycle reservoirs often exhibit acceptable performance on time-series prediction tasks as long as \(\alpha\) is chosen to ensure the echo state property \cite{Lukosevicius2012, gallicchio2017deep}.

In such a cycle, the time evolution of the reservoir state \(\mathbf{x}(t) = [x_1(t), \dots, x_N(t)]^T\) is essentially a discrete shift on the ring:
\begin{equation}
    x_{i}(t+1) = f\bigl(\alpha \, x_{i-1}(t) + \mathbf{W}_{\text{in}}(i, :) \,\mathbf{u}(t)\bigr),
\end{equation}
where indices are taken modulo \(N\). Because there is a single feedback loop, the reservoir’s memory and nonlinear transformations are closely tied to \(\alpha\) and the form of \(f(\cdot)\). If \(\alpha\) is too large, the network becomes prone to unstable oscillations or chaotic behavior, whereas too small an \(\alpha\) yields insufficient memory. Intermediate values near the “edge of stability” often yield the best performance.

\paragraph{Cycle Reservoirs with Jumps.}
A straightforward generalization of the cycle reservoir adds \emph{jump connections}, sometimes called “skip connections” or “leapfrog” couplings, that allow for longer-range interactions around the ring. A typical construction might assign an integer jump length \(d > 1\), creating edges of the form:
\begin{equation}
    x_{i}(t+1) = f\bigl(\alpha \, x_{i-d}(t) + \mathbf{W}_{\text{in}}(i, :)\,\mathbf{u}(t)\bigr),
\end{equation}
again respecting modulo \(N\) indexing. One can also include multiple jumps \(\{d_1, d_2, \dots\}\) or randomize their distribution to gain additional expressiveness \cite{wyffels2008improving, gallicchio2017deep}.

These extended links help the reservoir capture a broader set of temporal dependencies and can improve the \emph{separation property} by ensuring that different parts of the input history combine in more varied ways. A cycle with jumps also mitigates the resonance-like effects seen in pure cycles, where signals propagate in an overly regular pattern. Under typical activation functions like \(\tanh\), one still requires that the spectral radius (now reflecting the effective “jumped” adjacency) remains below unity to preserve the echo state property, but the presence of multiple or larger jumps can lead to richer internal dynamics.

\begin{figure}[!ht]
    \centering
    \includegraphics[width=0.9\linewidth]{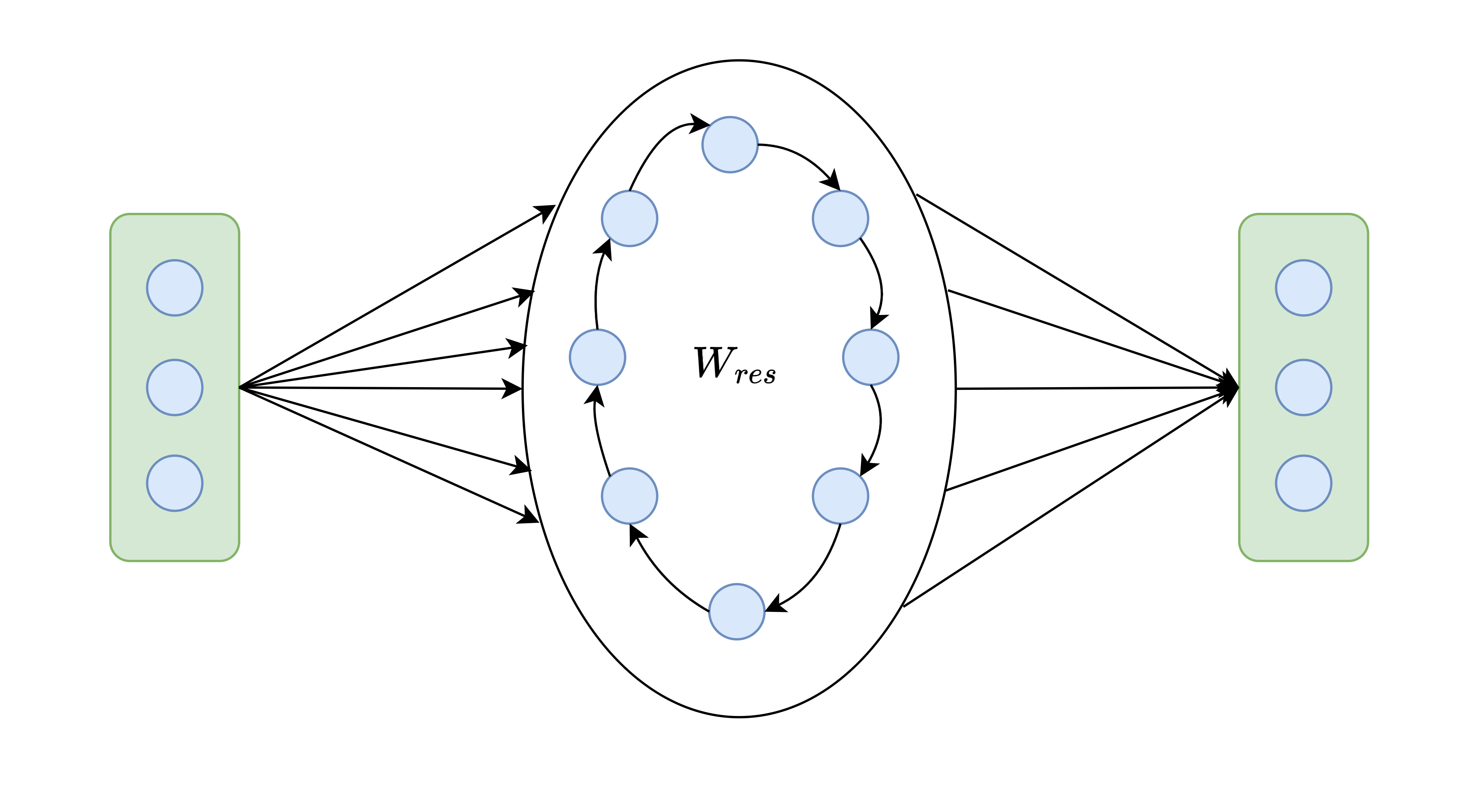}
    \caption{An illustration of a Simple Cycle Reservoir.}
    \label{fig:Simple Cycle Reservoir}
\end{figure}

\paragraph{Minimum Complexity Interaction Echo State Networks.}

The Minimum Complexity Interaction Echo State Network (MCI-ESN), introduced by Liu, Xu, and Li \cite{liu2024minimum}, is a recent architectural variant of reservoir computing that seeks to overcome the representational limitations of simple cycle reservoirs while preserving computational efficiency and analytical tractability. The key innovation is the design of a composite reservoir consisting of two minimally interacting simple cycle sub-reservoirs. Each sub-reservoir is a unidirectional cycle of \( N \) neurons, with interaction achieved through only two interconnections—one unidirectional edge from a neuron in the first cycle to a neuron in the second, and vice versa.

\begin{figure}[!ht]
    \centering
    \includegraphics[width=0.9\textwidth]{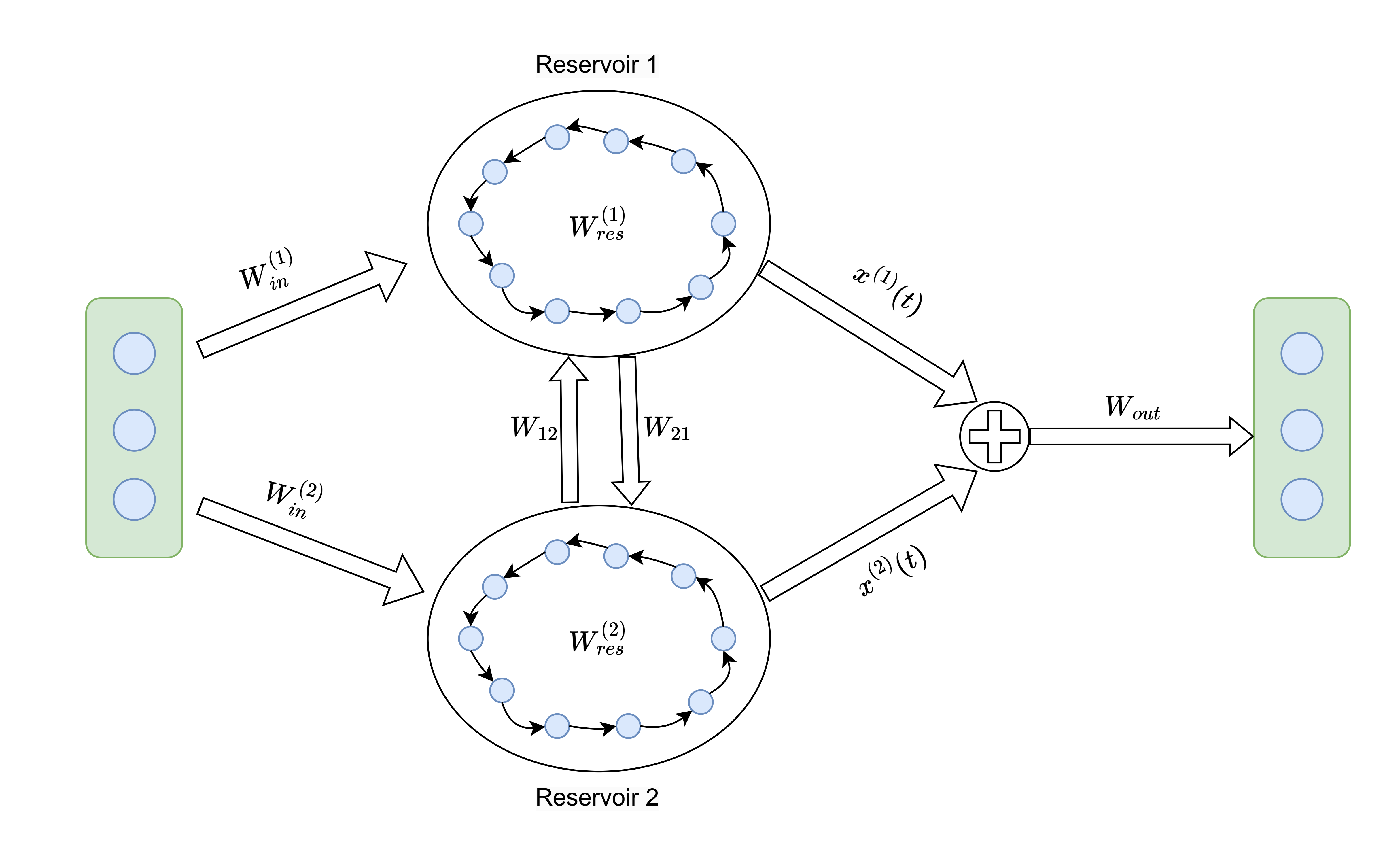}
    \caption{An illustration of the Minimum Complexity Interaction ESN.}
    \label{fig:Minimum Complexity Interaction (MCI) Echo State Networks}
\end{figure}

Let \(\mathbf{x}^{(1)}(t), \mathbf{x}^{(2)}(t) \in \mathbb{R}^N\) denote the reservoir states of the two sub-reservoirs at time \(t\). The update equations for the MCI-ESN are given by:
\begin{align}
    \mathbf{x}^{(1)}(t+1) &= f\left(\mathbf{W}^{(1)}_{\text{res}} \mathbf{x}^{(1)}(t) + \mathbf{W}^{(1)}_{\text{in}} \mathbf{u}(t) + \mathbf{W}_{12} \mathbf{x}^{(2)}(t) \right), \\
    \mathbf{x}^{(2)}(t+1) &= f\left(\mathbf{W}^{(2)}_{\text{res}} \mathbf{x}^{(2)}(t) + \mathbf{W}^{(2)}_{\text{in}} \mathbf{u}(t) + \mathbf{W}_{21} \mathbf{x}^{(1)}(t) \right),
\end{align}
where \(\mathbf{u}(t) \in \mathbb{R}^m\) is the input signal at time \(t\),
 \(\mathbf{W}^{(1)}_{\text{res}}, \mathbf{W}^{(2)}_{\text{res}} \in \mathbb{R}^{N \times N}\) are the cyclic reservoir weight matrices (each representing a simple directed ring),
 \(\mathbf{W}^{(1)}_{\text{in}}, \mathbf{W}^{(2)}_{\text{in}} \in \mathbb{R}^{N \times m}\) are the input weight matrices,
 \(\mathbf{W}_{12}, \mathbf{W}_{21} \in \mathbb{R}^{N \times N}\) are sparse inter-reservoir connection matrices with only a single non-zero entry each, indicating a single neuron-to-neuron interaction between the sub-reservoirs, and
 \(f(\cdot)\) is a component-wise activation function, typically chosen as \(\tanh\).

This coupling topology significantly increases the expressive power of the network while maintaining low connectivity and structural simplicity. Importantly, only \(2N + 2\) non-zero entries exist in the total recurrent weight matrix, in contrast to the \(\mathcal{O}(N^2)\) connections in dense ESNs. The full reservoir state is given by concatenation:
\begin{equation}
    \mathbf{x}(t) = 
    \begin{bmatrix}
    \mathbf{x}^{(1)}(t) \\
    \mathbf{x}^{(2)}(t)
    \end{bmatrix}
    \in \mathbb{R}^{2N}.
\end{equation}
The readout is then formed as in classical ESNs:
\begin{equation}
    y(t) = \mathbf{W}_{\text{out}} \, \mathbf{x}(t),
\end{equation}
where \(\mathbf{W}_{\text{out}} \in \mathbb{R}^{1 \times 2N}\) is typically learned by ridge regression.

A sufficient condition for the \emph{echo state property} of the MCI-ESN is provided in \cite{liu2024minimum}. Specifically, under the assumption that \(f(\cdot)\) is Lipschitz continuous with Lipschitz constant \(L_f\), the ESP holds if the effective spectral norm of the joint recurrent matrix \(\mathbf{W}_{\text{eff}}\) satisfies
\begin{equation}
    L_f \, \|\mathbf{W}_{\text{eff}}\| < 1,
\end{equation}
where \(\mathbf{W}_{\text{eff}}\) is the block matrix:
\begin{equation}
    \mathbf{W}_{\text{eff}} = 
    \begin{bmatrix}
    \mathbf{W}^{(1)}_{\text{res}} & \mathbf{W}_{12} \\
    \mathbf{W}_{21} & \mathbf{W}^{(2)}_{\text{res}}
    \end{bmatrix}.
\end{equation}
Given the sparse structure of \(\mathbf{W}_{12}\) and \(\mathbf{W}_{21}\), the norm \(\|\mathbf{W}_{\text{eff}}\|\) can be tightly bounded, facilitating easy ESP verification during design.

Empirical evaluations in the original paper show that MCI-ESNs outperform traditional simple cycle reservoirs on both chaotic time-series prediction and classification tasks. For example, in predicting multivariate chaotic systems like the Rössler and Mackey–Glass series, the MCI-ESN achieved lower normalized root mean square error (NRMSE) compared to single-cycle and random ESNs. For time-series classification (e.g., on the Japanese vowels and spoken Arabic digits datasets), MCI-ESNs achieved superior accuracy, attributed to the richer dynamic interactions enabled by the inter-reservoir connections \cite{liu2024minimum}.

This architecture thus strikes a balance between \emph{structural minimalism} and \emph{functional expressiveness}. With only two additional interconnections beyond the cycle structure, the MCI-ESN significantly enhances the separation and memory capacities of its internal dynamics, demonstrating that careful topological design can yield gains in both theoretical tractability and practical performance.
By reducing the reservoir connectivity to a deterministic pattern—often combining cycles, shifts, or permutations with simple scaling—these variants highlight that the core functionality of ESNs does not strictly require large random graphs. The critical requirements are:
1.) A suitably chosen spectral radius \(\rho(\mathbf{W}_{\text{res}})\) below or near unity.
2.) A sufficient “mixing” or “separation” capacity within the network so that distinct input patterns generate distinguishable trajectories in state space.
3.) A nonlinear activation that, in conjunction with the reservoir topology, provides rich transformations of input streams.

These structured reservoirs (cycle-based or minimum-complexity designs) thus represent compelling alternatives to dense random matrices. They often yield clearer mathematical analysis of echo state properties, simpler hyperparameter tuning procedures, and competitive or improved performance on real-world tasks. As reservoir computing continues to evolve, such architectures underscore the versatility of the ESN concept and the importance of balancing complexity, interpretability, and computational cost.

\paragraph{Small-World Reservoirs.}
A further variation on reservoir design draws inspiration from \emph{small-world} graphs, originally popularized through the work of Watts and Strogatz \cite{watts1998collective}. In such graphs, most nodes are not directly connected to each other, yet the average path length between nodes remains small due to a sparse set of long-range links. This property can be implemented in a reservoir by starting with a regular or locally connected ring (as in a cycle reservoir) and then randomly \emph{rewiring} a fraction of the edges to connect distant nodes \cite{maslennikov2017small, kim2007small}. 

Formally, one might define an adjacency matrix \(\mathbf{W}_{\text{res}}\) by initially linking each neuron only to its \(k\) nearest neighbors in a cycle-like structure, and then, with probability \(p\), rewire each connection to a randomly chosen node elsewhere in the network. The result is a “small-world” topology characterized by:
1.) High clustering coefficients (local connectivity akin to a ring).
2.) Small average path length across the entire network, thanks to the randomly placed shortcuts.

Such a construction can enhance a reservoir’s ability to propagate information quickly across the state space while preserving significant local structure. Empirical evaluations on time-series and classification tasks suggest that small-world reservoirs can achieve improved performance over purely random or purely local topologies, often attributed to the balance between \emph{local recursion} (enhancing memory of nearby states) and \emph{long-range interactions} (improving separation of different input trajectories) \cite{kim2007small, maslennikov2017small}.

From a dynamical systems perspective, small-world reservoirs can lie between purely regular and purely random networks, offering control over measures such as clustering, path length, and degree distribution. The \emph{spectral radius} and echo state property still remain central to ensuring that the reservoir does not drift into pathological regimes (either too stable or too chaotic). Nonetheless, numerical experiments have demonstrated that moderate rewiring probabilities \(p\) can improve the memory capacity and nonlinear transformation capabilities of ESNs without necessitating the dense connectivity of standard random reservoirs \cite{maslennikov2017small, Lukosevicius2012}.
Overall, small-world reservoirs underscore the idea that \emph{not all random graphs are created equal}: introducing controlled structural biases—in this case, small-world characteristics—can enhance reservoir performance while still maintaining analytical tractability and relatively low-dimensional parameter tuning.

\subsection{Hybrid Architectures}
While traditional RC models like ESNs and LSMs have demonstrated strong capabilities in temporal processing, their limitations in scalability, adaptability, and efficiency have led to the development of hybrid and advanced architectures. These models integrate RC with other neural network paradigms, computational frameworks, or novel design principles to enhance performance across diverse applications. Hybrid architectures combine reservoirs with deep learning, parallelism, or attention mechanisms. Additionally, emerging approaches like quantum and optical reservoir computing push the boundaries of efficiency and computational power.

\subsubsection{Parallel Reservoir Computing.} Parallel Reservoir Computing \cite{pathak2018model} is inspired by Convolutional Neural Networks (CNNs). Similar to how CNNs extract local features from input data, this approach utilizes a distributed architecture comprising multiple moderately sized, parallel reservoirs.  Each reservoir focuses on processing a specific local region of the spatiotemporal system, generating predictions based on its localized input. By exploiting the inherent spatial structure of the data, this approach enables efficient and scalable handling of high-dimensional systems, where complex global dynamics arise from localized interactions.

In this framework, the spatiotemporal system with some periodic boundary producing a multivariate time series is collectively denoted by the vector \( \mathbf{u}(t) \in \mathbb{R}^N \). To manage the complexity of the system, the \( N \) variables \( u_j(t) \) are partitioned into \( m \) groups, each containing \( q \) spatially contiguous variables such that \( m \times q = N\). The state of the spatial points within each group is represented by \( \mathbf{y}_i(t)  \in \mathbb{R}^{q} \ and \  i\in\{1, 2, \dots, m\} \), and are given as:
\begin{equation}
\begin{aligned}
    \mathbf{y}_1(t) &= \left( u_1(t), u_2(t), \dots, u_q(t) \right)^T,  \\
    \mathbf{y}_2(t) &= \left( u_{q+1}(t), u_{q+2}(t), \dots, u_{2q}(t) \right)^T, \\
    &\quad \vdots \\
    \mathbf{y}_m(t) &= \left( u_{(m-1)q+1}(t), u_{(m-1)q+2}(t), \dots, u_{N}(t) \right)^T.
\end{aligned}
\end{equation}

A set of parallel reservoir architecture is constructed consisting of $m$ reservoirs \(\{R_1, R_2, \dots, R_m\}\) and each group of time series \( \mathbf{y}_i(t) \) is predicted by a dedicated reservoir \( R_i \), which is characterized by its internal adjacency matrix \( \mathbf{W}_i \in \mathbb{R}^{ n_i \times n_i}\), an internal state vector \( \mathbf{x}_i(t)  \in \mathbb{R}^{n_i} \), and an input weight matrix \( \mathbf{W}_{\text{in},i}  \in \mathbb{R}^{n_i \times \\(q+b)} \). The input to the \( i \)-th reservoir, denoted by \( \mathbf{v}_i(t)  \in \mathbb{R}^{(q+b)} \), is designed to include not only the spatial points within the \( i \)-th group but also two adjacent buffer regions of size \( b \) on either side. This design ensures that each reservoir has access to a localized spatial context, which is critical for capturing the spatiotemporal dependencies inherent in the system. The input vector \( \mathbf{v}_i(t) \) is formally defined as:
\begin{equation}
\mathbf{v}_i(t) = (u_{(i-1)q-b+1}(t), u_{(i-1)q-b+2}(t), \dots, u_{iq+b}(t))^T,
\end{equation}
where the subscript \( j \) in \( u_j(t) \) is taken modulo \( N \) to respect the periodic boundary conditions. The locality parameter \( l \) determines the extent of overlap between adjacent reservoirs, facilitating smooth transitions and information sharing across neighboring regions. This overlapping structure is reminiscent of the translational invariance employed by CNNs, where local features are processed independently but collectively contribute to a global understanding of the system.
\begin{figure}[!ht]
    \centering
    \captionsetup{justification=centering}
    \includegraphics[width=0.99\textwidth]{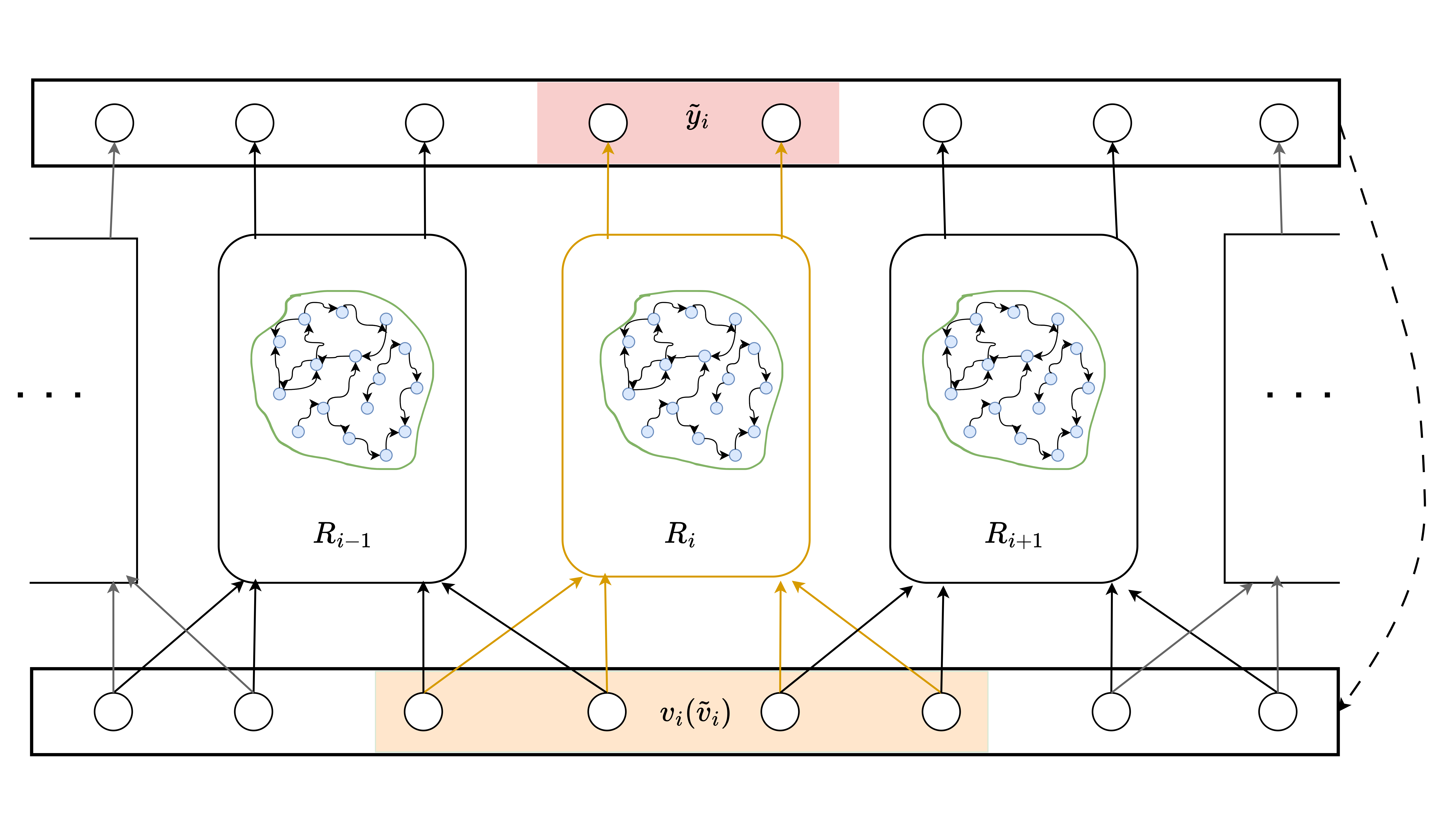}
    \caption{An illustration of Parallel Reservoir Computing.(adapted from \cite{pathak2018model}). The orange shaded region represents the input, $v_{i}$ represents input during training and $\tilde{v}_{i}$ during prediction. The dotted line represents the feedback during prediction phase.
}
    \label{fig:Parallel Reservoir Computing}
\end{figure}
The evolution of each reservoir's internal state is governed by a nonlinear update equation driving the system's dynamics during both training and prediction phases:
\begin{equation}
\mathbf{x}_i(t + 1) = f\left(\mathbf{W}_i \mathbf{x}_i(t) + \mathbf{W}_{\text{in},i} \mathbf{v}_i(t)\right), \quad 1 \leq i \leq g.
\end{equation}
Here, \(\mathbf{x}_i(t)\) represents the internal state vector of the \(i\)-th reservoir at time \(t\), encoding the reservoir's dynamical information. The updated state \(\mathbf{x}_i(t + 1)\) is computed for the next time step using the reservoir's adjacency matrix \(\mathbf{W}_i\), which defines the fixed internal connections, and the input weight matrix \(\mathbf{W}_{\text{in},i}\), which maps the input vector \(\mathbf{v}_i(t)\) to the reservoir's state space and \(f\) is a non-linear activation function.

Once the reservoir states are updated, the \textit{readout phase} computes the output of the system. The readout is typically a linear combination of the reservoir states, often augmented with additional terms to capture higher-order interactions. For the \(i\)-th reservoir, the output \(\mathbf{y}_i(t)\) is given by:
\begin{equation}
\mathbf{y}_i(t) = \mathbf{W}_{\text{out},i} \mathbf{x}_i(t),
\end{equation}
where \(\mathbf{W}_{\text{out},i} \in \mathbb{R}^{ q \times n_i}\) is the output weight matrix, which is trained to map the reservoir states to the desired output. During training, \(\mathbf{W}_{\text{out},i}\) is optimized to minimize the prediction error, ensuring that the reservoir's output closely matches the target dynamics as discussed in earlier  \subsectionautorefname. This readout mechanism allows the reservoir to generate accurate predictions for the local region of the spatiotemporal system it is responsible for, while the collective outputs of all \(g\) reservoirs provide a global prediction for the entire system. The simplicity of the readout phase, combined with the rich dynamics of the reservoir states, makes this approach both computationally efficient and highly effective for modeling complex systems. The architecture is given in Figure \ref{fig:Parallel Reservoir Computing}, where the red shaded region is the output and the orange shaded region is the input.

\subsubsection{Deep Reservoir Computing.} The ability of deep learning models to learn hierarchical data representations at varying levels of abstraction has increasingly drawn the attention of the machine learning community \cite{lecun2015deep}. Akin to deep learning architectures, Deep Reservoir Computing \cite{gallicchio2017deep} has emerged as a promising extension of RC by introducing multiple stacked reservoir layers to enable deeper hierarchical representations. This approach distributes temporal processing across layers, enhancing the network’s capacity to extract multi-scale features while retaining the computational efficiency inherent to RC frameworks. DeepESNs \cite{gallicchio2017deep} extend traditional ESNs by arranging several reservoir layers in a linear stack, where each layer directly feeds its output to the next. The state update equations for deepESNs can be formulated as:
\begin{equation}
\mathbf{x}^{(l)}(t+1) = 
\begin{cases}
(1-\alpha^{(l)})\mathbf{x}^{(l)}(t) + \alpha^{(l)} f \left( W^{(l)}_{\text{in}} \mathbf{u}(t+1) + W^{(l)}\mathbf{x}^{(l)}(t) + \mathbf{b}^{(l)} \right), \quad &\text{for } l = 1, 
\\
(1-\alpha^{(l)})\mathbf{x}^{(l)}(t) + \alpha^{(l)} f \left( W^{(l)}_{\text{in}} \mathbf{x}^{(l-1)}(t+1) + W^{(l)}\mathbf{x}^{(l)}(t) + \mathbf{b}^{(l)} \right), \quad &\text{for } l > 1, 
\end{cases}
\end{equation}
where the superscript $(l)$ denotes the parameters and hyperparameters associated with the $l$-th layer of the network. The readout is a linear aggregation of reservoir states at each intermediate layer
\begin{equation}
\mathbf{y}(t) = W_\text{out} [\mathbf{x}^{(1)}(t) \ \ \mathbf{x}^{(2)}(t) \ \ ... \mathbf{x}^{(L)}(t)]^\top,
\end{equation}
$L$ denoting the total number of reservoir layers.

\emph{DeepESN Input to All (deepESN-IA)}  \cite{gallicchio2017deep}, an enhancement over the deepESN architecture, augments each layer's input by concatenating the external input $\mathbf{u}(t)$ with the previous layer’s state
\begin{equation}
\mathbf{x}^{(l)}(t+1) = 
\begin{cases}
(1-\alpha^{(l)})\mathbf{x}^{(l)}(t) + \alpha^{(l)} f \left( W^{(l)}_{\text{in}} \mathbf{u}(t+1) + W^{(l)}\mathbf{x}^{(l)}(t) + \mathbf{b}^{(l)} \right), \quad &\text{for } l = 1, 
\\
(1-\alpha^{(l)})\mathbf{x}^{(l)}(t) + \alpha^{(l)} f \left( W^{(l)}_{\text{in}} [\mathbf{u}(t+1)\ \mathbf{x}^{(l-1)}(t+1)]^\top + W^{(l)}\mathbf{x}^{(l)}(t) + \mathbf{b}^{(l)} \right), \quad &\text{for } l > 1.
\end{cases}
\end{equation}

\begin{figure}[!ht]
    \centering
    \captionsetup{justification=centering}
    \includegraphics[width=0.8\textwidth]{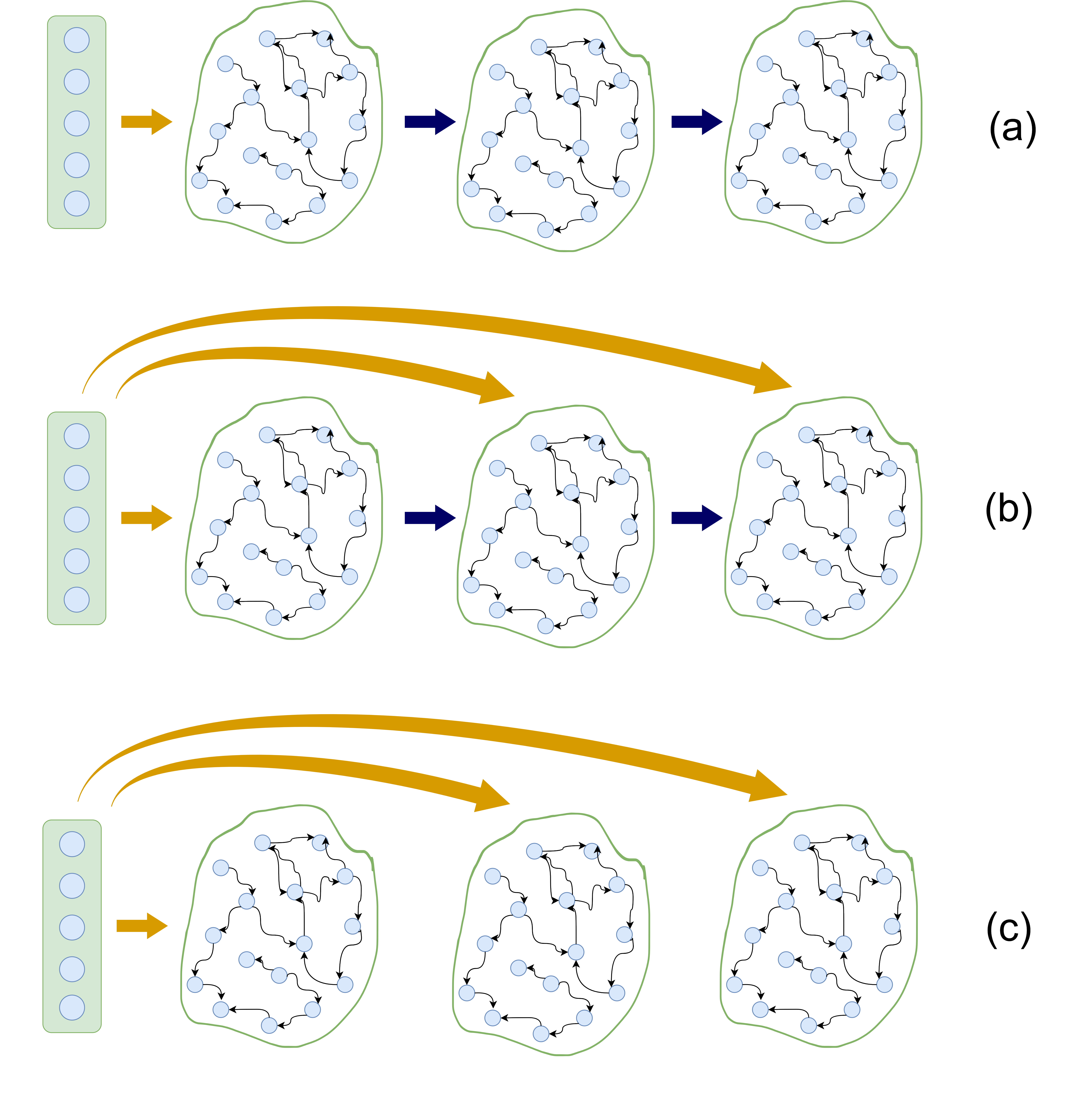}
    \caption{Deep Reservoir Computing. (a) DeepESN, (b) DeepESN-IA, (c) GroupedESN (adapted from \cite{gallicchio2017deep}).}
    \label{fig:Deep Reservoir Computing}
\end{figure}

\emph{GroupedESNs} \cite{gallicchio2017deep} introduce an alternative by partitioning the reservoir into multiple independent sub-reservoirs, each of which operates solely on the input signal
\begin{equation}
\mathbf{x}^{(l)}(t+1) = (1-\alpha^{(l)})\mathbf{x}^{(l)}(t) + \alpha^{(l)} f \left( W^{(l)}_{\text{in}} \mathbf{u}(t+1) + W^{(l)}\mathbf{x}^{(l)}(t) + \mathbf{b}^{(l)} \right).
\end{equation}

DeepESNs \cite{gallicchio2017deep} also employ Intrinsic Plasticity (IP), a form of unsupervised reservoir adaptation technique \cite{lukovsevivcius2009reservoir,wyffels2008improving, Steil2007, triesch2005gradient}. It aims to maximize the entropy of reservoir outputs by minimizing the Kullback–Leibler divergence \cite{van2014renyi} between the empirical output distribution and a Gaussian distribution. Using a gradient descent method, the gain $g$ and bias $b$ of the activation function (e.g., $\tanh$) are adapted locally for each unit. For a unit with input $x_{\text{net}}$ and output $\tilde{x} = \tanh(gx_{\text{net}} + b)$, the IP update rules are: 
\begin{equation}
    \Delta b = -\eta \left( -\left(\frac{\mu}{\sigma^2}\right) + \left(\frac{\tilde{x}}{\sigma^2}\right)\left(2\sigma^2 + 1 - \tilde{x}^2 + \mu \tilde{x}\right) \right), \quad \Delta g = \frac{\eta}{g} + \Delta b \, x_{\text{net}},
\end{equation}
where $\mu$ and $\sigma$ are the mean and standard deviation of the target Gaussian, and $\eta$ is the learning rate. \\

\noindent Thus, Deep RC enhances temporal abstraction through hierarchical depth, adaptive plasticity, and modular architectures, balancing expressivity with efficiency. Its advancements position it as a robust framework for scalable and adaptive neural computation.

\subsubsection{Attention-Enhanced Reservoirs.} Work \cite{koester2024attention} explores the integration of the attention mechanism within its RC framework. Introducing an attention layer as the output enables the system to prioritize specific temporal features, offering a more context-sensitive understanding of the input data. The study implements the reservoir as a delay-based photonic computer using a semiconductor laser with optical feedback. The reservoir states  $\mathbf{x}(t) \in \mathbb{R}^n$ simultaneously act as queries, keys, and values within the self-attention mechanism. The attention weights $w_\text{att}(t) \in \mathbb{R}^n$, which determine the significance of each reservoir node at a given time, are derived from the attention layer weight matrix  $W_\text{att} \in \mathbb{R}^{n \times n}$, 
\begin{equation}w_\text{att}(t) = W_\text{att} \mathbf{x}(t)
\end{equation}
with $W_\text{att}$ trained using a gradient descent algorithm. While $W_\text{att}$ remains static post-training, the time dependence of $w_\text{att}(t)$ arises from the evolving reservoir states. The attention weights and reservoir states are linearly combined to obtain the output signal $\mathbf{y}(t)$,
\begin{equation}
\mathbf{y}(t) = w_\text{att}^T(t)\mathbf{x}(t)
\end{equation}
thus amplifying the most relevant states for a given time step. 

\subsection{Adaptive Reservoirs}

While the classical reservoir computing framework keeps the internal weights \(\mathbf{W}_\mathrm{res}\) fixed, several extensions have been proposed to \emph{adapt} or partially train the reservoir. Such adaptive reservoirs can dynamically reorganize their network structure or connection strengths to maintain optimal performance in changing environments \cite{Jaeger2007,schrauwen2007overview}.

A specific class of adaptable reservoirs \cite{panahi2024adaptable} introduce an augmented parameter input channel. While traditional reservoirs learn only from time-series data, these reservoirs can learn from both the time-series data and an external bifurcation parameter. Apart from $W_\text{in}$, an additional matrix $W_\text{param}$ comes into play which connects the bifurcation parameter to the reservoir nodes. As shown in Figure \ref{fig:Adaptable Reservoir Computing}, the architecture  includes an additional bifurcation parameter, \( W_\text{param} \), that connects the reservoir to the parameter dynamics. If we represent the bifurcation parameter by $p$, then overall, the update equation is given by:
\begin{equation}
\mathbf{x}(t+1) = f(W_{\text{in}} \mathbf{u}(t) + W_{\text{res}} \mathbf{x}(t) + W_{\text{param}} \mathbf{p} + \mathbf{b}).
\end{equation}

Let \(\mathbf{\Theta}\) represent an augmented set of parameters for the reservoir, such as recurrent weights, node biases, or intrinsic plasticity parameters. Then the update rule becomes:
\begin{equation}
    \mathbf{x}(t+1) = f\bigl(\mathbf{W}_\mathrm{res}(\mathbf{\Theta})\,\mathbf{x}(t) + \mathbf{W}_\mathrm{in}(\mathbf{\Theta})\,\mathbf{u}(t+1)\bigr).
\end{equation}
An additional adaptation law adjusts \(\mathbf{\Theta}\) over time. For instance, gradient-based or Hebbian-like rules can be used to optimize specific objectives (e.g., memory capacity, separation property, or sparsity).

\paragraph{Types of Adaptive Mechanisms.}
\begin{enumerate}
    \item \textit{Intrinsic plasticity}: Adjust the neuron nonlinear activation parameters (e.g., slope or bias) to normalize the firing rates or maintain certain dynamic ranges \cite{Steil2007}.
    \item \textit{Structural plasticity}: Dynamically create or prune connections in \(\mathbf{W}_\mathrm{res}\) to achieve efficient representations, often guided by synaptic rewiring algorithms \cite{Butcher2013}.
    \item \textit{Online gradient-based approaches}: Temporarily unroll the reservoir dynamics and compute approximate gradients for \(\mathbf{W}_\mathrm{res}\), often in a truncated-backprop-through-time or real-time recurrent learning (RTRL)-like framework \cite{williams1989learning,Triefenbach2013}.
\end{enumerate}

\paragraph{Advantages and Challenges.}
Adaptive reservoirs can maintain high performance in non-stationary tasks and automatically find favorable operating regimes (e.g., near the “edge of chaos”). However, fully training an RNN can be computationally expensive. Partial or constrained adaptivity (e.g., only adapting gain factors or select connections) is often a practical compromise \cite{Jaeger2007,Lukosevicius2012}.

\begin{figure}[!ht]
    \centering
    \includegraphics[width=0.9\textwidth]{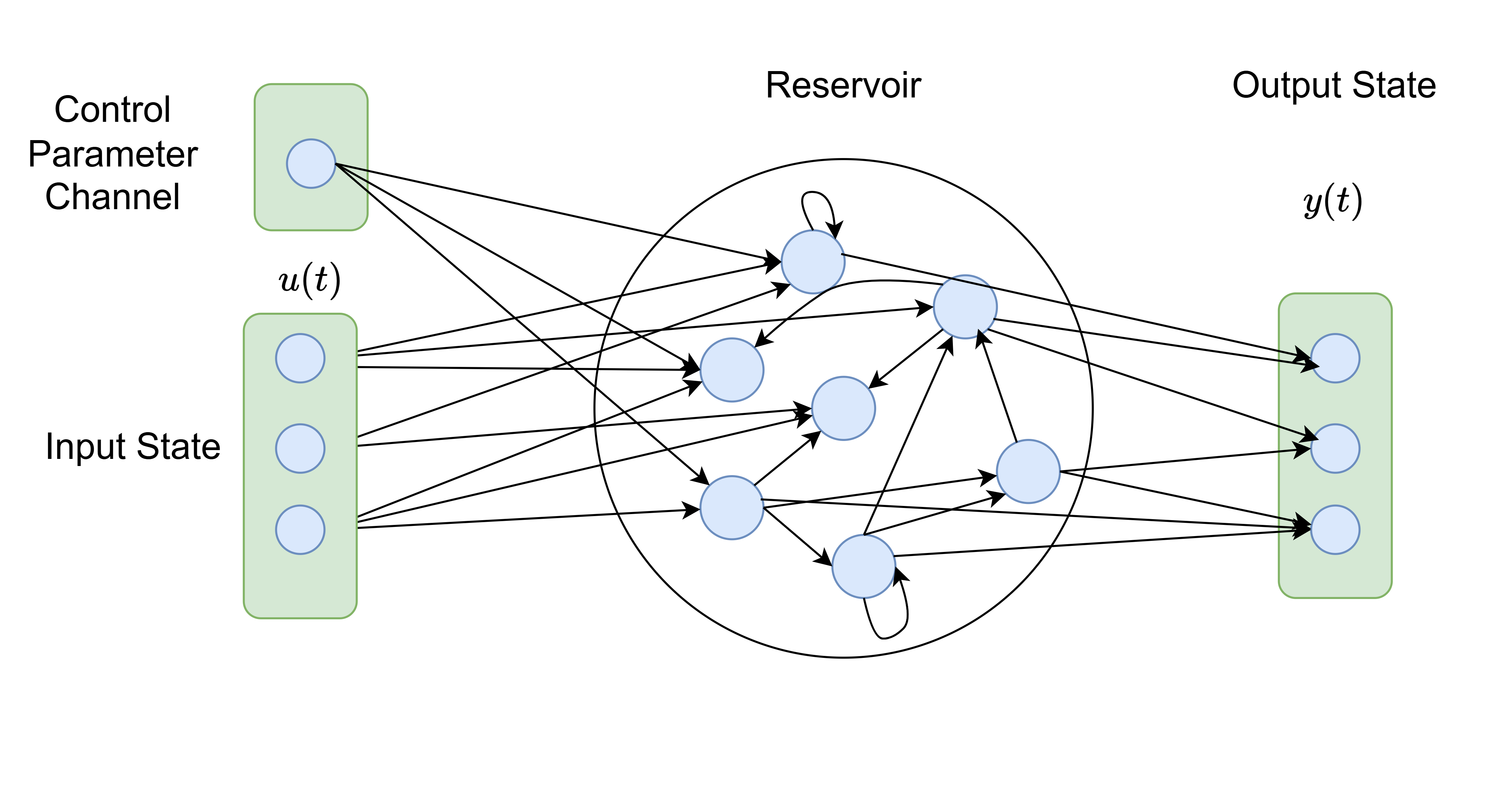}
    \captionsetup{justification=centering}  
    \caption{Adaptable Reservoirs, unlike traditional reservoirs add an additional bifurcation parameter  that is connected to the reservoir via $W_\text{param}$.}
    \label{fig:Adaptable Reservoir Computing}
\end{figure}

\section{Analysis of Reservoir Dynamics}\label{sec:analysis}

RC hinges on the dynamics of the reservoir component, which is  a high-dimensional RNN with fixed or partially trained internal connections. The nature of these internal dynamics is critical: it determines both the \emph{memory} of past inputs retained in the state of the reservoir and the \emph{transformational capabilities} that allow the system to separate and map input signals to linearly separable representations. In this section, we discuss three fundamental aspects of reservoir dynamics: (1) reservoir capacity, (2) representation power and universal approximation, and (3) generalization bounds and robustness analysis.

\subsection{Reservoir Capacity}

\paragraph{Memory Capacity and Related Metrics.}
The most widely studied notion of \emph{reservoir capacity} pertains to the system’s ability to retain and utilize past information \cite{jaeger2001echo,Jaeger2002,Lukosevicius2012}. Formally, an RC system (e.g., an echo state network, ESN) can be viewed as a dynamical system
\begin{equation}
    \mathbf{x}(t+1) = f\bigl(\mathbf{W}_{\text{res}} \mathbf{x}(t) + \mathbf{W}_{\text{in}} \mathbf{u}(t+1)\bigr),
    \label{eq:reservoir_state_update}
\end{equation}
where 
\(
    \mathbf{x}(t) \in \mathbb{R}^N \quad \text{is the reservoir state at time } t,
\)
\(
    \mathbf{u}(t) \in \mathbb{R}^m \quad \text{is the external input at time } t,
\)
\(
    \mathbf{W}_{\text{res}} \in \mathbb{R}^{N\times N} \quad \text{and} \quad \mathbf{W}_{\text{in}} \in \mathbb{R}^{N\times m} \quad \text{are the (fixed) reservoir and input weight matrices,}
\)
\(
    F(\cdot) \) is  a componentwise nonlinear activation function (e.g.  $\tanh$).
The system’s output at time \(t\) is given by
\(
    y(t) = \mathbf{W}_{\text{out}} \, \mathbf{x}(t) \,,
\)
where \(\mathbf{W}_{\text{out}} \in \mathbb{R}^{1\times N}\) (or higher-dimensional if the output is multi-dimensional) is usually the only trainable parameter.

\textit{Memory capacity (MC)} is commonly measured by how well the reservoir can linearly reconstruct delayed versions of the input. Specifically, consider a scalar input \(u(t)\). For a delay \(\tau \ge 0\), define the target signal \(d_{\tau}(t) = u(t-\tau)\). The readout is trained (in a mean-square sense) to approximate \(d_{\tau}(t)\) from \(\mathbf{x}(t)\):
\begin{equation}
    d_{\tau}(t) \approx \mathbf{W}_{\text{out}}^{(\tau)} \, \mathbf{x}(t).
\end{equation}
The memory capacity for delays \(\tau = 0,1,2,\ldots,\tau_{\max}\) is given by:
\begin{equation}
    \mathrm{MC} = \sum_{\tau=0}^{\tau_{\max}} \mathrm{NRMSE}\bigl(d_{\tau}(t), \hat{d}_{\tau}(t)\bigr),
\end{equation}
where \(\mathrm{NRMSE}\) denotes some normalized error measure, or, equivalently, one measures the correlation between \(\hat{d}_{\tau}(t)\) and \(d_{\tau}(t)\). In practice, the memory capacity often saturates at or below \(N\) (the reservoir size), because each of the \(N\) reservoir state components can be viewed (roughly) as carrying one dimension of memory \cite{Jaeger2002,White2004}. This bound holds only under
the conditions of i.i.d. input and linear output units, as the temporal dependencies between
inputs potentially affect MC estimation. A well-tuned spectral radius \(\rho(\mathbf{W}_{\text{res}})\) (i.e., the largest absolute eigenvalue of \(\mathbf{W}_{\text{res}}\)) is also critical to achieving high memory capacity.

However, the $\tau$ -delay MC estimation may not fully capture the complexity of nonlinear
systems. Nonlinearities in the reservoir can lead to behaviors that cannot be accurately reflected
by a simple linear correlation between the output and delayed input. As such, while MC is a
valuable measure, it doesn’t always offer a complete representation of a reservoir’s ability to
model and recall complex temporal relationships

In \emph{Figure \ref{fig:esn_memory_capacity}}, we examine the \emph{memory capacity} of an ESN by quantifying how well it can reconstruct input signals of increasing temporal delay.  Concretely, an i.i.d.\ random sequence \(\{u(t)\}_{t=0}^{T-1}\) is fed into a reservoir whose state evolves as
\(
\mathbf{x}(t+1)
\;=\;
\tanh\Bigl(
    W_{\mathrm{res}}\,\mathbf{x}(t)
    \;+\;
    W_{\mathrm{in}}\;u(t)
\Bigr),
\)
where \(W_{\mathrm{res}}\) has been normalized to satisfy the spectral radius condition (i.e., \(\|\lambda(W_{\mathrm{res}})\|_{\max} < 1\)).  For each delay \(\tau\in\{1,2,\dots\}\), we attempt to recover \(u(t-\tau)\) from \(\mathbf{x}(t)\) using a linear regressor and measure the \(R^2\) fit score.  As \(\tau\) increases, the reservoir must rely on subtler echoes of past inputs, so the recall quality eventually decays.  Aggregating these scores over \(\tau\) provides a classical indicator of the ESN’s short-term memory depth, central to applications requiring multi-step prediction or reconstitution of past driving signals.  In practice, the maximum useful \(\tau\) for which the reservoir can maintain a high recall score often guides design choices, such as the reservoir’s spectral radius and connectivity patterns, to match the targeted temporal dependencies of a given task.

\begin{figure}[!ht]
    \centering
    \includegraphics[width=0.76\linewidth]{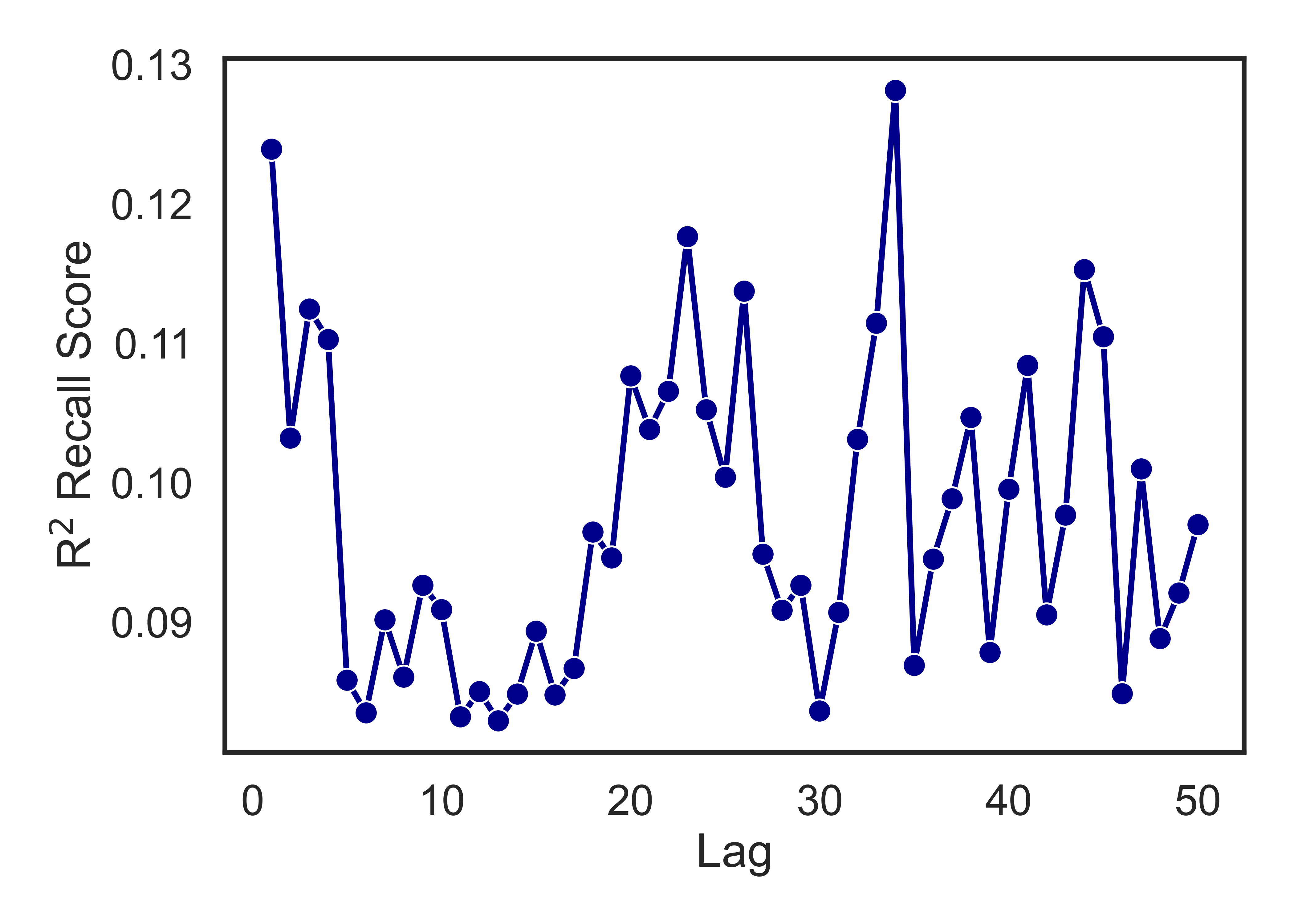}
    \caption{Memory capacity curve of an echo state network, showing the \(R^2\) recall score for increasingly large delays \(\tau\).  Higher recall at larger \(\tau\) indicates a reservoir architecture well-suited to retaining past information over longer horizons.}
    \label{fig:esn_memory_capacity}
\end{figure}

\paragraph{Separation Property.}
Another important notion is the \emph{separation property} \cite{maass2002real}, which states that distinct input trajectories should map to sufficiently distinguishable trajectories in the reservoir state space. Formally, for two different input streams \(\{u_1(t)\}\) and \(\{u_2(t)\}\), the induced reservoir states \(\{\mathbf{x}_1(t)\}\) and \(\{\mathbf{x}_2(t)\}\) should remain distinguishable (with high probability). The separation property underpins not just memory capacity, but also the reservoir’s capacity to encode complex temporal patterns.

\paragraph{Capacity for Nonlinear Transformations.}
While memory capacity was originally defined for linear readouts, it can be extended to capture the reservoir’s ability to represent and reconstruct more general transformations of past inputs \cite{Dambre2012,grigoryeva2018echo}. For example, \emph{polynomial capacity} measures how well the reservoir can approximate polynomials of past inputs. Such metrics correlate with a reservoir’s potential for higher-order signal processing tasks.

\subsection{Representation Power and Universal Approximation}

Reservoir computing frameworks derive much of their power from the dynamical regime in which the reservoir operates. The interplay between the \emph{echo state property} and the \emph{rich nonlinearity} of the recurrent layer results in a broad class of representable functions. Several theoretical results support the idea that RC-based networks are universal approximators of dynamical systems under mild conditions \cite{maass2002real,grigoryeva2018echo,Chen2019}.

\paragraph{Universal Approximation.}
Universal approximation refers to the ability of an RC system to approximate any continuous function, analogous to the Universal Approximation Theorem (UAT) in feed-forward networks  (FFNs). The UAT states that an FFN with at least one hidden layer, using a sufficiently large number of neurons and a nonlinear activation function, can approximate any continuous function on a compact domain arbitrarily well \cite{cybenko1989approximation}. Formally, for any continuous function $f: \mathbb{R}^n \to \mathbb{R}$ and any arbitrarily small $\epsilon>0$ there exists an FFN with a single hidden layer, and a finite number of neurons that can approximate $f(x)$ such that
\begin{equation}
\sup_{x\in K}\ |\ g(x) - f(x)\ | < \epsilon,
\end{equation}
where $K$ is a compact subset of $\mathbb{R}^n$ and $g(x)$ is the function represented by the FFN.

From a theoretical perspective, an RC system with sufficient reservoir size \(N\) and a universal set of activation functions \(F\) can approximate, to arbitrary precision, any fading-memory filter \cite{Boyd1985,grigoryeva2018echo}. In discrete-time terms, a fading-memory filter is an operator \(\Phi\) on input sequences that depends on past inputs but in a way that diminishes with increasing delay. Formally, a filter \(\Phi\) is said to have fading memory if:
\begin{equation}
    \lim_{\delta \to \infty} \|\Phi(\mathbf{u}) - \Phi(\mathbf{u}')\| = 0
\end{equation}
whenever \(\mathbf{u}\) and \(\mathbf{u}'\) differ only in the remote past (by more than some \(\delta\) steps).

The reservoir’s recurrent architecture naturally encapsulates a fading-memory-like mechanism, because the state \(\mathbf{x}(t)\) (assuming the ESP holds) is an effectively compressed summary of recent inputs. With a linear readout, one can then form arbitrary linear combinations of a sufficiently rich set of basis functions encoded in \(\mathbf{x}(t)\). As a result, RC networks can approximate a broad class of time-dependent functions, thus granting them \emph{universal} representation power within the space of fading-memory or stable dynamical systems \cite{maass2002real,grigoryeva2018echo}.

\paragraph{Implications for Architectural Design.}
In practice, achieving near-universal approximation in finite networks requires careful tuning of the spectral radius \(\rho(\mathbf{W}_{\text{res}})\), the reservoir size \(N\), the sparsity and distribution of \(\mathbf{W}_{\text{res}}\), the input scaling and reservoir biases.
There is a \textit{trade-off}: larger and more expressive reservoirs can approximate more complex functions, but are more challenging to train efficiently and can suffer from overfitting if the readout is not regularized \cite{Rodan2011,Lukosevicius2012}.

\subsection{Generalization Bounds and Robustness Analysis}

\paragraph{Generalization in Temporal Learning.}
In the previous sections, we examined the memory capacity, representational capabilities, and universal approximation properties of RC. However, recent studies have revealed key limitations. Optimizing memory capacity does not always translate to better prediction performance. Moreover, while universal approximation guarantees low training errors, it does not ensure good generalization, meaning reservoirs can fit training data precisely but may fail on unseen inputs. Unlike feedforward neural networks, where classical VC-dimension arguments or Rademacher complexity bounds \cite{bartlett2002rademacher} are often applied, recurrent networks (including reservoir-based models) require specialized analyses of capacity and generalization in the time domain \cite{Sontag1995,Buesing2010}. One line of work focuses on bounding the error in predicting time-varying signals. If the reservoir states remain in a bounded region (under the ESP) and the readout is a regularized linear map, one can derive performance guarantees in terms of the magnitude of \(\mathbf{W}_{\text{res}}\),
the norm of \(\mathbf{W}_{\text{out}}\), and
the input signal’s energy or bandwidth.
For instance, classical stability analyses show that if \(\rho(\mathbf{W}_{\text{res}}) < 1\) and the state-update function \(F(\cdot)\) is Lipschitz, then perturbations to the input or readout weights lead to proportionally bounded changes in the reservoir state and output \cite{Jaeger2002,grigoryeva2018echo}.

\paragraph{Robustness to Noise and Perturbations.}
Reservoir networks often exhibit a certain robustness to noise because the high-dimensional representation in the reservoir can be thought of as a projection that separates distinct input streams, and
    the ESP ensures that small perturbations from the distant past eventually vanish from the state representation.
Mathematically, consider a noisy input stream \(u(t) + \epsilon(t)\). Under mild assumptions (e.g., \(\|\epsilon(t)\|\) is bounded and small), one can show that:
\begin{equation}
    \|\mathbf{x}_{\text{noisy}}(t) - \mathbf{x}_{\text{clean}}(t)\| \leq \alpha^t \|\mathbf{x}(0) - \mathbf{x}'(0)\| + \sum_{k=0}^t \alpha^{t-k}\|\epsilon(k)\|,
\end{equation}
where \(\alpha < 1\) if the spectral radius is appropriately constrained and the activation function \(F(\cdot)\) is contractive in a region around the operating point \cite{jaeger2001echo,grigoryeva2018echo}. This implies exponential decay of initial-condition differences and linear accumulation of new noise. Consequently, reservoir networks can continue to function effectively in low-noise environments and degrade gracefully in moderate-noise ones.

\paragraph{Regularization and Overfitting.}
Even though the reservoir weights are typically kept fixed (or partially trained in some extensions), overfitting can still occur at the readout layer when the number of reservoir neurons \(N\) is large relative to the amount of training data \cite{Lukosevicius2012}. Standard techniques such as ridge regression or sparsity constraints on \(\mathbf{W}_{\text{out}}\) help control the effective capacity of the readout, thereby preventing overfitting and improving generalization performance. In some analyses, the effective degrees of freedom of the reservoir are bounded not just by \(N\) but by the rank and connectivity of \(\mathbf{W}_{\text{res}}\) \cite{Rodan2011}.

\paragraph{Open Challenges.}
Despite the progress in theoretical understanding, several open questions remain:
\begin{itemize}
    \item How to precisely quantify the interaction between reservoir topology and generalization bounds for arbitrary input statistics.
    \item Extensions to non-stationary or adversarial input streams, where stable memory and robust representation may require adaptive reservoirs \cite{Jaeger2007}.
    \item Formalizing the trade-off between expressiveness (e.g., universal approximation of richer classes of dynamical systems) and computational or training complexity.
\end{itemize}
\section{Training Dynamics}\label{sec:training}
\subsection{Regularization Techniques}

Regularization plays a crucial role in RC, ensuring stability, preventing overfitting and improving generalization, particularly when dealing with large-scale or noisy datasets. Unlike traditional neural networks, where regularization techniques such as dropout and weight decay are common, RC primarily applies regularization to the readout layer while maintaining the reservoir dynamics unchanged.

\subsubsection{Ridge Regression in Readout Training}
Since reservoir computing relies on a fixed randomly initialized reservoir, training is limited to optimizing the output weights \( W_{\text{out}} \). A common approach for regularization is \textit{Ridge Regression with Tikhonov regularization} \cite{hoerl1970ridge, tikhonov1977solutions}, which adds an \( L_2 \)-penalty to the squared loss function:
\begin{equation}
W_{\text{out}}^* = \underset{W_{\text{out}}}{\arg \min} \ (\|\mathbf{X} W_{\text{out}}^\top - \mathbf{Y}_{target} \|^2 + \beta\|W_{\text{out}}\|^2),
\end{equation}
where \( \mathbf{Y}_{target} \) is the target output, \( \mathbf{X} \) is the reservoir state vector, and \( \beta \) is the regularization coefficient that controls the penalty on the magnitude of the weights. 
Ridge regression ensures numerical stability and prevents overfitting when the number of reservoir neurons is large.

The traditional approach employs a linear readout function to map the high-dimensional reservoir states to the desired output. However, this linear approximation may be insufficient for capturing the complexities inherent in certain dynamical systems. To address this limitation, Ohkubo and Inubushi introduced a generalized readout mechanism that incorporates nonlinear combinations of reservoir variables, such as quadratic and cubic terms \cite{ohkubo2024reservoir}. This approach is grounded in the mathematical theory of generalized synchronization, which posits the existence of a smooth, differentiable mapping between the reservoir states and the target outputs. By expanding this mapping into a Taylor series, the generalized readout effectively approximates higher-order terms, thereby enhancing the RC's ability to model complex, nonlinear dynamics. Empirical studies, particularly on chaotic systems like the Lorenz and Rössler attractors, have demonstrated that this method not only improves prediction accuracy but also enhances robustness over extended prediction horizons \cite{ohkubo2024reservoir}. Notably, despite the increased complexity of the readout function, the training process still remains as linear regression, preserving the computational efficiency characteristic of traditional RC methods.

\subsubsection{Noise Based Regularization}

An alternative regularization strategy, introduced in \cite{jaeger2001echo}, involves perturbing the training state vectors with additive noise prior to training the readout layer. Formally, the perturbed state at time $t$ is defined as:
\begin{equation}
    x_p(t) = x(t) + \sigma \cdot \eta(t),
\end{equation}
where $x(t)$ denotes the original reservoir state, $\eta(t)$ is a zero-mean unit-variance noise term (e.g., drawn from $\mathcal{N}(0, 1)$), and $\sigma$ controls the noise amplitude.
However, this method exhibits behavior and outcomes closely aligned with those of Ridge Regression, particularly for the experiments conducted in this study \cite{wyffels2008stable}. 


\subsubsection{Pruning based Regularization}

Pruning is a regularization technique that reduces the number of readout connections, improving generalization, lowering computational cost, and highlighting important features \cite{dutoit2009pruning}. Only readout weights are modified to limit disturbance. In the absence of feedback from output to reservoir, pruning does not affect reservoir states.
Let $\mathcal{P}$ be the set of pruned indices. The pruned readout vector is:
\begin{equation}
    w'_i = 
    \begin{cases}
        0 & \text{if } i \in \mathcal{P}, \\
        w_i & \text{otherwise}.
    \end{cases}
\end{equation}

Pruning introduces bias but can reduce test error. The challenge lies in selecting both the number and identity of connections to prune. Validation error is used to determine when to stop pruning. As exhaustive search is intractable, three sub-optimal methods are employed.

\paragraph{Backward Selection (BS).}

BS is a greedy algorithm that iteratively removes the connection whose elimination least increases validation error. Due to its binary nature, BS may perform sub-optimally with highly correlated inputs. Hence, BS is combined with ridge regression (RR) to form Weighted Ridge Regression (WRR), where connections are either pruned or shrunk:
\begin{equation}
    \mathbf{W}_{\text{WRR}} = \arg\min_{\mathbf{W}} \|\mathbf{Y} - \mathbf{XW}\|^2 + \beta \mathbf{W}^T \mathbf{D} \mathbf{W},
\end{equation}
where $\mathbf{D}$ is a diagonal matrix with weights reflecting the sensitivity of the output to each neuron.

\paragraph{Least Angle Regression (LAR).}

LAR is a forward-selection strategy that incrementally adds the readout connection most correlated with the current residual error. It combines sparsity and shrinkage, operating along an $\ell_1$-regularized path with an analytically computable solution, reducing computational complexity compared to full regression.

\paragraph{Random Deletion (RanD).}

RanD removes readout connections randomly and serves as a performance baseline. It can also be combined with RR to assess the role of randomness under regularization.

\subsubsection{Effective Number of Readout Connections}

Shrinkage methods like RR do not directly reduce the number of connections but lower their effective number, as introduced in \cite{zou2005regularization}. For RR, the effective number of parameters is:
\begin{equation}
    d_{\text{eff}} = \sum_{i} \frac{\sigma_i^2}{\sigma_i^2 + \beta},
\end{equation}
where $\sigma_i$ are the singular values of $\mathbf{X}$ and $\beta$ is the ridge parameter. For LAR, this effective number approximates the number of active connections. From a practical standpoint, both actual and effective counts matter, as each active connection contributes to computational cost.

Regularization techniques in reservoir computing help models achieve enhanced robustness and generalization. Future research may focus on adaptive and biologically inspired regularization mechanisms to further optimize reservoir dynamics in high-dimensional tasks.

\subsection{Hyperparameter Tuning}
Hyperparameter tuning in RC is crucial for achieving good performance without overcomplicating the architecture or incurring undue computational cost \cite{Jaeger2002,Lukosevicius2012}. While the reservoir is often constructed with fixed internal weights \(\mathbf{W}_\mathrm{res}\), there are several key hyperparameters that significantly influence the dynamics:

\begin{itemize}
    \item \textit{Spectral radius} \(\rho(\mathbf{W}_\mathrm{res})\): The eigenvalue of \(\mathbf{W}_\mathrm{res}\) with the largest absolute value. A common rule of thumb is to choose \(\rho(\mathbf{W}_\mathrm{res}) < 1\) to ensure ESP, although optimal values can deviate from this in practice \cite{Jaeger2002,White2004}.
    \item \textit{Input scaling and bias}: Scaling of the input weight matrix, $W_{in}$ determines how strongly the input signal influences the reservoir states. Small input scaling keeps reservoir neurons in a near-linear regime, aiding smooth time series tasks, while large scaling enhances nonlinearity but risks instability and information loss due to saturation. To reduce hyperparameters, a single global scaling factor is typically applied to $W_{in}$. However, scaling the bias column separately can improve performance, as it influences neuron activations. Additionally, scaling each input channel individually helps when features contribute unequally to the task. Proper input preprocessing, such as normalization, enhances approximation accuracy, while feature engineering techniques like  PCA optimize performance.
    \item \textit{Reservoir size} \(N\): Generally, increasing the reservoir size enhances performance by providing a richer space of internal representations, making it easier to find a suitable linear combination of reservoir states to approximate the target function, $\mathbf{y}_{\text{target}}(t)$ . In challenging tasks, a larger reservoir is preferable, provided regularization techniques, such as ridge regression, are used to mitigate overfitting.  However, an excessively large reservoir can lead to overparameterization and inefficiencies if the task is simple or the available training data is limited. In practice, reservoir size should be as large as computational resources allow, with parameter tuning ensuring optimal performance.
    \item \textit{Connectivity structure}: Sparsity refers to the fraction of nonzero connections in the recurrent weight matrix, $W_{res}$. Too dense a network may lead to overly correlated states, while too sparse a network may underrepresent input signals \cite{Verstraeten2007}. Sparse connectivity tends to reduce redundant interactions, helping the reservoir maintain diverse activations without excessive correlation among neurons. Consequently, sparse reservoirs perform slightly better in many practical tasks. Moreover, if each reservoir neuron connects to only a few other neurons regardless of the total reservoir size, the computational cost of updating the reservoir states drastically reduces. Efficient sparse matrix representations in modern programming environments, such as Python’s SciPy and MATLAB, enable fast reservoir updates with minimal implementation effort. The nonzero elements of $W$ are typically drawn from uniform or Gaussian distributions, yielding similar performance in both cases. 
    \item \textit{Leaking rate} \(\alpha\) (for leaky integrator neurons): In leaky reservoirs, the update rule is often modified to
    \[
        \mathbf{x}(t+1) = (1 - \alpha)\,\mathbf{x}(t) \;+\; \alpha \, f\bigl(\mathbf{W}_\mathrm{res}\,\mathbf{x}(t) + \mathbf{W}_\mathrm{in}\,\mathbf{u}(t+1)\bigr),
    \]
    where \(0 < \alpha \le 1\) controls how quickly the reservoir state responds to new inputs \cite{Jaeger2007}.
\end{itemize}

\paragraph{Automated Tuning Approaches.}
Methods such as grid search, random search, or more sophisticated Bayesian optimization can be used to explore the hyperparameter space \cite{Lukosevicius2012}. Because the cost of training an RC system is relatively low (the readout training is typically a single-step linear regression), many practitioners can afford to test multiple configurations quickly.

\paragraph{Relation to Theoretical Metrics.}
Hyperparameter selection often targets maximizing \emph{memory capacity} or \emph{information processing capacity} of the reservoir \cite{Dambre2012}. For instance, one might seek a spectral radius that is large enough to foster rich (but stable) dynamics. Another consideration is the \emph{echo state property}; ensuring \(\rho(\mathbf{W}_\mathrm{res}) < 1\) is a common guideline, though in some tasks values slightly above unity can be experimentally beneficial \cite{Lukosevicius2012}.

\subsection{Computational Overhead and Trade-Offs}
RC frameworks, such as ESNs and LSMs, offer a relatively lightweight training procedure compared to fully trained recurrent neural networks. However, careful consideration of computational overhead, trade-offs in design choices, and resource requirements is crucial for effective deployment.

\paragraph{Complexity of State Update.}
Let \(\mathbf{W}_\mathrm{res} \in \mathbb{R}^{N \times N}\) and \(\mathbf{W}_\mathrm{in} \in \mathbb{R}^{N \times m}\). At each discrete time step \(t\), updating the reservoir state
    $\mathbf{x}(t+1)$ 
requires \(\mathcal{O}(N^2)\) operations if \(\mathbf{W}_\mathrm{res}\) is dense. If the reservoir matrix has a sparsity factor \(s\), the update can often be executed in \(\mathcal{O}(s N)\). Hence, many practical RC implementations employ sparse or structured \(\mathbf{W}_\mathrm{res}\) to reduce computation \cite{Jaeger2002,Verstraeten2007}.

\paragraph{Complexity of Readout Training.}
Training the output weights \(\mathbf{W}_\mathrm{out} \in \mathbb{R}^{d \times N}\) (for \(d\)-dimensional output) usually involves solving a linear regression:
\begin{equation}
    \min_{\mathbf{W}_\mathrm{out}}\; \bigl\|\mathbf{Y} - \mathbf{W}_\mathrm{out}\,\mathbf{X}\bigr\|^2 \;+\; \beta\,\bigl\|\mathbf{W}_\mathrm{out}\bigr\|^2,
    \label{eq:ridge_regression}
\end{equation}
where \(\mathbf{X} \in \mathbb{R}^{N \times T}\) is the matrix of reservoir states collected over \(T\) time steps, \(\mathbf{Y} \in \mathbb{R}^{d \times T}\) the corresponding target outputs, and \(\beta\) a regularization parameter. The solution of \eqref{eq:ridge_regression} via the normal equations has cost \(\mathcal{O}(\min(N^2 T,\, N T^2))\). When \(T \gg N\), this reduces to \(\mathcal{O}(N^2 T)\). More efficient algorithms (e.g., QR decomposition or incremental methods) may improve practical performance \cite{Lukosevicius2012}.

\paragraph{Trade-Offs in Model Size and Accuracy.}
Larger reservoirs (increasing \(N\)) tend to yield richer dynamics and potentially higher capacity. However, the computational cost of each timestep increases, as does memory usage. This trade-off is often mitigated by employing sparse reservoir matrices, 
   low-rank approximations or random features \cite{Rodan2011}, and
   techniques like “conceptors” to prune or refine reservoir states \cite{Jaeger2014}.

\subsection{Forecasting Setups: Open-Loop and Closed-Loop Reservoir Dynamics}\label{sec:forecasting_setups}

In the context of time-series modeling and prediction with reservoir computing, two fundamental forecasting paradigms are employed: the \textit{open-loop} (also known as teacher-forced or driven) setup and the \textit{closed-loop} (or autonomous) setup. Each represents a distinct way of interacting with the input stream and has profound implications for training, stability, and long-term forecasting performance. While the core reservoir architecture remains unchanged, the way in which external inputs are injected or replaced during inference fundamentally alters the system dynamics.

\paragraph{Open-Loop (Teacher-Forced) Forecasting.}
The open-loop setup corresponds to the scenario in which the reservoir receives ground-truth inputs at every time step during training and optionally also during testing. The system is thus ``driven" by the true signal, and prediction is performed stepwise by feeding each true input \( \mathbf{u}(t) \) to the network and generating the output \( \hat{\mathbf{y}}(t) \) using a trained readout layer. Formally, the reservoir dynamics in open-loop mode are:
\begin{equation}
    \mathbf{x}(t+1) = f\left( \mathbf{W}_{\text{res}}\, \mathbf{x}(t) + \mathbf{W}_{\text{in}}\, \mathbf{u}(t+1) \right),
\end{equation}
\begin{equation}
    \hat{\mathbf{y}}(t+1) = \mathbf{W}_{\text{out}}\, \mathbf{x}(t+1).
\end{equation}
Here, the input \( \mathbf{u}(t+1) \) is always the true next input (or part of the sequence), and the output is calculated directly from the current reservoir state.

This mode is particularly effective when the aim is to perform short-term prediction, classification, or filtering tasks where the ground truth is always available. Because the input is continuously reset to its true value, the reservoir trajectory stays close to the data manifold, preventing divergence due to accumulated prediction error. The training problem reduces to a straightforward linear regression over the reservoir states, and optimization can often be carried out via ridge regression or similar techniques.

Open-loop forecasting is also equivalent to the so-called ``teacher forcing" method commonly used in recurrent neural network training. It ensures convergence and stability during training but suffers from a potential mismatch between training and inference dynamics if the trained model is later used in a closed-loop (generative) context.

\paragraph{Closed-Loop (Autonomous) Forecasting.}
In the closed-loop or autonomous forecasting mode, the reservoir must generate future inputs based solely on its own previous outputs. This setup is required when the goal is to generate free-running predictions or simulate a learned dynamical system over an extended horizon. The input at each step is no longer the true \( \mathbf{u}(t) \), but the model's own prediction \( \hat{\mathbf{y}}(t) \), which is fed back into the input channel:
\begin{equation}
    \hat{\mathbf{u}}(t) = \hat{\mathbf{y}}(t),
\end{equation}
\begin{equation}
    \mathbf{x}(t+1) = f\left( \mathbf{W}_{\text{res}}\, \mathbf{x}(t) + \mathbf{W}_{\text{in}}\, \hat{\mathbf{u}}(t) \right),
\end{equation}
\begin{equation}
    \hat{\mathbf{y}}(t+1) = \mathbf{W}_{\text{out}}\, \mathbf{x}(t+1).
\end{equation}

This configuration transforms the ESN into a self-contained autonomous system—essentially a recursive map whose future evolution depends entirely on its internal state and output feedback. It is commonly used in tasks such as modeling chaotic systems (e.g., Lorenz, Rössler, Mackey–Glass), where the goal is to emulate the entire attractor or produce long-range forecasts from an initial condition \cite{jaeger2001echo, pathak2018model}.

Closed-loop forecasting introduces several mathematical and practical challenges. Since each predicted output becomes the input at the next step, errors can compound over time, and the reservoir trajectory may diverge from the original data manifold. This phenomenon, often referred to as \emph{error accumulation} or \emph{trajectory drift}, is especially pronounced in chaotic systems due to their sensitive dependence on initial conditions.

Moreover, stability in the autonomous mode is not guaranteed even if the reservoir exhibits the echo state property in the driven case. The system becomes a composite dynamical map:
\begin{equation}
    \mathbf{x}(t+1) = f\left( \mathbf{W}_{\text{res}}\, \mathbf{x}(t) + \mathbf{W}_{\text{in}}\, \mathbf{W}_{\text{out}}\, \mathbf{x}(t) \right) =: \mathcal{F}(\mathbf{x}(t)),
\end{equation}
where the nonlinearity of \( f \) and the composition of matrices makes the overall update rule highly sensitive to the choice of spectral radius, input scaling, and readout weights.

Recent theoretical work has investigated conditions under which the closed-loop ESN preserves stability or converges to the true attractor. For example, results from Grigoryeva and Ortega \cite{grigoryeva2018echo} have shown that echo state networks can approximate fading memory filters and autonomous dynamical systems under specific constraints. Meanwhile, empirical studies such as \cite{pathak2018model,Gupta2022ModelfreeFO} have demonstrated that closed-loop ESNs (or their variants like time-delay reservoirs and hybrid architectures) can model complex spatiotemporal systems if initialized and regularized appropriately.

\paragraph{Comparison and Trade-offs.}
The distinction between open-loop and closed-loop forecasting in reservoir computing reflects a deeper question about supervised learning versus generative modeling. Open-loop setups provide stability and tractability for training, but may fail in deployment if autonomous generation is required. Conversely, closed-loop setups offer powerful tools for simulation and long-term forecasting, but require careful handling of stability, initialization, and regularization.
In practice, one common strategy is to train in open-loop mode and then evaluate performance in closed-loop mode—sometimes including curriculum learning schemes or hybrid loss functions that gradually mix predicted and true inputs (e.g., scheduled sampling). In the context of chaotic dynamical systems, accurate one-step prediction (open-loop) does not necessarily translate into good long-range behavior (closed-loop), so evaluation must be task-specific.

\subsection{Performance Metrics for Forecasting}\label{sec:metrics}

Evaluating the performance of RC models—especially in time-series forecasting tasks—requires metrics that capture not only pointwise prediction accuracy but also the temporal quality and stability of long-term predictions. The most common metrics include the \textit{Normalized Root Mean Squared Error} (NRMSE) and the \textit{Valid Prediction Time} (VPT). Other measures such as the \textit{Correlation Coefficient}, \textit{Spectral Similarity}, and \textit{Lyapunov Exponent Deviation} are also used to assess the quality of trajectory matching and dynamical fidelity. These metrics vary in their sensitivity to error scaling, temporal alignment, and chaotic divergence, and are often chosen based on the properties of the target system.

\paragraph{Normalized Root Mean Squared Error.}
The NRMSE is a scale-invariant measure of the deviation between predicted values and ground truth. Given a true time series \(\{y(t)\}_{t=1}^T\) and predicted sequence \(\{\hat{y}(t)\}_{t=1}^T\), the NRMSE is defined as:
\begin{equation}
    \mathrm{NRMSE} = \frac{\sqrt{\frac{1}{T} \sum_{t=1}^{T} \left(y(t) - \hat{y}(t)\right)^2}}{\sigma_y},
\end{equation}
where \(\sigma_y\) is the standard deviation of the true signal:
\[
    \sigma_y = \sqrt{\frac{1}{T} \sum_{t=1}^{T} \left(y(t) - \bar{y}\right)^2}, \quad \bar{y} = \frac{1}{T} \sum_{t=1}^T y(t).
\]
The NRMSE normalizes the prediction error, enabling comparison across signals of different amplitudes and variances. It is widely used in RC literature due to its intuitive interpretation and sensitivity to both bias and variance of predictions \cite{Lukosevicius2012, jaeger2004harnessing}.

\paragraph{Valid Prediction Time.}
While NRMSE evaluates accuracy over a full window, it does not capture the temporal horizon over which predictions remain valid. The \textit{Valid Prediction Time} (VPT) quantifies how long an autonomous prediction \(\hat{y}(t)\) remains sufficiently close to the true trajectory \(y(t)\) before diverging beyond a pre-defined threshold \(\varepsilon\). Define
\begin{equation}
    VPT(\varepsilon) = \max \left\{ \tau \in \mathbb{N} \;\middle|\; \frac{|\hat{y}(t) - y(t)|}{\sigma_y} < \varepsilon \quad \text{for all } t \in [0, \tau] \right\}.
\end{equation}
Typical values for \(\varepsilon\) range between 0.1 and 0.2, depending on the signal-to-noise ratio and the specific application \cite{pathak2018model, lu2018attractor}. VPT is particularly informative when forecasting chaotic systems, where even small prediction errors grow rapidly due to sensitivity to initial conditions. A high VPT implies that the model accurately reproduces the target dynamics for a meaningful time horizon before diverging.

\paragraph{Correlation Coefficient.}
Another standard metric is the Pearson correlation coefficient \(\rho\) between the predicted and true signals:
\begin{equation}
    \rho(y, \hat{y}) = \frac{\sum_{t=1}^T (y(t) - \bar{y}) (\hat{y}(t) - \bar{\hat{y}})}{\sqrt{\sum_{t=1}^T (y(t) - \bar{y})^2} \sqrt{\sum_{t=1}^T (\hat{y}(t) - \bar{\hat{y}})^2}}.
\end{equation}
This metric captures the linear correlation between signals and is bounded between \(-1\) and \(1\), with \(\rho = 1\) indicating perfect alignment. It is commonly used as a complementary metric to NRMSE in benchmarking RC models \cite{Lukosevicius2012, jalalvand2015real,Moses2018}.

\paragraph{Attractor Deviation.}
To measure how closely two trajectories align in state space, we use Attractor Deviation (ADev), following \cite{Zhai2023ResonanceML}. The space is discretized into uniform grid cells, each assigned a binary value: $1$ if a trajectory passes through it, $0$ otherwise. ADev is computed as the number of cells visited by only one of the trajectories, specifically for 3D-phase space \emph{ADev} is given as:
\begin{equation} 
    ADev = \sum_{i, j, k} \left| C_{ijk}^{true} - C_{ijk}^{pred} \right| \end{equation}
This captures topological differences between the two paths. The choice of grid resolution is critical—too coarse may overlook meaningful differences, while too fine may exaggerate minor deviations.

\paragraph{Spectral Similarity (Power Spectral Density Matching).}
To assess whether the reservoir captures the frequency content of a chaotic or periodic signal, one may compare the power spectral densities (PSDs) of the true and predicted trajectories:
\(
    \mathrm{PSD}_y(\omega), \mathrm{PSD}_{\hat{y}}(\omega).
\)
A similarity measure may be defined using the \(L^2\)-norm:
\begin{equation}
    D_{\text{PSD}} = \left( \int_{\omega_{\min}}^{\omega_{\max}} \left| \mathrm{PSD}_y(\omega) - \mathrm{PSD}_{\hat{y}}(\omega) \right|^2 \, d\omega \right)^{1/2}.
\end{equation}
This measure evaluates whether the model reproduces not only pointwise values but also the oscillatory behavior of the target system. It is useful when exact trajectory matching is not possible (as in chaotic systems), but capturing long-term statistical properties is desired \cite{pathak2017using, carroll2001quantifying,Toker2020}.

In \emph{Figure \ref{fig:lorenz_psd}}, we depict PSD of the \(z\)-component of the Lorenz system \(\dot{x}=\sigma\,(y-x),\,\dot{y}=x\,(\rho-z)-y,\,\dot{z}=x\,y-\beta\,z\) under standard chaotic parameters \(\{\sigma=10,\,\rho=28,\,\beta=8/3\}\).  By applying Welch’s method to a finely sampled trajectory, one observes a broad distribution in frequency space rather than sharp spectral lines, reflecting the complex, quasi-periodic structure typical of a chaotic flow.  In particular, the absence of narrow peaks and the slow decay of energy across a wide frequency range are hallmarks of sensitive dependence on initial conditions, manifesting here as a broad continuum in the spectrum.  From an echo state network perspective, such wideband spectral content emphasizes the challenge of capturing or predicting chaotic signals: the reservoir must be capable of accommodating a rich frequency composition over extended time intervals, thereby requiring both robust transient memory and nonlinear encoding capacities.

\begin{figure}[!ht]
    \centering
    \includegraphics[width=0.8\linewidth]{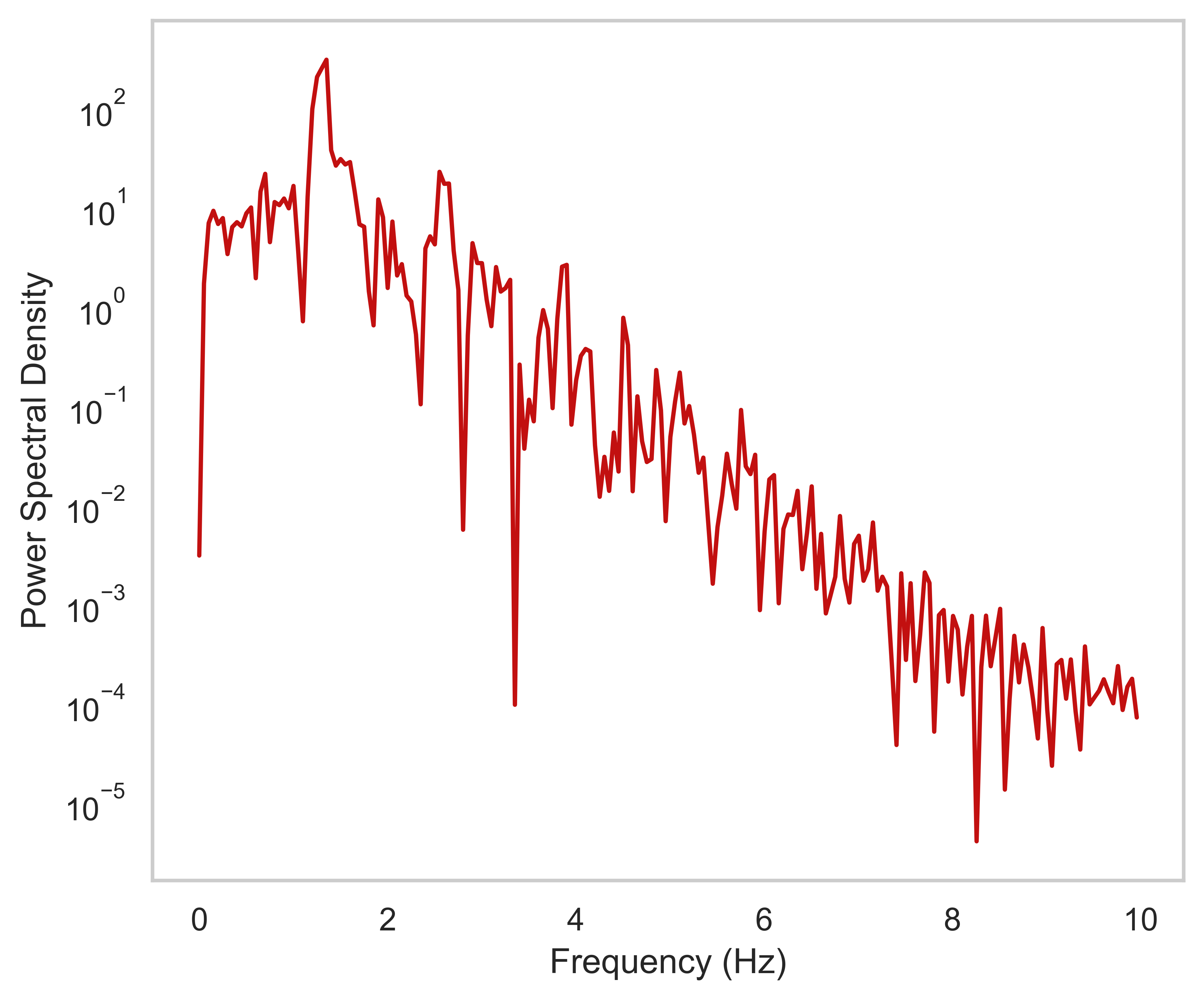}
    \caption{Power spectral density of the \(z(t)\) trajectory of the Lorenz system, computed via Welch’s method.  The broad, continuous frequency distribution indicates chaotic behavior, lacking the discrete lines characteristic of purely periodic or quasi-periodic dynamics.}
    \label{fig:lorenz_psd}
\end{figure}  

\paragraph{Lyapunov Exponent Deviation.}
For chaotic system emulation, it is often informative to compare the maximal Lyapunov exponent \(\lambda_{\max}\) of the ground truth and the predicted dynamics. Let \(\lambda_{\max}^y\) and \(\lambda_{\max}^{\hat{y}}\) denote the exponents of the true and model-generated trajectories, respectively. A useful metric is their difference:
\begin{equation}
    \Delta \lambda = \left| \lambda_{\max}^y - \lambda_{\max}^{\hat{y}} \right|.
\end{equation}
Small \(\Delta \lambda\) indicates that the reservoir correctly mimics the divergence rate of nearby trajectories, a hallmark of dynamical fidelity in chaotic regimes \cite{pathak2017using, parlitz1992prediction}.

\paragraph{Mean Absolute Error  and Mean Absolute Percentage Error}
Two classic yet informative measures are the Mean Absolute Error (MAE) and the Mean Absolute Percentage Error (MAPE). For a ground-truth time series \(\{y(t)\}\) and predictions \(\{\hat{y}(t)\}\):
\begin{equation}
\mathrm{MAE} \;=\; \frac{1}{T}\,\sum_{t=1}^T \bigl| y(t)\,-\,\hat{y}(t)\bigr|,
\quad
\mathrm{MAPE} \;=\; \frac{100\%}{T} \,\sum_{t=1}^T \left|\frac{y(t)-\hat{y}(t)}{y(t)}\right|.
\end{equation}
MAE is conceptually simple (absolute differences) and not as sensitive to outliers as squared-error metrics. MAPE directly measures relative error, which is especially useful when the target values vary by several orders of magnitude or when a scale-independent measure is needed.  
These metrics typically complement NRMSE in robust evaluations, since they highlight absolute or percentage-based deviations that might be masked when using a variance-based denominator.

\paragraph{Coefficient of Determination (\(R^2\))}
The coefficient of determination, \(R^2\), is common in regression analysis and sometimes reported in time-series contexts. It is defined as:
\begin{equation}
R^2 
\;=\;
1 \;-\;
\frac{\sum_{t=1}^T \bigl(y(t) \;-\;\hat{y}(t)\bigr)^2}
{\sum_{t=1}^T \bigl(y(t) \;-\;\bar{y}\bigr)^2},
\end{equation}
where \(\bar{y}\) is the mean of the true data. An \(R^2\) close to \(1\) indicates that the model explains most of the variance of the target series, while values near \(0\) or negative indicate poor explanatory power. Compared to the correlation coefficient \(\rho\), \(R^2\) is more sensitive to both linear fit and bias, thus providing a different perspective on model quality.

\paragraph{Dynamic Time Warping Distance}
When exact time alignment matters (e.g.\ signals may have phase shifts or local time distortions), Dynamic Time Warping (DTW) is a popular metric. DTW finds an optimal match between two time series by allowing local stretching or compressing of the time axis. The resulting DTW distance quantifies how much “warping” is needed:
\begin{equation}
\mathrm{DTW}\bigl(\mathbf{y}, \mathbf{\hat{y}}\bigr) 
\;=\;
\min_{\pi \in \mathcal{P}} \sum_{(t,t') \in \pi} d\ \!\bigl(\,y(t),\,\hat{y}(t')\bigr),
\end{equation}
where \(\pi\) ranges over all possible time-alignment paths, and \(d(\cdot,\cdot)\) is a local distance (e.g.\ absolute difference). DTW is particularly useful in reservoir computing for speech, gesture, or any signals with possible timing shifts, as it evaluates similarity independent of uniform time-lag alignment.

\paragraph{Permutation Entropy}
To evaluate whether a model preserves the complexity and ordinal patterns of a time series, Permutation Entropy (PE) can be used. PE quantifies the diversity of ordinal patterns in a sliding window of length \(D\):
Partition each local window \((y(t), \dots, y(t+D-1))\) into a permutation of \(\{1, 2, \dots, D\}\) based on ascending order.
 Count frequencies of each permutation pattern.
 Compute Shannon entropy of the permutation distribution.
\begin{equation}
H_{\text{perm}} \;=\; -\sum_{i=1}^{D!} p_i \,\ln (p_i),
\end{equation}
where \(p_i\) is the frequency of the \(i\)-th permutation pattern. Comparing PE values (or distributions of ordinal patterns) between the true and predicted series reveals whether the forecast preserves the underlying dynamical complexity and short-scale ordering of data—highly relevant for nonlinear or chaotic signals.

\paragraph{Distributional Similarity (Kullback–Leibler Divergence)}
When the goal is to match the statistical distribution of outputs rather than exact trajectories, a Kullback–Leibler (KL) Divergence between the empirical distributions of \(\{y(t)\}\) and \(\{\hat{y}(t)\}\) can be informative:
\begin{equation}
D_{\text{KL}}\bigl(P\|Q\bigr) 
\;=\;
\int p(z)\,\ln\ \!\Bigl(\tfrac{p(z)}{q(z)}\Bigr)\,dz,
\end{equation}
where \(p\) and \(q\) are probability density functions (estimated from \(y(t)\) and \(\hat{y}(t)\) respectively). If \(D_{\text{KL}}\) is small, the two distributions share similar statistical characteristics (mean, variance, higher moments), even if the forecasts do not match point by point. This is especially relevant in chaotic or stochastic tasks, where matching \emph{distributional} properties may suffice.

\paragraph{Diebold–Mariano Statistic}
To compare two forecasts \(\hat{y}_A(t)\) and \(\hat{y}_B(t)\) made by different models (e.g., two reservoir configurations) against the same ground truth \(y(t)\), the Diebold–Mariano test provides a statistical measure of whether one forecast is significantly more accurate. Let \(e_{A}(t)\) and \(e_{B}(t)\) be the respective error series (e.g.\ absolute or squared errors). The DM statistic tests the null hypothesis that the two forecasts have the same expected accuracy. It is computed from a time series of “loss differentials” \(\{d(t)=\ell(e_A(t)) - \ell(e_B(t))\}\) for some chosen loss function \(\ell\). Significance levels indicate whether model A outperforms model B in a statistically meaningful sense, an important consideration when differences in NRMSE or MAE are small.

\vspace{1em}
In practice, different metrics highlight different aspects of forecasting performance. NRMSE emphasizes pointwise accuracy, VPT emphasizes time-localized stability, PSD similarity focuses on spectral fidelity, and Lyapunov deviation quantifies dynamical consistency. A comprehensive evaluation of a reservoir model—especially in the context of nonlinear and chaotic systems—often combines multiple of these measures to obtain a robust characterization.

\subsection{Resource Requirements}
\paragraph{Memory Footprint.}
The principal memory usage arises from storing the reservoir weight matrix \(\mathbf{W}_\mathrm{res}\) of size \(N \times N\), and
    the collected reservoir states \(\mathbf{X}\) if offline batch training is used (size \(N \times T\)).
Online training procedures can reduce memory load by discarding reservoir states after each update \cite{Lukosevicius2012}.

\paragraph{Latency and Real-Time Considerations.}
Because the state update in an RC system is straightforward (and often can be parallelized), many real-time applications (e.g., signal processing and control) can run the RC update at the same rate as input sampling \cite{Verstraeten2007}. When the reservoir is small or sparse, such updates can be performed with low latency on embedded systems.

\paragraph{Hardware Acceleration.}
Some recent work explores specialized hardware (e.g., neuromorphic chips, FPGAs) for implementing reservoir architectures. Exploiting parallelism in the recurrent connections or the spiking nature of Liquid State Machines can further reduce power consumption and increase speed \cite{Hammer2019}.

\subsection{Experiments}
To evaluate the effectiveness and generalizability of the proposed methods outlined in \S \ref{sec:archi}, we present an extensive series of experiments across a diverse range of dynamical systems. These systems, selected to represent varying degrees of complexity and nonlinearity, are described in detail in \S \ref{sec:analysis}. Various reservoir models are evaluated on these systems in both open-loop (teacher-forced) forecasting and closed-loop (autoregressive) forecasting settings.

All datasets used in our experiments consist of time series sampled from their respective continuous-time dynamical systems. These sequences were generated by numerically integrating the governing ordinary differential equations (ODEs) using the classical fourth-order Runge–Kutta (RK4) method \cite{butcher2008numerical}. For each system, we employed an appropriate integration step size to ensure a stable and accurate representation of the dynamics. During the sampling process, we discard an initial portion of the simulated data—commonly referred to as the sampling washout period—to eliminate the influence of transient dynamics that arise from arbitrary or non-representative initial conditions. This approach ensures that the retained data reflects the long-term behavior of the system on or near its attractor, rather than being dominated by initialization artifacts. As part of the experimental setup, Table~\ref{tab:dataset_characteristics} lists the key attributes of each system, such as integration parameters and dimensionality. 
For each dataset, $10,000$ time steps were sampled. The initial $2,000$ steps were discarded as a washout period to eliminate transients arising from initial conditions. Of the remaining $8,000$ steps, $4,500$ were used for training (including a $100$-step warm-up phase), while the remaining $3,500$ steps were reserved for testing. Table~\ref{tab:LLE_LT_datasets} details the maximum Lyapunov Exponent $\lambda_{max}$ and Lyapunov time for each dynamical system.

\begin{table}[!ht]
\centering
\caption{Dataset Characteristics: Input/output dimensions, parameters, step sizes, and initial conditions. All datasets are simulated using the RK method.}

\label{tab:dataset_characteristics}
\renewcommand{\arraystretch}{1.5} 
\resizebox{0.8\textwidth}{!}{%
\begin{tabular}{lccccc}
\toprule
\textbf{Dataset} & \textbf{$d_{in}$} & \textbf{$d_{out}$} & \textbf{Parameters} & \textbf{Step Size} & \textbf{Initial Value} \\
\midrule
Lorenz            & 3 & 3 & $\sigma=10, \rho=28, \beta=\tfrac{8}{3}$ & 0.02& \{1, 1, 1\} \\
R\"{o}ssler       & 3 & 3 & $a=0.2, b=0.2, c=5.7$ & 0.02 & \{1, 1, 1\} \\
Chen              & 3 & 3 & $a=35, b=3, c=28$ & 0.02 & \{1, 1, 1\} \\
Chua              & 3 & 3 & \makecell{$\alpha=9, \beta=14,$ \\$  m_0= -1.143, m_1=-0.714$} & 0.02 & \{1, 1, 1\} \\
\bottomrule
\end{tabular}
}
\end{table}

\begin{figure}[htbp]
    \centering

    \begin{subfigure}[t]{0.23\textwidth}
        \includegraphics[width=\linewidth]{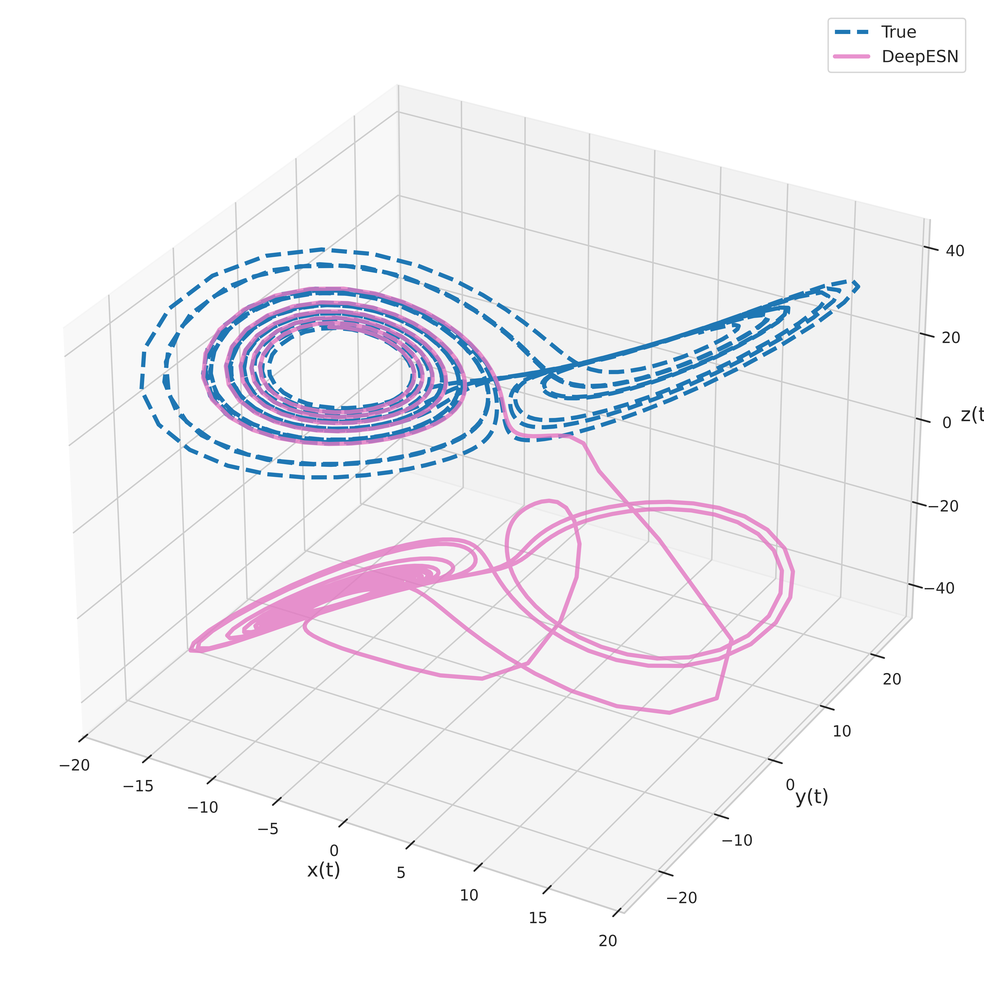}
        \caption{DeepESN}
    \end{subfigure}
    \begin{subfigure}[t]{0.23\textwidth}
        \includegraphics[width=\linewidth]{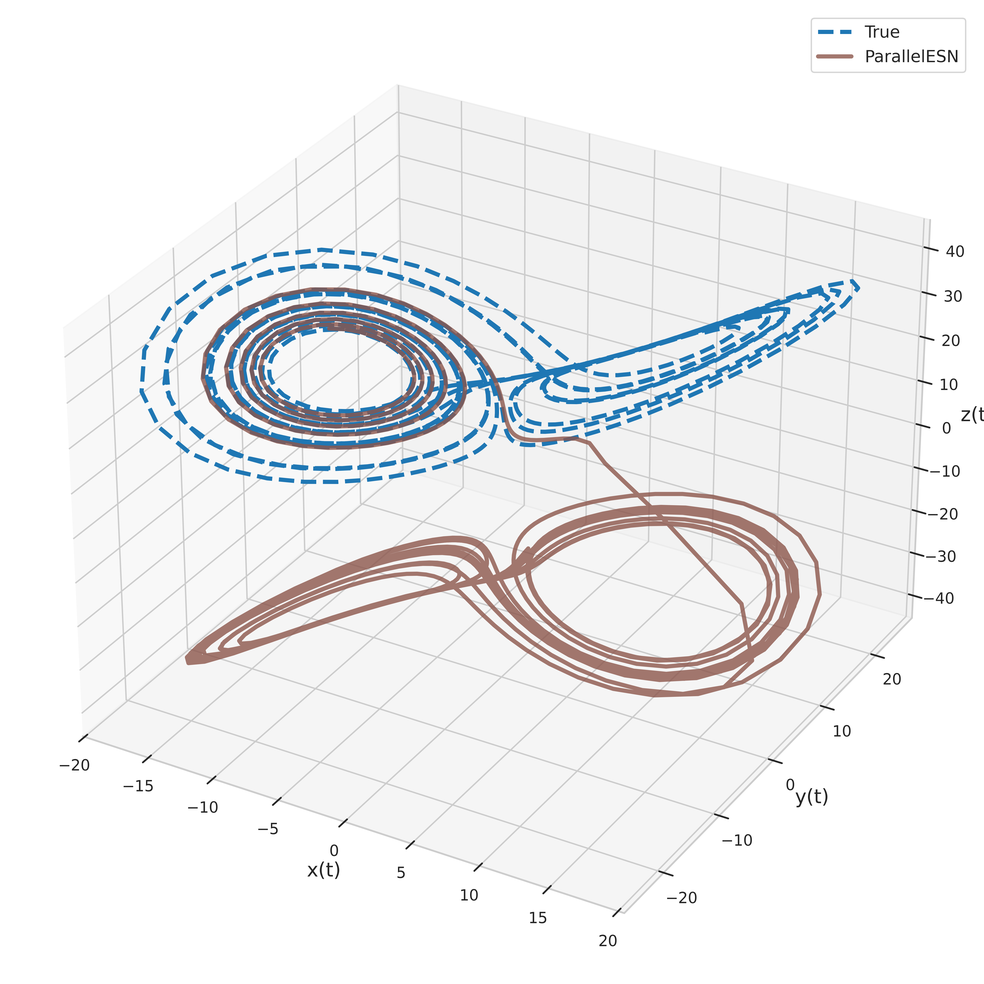}
        \caption{Parallel ESN}
    \end{subfigure}
    \begin{subfigure}[t]{0.23\textwidth}
        \includegraphics[width=\linewidth]{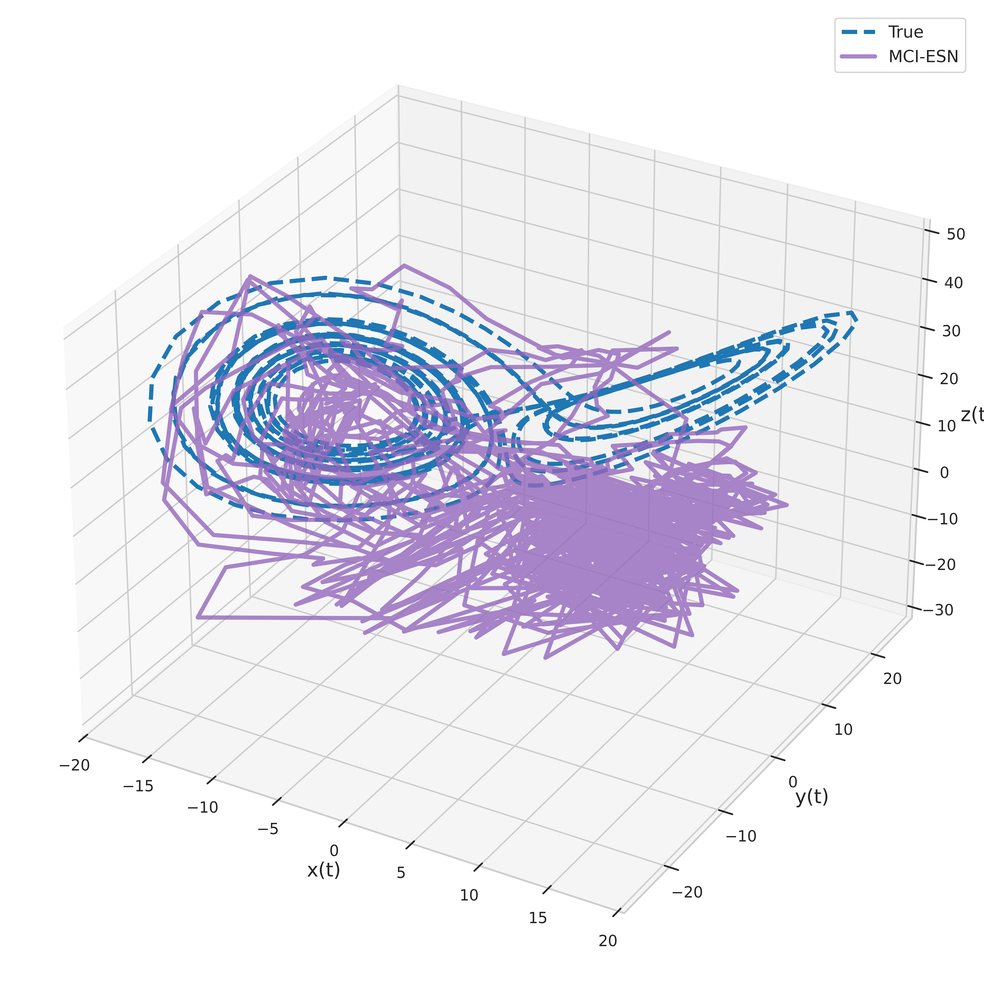}
        \caption{MCI-ESN}
    \end{subfigure}
    \begin{subfigure}[t]{0.23\textwidth}
        \includegraphics[width=\linewidth]{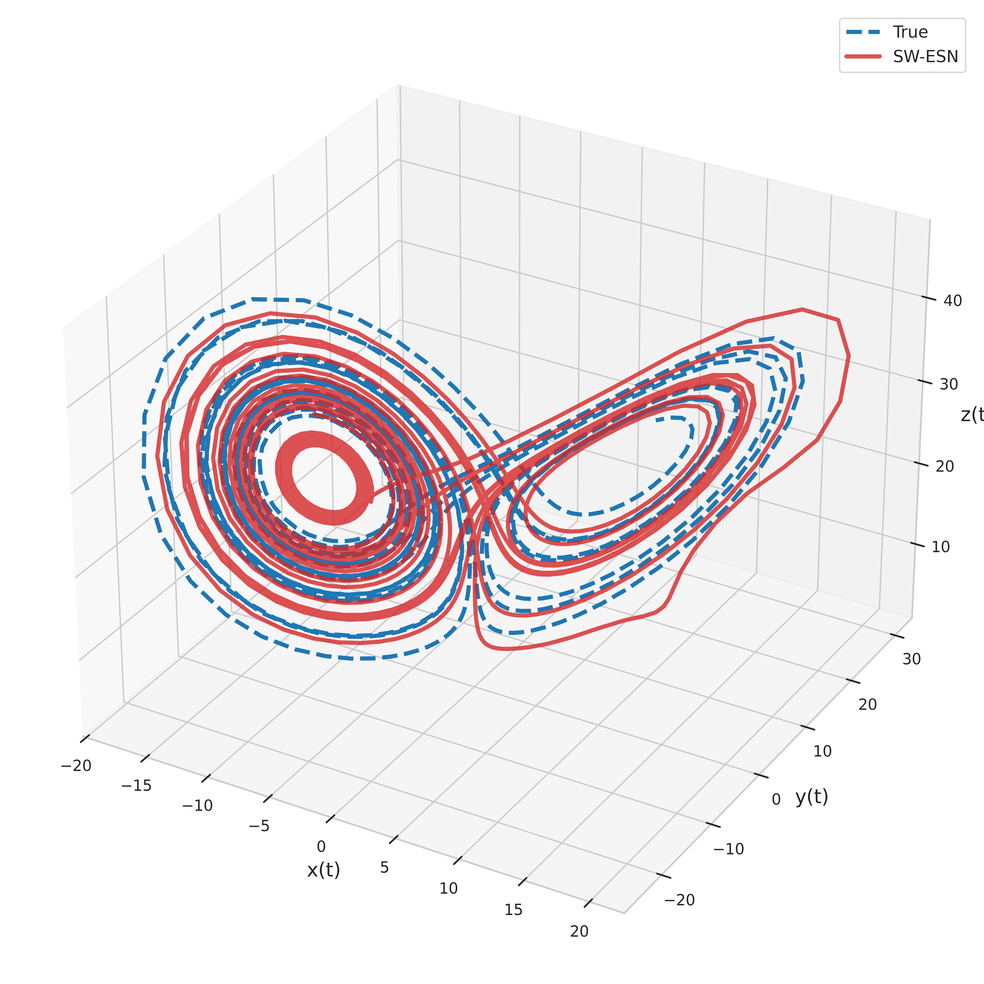}
        \caption{SW-ESN}
    \end{subfigure}

    \vspace{0.4cm}

    \begin{subfigure}[t]{0.23\textwidth}
        \includegraphics[width=\linewidth]{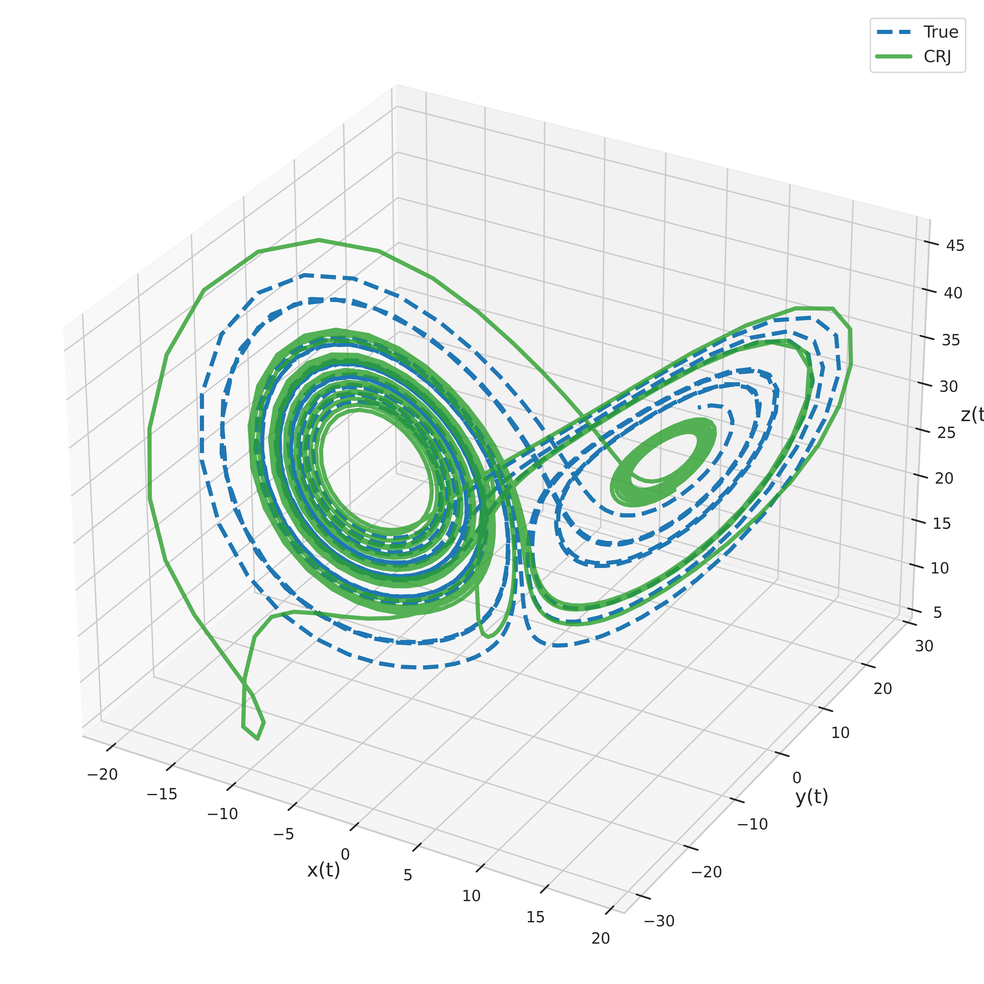}
        \caption{CRJ}
    \end{subfigure}
    \begin{subfigure}[t]{0.23\textwidth}
        \includegraphics[width=\linewidth]{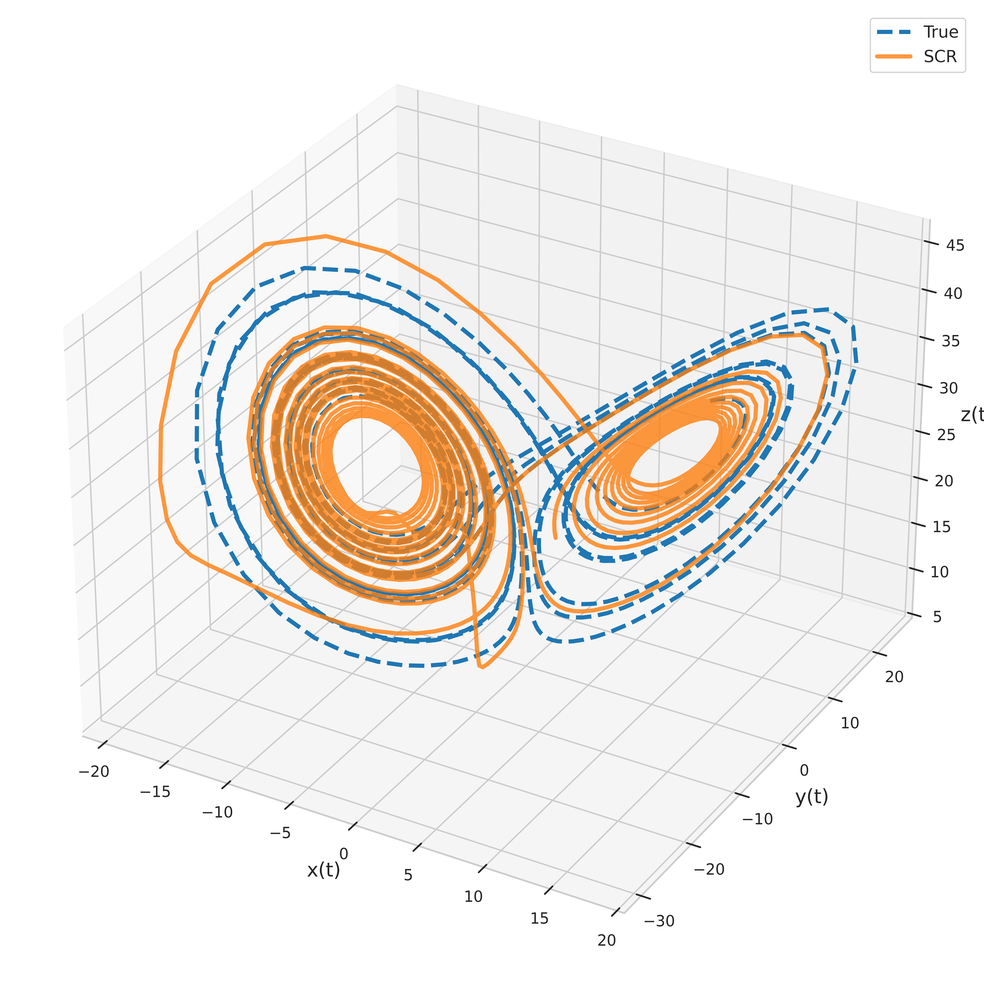}
        \caption{SCR}
    \end{subfigure}
    \begin{subfigure}[t]{0.23\textwidth}
        \includegraphics[width=\linewidth]{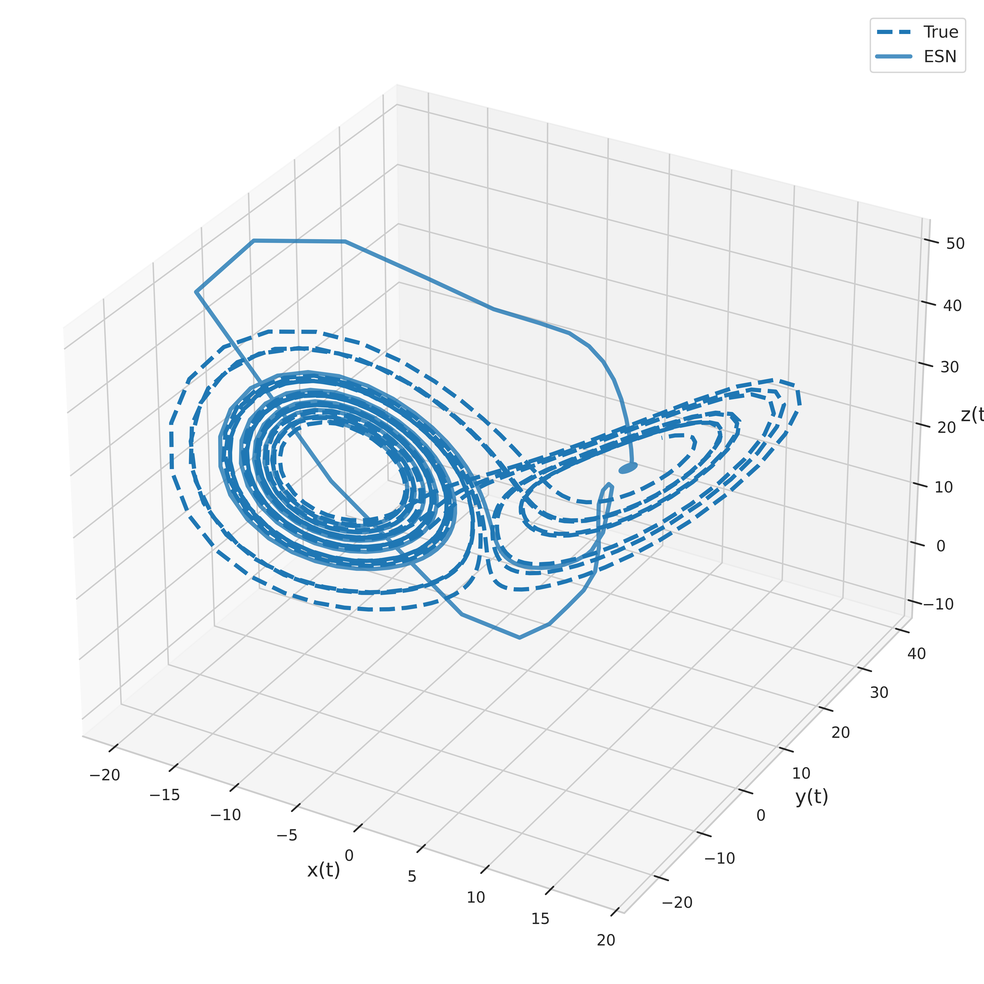}
        \caption{Vanilla ESN}
    \end{subfigure}

    \caption{3D Phase portraits for Lorenz system predicted by different reservoir architectures in closed-loop forecastings.}
    \label{fig:lorenz-phase_cl}
\end{figure}

\begin{figure}[htbp]
    \centering

    \begin{subfigure}[t]{0.23\textwidth}
        \includegraphics[width=\linewidth]{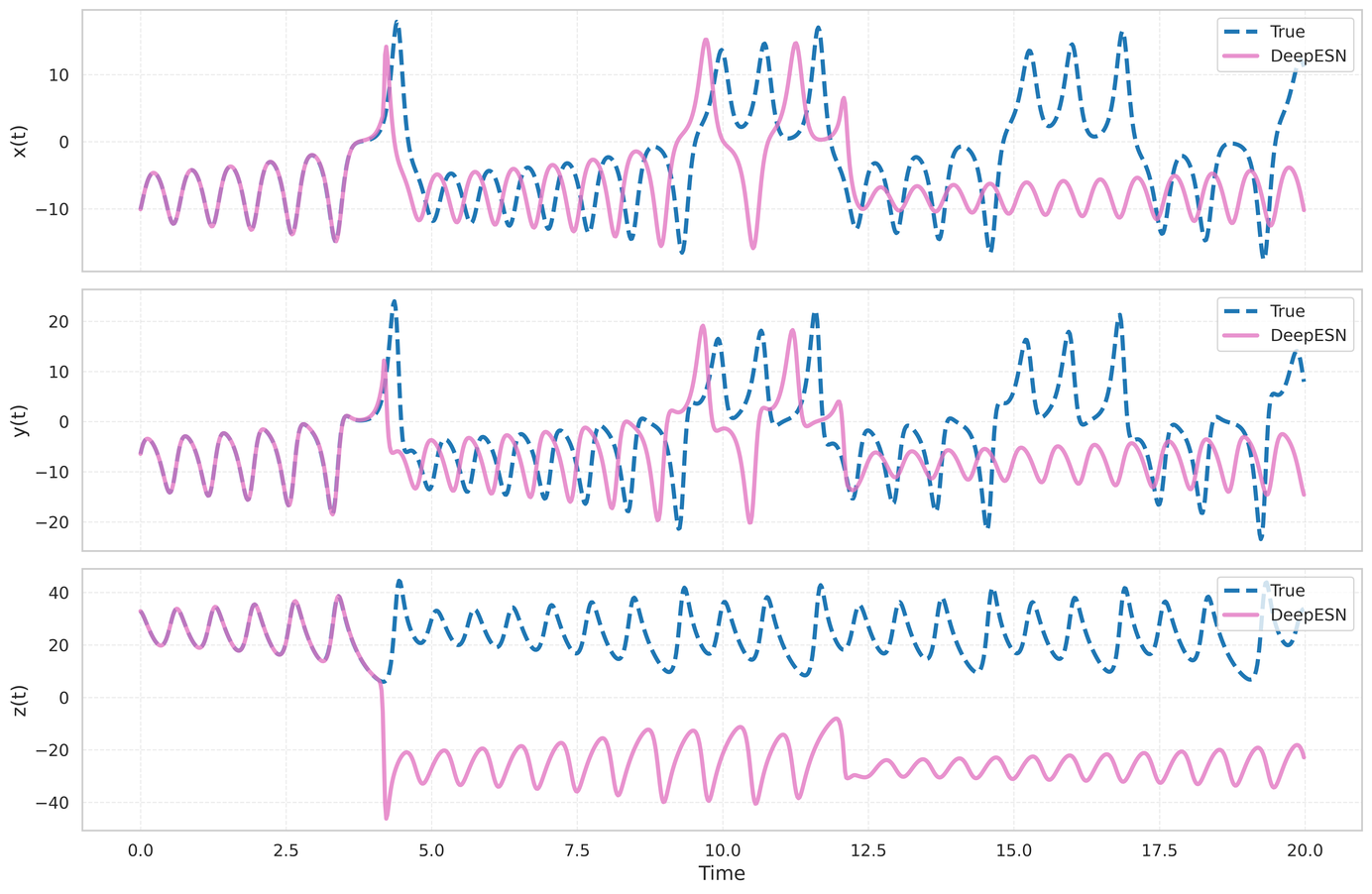}
        \caption{DeepESN}
    \end{subfigure}
    \begin{subfigure}[t]{0.23\textwidth}
        \includegraphics[width=\linewidth]{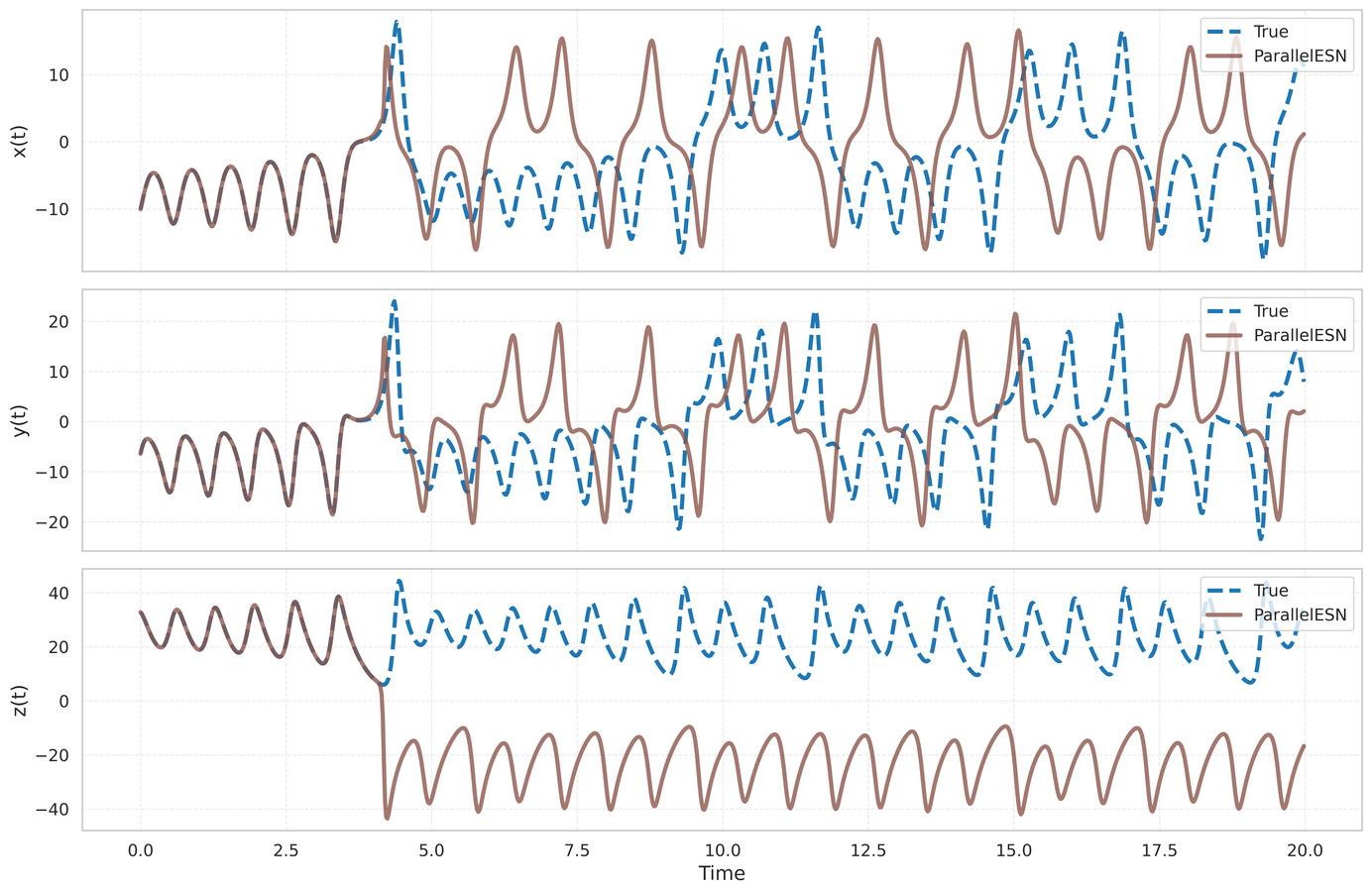}
        \caption{ParallelESN}
    \end{subfigure}
    \begin{subfigure}[t]{0.23\textwidth}
        \includegraphics[width=\linewidth]{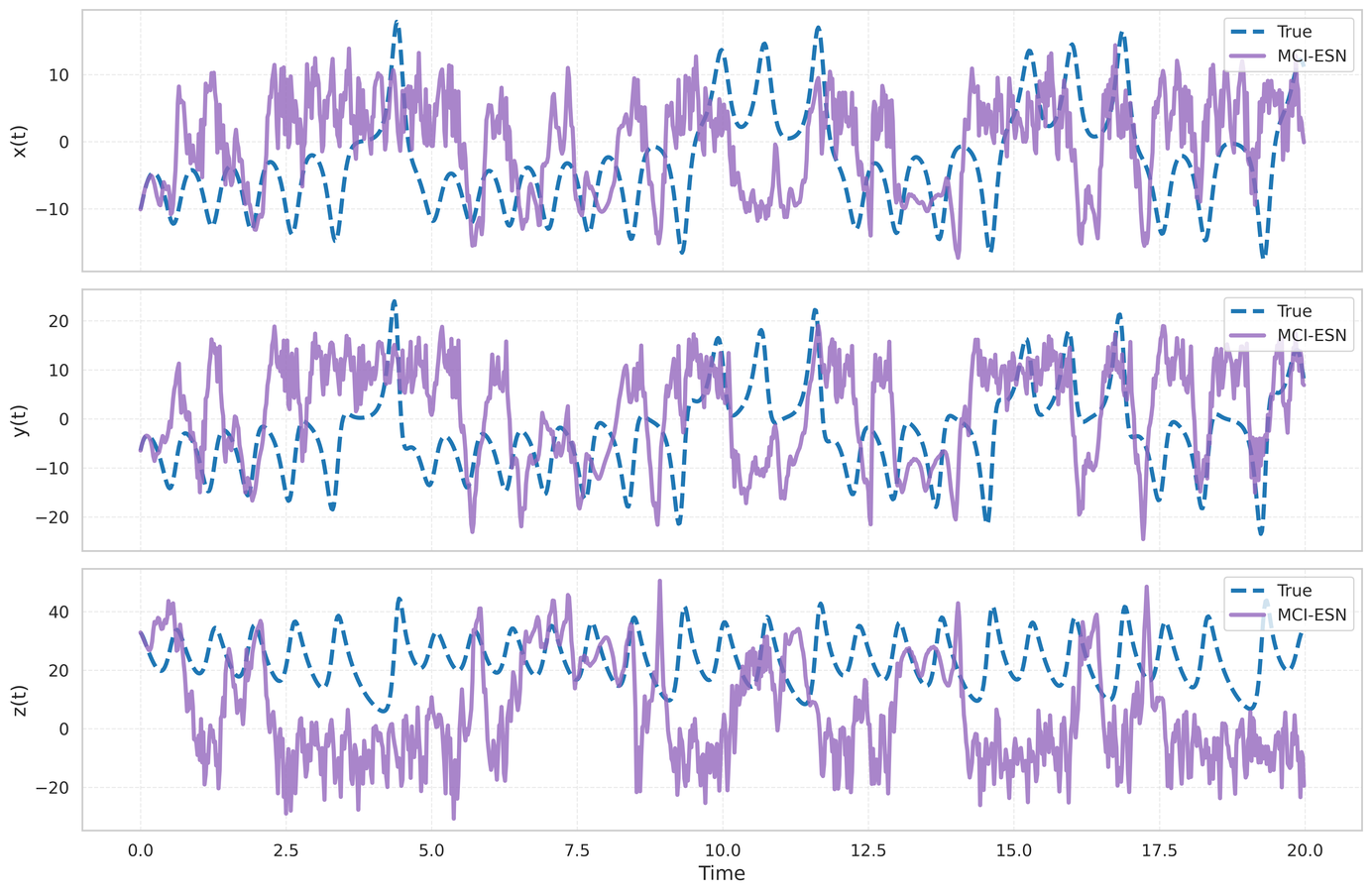}
        \caption{MCI-ESN}
    \end{subfigure}
    \begin{subfigure}[t]{0.23\textwidth}
        \includegraphics[width=\linewidth]{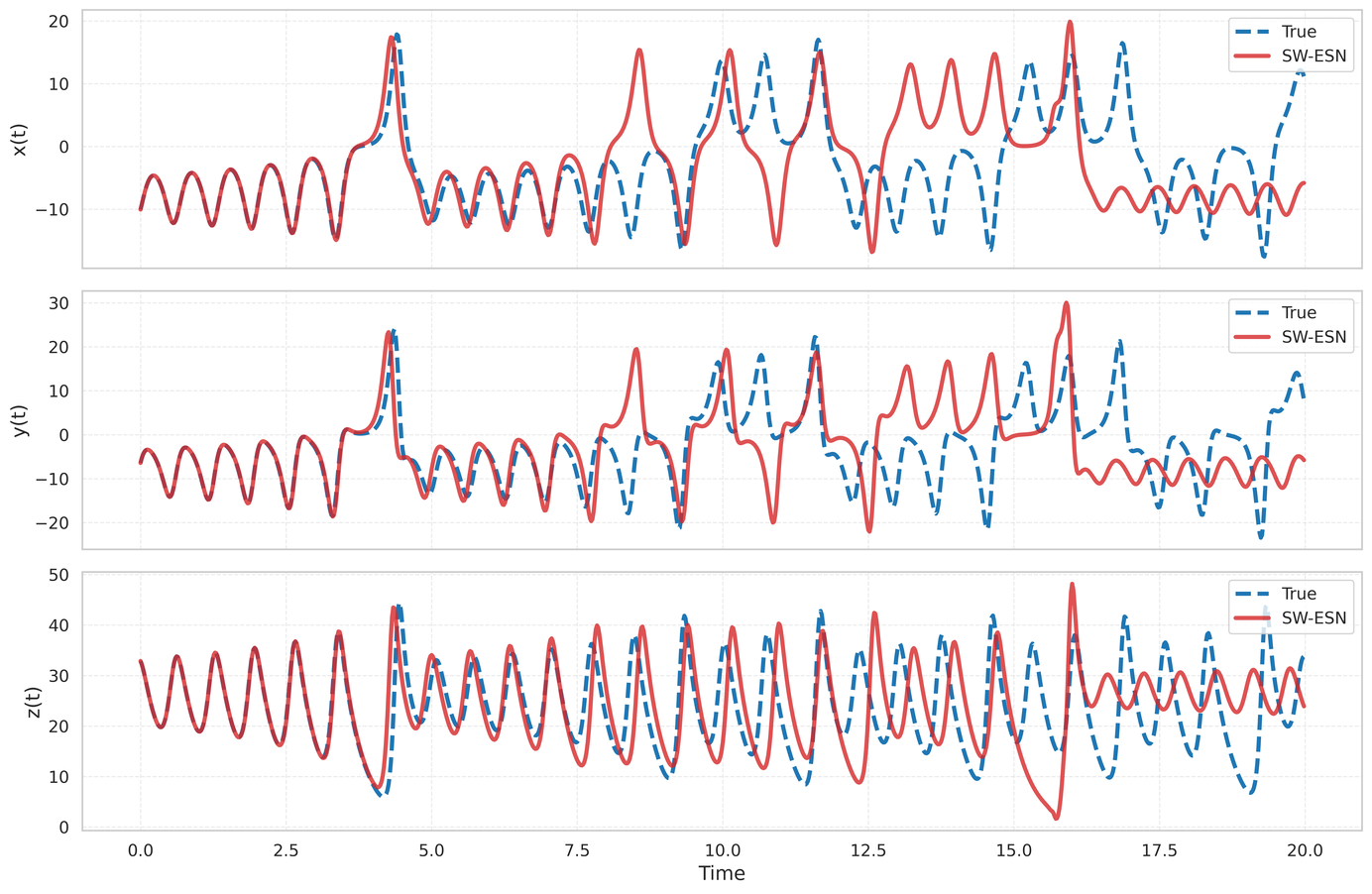}
        \caption{SW-ESN}
    \end{subfigure}

    \vspace{0.4cm}

    \begin{subfigure}[t]{0.23\textwidth}
        \includegraphics[width=\linewidth]{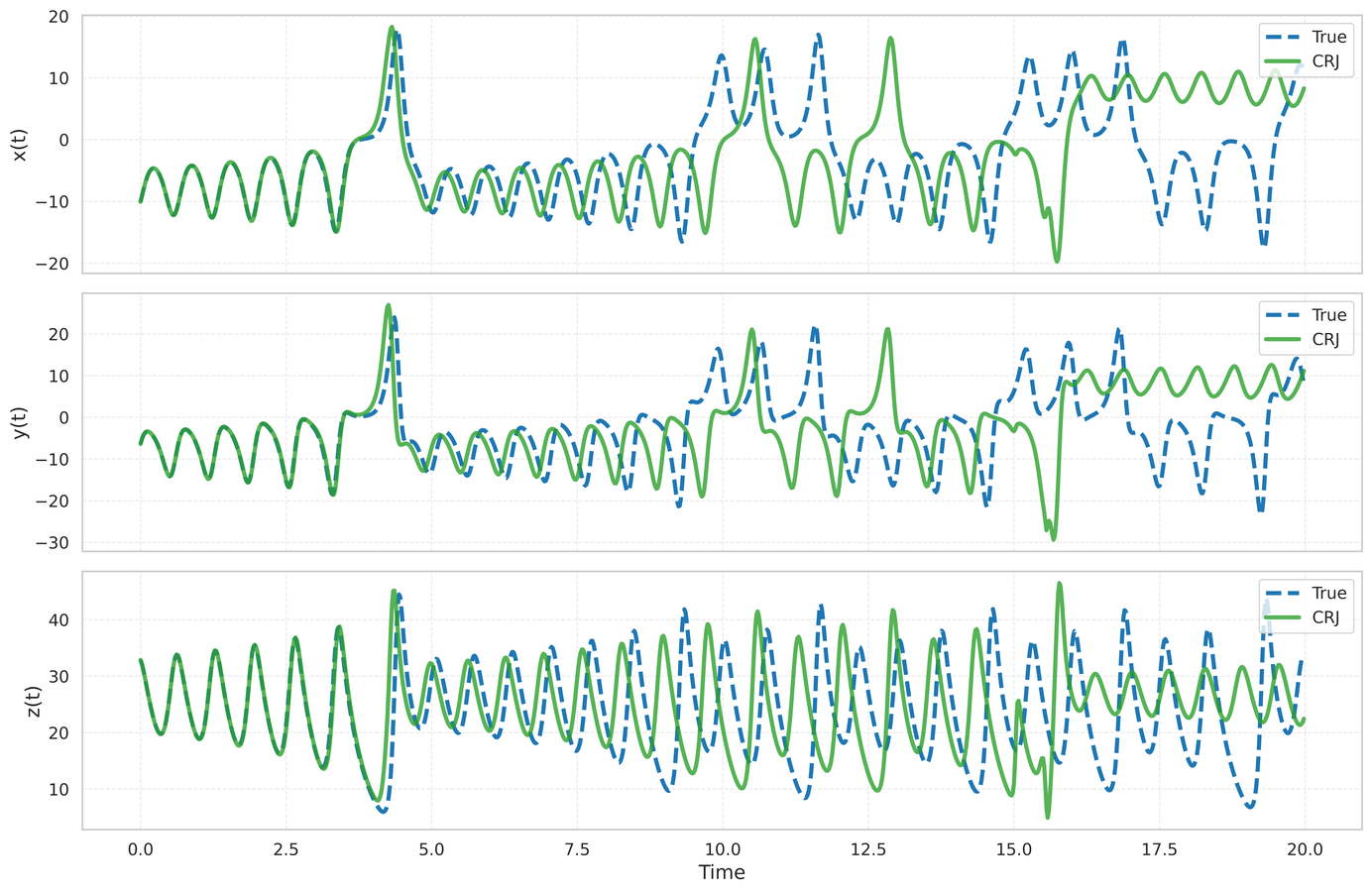}
        \caption{CRJ}
    \end{subfigure}
    \begin{subfigure}[t]{0.23\textwidth}
        \includegraphics[width=\linewidth]{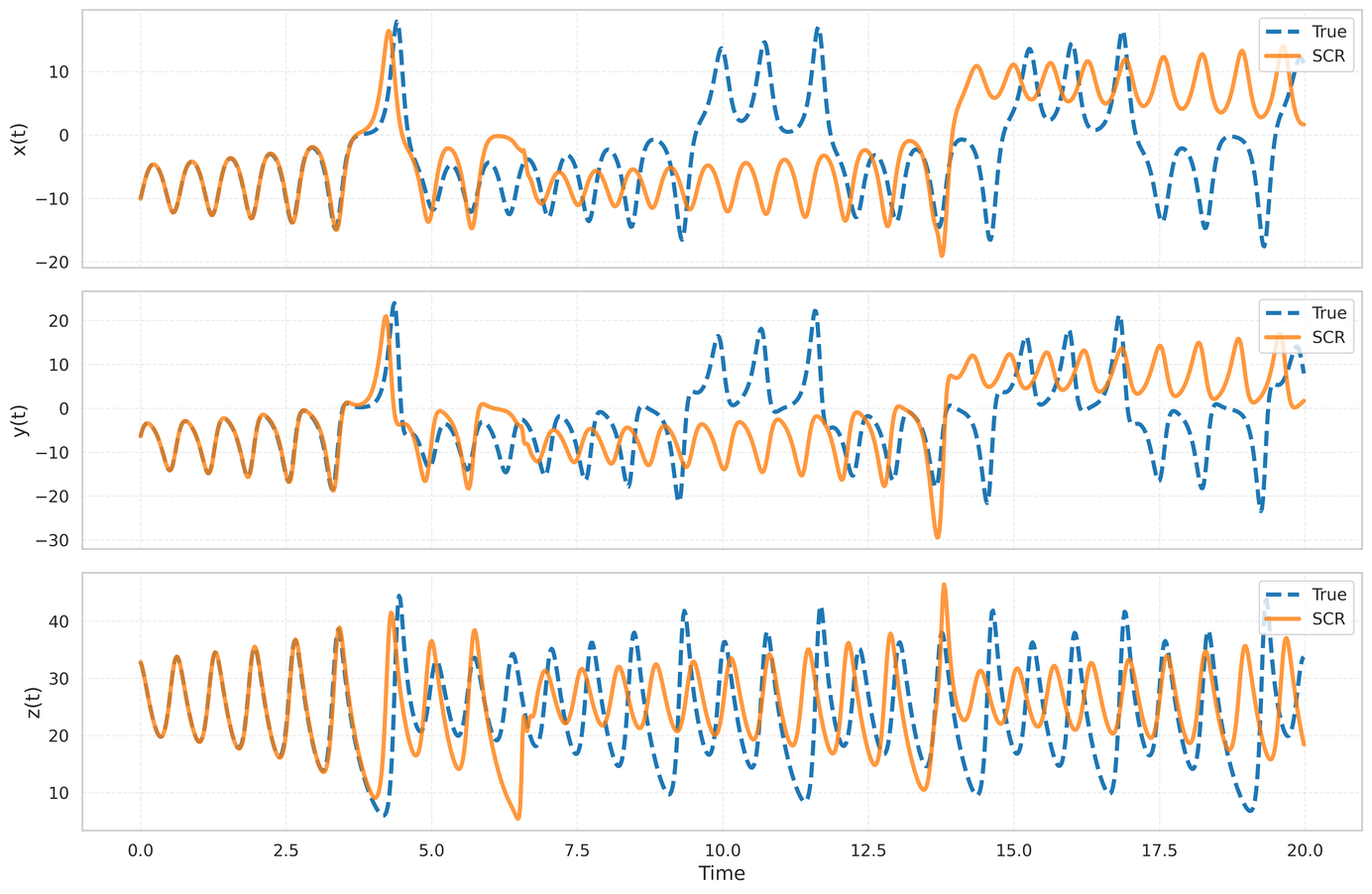}
        \caption{SCR}
    \end{subfigure}
    \begin{subfigure}[t]{0.23\textwidth}
        \includegraphics[width=\linewidth]{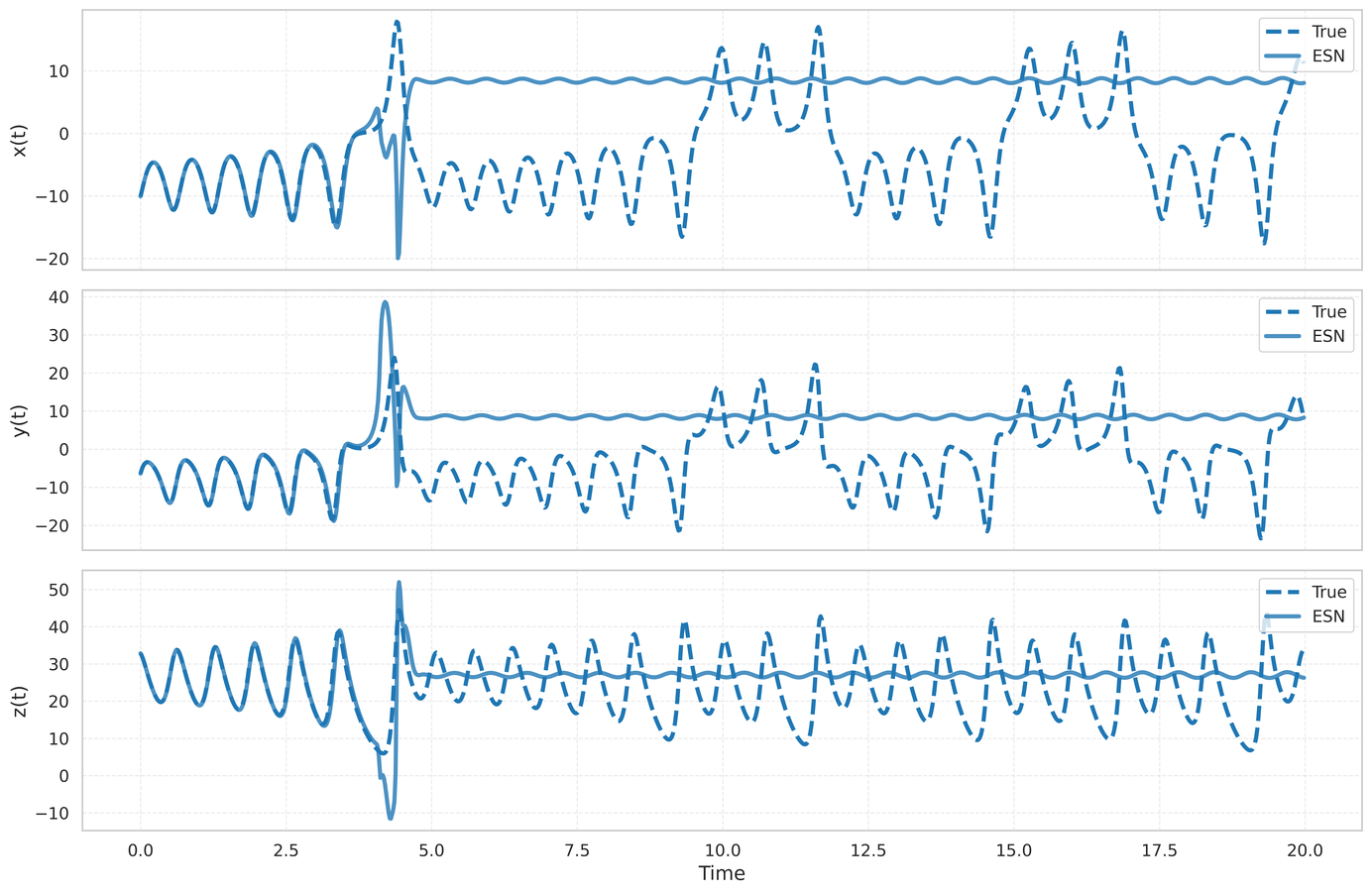}
        \caption{Vanilla ESN}
    \end{subfigure}

    \caption{Comparison of true and predicted trajectories for the Lorenz system across various reservoir models.}
    \label{fig:lorenz-pred_cl}
\end{figure}

\begin{figure}[htbp]
    \centering

    \begin{subfigure}[t]{0.23\textwidth}
        \includegraphics[width=\linewidth]{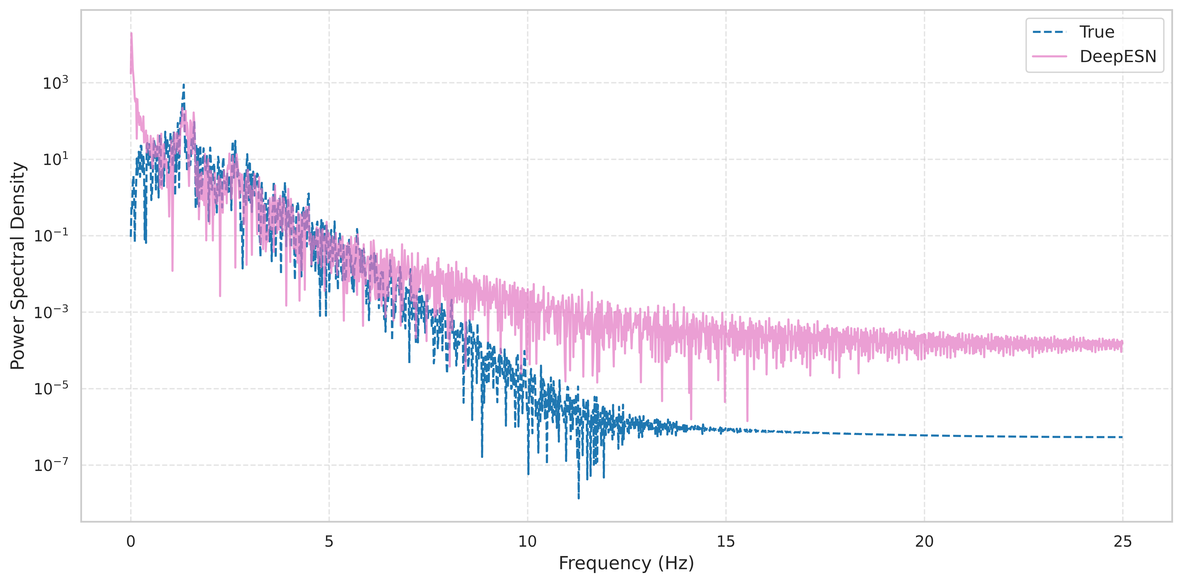}
        \caption{DeepESN}
    \end{subfigure}
    \begin{subfigure}[t]{0.23\textwidth}
        \includegraphics[width=\linewidth]{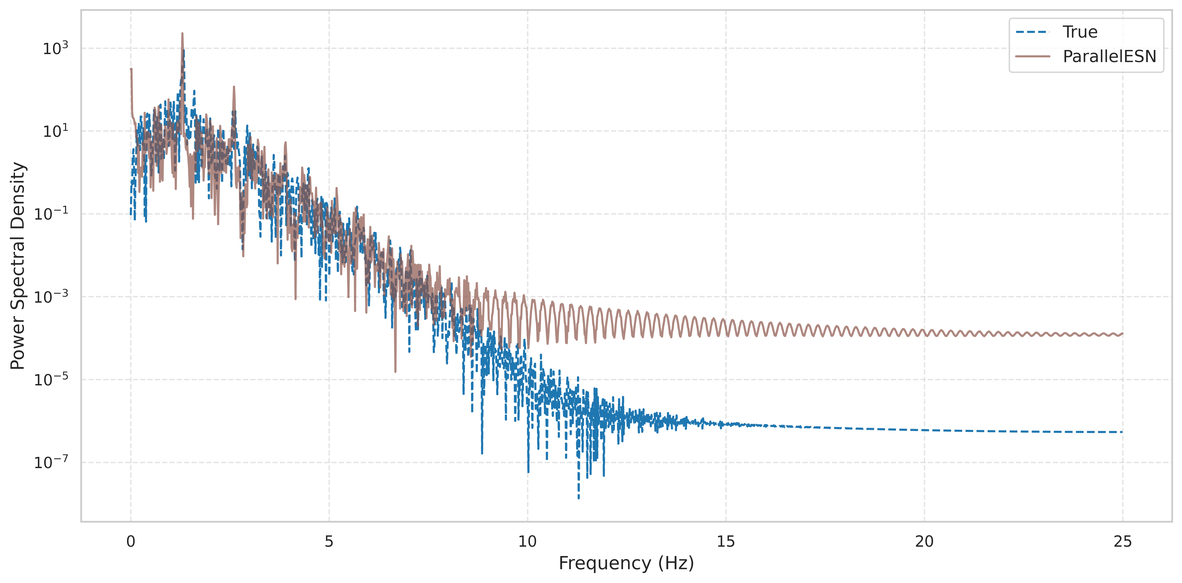}
        \caption{Parallel ESN}
    \end{subfigure}
    \begin{subfigure}[t]{0.23\textwidth}
        \includegraphics[width=\linewidth]{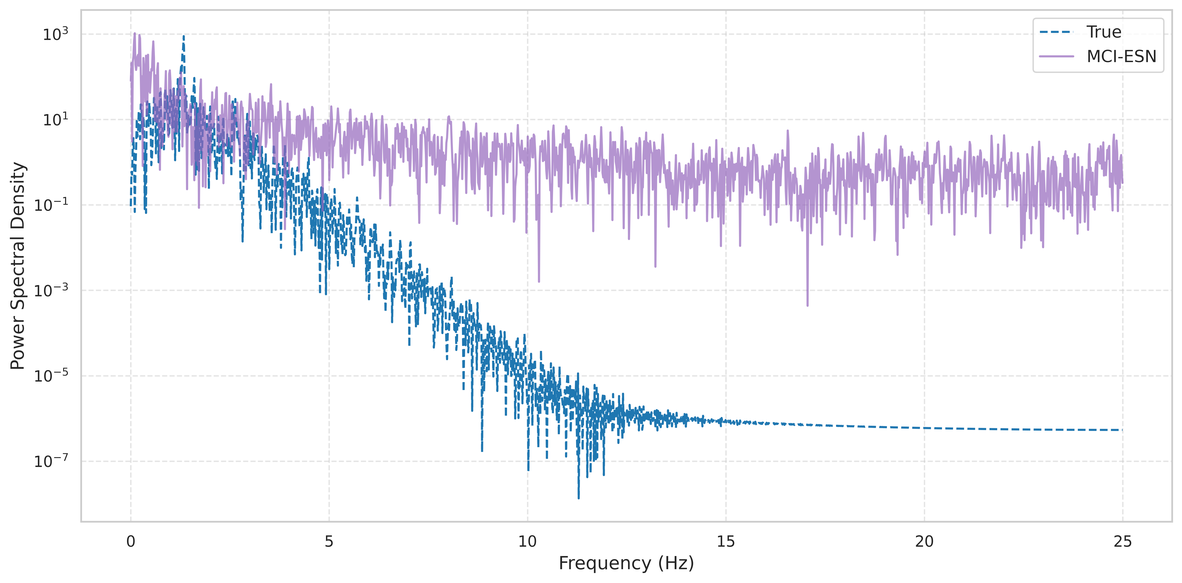}
        \caption{MCI-ESN}
    \end{subfigure}
    \begin{subfigure}[t]{0.23\textwidth}
        \includegraphics[width=\linewidth]{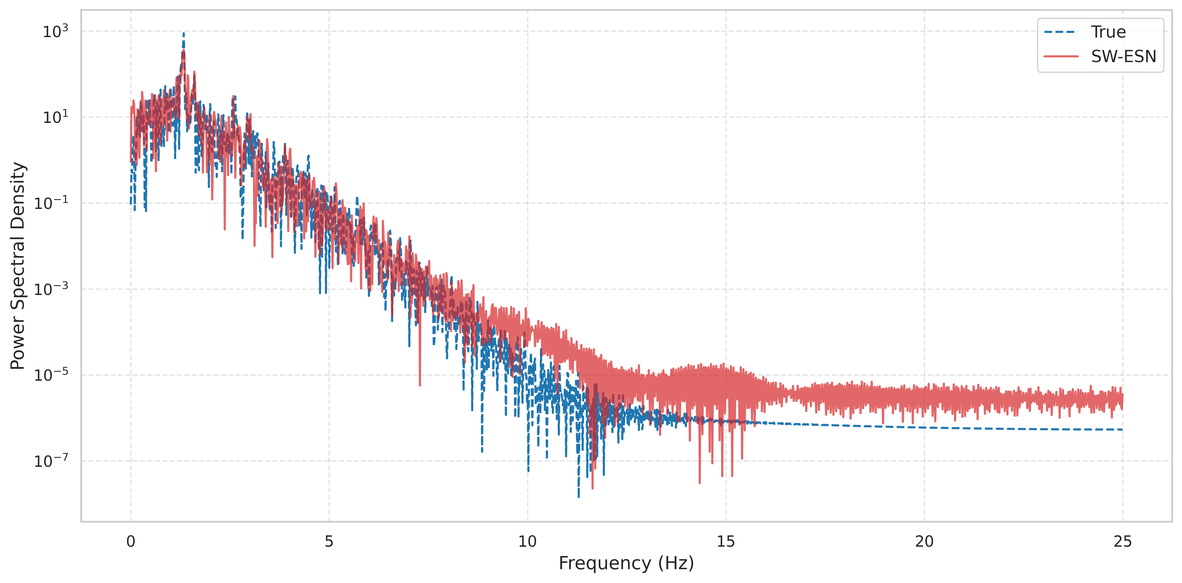}
        \caption{SW-ESN}
    \end{subfigure}

    \vspace{0.4cm}

    \begin{subfigure}[t]{0.23\textwidth}
        \includegraphics[width=\linewidth]{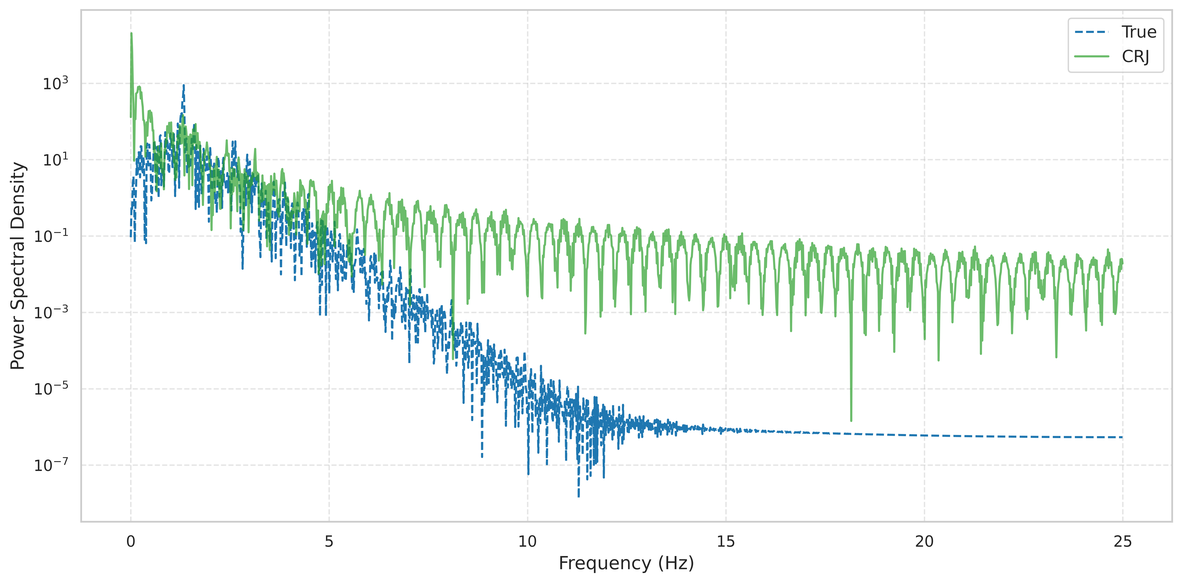}
        \caption{CRJ}
    \end{subfigure}
    \begin{subfigure}[t]{0.23\textwidth}
        \includegraphics[width=\linewidth]{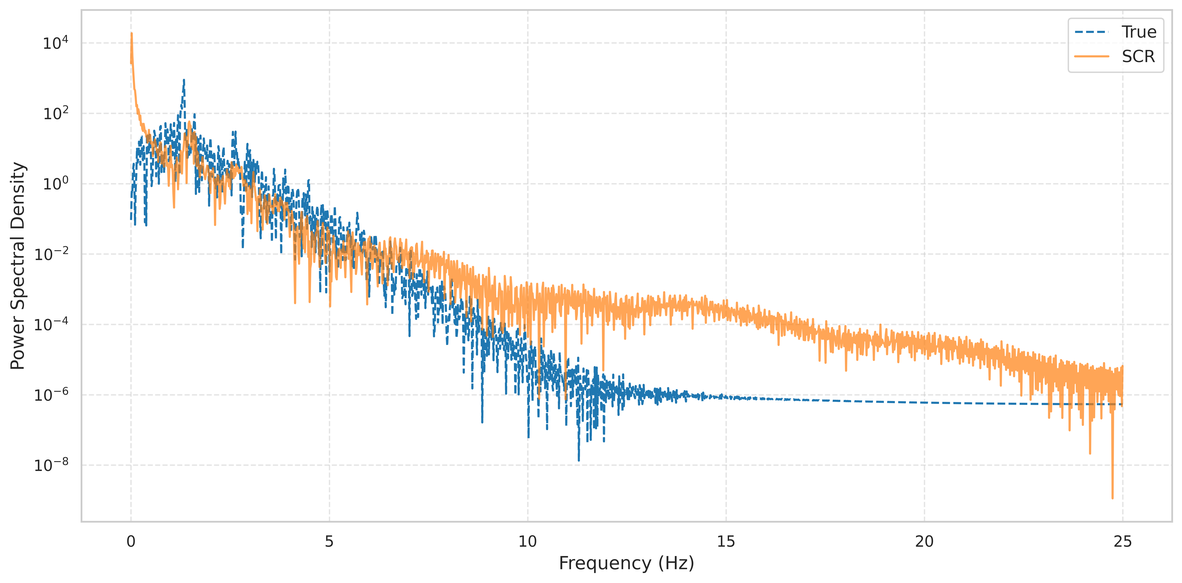}
        \caption{SCR}
    \end{subfigure}
    \begin{subfigure}[t]{0.23\textwidth}
        \includegraphics[width=\linewidth]{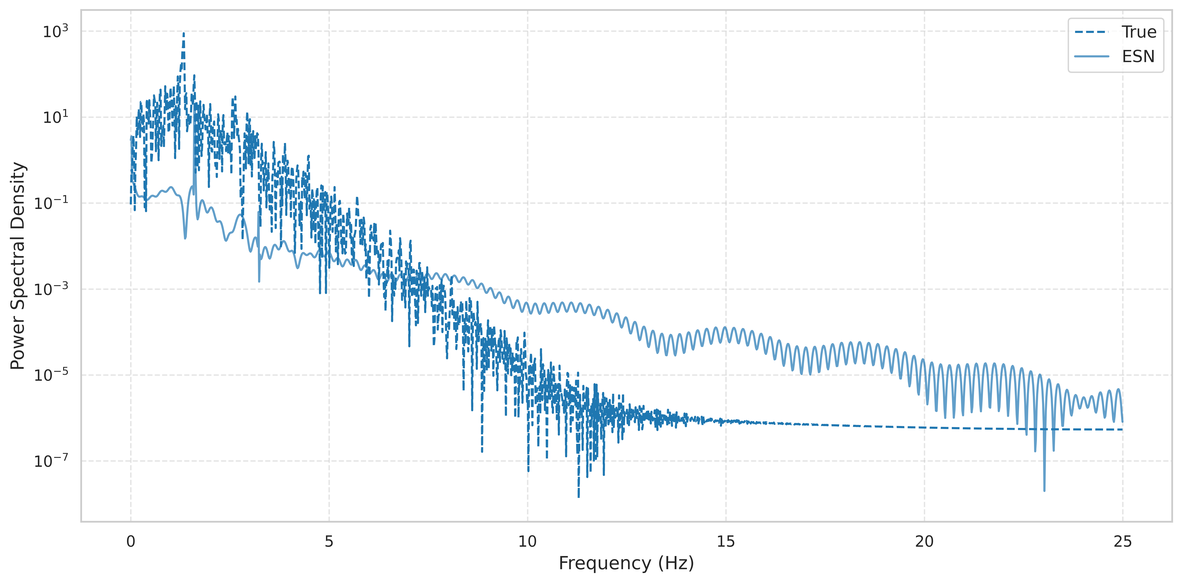}
        \caption{Vanilla ESN}
    \end{subfigure}

    \caption{Analysis of the $z$-component PSD of the Lorenz system with different reservoir architectures.}
    \label{fig:lorenz-psd_cl}
\end{figure}

\begin{figure}[htbp]
    \centering

    \begin{subfigure}[t]{0.23\textwidth}
        \includegraphics[width=\linewidth]{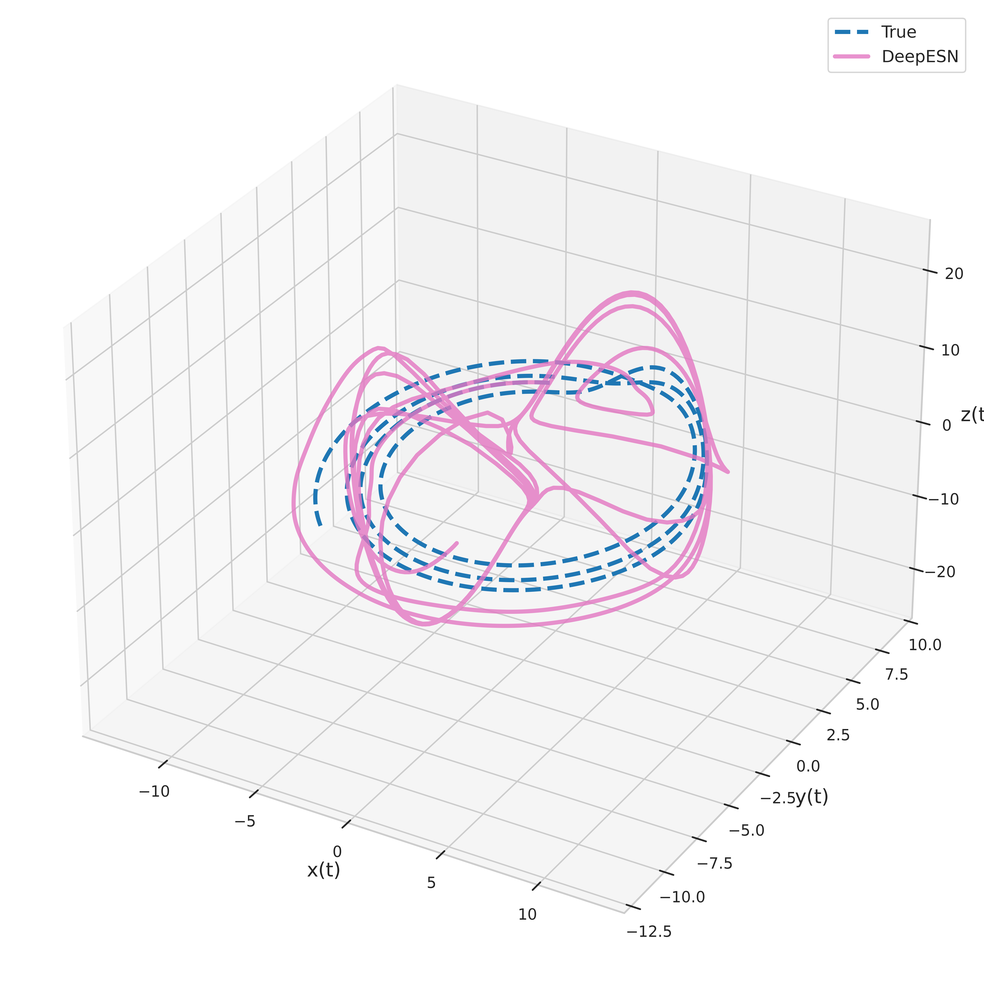}
        \caption{DeepESN}
    \end{subfigure}
    \begin{subfigure}[t]{0.23\textwidth}
        \includegraphics[width=\linewidth]{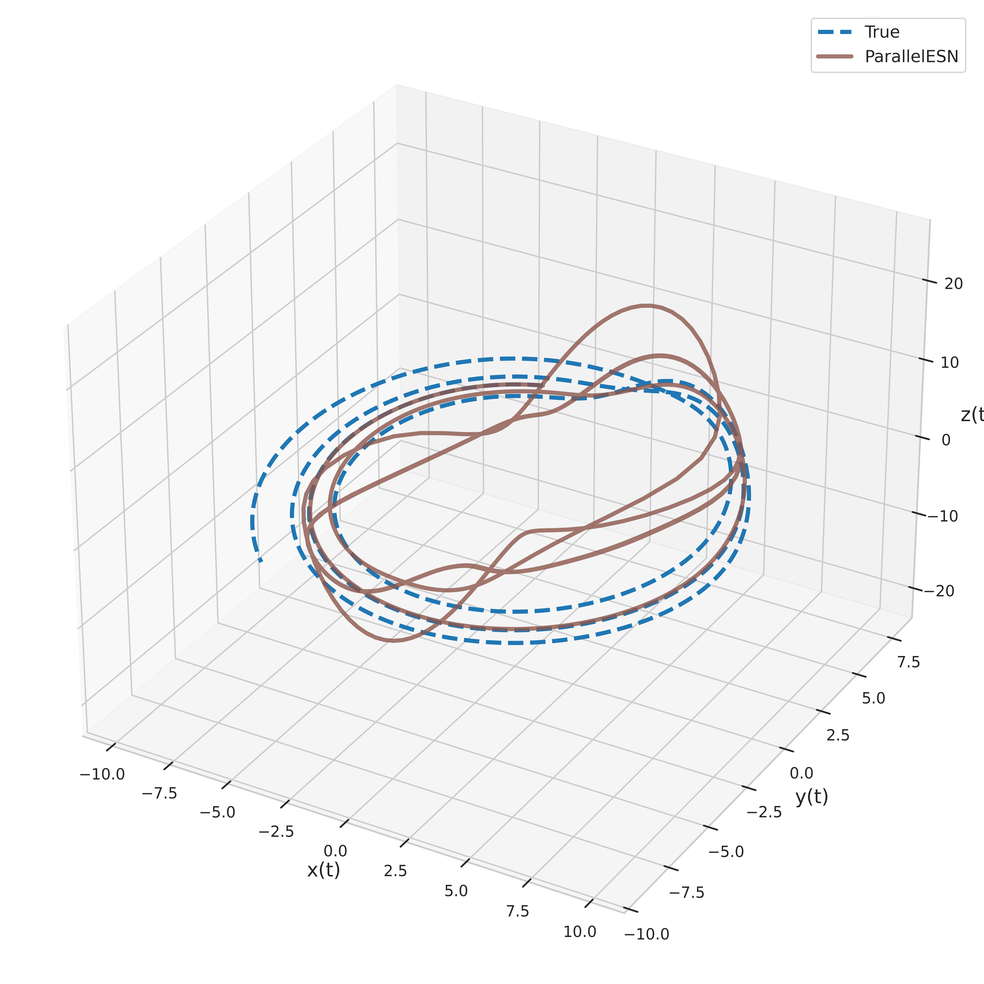}
        \caption{Parallel ESN}
    \end{subfigure}
    \begin{subfigure}[t]{0.23\textwidth}
        \includegraphics[width=\linewidth]{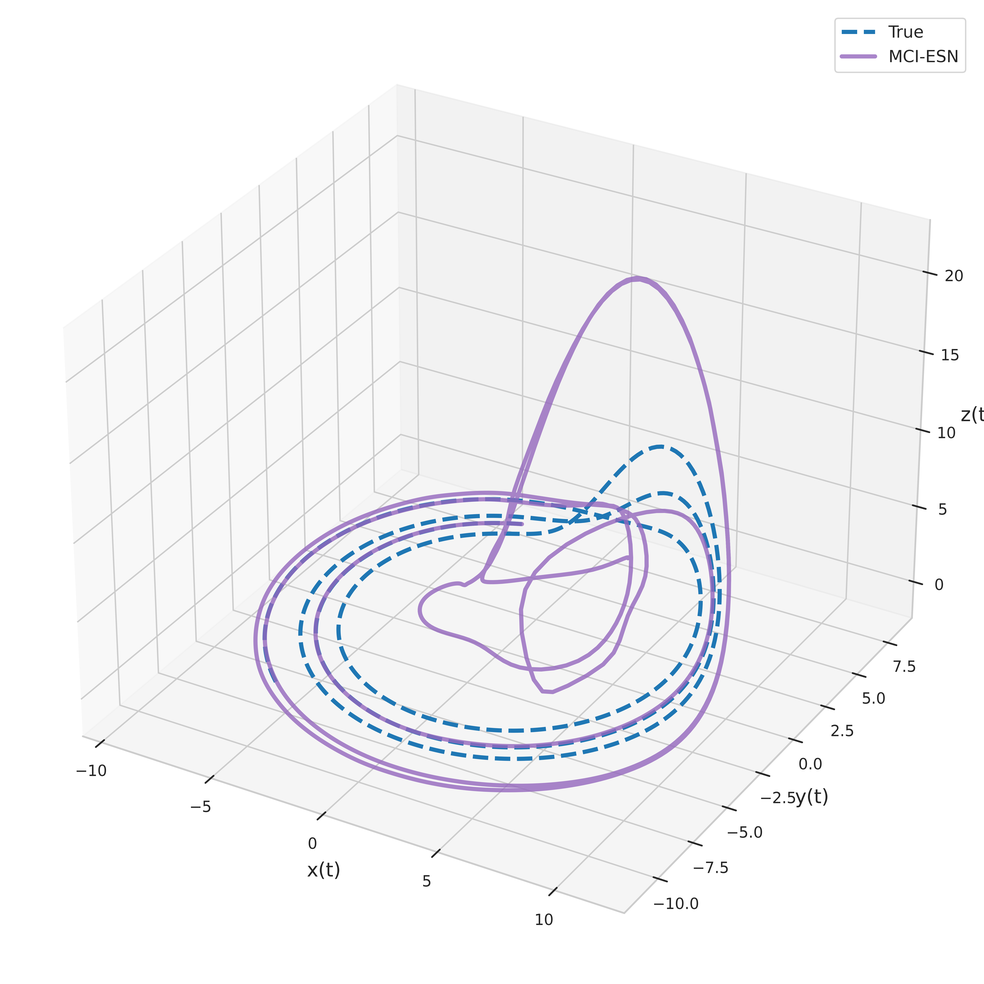}
        \caption{MCI-ESN}
    \end{subfigure}
    \begin{subfigure}[t]{0.23\textwidth}
        \includegraphics[width=\linewidth]{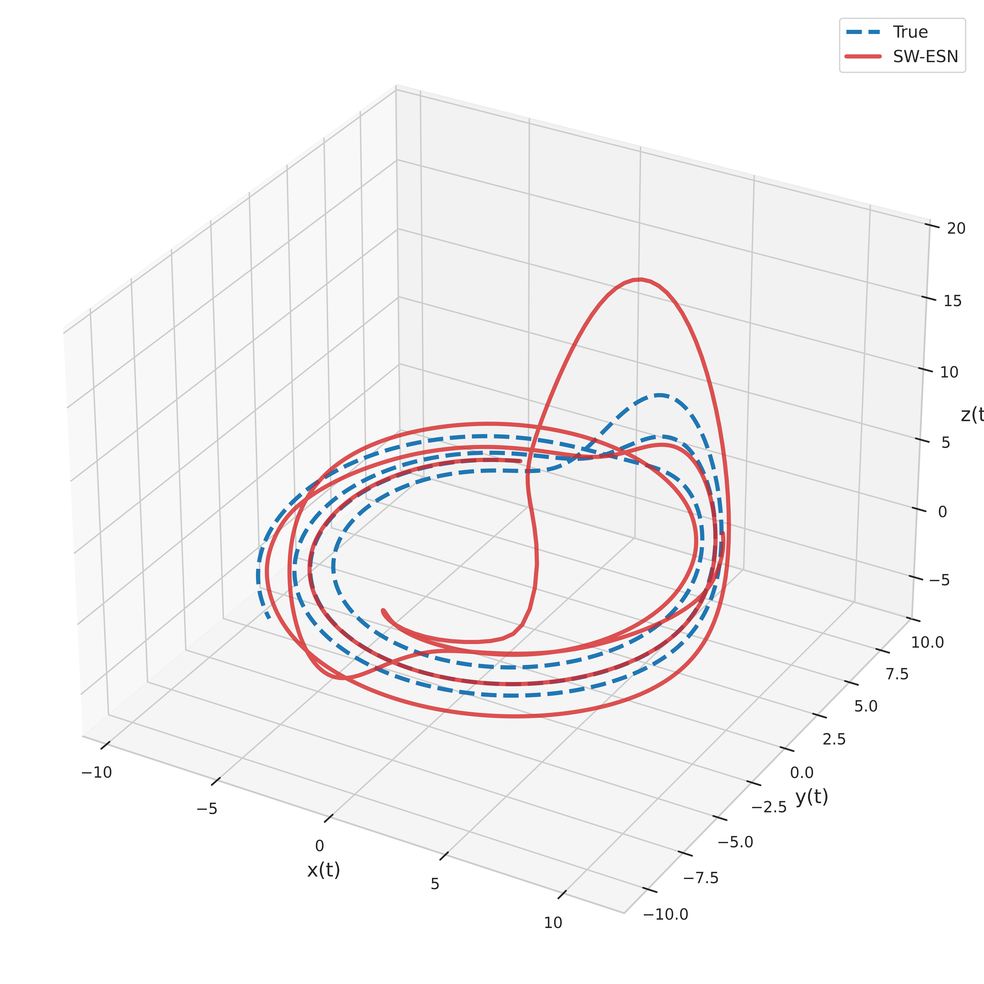}
        \caption{SW-ESN}
    \end{subfigure}

    \vspace{0.4cm}

    \begin{subfigure}[t]{0.23\textwidth}
        \includegraphics[width=\linewidth]{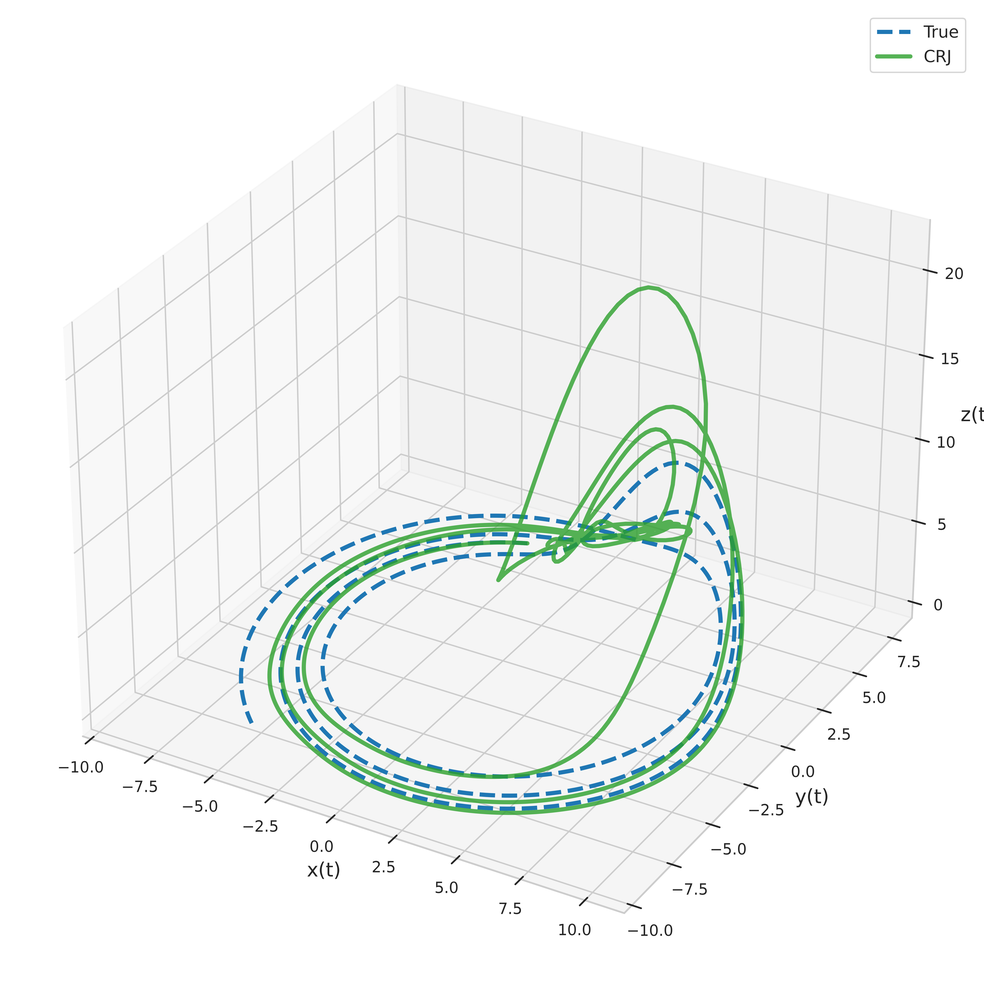}
        \caption{CRJ}
    \end{subfigure}
    \begin{subfigure}[t]{0.23\textwidth}
        \includegraphics[width=\linewidth]{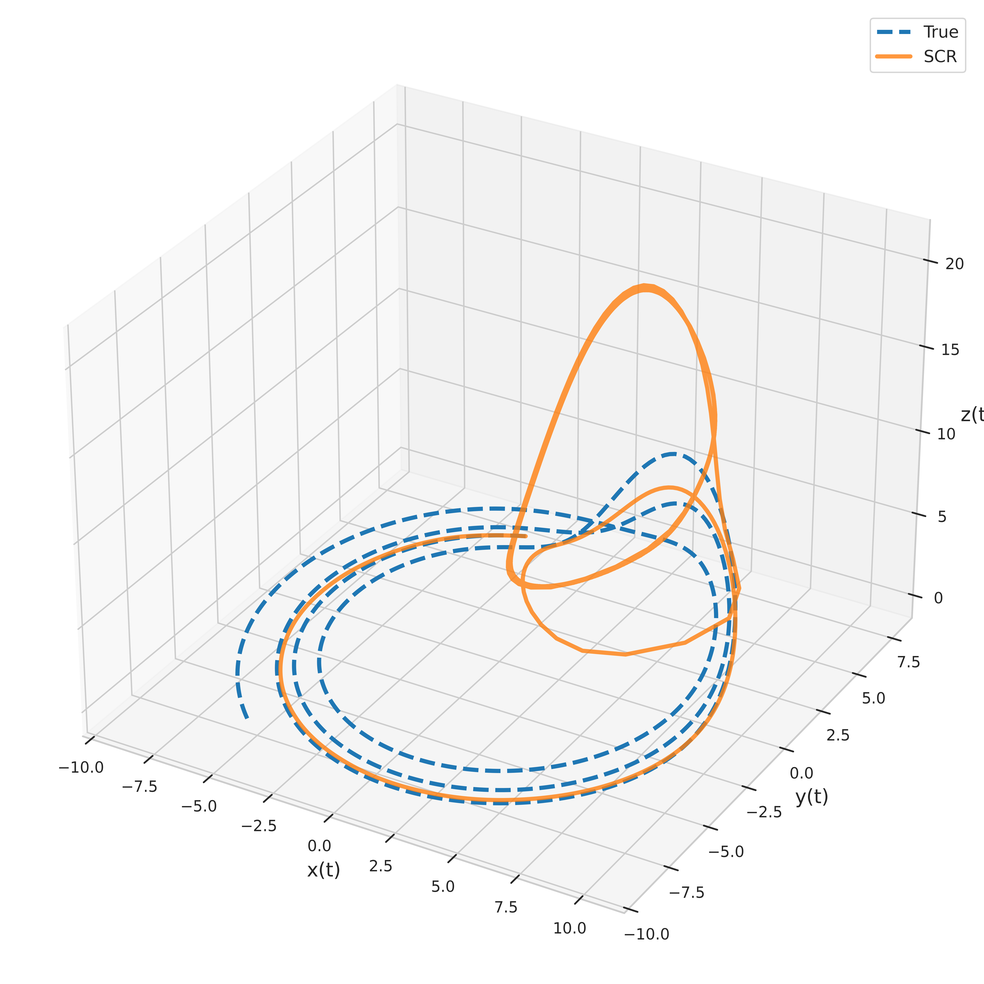}
        \caption{SCR}
    \end{subfigure}
    \begin{subfigure}[t]{0.23\textwidth}
        \includegraphics[width=\linewidth]{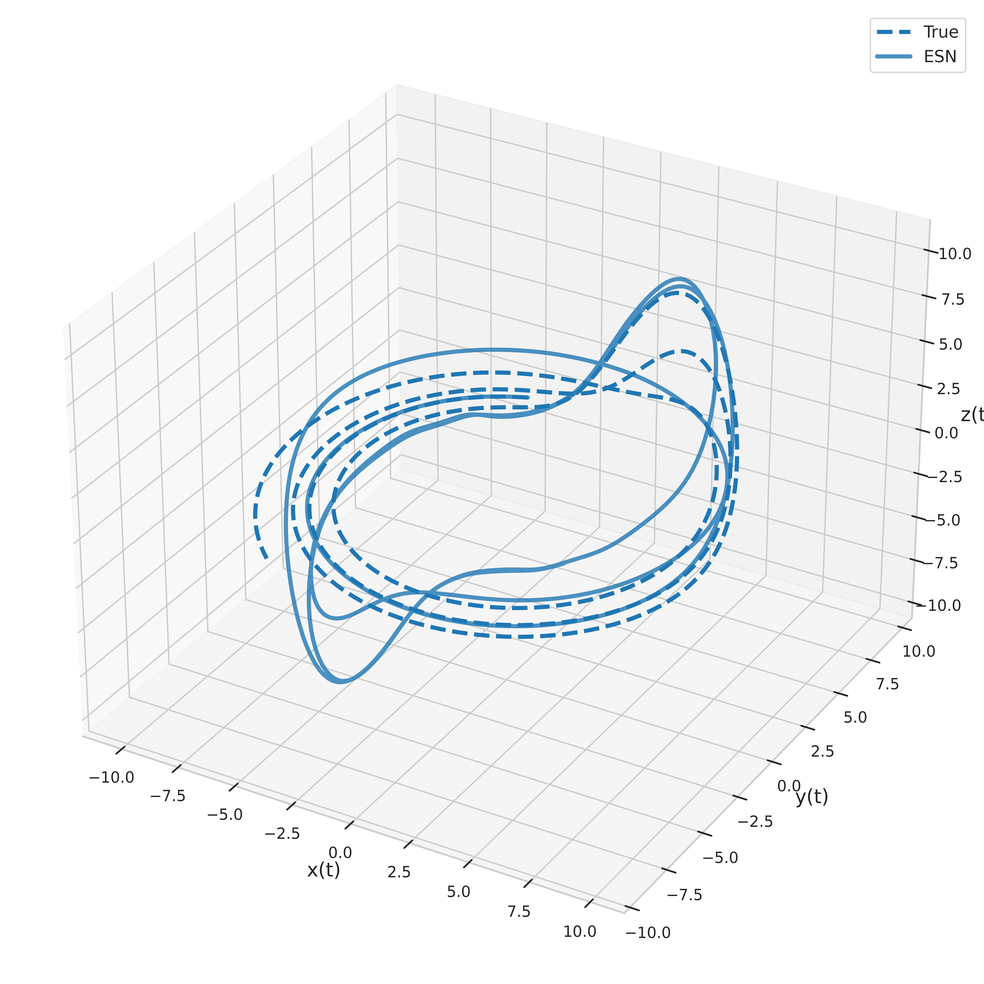}
        \caption{Vanilla ESN}
    \end{subfigure}

    \caption{3D Phase portraits for Rössler system predicted by different reservoir architectures.}
    \label{fig:rossler-phase_cl}
\end{figure}

\begin{figure}[htbp]
    \centering

    \begin{subfigure}[t]{0.23\textwidth}
        \includegraphics[width=\linewidth]{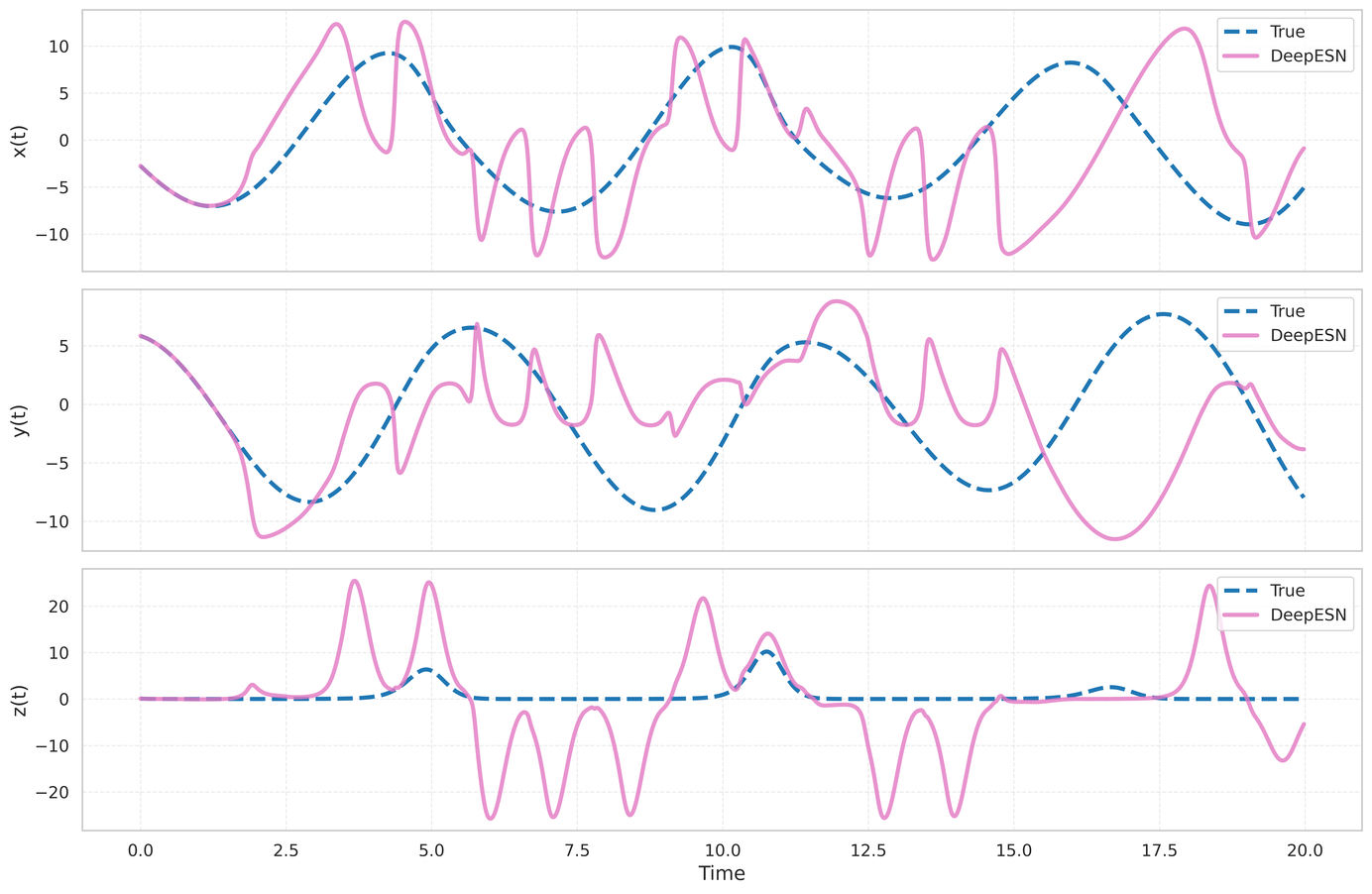}
        \caption{DeepESN}
    \end{subfigure}
    \begin{subfigure}[t]{0.23\textwidth}
        \includegraphics[width=\linewidth]{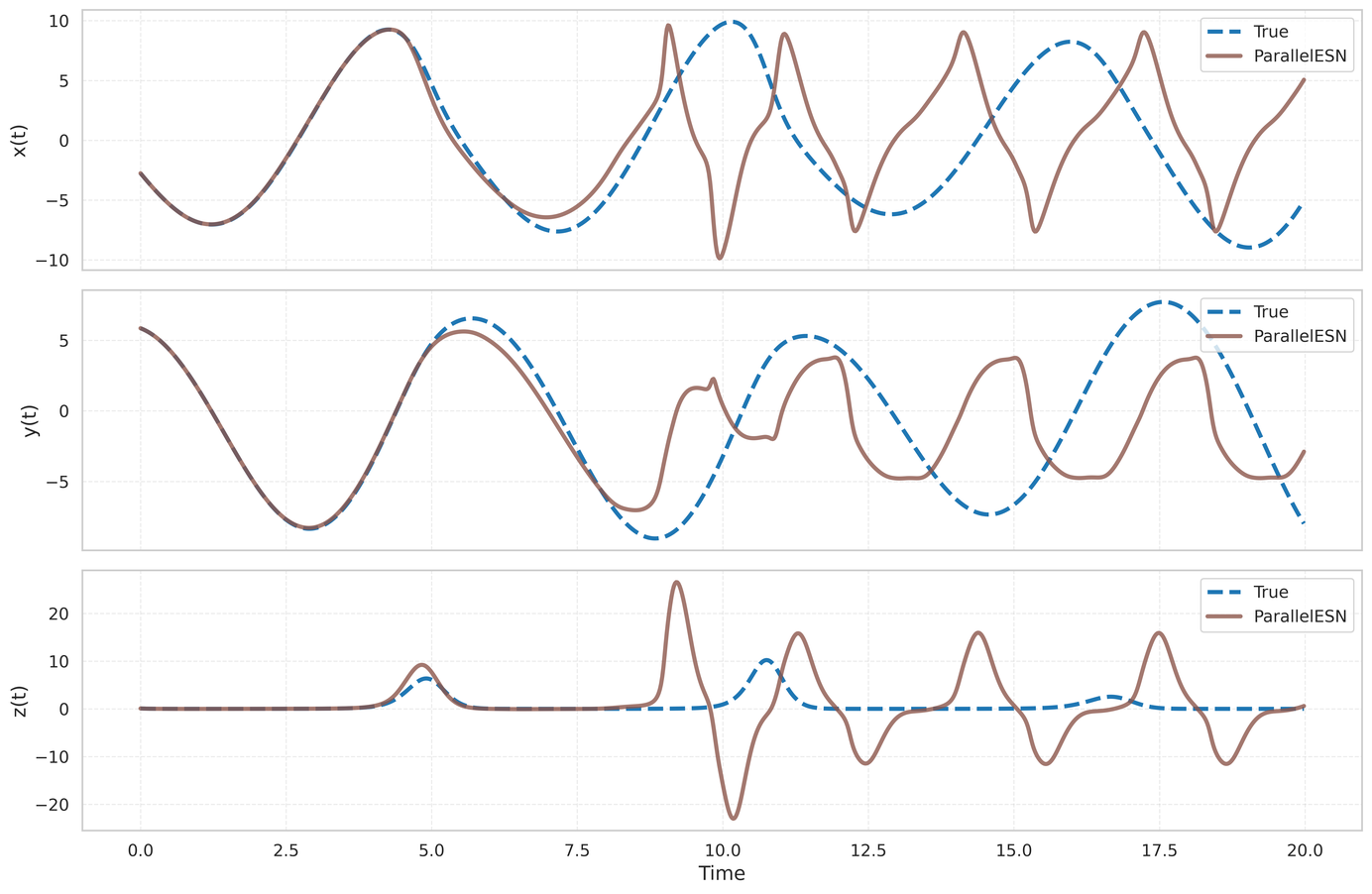}
        \caption{Parallel ESN}
    \end{subfigure}
    \begin{subfigure}[t]{0.23\textwidth}
        \includegraphics[width=\linewidth]{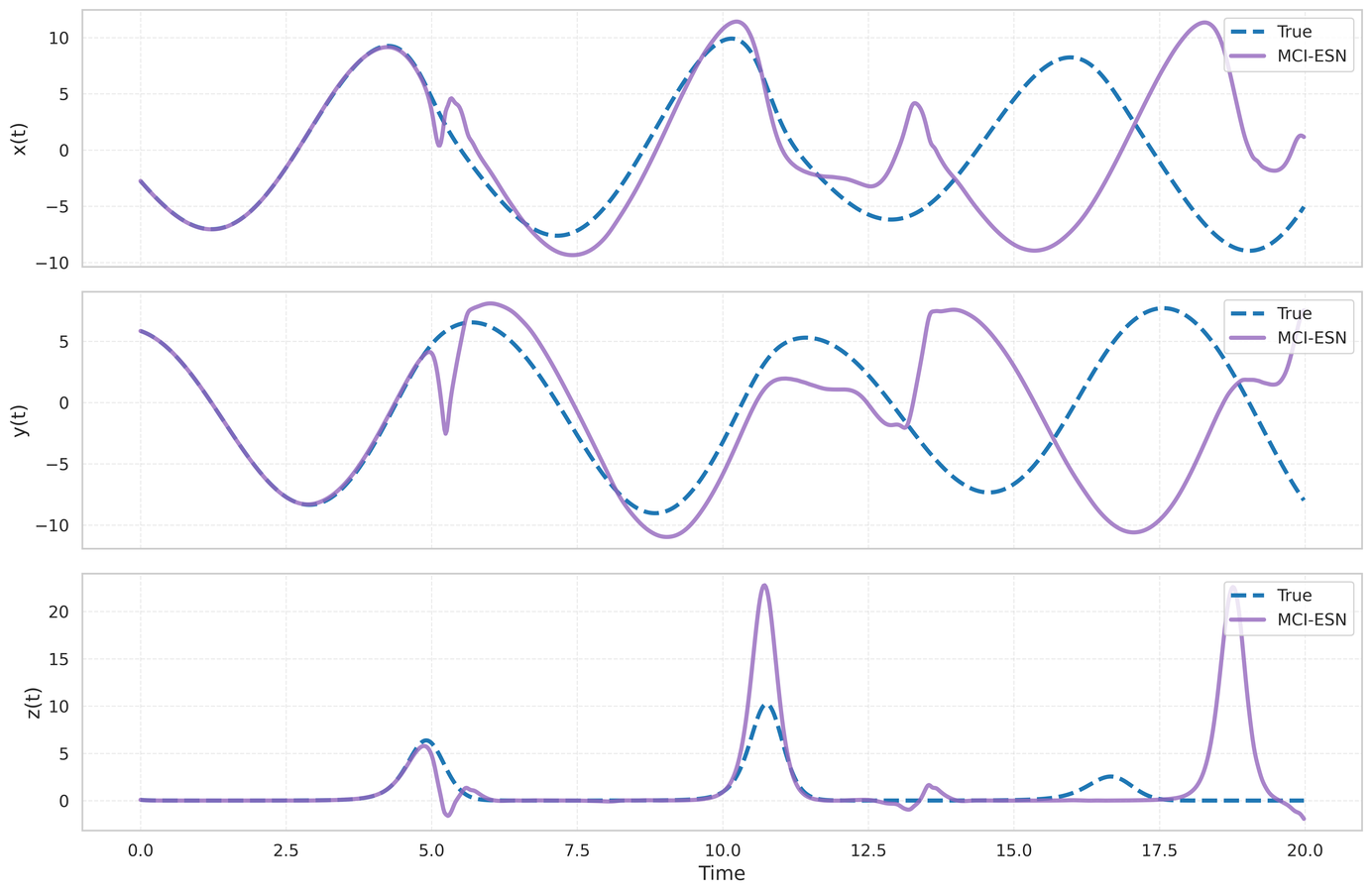}
        \caption{MCI-ESN}
    \end{subfigure}
    \begin{subfigure}[t]{0.23\textwidth}
        \includegraphics[width=\linewidth]{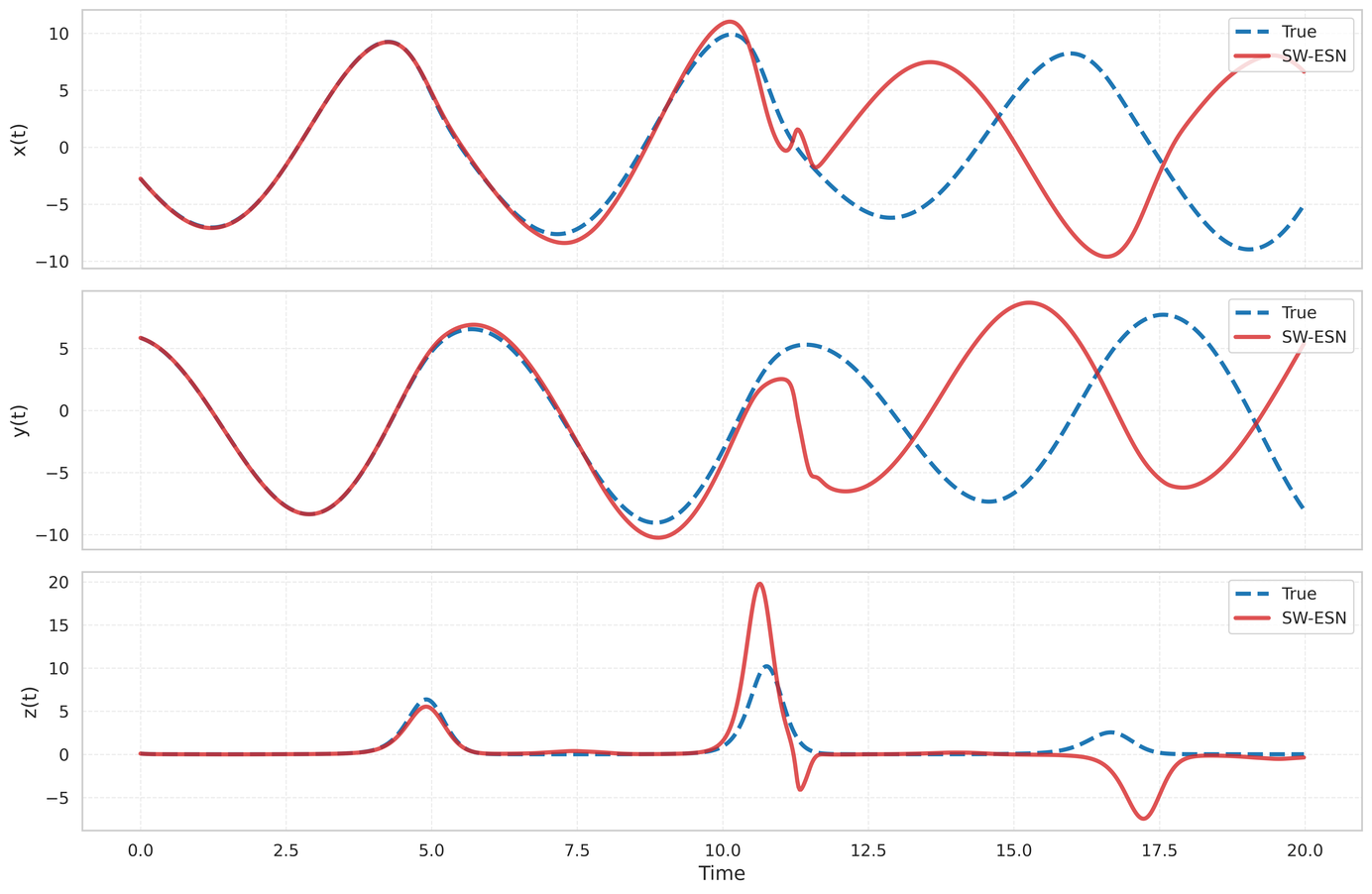}
        \caption{SW-ESN}
    \end{subfigure}

    \vspace{0.4cm}

    \begin{subfigure}[t]{0.23\textwidth}
        \includegraphics[width=\linewidth]{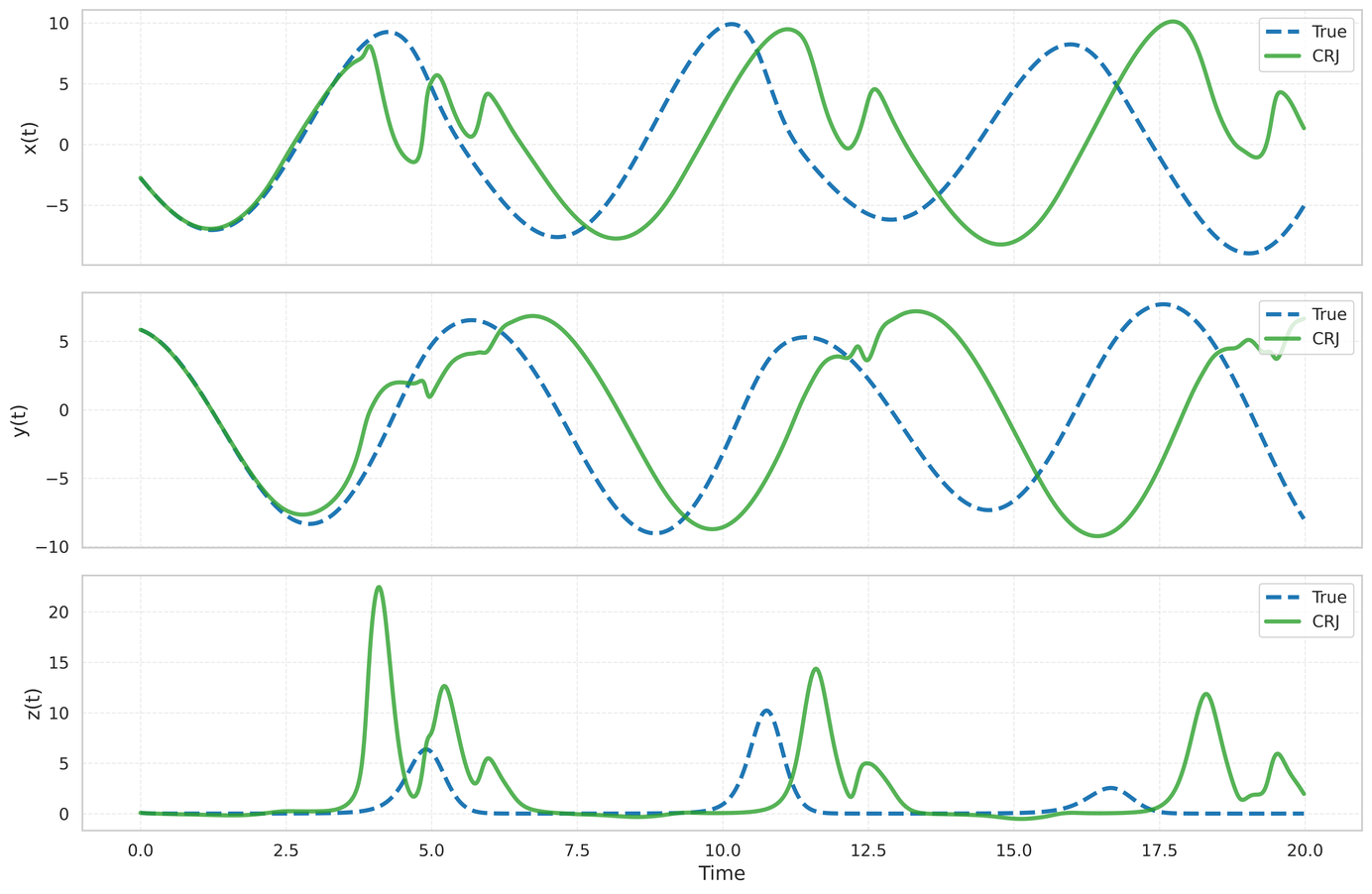}
        \caption{CRJ}
    \end{subfigure}
    \begin{subfigure}[t]{0.23\textwidth}
        \includegraphics[width=\linewidth]{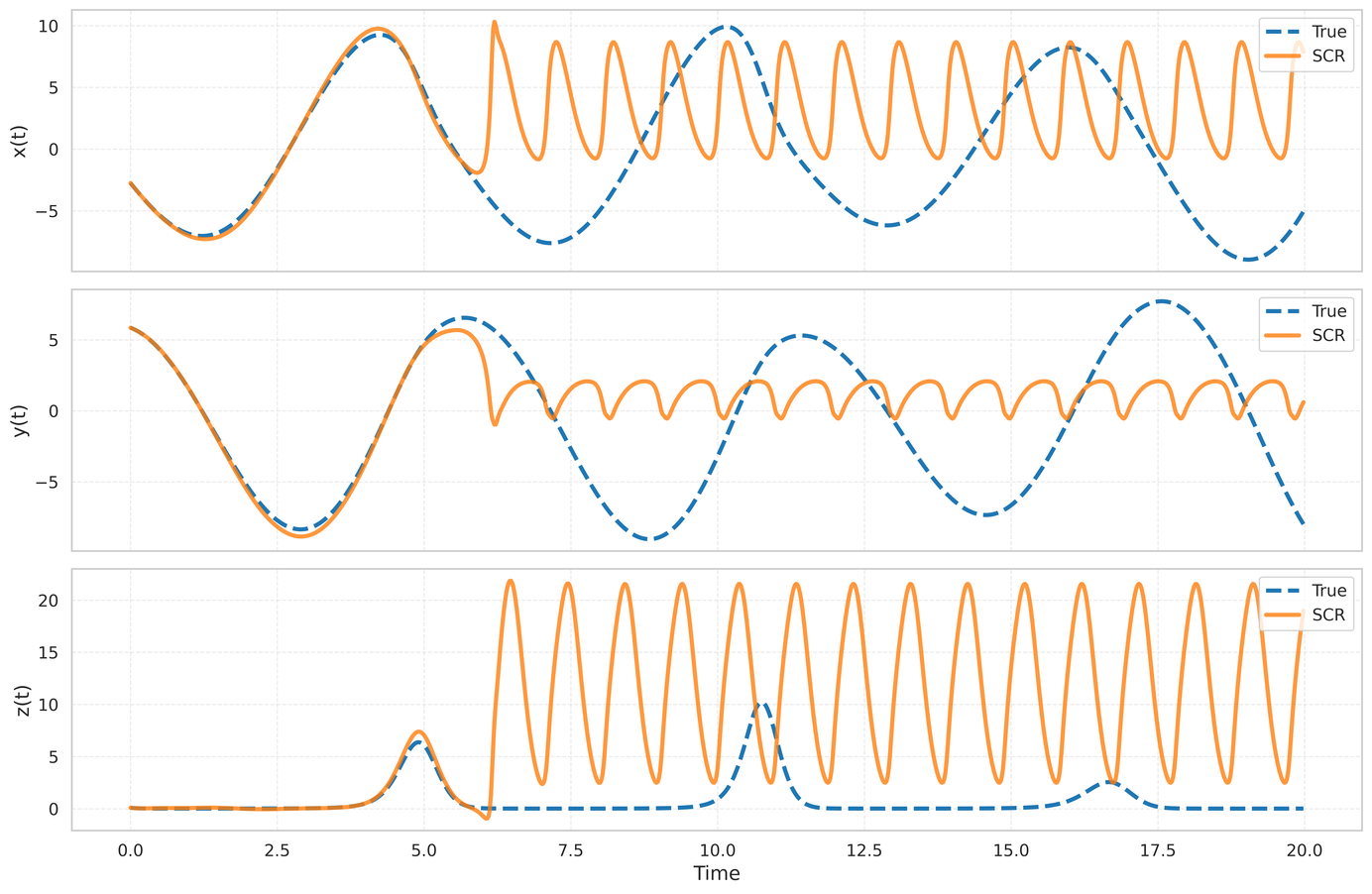}
        \caption{SCR}
    \end{subfigure}
    \begin{subfigure}[t]{0.23\textwidth}
        \includegraphics[width=\linewidth]{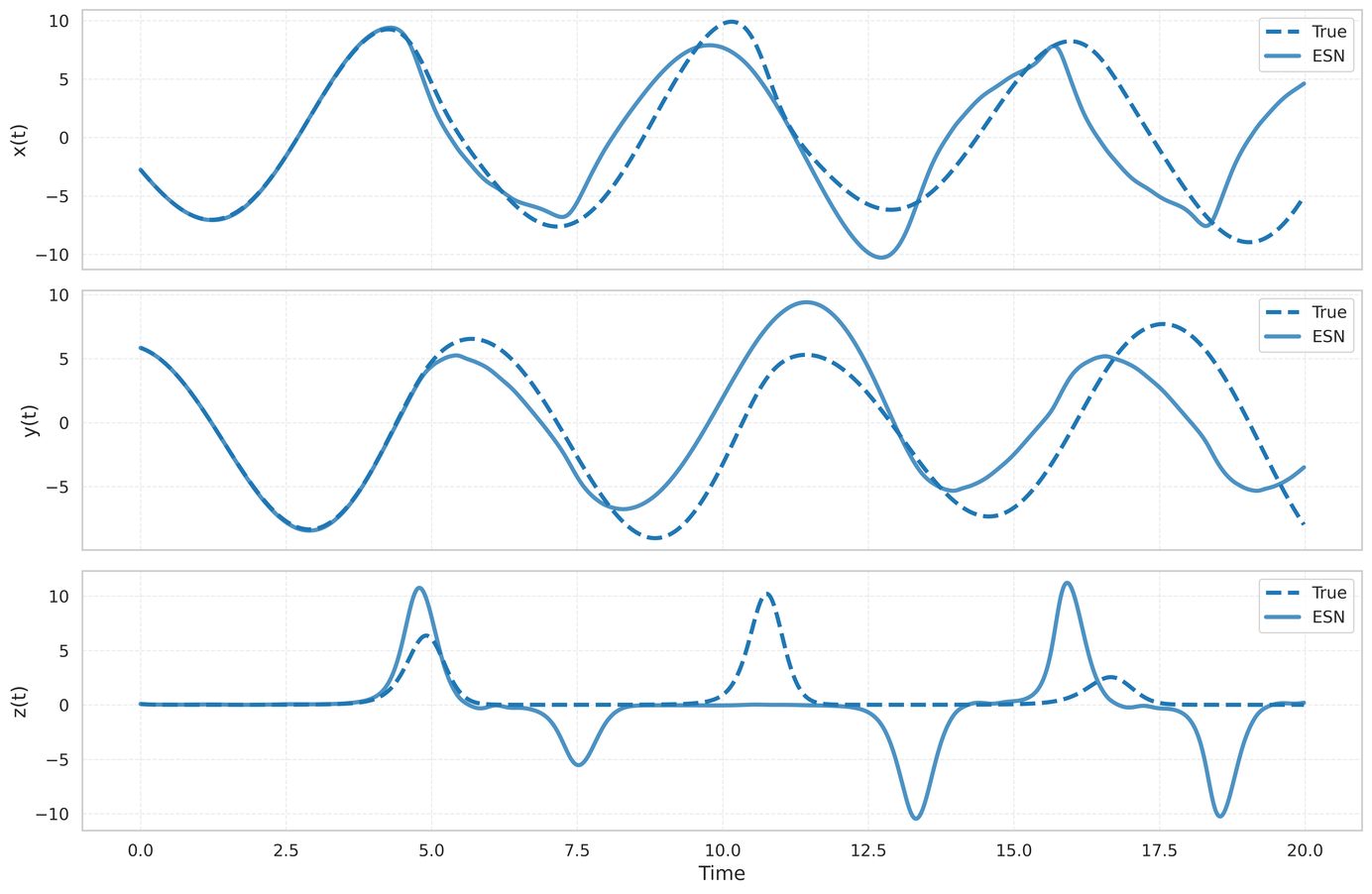}
        \caption{Vanilla ESN}
    \end{subfigure}

    \caption{Comparison of true and predicted trajectories for the Rössler system across various reservoir models.}
    \label{fig:rossler-pred_cl}
\end{figure}

\begin{figure}[htbp]
    \centering

    \begin{subfigure}[t]{0.23\textwidth}
        \includegraphics[width=\linewidth]{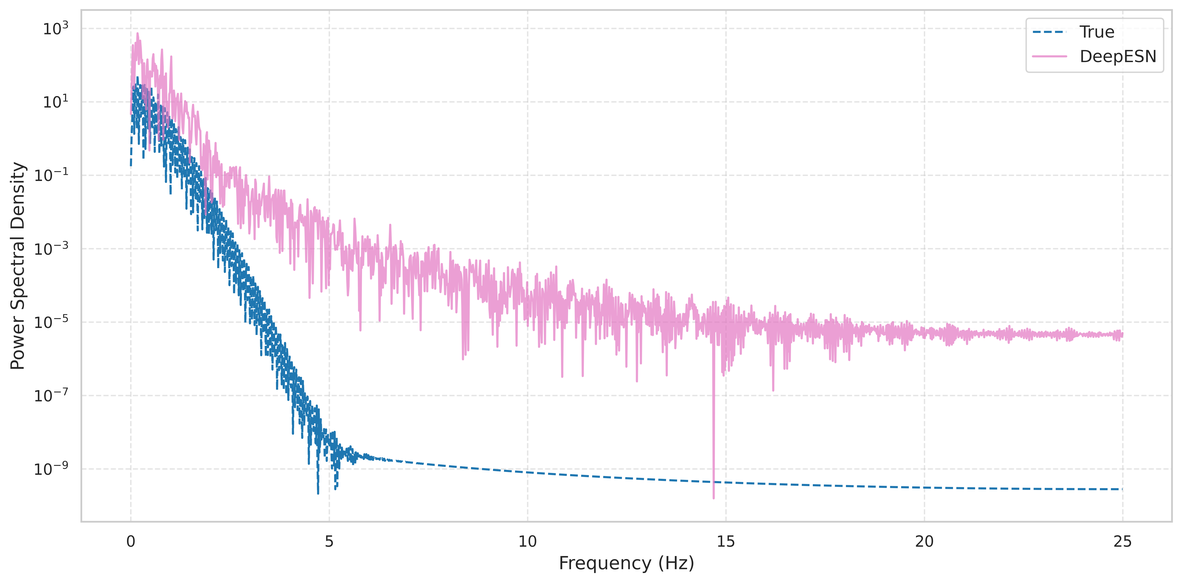}
        \caption{DeepESN}
    \end{subfigure}
    \begin{subfigure}[t]{0.23\textwidth}
        \includegraphics[width=\linewidth]{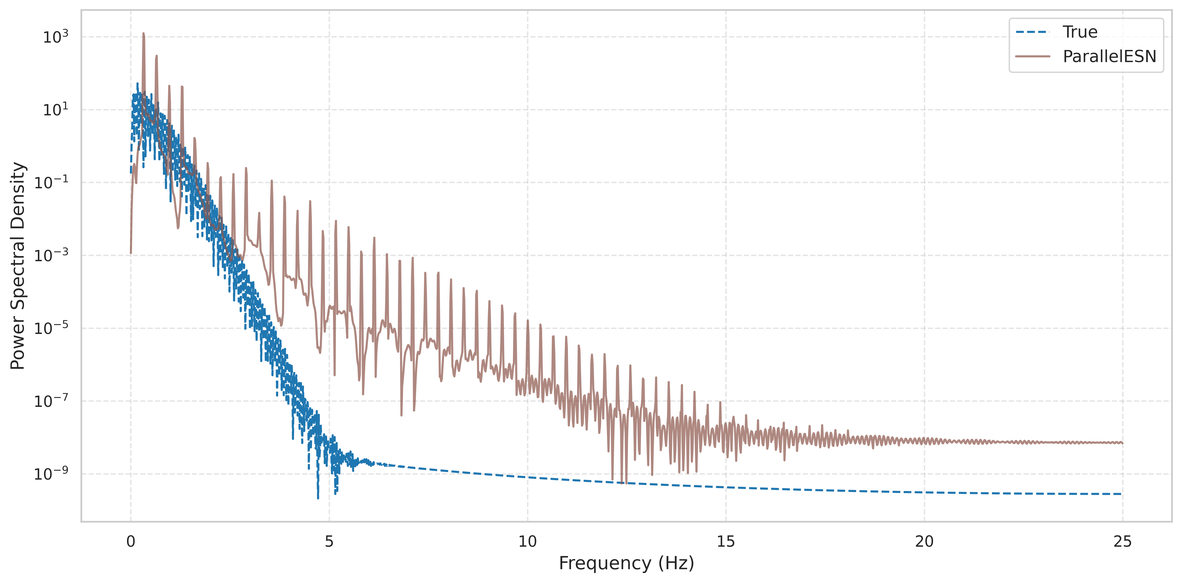}
        \caption{Parallel ESN}
    \end{subfigure}
    \begin{subfigure}[t]{0.23\textwidth}
        \includegraphics[width=\linewidth]{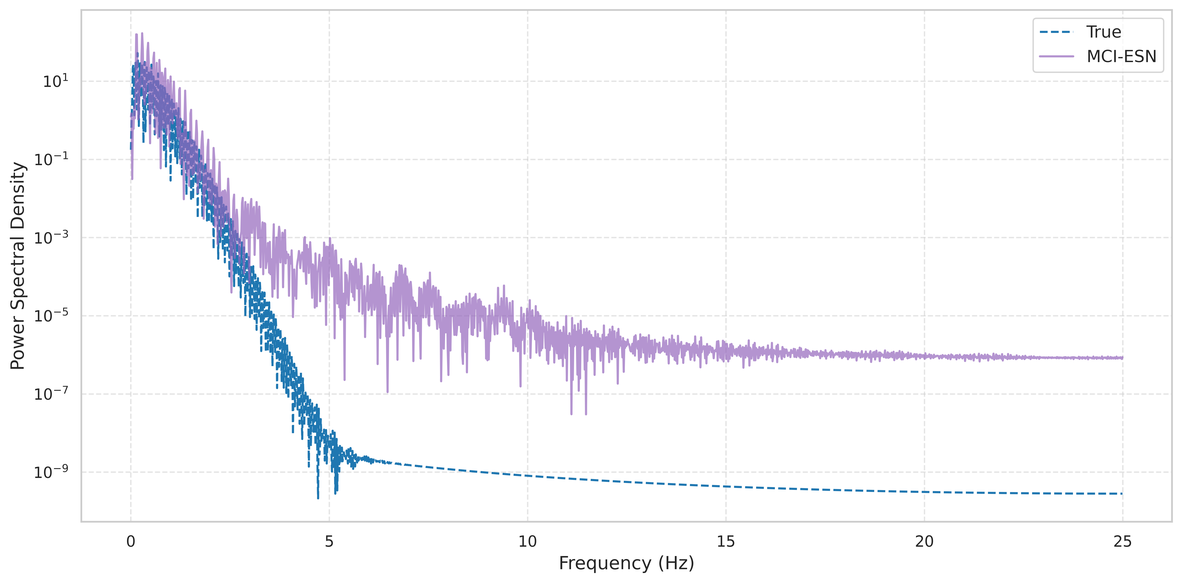}
        \caption{MCI-ESN}
    \end{subfigure}
    \begin{subfigure}[t]{0.23\textwidth}
        \includegraphics[width=\linewidth]{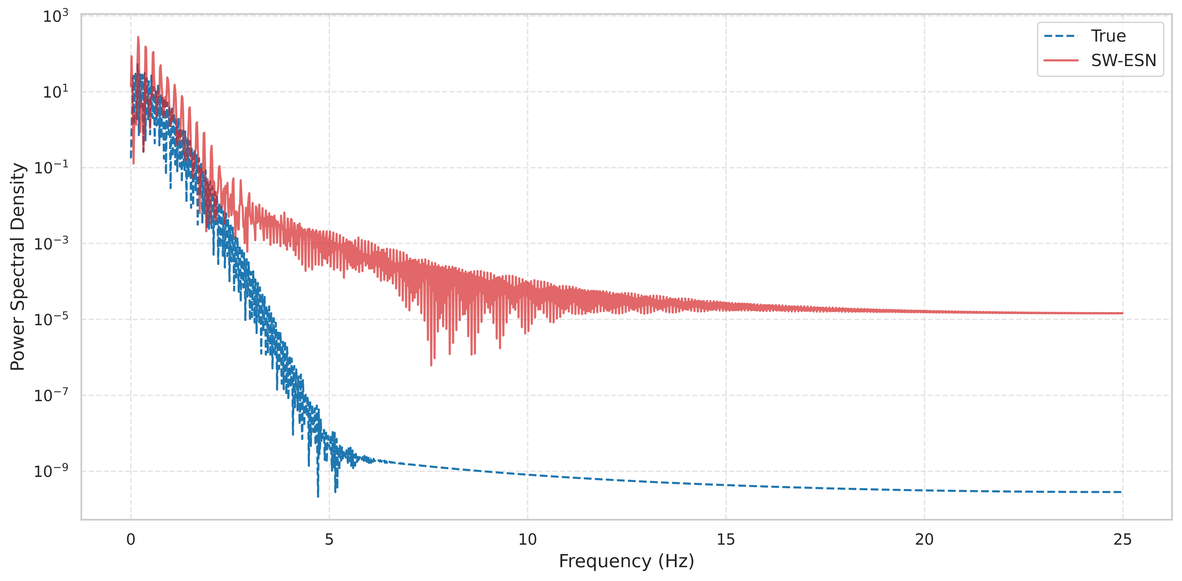}
        \caption{SW-ESN}
    \end{subfigure}

    \vspace{0.4cm}

    \begin{subfigure}[t]{0.23\textwidth}
        \includegraphics[width=\linewidth]{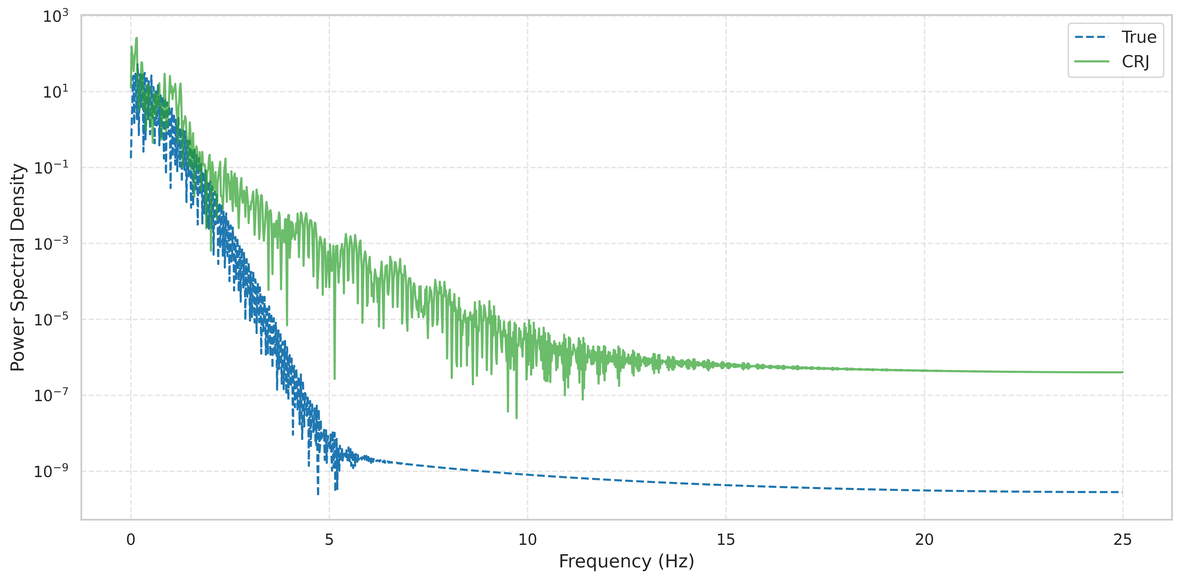}
        \caption{CRJ}
    \end{subfigure}
    \begin{subfigure}[t]{0.23\textwidth}
        \includegraphics[width=\linewidth]{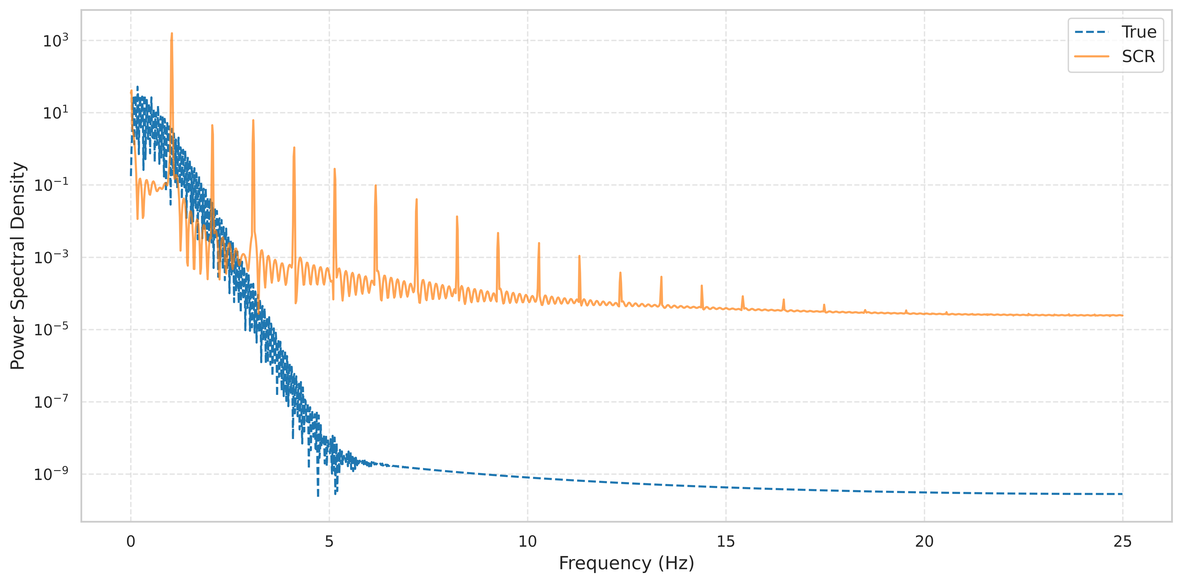}
        \caption{SCR}
    \end{subfigure}
    \begin{subfigure}[t]{0.23\textwidth}
        \includegraphics[width=\linewidth]{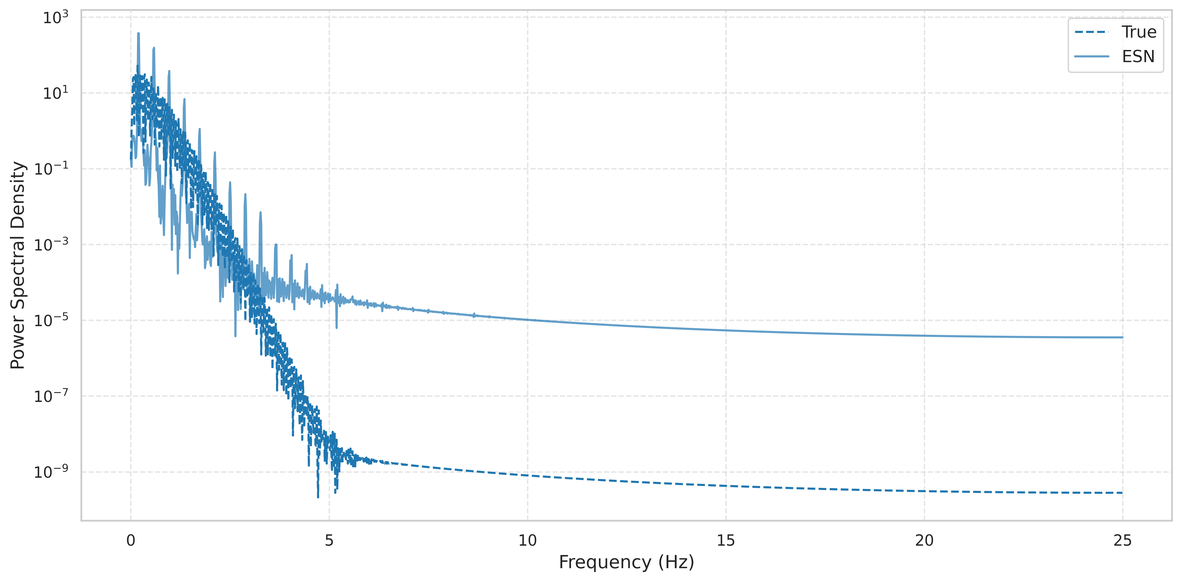}
        \caption{Vanilla ESN}
    \end{subfigure}

    \caption{Analysis of the $z$-component PSD of the Rössler system with different reservoir architectures.}
    \label{fig:rossler-psd_cl}
\end{figure}

\begin{table}[!ht]
\centering
\caption{Maximum Lyapunov Exponent \((\lambda_{\max})\) and Lyapunov Time for each dynamical system.}
\label{tab:LLE_LT_datasets}
\resizebox{0.5\textwidth}{!}{%
\begin{tabular}{l c c l}
\toprule
\textbf{Dataset} & \(\lambda_{\max}\) & \textbf{Lyapunov Time} \\
\midrule 
Mackey-Glass \cite{farmer1982chaotic}  & 0.006100 & 163.934426 \\
Lorenz      \cite{wolf1985determining}  & 0.905600 & 1.104972   \\
R\"{o}ssler \cite{sano1985measurement}  & 0.071400 & 14.005602  \\
Chen        \cite{chen1999yet}  & 0.829600 & 1.205148   \\
Chua        \cite{bilotta2008gallery}  & 0.428400 & 2.334267   \\
\bottomrule
\end{tabular}
}
\end{table}


Table \ref{tab:hyperparams_exp} presents the specific set of hyperparameters utilized for each reservoir model during the experimental evaluations, conducted in both open-loop and closed-loop configurations. In addition to the reservoir-specific settings listed in the table, all variants share a common set of core hyperparameters: the spectral radius $\rho$ is fixed at $0.95$, the leaking rate $\alpha$ is set to $0.8$, the input scaling factor is maintained at $0.2$, and the regularization coefficient $\beta$ is chosen as $10^{-4}$. These values were selected based on preliminary tuning to ensure stability and consistent performance across different reservoir architectures.

\begin{table}[!ht]
\centering
\caption{Experimental configurations for each reservoir variant.}
\label{tab:hyperparams_exp}
\begin{tabular}{lccccccc}
\toprule
\textbf{Model}& \textbf{Hyperparameter}& \textbf{Value}\\
\midrule
\multirow{2}{*}{Vanilla ESN}& Reservoir Size& 300\\
 & Connectivity Ratio&0.05\\
\midrule
 \multirow{2}{*}{SCR}& Reservoir Size&300\\
 & Edge Weight&0.8\\
 \midrule
 \multirow{3}{*}{CRJ}& Reservoir Size&300\\
 & Edge Weight&0.8\\
 & Jump Size&15\\
 \midrule
 \multirow{3}{*}{SW-ESN}& Reservoir Size&300\\
 & Node Degree $E$&6\\
 & Rewiring Probability $p$&0.1\\
 \midrule
 \multirow{4}{*}{MCI-ESN}& Sub-reservoir Size&300\\
 & Edge Weight $\mu$&0.8\\
 &  Inter-reservoir Connection Weight $\eta$&0.8\\
 & Weight Coefficient $\theta$&0.5\\
 \midrule
 \multirow{2}{*}{Parallel Res}
 & Number of Reservoirs &3\\
 & Reservoir Sizes & 300, 300, 300\\
 \midrule
 \multirow{2}{*}{DeepESN}
 & Number of Layers & 3\\
 & Reservoir Sizes&300, 300, 300\\
 \bottomrule
\end{tabular}
\end{table}

We begin our analysis by evaluating the performance of various reservoirs under open-loop forecasting (cf. \S~\ref{sec:open_loop}), followed by closed-loop forecasting (cf. \S~\ref{sec:closed_loop}). Finally, in \S~\ref{sec:ablation}, we conduct a series of ablation studies to understand the contribution of different architectural components and hyperparameters to the overall performance of the model.

\subsubsection{Performance in Open-Loop Forecasting}\label{sec:open_loop}
To rigorously evaluate model performance within this forecasting framework, we assess each variant of the reservoir model across a suite of benchmark dynamical systems. These evaluations are conducted using a comprehensive set of quantitative metrics, as outlined in (cf. \S~\ref{sec:metrics}), to ensure a well-rounded analysis of each model's capabilities. 

In particular, we focus on NRMSE as a primary performance metric, reporting results across multiple prediction horizons. This evaluation strategy enables us to systematically investigate how model accuracy evolves as the forecasting window extends further into the future. NRMSE trends give us a clear picture of how predictive accuracy degrades over time and how different reservoir structures cope with increasing temporal distance from the input. Specifically, we assess each model's predictive performance at fixed forecasting horizons of $10, 100, 500,$ and $1000$ time steps. These intervals are chosen to capture both short-term dynamics (e.g., horizon 10) and long-term behavior (e.g., horizon 1000), thereby offering insights into the immediate responsiveness as well as the temporal generalization capabilities of the models. A summary of these results is presented in Table \ref{tab:nrmse_horizon_ol}, providing a comparative view of how each reservoir structure handles the increasing challenge of open-loop prediction at longer forecasting intervals.

\begin{table}[!ht]
\centering
\caption{NRMSE $\downarrow$ under open-loop forecasting (Teacher-Forced) over multiple horizons.}
\label{tab:nrmse_horizon_ol}
\resizebox{\textwidth}{!}{%
\begin{tabular}{l l c c c c l}
\toprule
\multirow{2}{*}{\textbf{Dataset}} & \multirow{2}{*}{\textbf{Reservoir}} & \multicolumn{4}{c}{\textbf{Horizon}} \\
\cmidrule(lr){3-6} 
 & & 10-step & 100-step & 500-step & 1000-step & \\
\midrule

\multirow{7}{*}{Lorenz} 
  & Vanilla ESN     & 0.002437$\pm$0.001046 & 0.001036$\pm$0.000266 & 0.000893$\pm$0.000154 & 0.000736$\pm$0.000140 \\
  & SCR             & 0.001887$\pm$0.000556 & 0.000696$\pm$0.000149 & 0.000733$\pm$0.000084 & 0.000625$\pm$0.000059 \\
  & CRJ             & 0.002064$\pm$0.000495 & 0.000807$\pm$0.000083 & 0.000838$\pm$0.000072 & 0.000669$\pm$0.000051 \\
  & SWRes           & 0.002133$\pm$0.000321 & 0.000912$\pm$0.000311 & 0.000759$\pm$0.000185 & 0.000628$\pm$0.000131 \\
  & MCI-ESN         & 0.069455$\pm$0.014249 & 0.026075$\pm$0.004153 & 0.018802$\pm$0.001715 & 0.016333$\pm$0.001308 \\
  & Parallel ESN    & 0.000463$\pm$0.000157 & 0.000165$\pm$0.000036 & 0.000211$\pm$0.000050 & 0.000161$\pm$0.000033 \\
  & DeepESN         & 0.000391$\pm$0.000061 & 0.000167$\pm$0.000021 & 0.000270$\pm$0.000037 & 0.000204$\pm$0.000021 \\

\midrule

\multirow{7}{*}{R\"{o}ssler} 
  & Vanilla ESN     & 0.009348$\pm$0.002206 & 0.008922$\pm$0.000445 & 0.000220$\pm$0.000024 & 0.000522$\pm$0.000093 \\
  & SCR             & 0.011283$\pm$0.002984 & 0.012705$\pm$0.002132 & 0.000278$\pm$0.000028 & 0.000674$\pm$0.000131 \\
  & CRJ             & 0.007034$\pm$0.001826 & 0.008010$\pm$0.000734 & 0.000187$\pm$0.000010 & 0.000394$\pm$0.000089 \\
  & SWRes           & 0.007385$\pm$0.002695 & 0.008504$\pm$0.001891 & 0.000214$\pm$0.000032 & 0.000437$\pm$0.000090 \\
  & MCI-ESN         & 0.033053$\pm$0.010549 & 0.028297$\pm$0.001585 & 0.000877$\pm$0.000048 & 0.002748$\pm$0.000607 \\
  & Parallel ESN    & 0.001718$\pm$0.000664 & 0.001895$\pm$0.000207 & 0.000057$\pm$0.000007 & 0.000162$\pm$0.000022 \\
  & DeepESN         & 0.004170$\pm$0.000317 & 0.003282$\pm$0.000324 & 0.000169$\pm$0.000014 & 0.000956$\pm$0.000179 \\

\midrule

\multirow{7}{*}{Chen} 
  & Vanilla ESN     & 0.004935$\pm$0.000914 & 0.006758$\pm$0.001511 & 0.011530$\pm$0.000344 & 0.017099$\pm$0.001885 \\
  & SCR             & 0.002600$\pm$0.000547 & 0.005036$\pm$0.000850 & 0.010850$\pm$0.002102 & 0.017532$\pm$0.002441 \\
  & CRJ             & 0.003271$\pm$0.000671 & 0.003986$\pm$0.000886 & 0.009799$\pm$0.002753 & 0.014434$\pm$0.004866 \\
  & SWRes           & 0.003494$\pm$0.001171 & 0.005001$\pm$0.000840 & 0.009873$\pm$0.001941 & 0.014741$\pm$0.002460 \\
  & MCI-ESN         & 0.046200$\pm$0.006837 & 0.064432$\pm$0.007459 & 0.096352$\pm$0.008595 & 0.129847$\pm$0.013815 \\
  & Parallel ESN    & 0.000372$\pm$0.000111 & 0.001490$\pm$0.000316 & 0.004206$\pm$0.001221 & 0.006440$\pm$0.000906 \\
  & DeepESN         & 0.000604$\pm$0.000177 & 0.001600$\pm$0.000240 & 0.006135$\pm$0.001514 & 0.010021$\pm$0.002114 \\

\midrule

\multirow{7}{*}{Chua} 
  & Vanilla ESN             & 0.002845$\pm$0.001656 & 0.001154$\pm$0.000056 & 0.001003$\pm$0.000045 & 0.000889$\pm$0.000043 \\
  & SCR             & 0.002627$\pm$0.000978 & 0.001357$\pm$0.000180 & 0.001126$\pm$0.000112 & 0.000969$\pm$0.000081 \\
  & CRJ             & 0.004620$\pm$0.001728 & 0.001215$\pm$0.000031 & 0.001065$\pm$0.000025 & 0.000947$\pm$0.000021 \\
  & SW-ESN          & 0.002347$\pm$0.000527 & 0.001173$\pm$0.000022 & 0.001048$\pm$0.000020 & 0.000942$\pm$0.000022 \\
  & MCI-ESN         & 0.006204$\pm$0.001590 & 0.001593$\pm$0.000158 & 0.001351$\pm$0.000106 & 0.001173$\pm$0.000083 \\
  & Parallel ESN    & 0.001799$\pm$0.000687 & 0.000898$\pm$0.000029 & 0.000784$\pm$0.000020 & 0.000694$\pm$0.000015 \\
  & DeepESN         & 0.014859$\pm$0.005930 & 0.005713$\pm$0.004236 & 0.003605$\pm$0.002518 & 0.002839$\pm$0.001957 \\

\bottomrule
\end{tabular}
}
\end{table}
Across all datasets and horizons, Parallel ESN and DeepESN consistently outperform other architectures, achieving the lowest NRMSE, especially at longer horizons, demonstrating superior temporal generalization and robustness to compounding prediction errors. In contrast, MCI-ESN shows relatively poor performance, suggesting limited long-range predictive capabilities in the open-loop setting. Notably, while simpler reservoirs like Vanilla ESN, SCR, and CRJ perform competitively at short horizons, their performance degrades more rapidly at longer horizons. This highlights the benefits of architectural innovations, such as parallel and deep structures in preserving predictive accuracy over extended timeframes.

\begin{figure}[!ht]
    \centering
    \includegraphics[width=\linewidth]{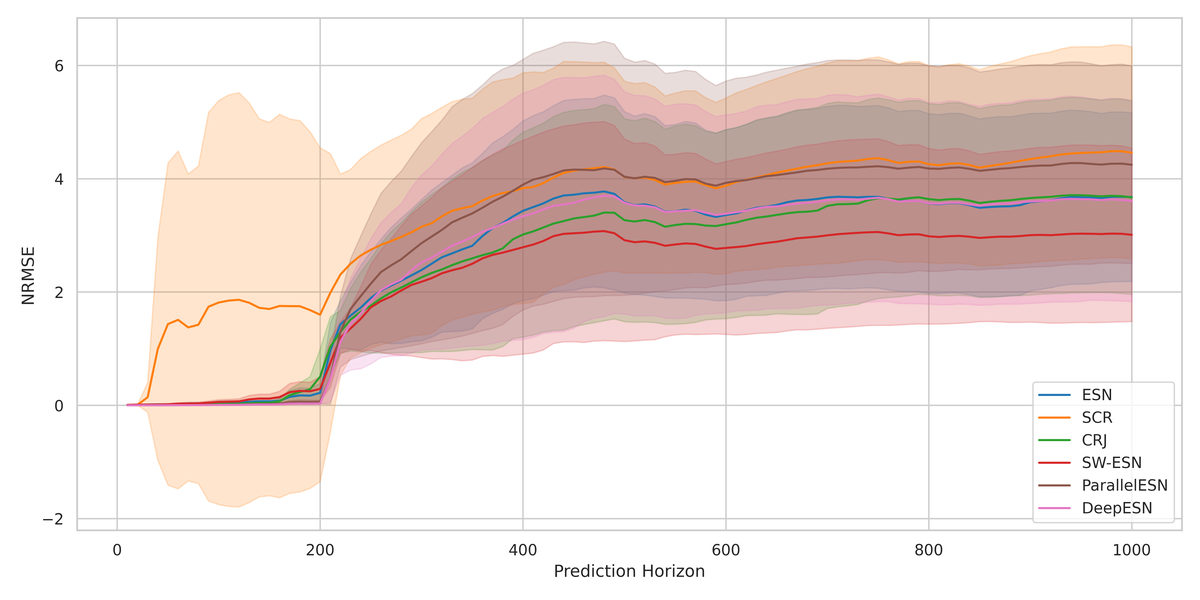}
    \caption{NRMSE of the Lorenz system across increasing prediction horizons under closed-loop settings.}
    \label{fig:Lorenz_NRMSE_cl}
\end{figure}

\begin{figure}[!ht]
    \centering
    \includegraphics[width=\linewidth]{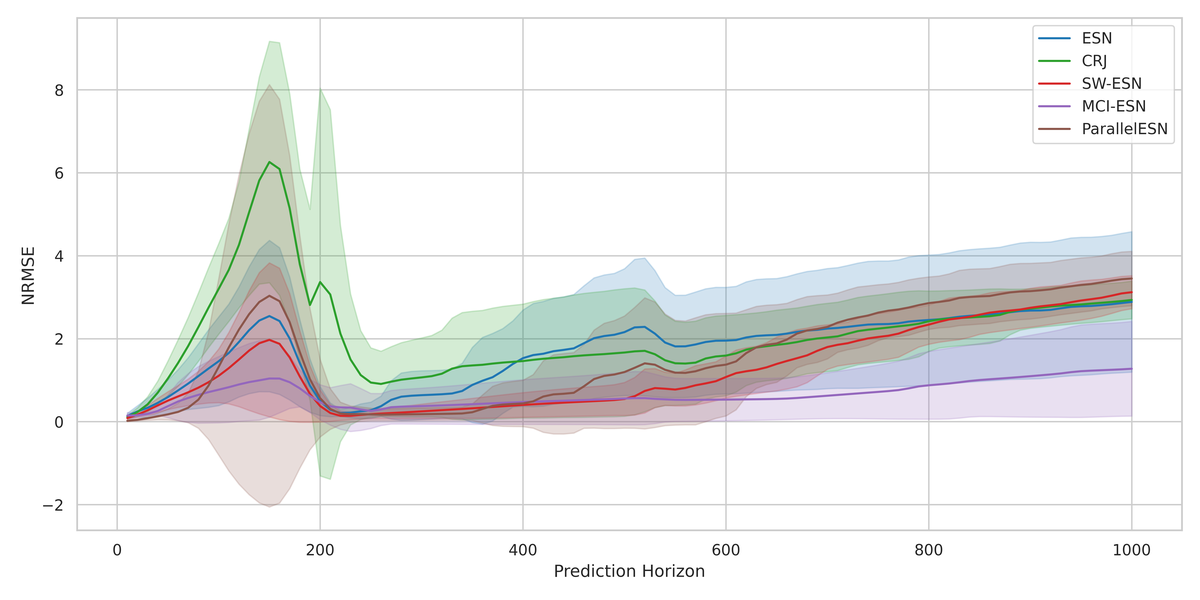}
    \caption{NRMSE of the Rössler system across increasing prediction horizons under closed-loop settings.}
    \label{fig:Rossler_NRMSE_cl}
\end{figure}

\begin{figure}[!ht]
    \centering
    \includegraphics[width=\linewidth]{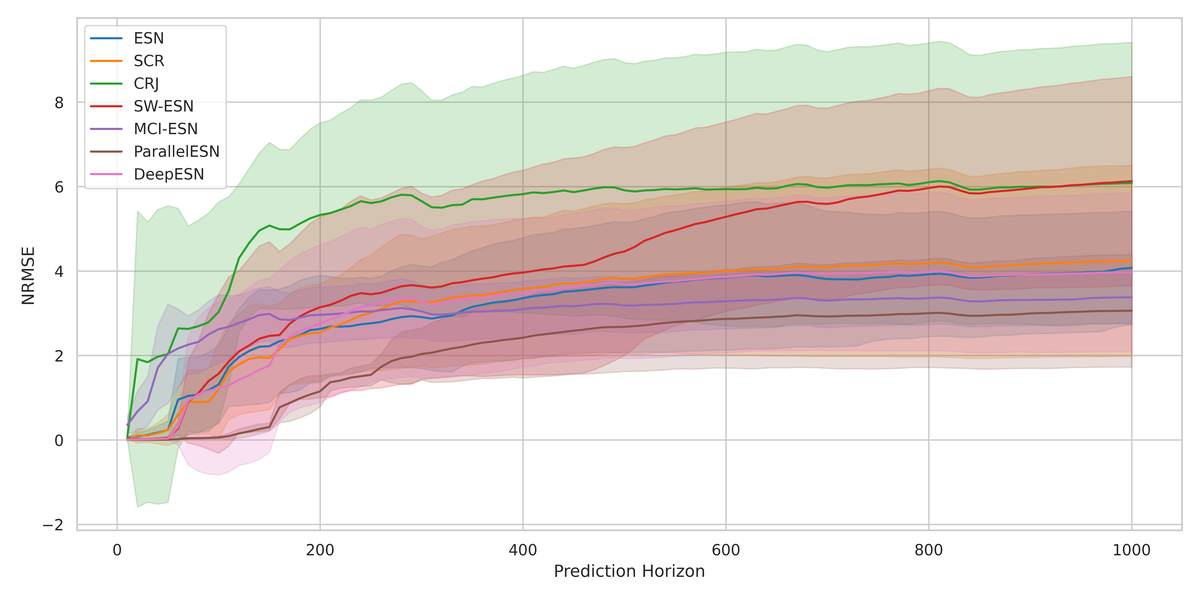}
    \caption{NRMSE of the Chen system across increasing prediction horizons under closed-loop settings.}
    \label{fig:Chen_NRMSE_cl}
\end{figure}

\begin{figure}[!ht]
    \centering
    \includegraphics[width=\linewidth]{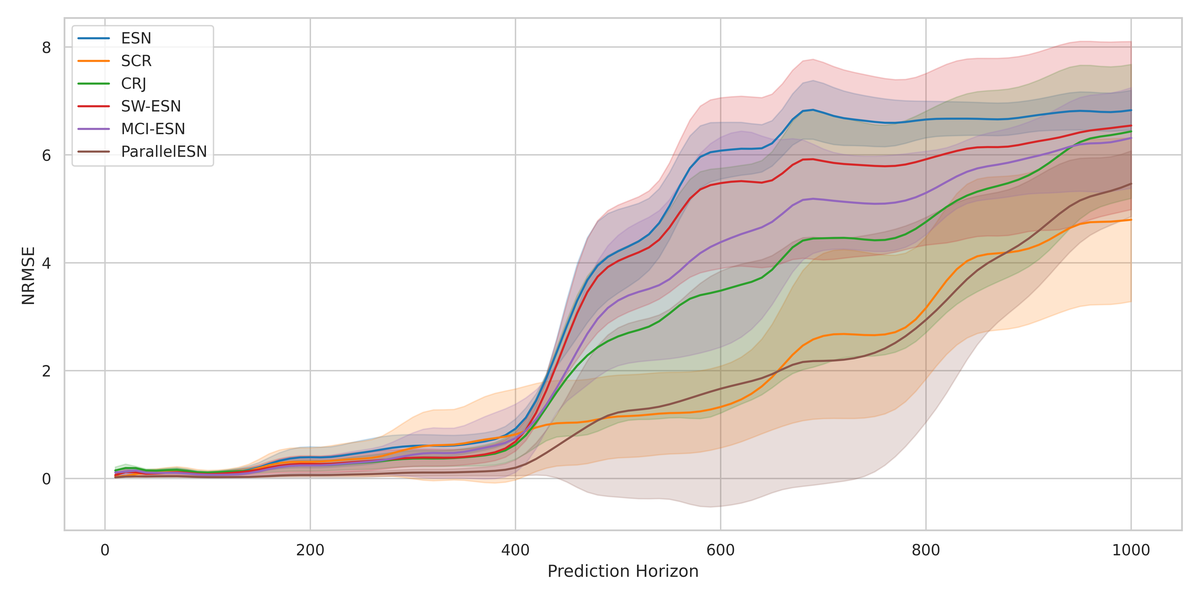}
    \caption{NRMSE of the Chua system across increasing prediction horizons under closed-loop settings.}
    \label{fig:Chua_NRMSE_cl}
\end{figure}

Table \ref{tab:lle_ol} presents the largest Lyapunov exponent $\lambda_{max}$ computed from the predicted trajectories of each model in the open-loop forecasting setting, along with the corresponding deviation $\Delta\lambda$ from the true system's Lyapunov exponent. By comparing $\lambda_{max}$ of the predicted trajectory with that of the ground truth, $\Delta\lambda$ quantifies how faithfully a model captures the intrinsic instability and chaotic nature of the underlying system. Smaller values of $\Delta\lambda$ indicate better preservation of the system’s long-term dynamical structure, whereas larger divergences suggest that the model fails to replicate the true chaotic behavior over extended horizons. This metric complements NRMSE-based evaluations by focusing on qualitative, structural fidelity in addition to pointwise accuracy.  Across all four chaotic systems, we observe that DeepESN and Parallel ESN consistently exhibit the least $\Delta\lambda$, indicating superior capability in preserving the long-term chaotic structure of the dynamics. Notably, Parallel ESN achieves near-perfect alignment with the true Lyapunov spectrum in all cases, often with the lowest divergence. In contrast, models like MCI-ESN, while showing modest to poor NRMSE performance at short horizons (as seen in Table \ref{tab:nrmse_horizon_ol}), also display higher Lyapunov divergence, particularly in the Lorenz and Chen systems. This confirms that their predictions diverge both numerically and structurally over time.


\begin{table}[!ht]
\centering
\caption{Lyapunov Exponent Divergence for 3000-step horizon under open-loop forecastings.}
\label{tab:lle_ol}
\begin{tabular}{l l c c}
\toprule
Dataset & Reservoir  & $\lambda_{max}$ & $\Delta\lambda$ \\
\midrule
\multirow{7}{*}{Lorenz} 
  & ESN             & 0.100327$\pm$0.000153 & 0.805273$\pm$0.000153 \\
  & SCR             & 0.100477$\pm$0.000175 & 0.805123$\pm$0.000175 \\
  & CRJ             & 0.100417$\pm$0.000206 & 0.805183$\pm$0.000206 \\
  & SW-ESN          & 0.100490$\pm$0.000145 & 0.805110$\pm$0.000145 \\
  & MCI-ESN         & 0.098996$\pm$0.000699 & 0.806604$\pm$0.000699 \\
  & Parallel ESN    & 0.100496$\pm$0.000087 & 0.805104$\pm$0.000087 \\
  & DeepESN         & 0.100474$\pm$0.000020 & 0.805126$\pm$0.000020 \\

\midrule

\multirow{7}{*}{R\"{o}ssler} 
  & Vanilla ESN             & 0.130915$\pm$0.001960 & 0.059515$\pm$0.001960 \\
  & SCR             & 0.133025$\pm$0.002089 & 0.061625$\pm$0.002089 \\
  & CRJ             & 0.131812$\pm$0.001965 & 0.060412$\pm$0.001965 \\
  & SW-ESN          & 0.130884$\pm$0.001999 & 0.059484$\pm$0.001999 \\
  & MCI-ESN         & 0.136856$\pm$0.003348 & 0.065456$\pm$0.003348 \\
  & Parallel ESN    & 0.129957$\pm$0.001497 & 0.058557$\pm$0.001497 \\
  & DeepESN         & 0.129414$\pm$0.002070 & 0.058014$\pm$0.002070 \\

\midrule

\multirow{7}{*}{Chen} 
  & Vanilla ESN             & 0.114143$\pm$0.000160 & 0.715457$\pm$0.000160 \\
  & SCR             & 0.114119$\pm$0.000171 & 0.715481$\pm$0.000171 \\
  & CRJ             & 0.113941$\pm$0.000099 & 0.715659$\pm$0.000099 \\
  & SW-ESN          & 0.114110$\pm$0.000168 & 0.715490$\pm$0.000168 \\
  & MCI-ESN         & 0.114347$\pm$0.001142 & 0.715253$\pm$0.001142 \\
  & Parallel ESN    & 0.114075$\pm$0.000077 & 0.715525$\pm$0.000077 \\
  & DeepESN         & 0.114123$\pm$0.000207 & 0.715477$\pm$0.000207 \\

\midrule
\multirow{7}{*}{Chua} 
  & Vanilla ESN             & 0.144518$\pm$0.000133 & 0.283882$\pm$0.000133 \\
  & SCR             & 0.144601$\pm$0.000232 & 0.283799$\pm$0.000232 \\
  & CRJ             & 0.144627$\pm$0.000166 & 0.283773$\pm$0.000166 \\
  & SW-ESN          & 0.144444$\pm$0.000144 & 0.283956$\pm$0.000144 \\
  & MCI-ESN         & 0.144540$\pm$0.000199 & 0.283860$\pm$0.000199 \\
  & Parallel ESN    & 0.144500$\pm$0.000107 & 0.283900$\pm$0.000107 \\
  & DeepESN         & 0.145040$\pm$0.000371 & 0.283360$\pm$0.000371 \\

\bottomrule

\end{tabular}
\end{table}

\subsubsection{Performance in Closed-Loop Forecasting}\label{sec:closed_loop}

Closed-loop forecasting, also known as \textit{regressive prediction}, presents a significantly more challenging evaluation scenario compared to the open-loop (teacher-forced) setting. In this regime, the model generates future predictions by recursively feeding its own previous outputs as inputs, rather than relying on ground truth data during inference (cf.  \S  \ref{sec:forecasting_setups}). This setup provides a rigorous test of the model's ability to autonomously sustain accurate and coherent dynamics over extended time horizons. As such, it serves as a more realistic and stringent measure of a model’s long-term generalization capabilities. 

Apart from the commonly used NRMSE and Lyapunov Divergence, we also incorporate two additional metrics— VPT and ADev to gain a more comprehensive understanding of each model's forecasting behavior. While NRMSE provides a general measure of point-wise prediction accuracy and Lyapunov Divergence quantifies the exponential error growth characteristic of chaotic systems, VPT and ADev offer complementary insights. The corresponding results are summarized in Tables~\ref{tab:nrmse_horizon_cl}--\ref{tab:vpt_ADev_cl}. Table~\ref{tab:nrmse_horizon_cl} shows NRMSE values, Table~\ref{tab:lle_cl} reports Lyapunov Divergence, and Table~\ref{tab:vpt_ADev_cl} presents VPT and ADev, together offering a well-rounded view of each model's forecasting performance.

\begin{table}[!ht]
\centering
\caption{NRMSE $\downarrow$ under closed-loop forecasting (Autoregressive) over multiple horizons.}
\label{tab:nrmse_horizon_cl}
\resizebox{\textwidth}{!}{%
\begin{tabular}{l l c c c c l}
\toprule
\multirow{2}{*}{\textbf{Dataset}} & \multirow{2}{*}{\textbf{Reservoir}} & \multicolumn{4}{c}{\textbf{Horizon}}   \\
\cmidrule(lr){3-6} 
 & & 10-step & 100-step & 500-step  & 1000-step\\
\midrule

\multirow{7}{*}{Lorenz} 
  & Vanilla ESN             & 0.007614$\pm$0.005111 & 0.047484$\pm$0.029881 & 3.584183$\pm$1.602574 & 3.677976$\pm$1.489045 \\
  & SCR             & 0.007028$\pm$0.003446 & 1.815099$\pm$3.566290 & 4.030702$\pm$1.691541 & 4.459001$\pm$1.870905 \\
  & CRJ             & 0.007260$\pm$0.003186 & 0.022971$\pm$0.012299 & 3.268853$\pm$1.791016 & 3.670972$\pm$1.708620 \\
  & SW-ESN          & 0.008793$\pm$0.003788 & 0.060875$\pm$0.039085 & 2.915492$\pm$1.788156 & 3.011428$\pm$1.532399 \\
  & MCI-ESN         & 1.230876$\pm$0.838207 & 5.617800$\pm$2.763230 & 4.298552$\pm$0.575811 & 3.943699$\pm$0.483962 \\
  & Parallel ESN    & 0.001887$\pm$0.000858 & 0.009435$\pm$0.004681 & 4.035640$\pm$2.092327 & 4.247591$\pm$1.743061 \\
  & DeepESN         & 0.001315$\pm$0.000578 & 0.008068$\pm$0.003934 & 3.573995$\pm$1.962448 & 3.606994$\pm$1.774597 \\

\midrule
\multirow{7}{*}{R\"{o}ssler} 
  & Vanilla ESN             & 0.139767$\pm$0.063956 & 1.457615$\pm$1.068153 & 2.160282$\pm$1.528565 & 2.888498$\pm$1.692990 \\
  & SCR             & 0.196739$\pm$0.106687 & 109.940464$\pm$135.397594 & 4.142599$\pm$2.144532 & 3.837662$\pm$1.509674 \\
  & CRJ             & 0.110214$\pm$0.036627 & 3.201257$\pm$1.091736 & 1.668403$\pm$1.537698 & 2.935533$\pm$0.453440 \\
  & SW-ESN          & 0.086966$\pm$0.038720 & 1.106773$\pm$0.646281 & 0.548803$\pm$0.397856 & 3.122546$\pm$0.397420 \\
  & MCI-ESN         & 0.148857$\pm$0.074668 & 0.772386$\pm$0.799236 & 0.559700$\pm$0.631791 & 1.277491$\pm$1.140839 \\
  & Parallel ESN    & 0.022148$\pm$0.013351 & 1.320780$\pm$2.124518 & 1.197415$\pm$1.352267 & 3.452498$\pm$0.655188 \\
  & DeepESN         & 0.046696$\pm$0.018655 & 15.828215$\pm$18.602838 & 2.418219$\pm$1.935758 & 3.775366$\pm$0.689312 \\

\midrule
\multirow{7}{*}{Chen} 
  & Vanilla ESN             & 0.048106$\pm$0.028900 & 1.316188$\pm$0.924288 & 3.615808$\pm$1.540769 & 4.074841$\pm$1.342420 \\
  & SCR             & 0.017162$\pm$0.005381 & 1.193524$\pm$1.146527 & 3.819821$\pm$1.775425 & 4.240608$\pm$2.261324 \\
  & CRJ             & 0.077772$\pm$0.075951 & 3.033906$\pm$2.605282 & 5.913305$\pm$3.024390 & 6.082467$\pm$3.334916 \\
  & SW-ESN          & 0.017158$\pm$0.008025 & 1.570141$\pm$1.885309 & 4.464775$\pm$2.462887 & 6.129073$\pm$2.477179 \\
  & MCI-ESN         & 0.358289$\pm$0.197905 & 2.625182$\pm$0.815653 & 3.184659$\pm$0.650357 & 3.378137$\pm$0.618295 \\
  & Parallel ESN    & 0.003498$\pm$0.002863 & 0.060526$\pm$0.031325 & 2.680420$\pm$1.057335 & 3.062002$\pm$1.334566 \\
  & DeepESN         & 0.007324$\pm$0.004353 & 1.224323$\pm$2.049013 & 3.682612$\pm$1.755248 & 3.972897$\pm$1.877817 \\

\midrule
\multirow{7}{*}{Chua} 
  & Vanilla ESN             & 0.059206$\pm$0.043820 & 0.082288$\pm$0.045194 & 4.214087$\pm$0.774263 & 6.829367$\pm$0.368721 \\
  & SCR             & 0.033995$\pm$0.020053 & 0.098248$\pm$0.054107 & 1.150105$\pm$0.766824 & 4.797387$\pm$1.516078 \\
  & CRJ             & 0.149077$\pm$0.064165 & 0.110253$\pm$0.023748 & 2.634248$\pm$1.599648 & 6.436059$\pm$1.241802 \\
  & SW-ESN          & 0.063730$\pm$0.044489 & 0.079487$\pm$0.034560 & 4.033542$\pm$1.037445 & 6.544182$\pm$1.558120 \\
  & MCI-ESN         & 0.113083$\pm$0.044609 & 0.068140$\pm$0.011962 & 3.300845$\pm$1.209103 & 6.314190$\pm$0.933128 \\
  & Parallel ESN    & 0.021298$\pm$0.001878 & 0.025304$\pm$0.009829 & 1.224578$\pm$1.569606 & 5.465975$\pm$0.608724 \\
  & DeepESN         & 0.065898$\pm$0.026901 & 3.233513$\pm$3.217908 & 7.121215$\pm$1.262932 & 8.471579$\pm$0.856082 \\

\bottomrule
\end{tabular}
}
\end{table}
In the closed-loop (autoregressive) forecasting results, we observe that methods like Parallel ESN and DeepESN consistently achieve the lowest NRMSE across multiple horizons and datasets. Specifically, in the Lorenz and Chen systems, DeepESN and Parallel ESN deliver excellent performance, especially at long horizons ($1000$ steps), with DeepESN achieving the lowest errors in many cases. In the R\"{o}ssler system, MCI-ESN and SW-ESN emerge as strong contenders with relatively low NRMSE at medium and long horizons, although DeepESN exhibits instability at shorter horizons (e.g., very high variance at $100$-step). For the Chua system, Parallel ESN stands out again with superior long-term forecasting ability, while DeepESN struggles with high errors as the forecast horizon increases.

When compared with the open-loop forecasting results based on Lyapunov divergence, a few key patterns emerge. Despite all models estimating the largest Lyapunov exponent $\lambda_{max}$ with reasonable accuracy, the divergence metric $\Delta\lambda$—which reflects the stability of long-term predictions—varies significantly. Parallel ESN and DeepESN consistently exhibit minimal divergence across all datasets, suggesting that their internal dynamics are better aligned with the true system and more robust to accumulating errors in open-loop rollouts. In contrast, MCI-ESN, while delivering low NRMSE in the closed-loop setting, tends to show higher Lyapunov divergence, indicating less stability during long-term forecasting.

\begin{table}[!ht]
\centering
\caption{VPT, Normalized VPT (w.r.t. Lyapunov Time), ADev for $1000$-step horizon under closed-loop forecasting.}
\label{tab:vpt_ADev_cl}
\begin{tabular}{l l c c c}
\toprule
Dataset & Reservoir & VPT $\uparrow$ & Normalized VPT $\uparrow$ & ADev  $\downarrow$ \\
\midrule

\multirow{7}{*}{Lorenz} 
  & Vanilla ESN             & 4.056 & 3.673 & 99\\
  & SCR             & 3.396 & 3.075 & 97\\
  & CRJ             & 3.960 & 3.586 & 84\\
  & SW-ESN          & 4.060 & 3.677 & 31\\
  & MCI-ESN         & 0.564 & 0.511 & 94\\
  & Parallel ESN     & 4.164 & 3.771 & 118\\
  & DeepESN         & 4.160 & 3.767 & 72\\

\midrule

\multirow{7}{*}{R\"{o}ssler} 
  & Vanilla ESN             & 8.760 & 0.625 & 80\\
  & SCR             & 4.528 & 0.323 & 73\\
  & CRJ             & 7.732 & 0.552 & 86\\
  & SW-ESN          & 10.328 & 0.737 & 74\\
  & MCI-ESN         & 13.992 & 0.999 & 22\\
  & Parallel ESN     & 10.084 & 0.720 & 67\\
  & DeepESN         & 6.212 & 0.444 & 87\\

\midrule
\multirow{7}{*}{Chen} 
  & Vanilla ESN             & 1.532 & 1.271 & 182\\
  & SCR             & 1.484 & 1.231 & 126\\
  & CRJ             & 1.104 & 0.916 & 147\\
  & SW-ESN          & 1.392 & 1.155 & 137\\
  & MCI-ESN         & 0.636 & 0.528 & 98\\
  & Parallel ESN     & 2.844 & 2.359 & 51\\
  & DeepESN         & 2.080 & 1.726 & 111\\

\midrule

\multirow{7}{*}{Chua} 
  & Vanilla ESN             & 7.748 & 3.319 & 5\\
  & SCR             & 9.480 & 4.061 & 4\\
  & CRJ             & 7.952 & 3.407 & 7\\
  & SW-ESN          & 7.840 & 3.359 & 6\\
  & MCI-ESN         & 7.524 & 3.223 & 4\\
  & Parallel ESN     & 12.160 & 5.209 & 7\\
  & DeepESN         & 1.716 & 0.735 & 30\\


\bottomrule
\end{tabular}
\end{table}

In closed-loop forecasting, Parallel ESN consistently delivers the highest normalized VPT and low ADev across all datasets, making it the most robust overall. SW-ESN performs competitively, especially on Lorenz (low ADev) and Chua. MCI-ESN excels on R\"{o}ssler with the highest VPT and lowest ADev, while DeepESN performs well on Lorenz and Chen but struggles on Chua. SCR and CRJ show moderate stability, with good results on Chua but weaker on R\"{o}ssler and Chen. 

\begin{table}[!ht]
\centering
\caption{Lyapunov Exponent Divergence over a 3000-step horizon under closed-loop forecasting.}
\label{tab:lle_cl}
\begin{tabular}{l l c c}
\toprule
Dataset & Reservoir  & $\lambda_{max}$ & $\Delta\lambda$ \\
\midrule
\multirow{7}{*}{Lorenz} 
  & Vanilla ESN             & 0.053008$\pm$0.039281 & 0.852592$\pm$0.039281 \\
  & SCR             & 0.073503$\pm$0.033156 & 0.832097$\pm$0.033156 \\
  & CRJ             & 0.063013$\pm$0.013082 & 0.842587$\pm$0.013082 \\
  & SW-ESN          & 0.067597$\pm$0.051911 & 0.838003$\pm$0.051911 \\
  & MCI-ESN         & 0.104069$\pm$0.016799 & 0.801531$\pm$0.016799 \\
  & Parallel ESN    & 0.095701$\pm$0.011368 & 0.809899$\pm$0.011368 \\
  & DeepESN         & 0.100776$\pm$0.013394 & 0.804824$\pm$0.013394 \\

\midrule

\multirow{7}{*}{R\"{o}ssler} 
  & Vanilla ESN             & 0.088085$\pm$0.043284 & 0.035010$\pm$0.030434 \\
  & SCR             & 0.026417$\pm$0.081027 & 0.081103$\pm$0.044845 \\
  & CRJ             & 0.116377$\pm$0.053139 & 0.061857$\pm$0.031944 \\
  & SW-ESN          & 0.101633$\pm$0.050607 & 0.046085$\pm$0.036759 \\
  & MCI-ESN         & 0.142384$\pm$0.008242 & 0.070984$\pm$0.008242 \\
  & Parallel ESN    & 0.041751$\pm$0.011123 & 0.029649$\pm$0.011123 \\
  & DeepESN         & 0.095733$\pm$0.039326 & 0.036045$\pm$0.028972 \\

\midrule

\multirow{7}{*}{Chen} 
  & Vanilla ESN             & 0.121329$\pm$0.006253 & 0.708271$\pm$0.006253 \\
  & SCR             & 0.098816$\pm$0.034521 & 0.730784$\pm$0.034521 \\
  & CRJ             & 0.072199$\pm$0.106809 & 0.757401$\pm$0.106809 \\
  & SW-ESN          & 0.111848$\pm$0.174197 & 0.717752$\pm$0.174197 \\
  & MCI-ESN         & 0.106998$\pm$0.008240 & 0.722602$\pm$0.008240 \\
  & Parallel ESN    & 0.116106$\pm$0.004690 & 0.713494$\pm$0.004690 \\
  & DeepESN         & 0.114695$\pm$0.002894 & 0.714905$\pm$0.002894 \\

\midrule
\multirow{7}{*}{Chua} 
  & Vanilla ESN             & 0.154737$\pm$0.005862 & 0.273663$\pm$0.005862 \\
  & SCR             & 0.157002$\pm$0.003224 & 0.271398$\pm$0.003224 \\
  & CRJ             & 0.134255$\pm$0.039613 & 0.294145$\pm$0.039613 \\
  & SW-ESN          & 0.142032$\pm$0.007079 & 0.286368$\pm$0.007079 \\
  & MCI-ESN         & 0.155856$\pm$0.006072 & 0.272544$\pm$0.006072 \\
  & Parallel ESN    & 0.149935$\pm$0.011521 & 0.278465$\pm$0.011521 \\
  & DeepESN         & 0.144333$\pm$0.028098 & 0.284067$\pm$0.028098 \\

\bottomrule

\end{tabular}
\end{table}

\subsubsection{Ablation Study}\label{sec:ablation}

In this section, we undertake a comprehensive exploration of how different reservoir topologies influence the performance of time-series prediction tasks. Our evaluation framework systematically probes the behavior of each reservoir architecture under a wide and diverse set of hyperparameter configurations, as outlined in Table \ref{tab:hyperparam_space}. By varying these configurations, we assess not only the average predictive performance but also the sensitivity of each topology to hyperparameter tuning—thereby revealing its robustness and adaptability in different scenarios. This in-depth analysis enables us to identify key architectural traits and design principles that drive improved accuracy, temporal generalization, and stability in reservoir computing models. 

To ensure a consistent and fair comparison across all models—including Vanilla ESN, SCR, CRJ, SW-ESN, MCI-ESN, Parallel ESN, and DeepESN—the same set of core hyperparameters was tuned. These include the spectral radius $\rho$, input scaling, leaking rate $\alpha$, reservoir size N, and regularization factor $\beta$. The spectral radius and leaking rate were systematically varied from $0.1$ to $1.4$ and $0.1$ to $1.0$ respectively, using grid search with a step size of $0.1$. Reservoir sizes were explored in discrete increments from $50$ to $600$, also using grid search. The input scaling parameter was sampled randomly within the range $[0.1, 2.0]$ to introduce variability and better capture performance sensitivity. Regularization was tuned over the range $[10^{-8}, 10^{-2}]$ on a logarithmic scale due to its potential to influence model behavior across several orders of magnitude. 

\begin{table}[!ht]
\centering
\caption{Hyperparameter search space for each reservoir variant.}
\label{tab:hyperparam_space}
\resizebox{\textwidth}{!}{%
\begin{tabular}{llcccc}
\toprule
\multicolumn{2}{l}{\hspace{31mm}\textbf{Hyperparameter}} & \textbf{Range / Values} & \textbf{Log Scale?} & \textbf{Grid/Random} & \textbf{Step Size} \\
\midrule

\multirow{5}{*}{\textbf{Vanilla ESN}} 
 & Spectral Radius $\rho$& \{0.1, 0.2, ..., 1.4\} & No & Grid & 0.1 \\
 & Input Scaling & \{0.1, 0.2, ..., 2.0\} & No & Random & n/a \\
 & Leaking Rate $\alpha$& \{0.1, 0.2, ..., 1.0\} & No & Grid & 0.1 \\
 & Reservoir Size N& \{50, 100, ..., 600\} & No & Grid & n/a \\
 & Regularization $\beta$& [1e-8, 1e-2] & Yes & Grid & n/a \\
\cmidrule(lr){1-6}

\multirow{4}{*}{\textbf{SCR}} 
 & Spectral Radius $\rho$& \{0.1, 0.2, ..., 1.4\} & No & Grid & 0.1 \\
 & Input Scaling & \{0.1, 0.2, ..., 2.0\} & No & Random & n/a \\
 & Leaking Rate $\alpha$& \{0.1, 0.2, ..., 1.0\} & No & Grid & 0.1 \\
 & Reservoir Size N& \{50, 100, ..., 600\} & No & Grid & n/a \\
 & Regularization $\beta$& [1e-8, 1e-2] & Yes & Grid & n/a \\
\cmidrule(lr){1-6}

\multirow{5}{*}{\textbf{CRJ}} 
 & Spectral Radius $\rho$& \{0.1, 0.2, ..., 1.4\} & No & Grid & 0.1 \\
 & Input Scaling & \{0.1, 0.2, ..., 2.0\} & No & Random & n/a \\
 & Leaking Rate $\alpha$& \{0.1, 0.2, ..., 1.0\} & No & Grid & 0.1 \\
 & Reservoir Size N& \{50, 100, ..., 600\} & No & Grid & n/a \\
 & Regularization $\beta$& [1e-8, 1e-2] & Yes & Grid & n/a \\
\cmidrule(lr){1-6}

\multirow{4}{*}{\textbf{SW-ESN}}
 & Spectral Radius $\rho$& \{0.1, 0.2, ..., 1.4\} & No & Grid & 0.1 \\
 & Input Scaling & \{0.1, 0.2, ..., 2.0\} & No & Random & n/a \\
 & Leaking Rate $\alpha$& \{0.1, 0.2, ..., 1.0\} & No & Grid & 0.1 \\
 & Reservoir Size N& \{50, 100, ..., 600\} & No & Grid & n/a \\
 & Regularization $\beta$& [1e-8, 1e-2] & Yes & Grid & n/a \\
\cmidrule(lr){1-6}

\multirow{4}{*}{\textbf{MCI-ESN}}
 & Spectral Radius $\rho$& \{0.1, 0.2, ..., 1.4\} & No & Grid & 0.1 \\
 & Input Scaling & \{0.1, 0.2, ..., 2.0\} & No & Random & n/a \\
 & Leaking Rate $\alpha$& \{0.1, 0.2, ..., 1.0\} & No & Grid & 0.1 \\
 & Reservoir Size N& \{50, 100, ..., 600\} & No & Grid & n/a \\
 & Regularization $\beta$& [1e-8, 1e-2] & Yes & Grid & n/a \\
\cmidrule(lr){1-6}

\multirow{4}{*}{\textbf{Parallel ESN}}& Spectral Radius $\rho$& \{0.1, 0.2, ..., 1.4\} & No & Grid & 0.1 \\
 & Input Scaling & \{0.1, 0.2, ..., 2.0\} & No & Random & n/a \\
 & Leaking Rate $\alpha$& \{0.1, 0.2, ..., 1.0\} & No & Grid & 0.1 \\
 & Reservoir Size N& \{50, 100, ..., 600\} & No & Grid & n/a \\
 & Regularization $\beta$& [1e-8, 1e-2] & Yes & Grid & n/a \\
\cmidrule(lr){1-6}

\multirow{4}{*}{\textbf{DeepESN}}
 & Spectral Radius $\rho$& \{0.1, 0.2, ..., 1.4\} & No & Grid & 0.1 \\
 & Input Scaling & \{0.1, 0.2, ..., 2.0\} & No & Random & n/a \\
 & Leaking Rate $\alpha$& \{0.1, 0.2, ..., 1.0\} & No & Grid & 0.1 \\
 & Reservoir Size N& \{50, 100, ..., 600\} & No & Grid & n/a \\
 & Regularization $\beta$& [1e-8, 1e-2] & Yes & Grid & n/a \\
\bottomrule
\end{tabular}
}
\end{table}

We performed an extensive ablation study to evaluate the impact of key hyperparameters—reservoir size, spectral radius, input scaling and leak rate (see Tables\ref{tab:ablation_nrmse_reservoir_size}, \ref{tab:ablation_vpt_reservoir_size}, \ref{tab:ablation_nrmse_spectral_radius}, \ref{tab:ablation_vpt_spectral_radius}, \ref{tab:ablation_nrmse_input_scales},   \ref{tab:vpt_input_scale}  \ref{tab:ablation_nrmse_leaking_rate}, \ref{tab:ablation_vpt_leaking_rate})—on the performance of various reservoir computing architectures, as measured by NRMSE and VPT. For reservoir size, increasing the number of neurons generally led to a marked reduction in NRMSE across most models. For instance, CRJ achieved its best performance at a reservoir size of $500$, with an NRMSE of $0.0052$, while Parallel ESN and DeepESN achieved excellent predictive accuracy even at smaller sizes like $100$ and $200$. VPT trends generally increased with reservoir size, plateauing around a size of $300-500$ for most models.

For spectral radius, the optimal value varied across architectures. Most models displayed U-shaped performance curves, with minimal NRMSE at moderate values of spectral radius (typically around $0.4$ to $0.6$). DeepESN and CRJ achieved their lowest errors around this range, while MCI-ESN suffered from large errors as spectral radius increased beyond $0.5$, indicating sensitivity to reservoir stability. Correspondingly, VPT peaked near these optimal radii but dropped sharply for MCI-ESN at higher values due to instability.

The impact of input scaling followed a similar pattern. Models such as CRJ and DeepESN benefited from moderate input scales $(0.5-1.0)$, whereas very small or large scaling factors degraded performance. Interestingly, Parallel ESN showed robust performance over a wide range of input scalings, with NRMSE values remaining low even at higher scales. In contrast, MCI-ESN was again highly sensitive, showing erratic behavior and substantial performance degradation as the input scale increased, especially beyond $1.0$. These results highlight the importance of carefully tuning hyperparameters per model to ensure stable and accurate long-term prediction in reservoir computing systems.

\begin{table}[ht]
\centering
\caption{Impact of reservoir size on NRMSE in $100$-step closed-loop predictions.}
\label{tab:ablation_nrmse_reservoir_size}
\begin{tabular}{cccccccc}
\toprule
\textbf{Reservoir Size} & \textbf{ESN} & \textbf{SCR} & \textbf{CRJ} & \textbf{SW-ESN} & \textbf{MCI-ESN} & \textbf{Parallel ESN} & \textbf{DeepESN} \\
\midrule
50  & 3.2486 & 11.7379 & 0.1932 & 4.4505 & 6.5992 & 0.5897 & 6.6939 \\
100 & 0.9316 & 0.1473  & 0.1448 & 0.5000 & 5.3697 & 0.0346 & 0.0584 \\
200 & 0.0078 & 0.0140  & 0.0225 & 0.0370 & 4.1580 & 0.0074 & 0.0060 \\
300 & 0.0185 & 0.0312  & 0.0138 & 0.0418 & 5.6504 & 0.0180 & 0.0107 \\
400 & 0.0070 & 0.0133  & 0.0073 & 0.0110 & 5.6343 & 0.0103 & 0.0022 \\
500 & 0.0137 & 0.0057  & 0.0052 & 0.0151 & 4.6811 & 0.0041 & 0.0041 \\
600 & 0.0098 & 0.0050  & 0.0102 & 0.0028 & 4.8415 & 0.0053 & 0.0064 \\
\bottomrule
\end{tabular}
\end{table}

\begin{table}[ht]
\centering
\caption{Impact of reservoir size on VPT.}
\label{tab:ablation_vpt_reservoir_size}
\begin{tabular}{cccccccc}
\toprule
\textbf{Reservoir Size} & \textbf{ESN} & \textbf{SCR} & \textbf{CRJ} & \textbf{SW-ESN} & \textbf{MCI-ESN} & \textbf{Parallel ESN} & \textbf{DeepESN} \\
\midrule
50  & 1.10 & 0.44 & 3.14 & 0.48 & 0.12 & 2.50 & 1.66 \\
100 & 3.08 & 3.12 & 3.80 & 3.24 & 0.14 & 4.10 & 4.02 \\
200 & 4.14 & 4.06 & 4.04 & 4.12 & 0.42 & 4.28 & 4.20 \\
300 & 4.14 & 4.10 & 4.12 & 4.10 & 0.24 & 4.08 & 4.12 \\
400 & 4.10 & 4.16 & 4.14 & 4.22 & 0.24 & 4.18 & 4.22 \\
500 & 4.08 & 4.12 & 4.14 & 4.16 & 0.40 & 4.14 & 4.30 \\
600 & 4.18 & 4.12 & 4.24 & 6.26 & 0.20 & 4.18 & 4.20 \\
\bottomrule
\end{tabular}
\end{table}

\begin{table}[ht]
\centering
\caption{Impact of spectral radii on NRMSE in $100$-step closed-loop predictions.}
\label{tab:ablation_nrmse_spectral_radius}
\begin{tabular}{cccccccc}
\toprule
\textbf{Spectral Radius} & \textbf{ESN} & \textbf{SCR} & \textbf{CRJ} & \textbf{SW-ESN} & \textbf{MCI-ESN} & \textbf{Parallel ESN} & \textbf{DeepESN} \\
\midrule
0.1 & 0.0363 & 0.0066 & 0.0092 & 0.0277 & 0.0016 & 0.0057 & 0.0270 \\
0.2 & 0.0107 & 0.0246 & 0.0230 & 0.0295 & 0.0036 & 0.0094 & 0.0676 \\
0.3 & 0.0342 & 0.0190 & 0.0299 & 0.0211 & 0.0043 & 0.0121 & 0.0258 \\
0.4 & 0.0318 & 0.0275 & 0.0342 & 0.0053 & 0.0038 & 0.0112 & 0.0045 \\
0.5 & 0.0117 & 0.0324 & 0.0393 & 0.0305 & 0.0374 & 0.0083 & 0.0049 \\
0.6 & 0.0065 & 0.0198 & 0.0359 & 0.0234 & 2.4935 & 0.0085 & 0.0013 \\
0.7 & 0.0171 & 0.0058 & 0.0286 & 0.0557 & 4.8491 & 0.0111 & 0.0036 \\
0.8 & 0.0203 & 0.0155 & 0.0218 & 0.0589 & 4.5545 & 0.0131 & 0.0064 \\
0.9 & 0.0145 & 0.0314 & 0.0153 & 0.0476 & 4.8243 & 0.0160 & 0.0124 \\
1.0 & 0.0243 & 0.0318 & 0.0149 & 0.0372 & 4.6047 & 0.0186 & 0.0081 \\
1.1 & 0.0225 & 0.0244 & 0.0276 & 0.0183 & 2.6993 & 0.0122 & 0.0065 \\
1.2 & 0.0345 & 0.0229 & 0.0413 & 0.0065 & 3.1728 & 0.0038 & 0.0012 \\
1.3 & 0.0310 & 0.0162 & 0.0547 & 0.0085 & 4.4709 & 0.0032 & 0.0026 \\
1.4 & 0.0805 & 0.0040 & 0.0365 & 0.0123 & 4.0900 & 0.0112 & 0.0145 \\
\bottomrule
\end{tabular}
\end{table}

\begin{table}[ht]
\centering
\caption{Impact of spectral radii on VPT.}
\label{tab:ablation_vpt_spectral_radius}
\begin{tabular}{cccccccc}
\toprule
\textbf{Spectral Radius} & \textbf{ESN} & \textbf{SCR} & \textbf{CRJ} & \textbf{SW-ESN} & \textbf{MCI-ESN} & \textbf{Parallel ESN} & \textbf{DeepESN} \\
\midrule
0.1  & 4.10 & 4.16 & 4.12 & 4.26 & 9.08 & 9.06 & 4.06 \\
0.2  & 4.22 & 4.16 & 4.24 & 4.14 & 4.28 & 4.34 & 4.06 \\
0.3  & 4.06 & 4.12 & 4.22 & 4.16 & 4.24 & 4.24 & 4.10 \\
0.4  & 3.96 & 4.12 & 4.08 & 4.14 & 4.28 & 4.22 & 4.08 \\
0.5  & 4.10 & 4.10 & 4.04 & 3.88 & 4.16 & 4.22 & 4.14 \\
0.6  & 4.08 & 4.14 & 4.04 & 3.68 & 1.02 & 4.14 & 4.20 \\
0.7  & 4.12 & 4.16 & 4.06 & 3.98 & 0.22 & 4.12 & 4.16 \\
0.8  & 4.16 & 4.28 & 4.08 & 4.02 & 0.44 & 4.10 & 4.14 \\
0.9  & 4.16 & 4.10 & 4.12 & 4.06 & 0.26 & 4.08 & 4.10 \\
1.0  & 4.14 & 4.12 & 4.10 & 4.12 & 0.30 & 4.08 & 4.10 \\
1.1  & 4.18 & 4.16 & 4.04 & 4.18 & 0.36 & 4.12 & 4.16 \\
1.2  & 4.12 & 4.26 & 4.02 & 4.14 & 0.38 & 4.16 & 4.08 \\
1.3  & 4.08 & 4.24 & 4.00 & 4.12 & 0.28 & 4.18 & 4.14 \\
1.4  & 3.94 & 4.12 & 4.00 & 4.16 & 0.34 & 4.12 & 4.20 \\
\bottomrule
\end{tabular}
\end{table}

\begin{table}[ht]
\centering
\caption{Impact of input-scaling on NRMSE in $100$-step closed-loop predictions.}
\label{tab:ablation_nrmse_input_scales}
\begin{tabular}{cccccccc}
\toprule
\textbf{Input Scale} & \textbf{ESN} & \textbf{SCR} & \textbf{CRJ} & \textbf{SW-ESN} & \textbf{MCI-ESN} & \textbf{Parallel ESN} & \textbf{DeepESN} \\
\midrule
0.1 & 0.0363 & 0.0066 & 0.0092 & 0.0277 & 0.0016 & 0.0057 & 0.0270 \\
0.2 & 0.0107 & 0.0246 & 0.0230 & 0.0295 & 0.0036 & 0.0094 & 0.0676 \\
0.3 & 0.0342 & 0.0190 & 0.0299 & 0.0211 & 0.0043 & 0.0121 & 0.0258 \\
0.4 & 0.0318 & 0.0275 & 0.0342 & 0.0053 & 0.0038 & 0.0112 & 0.0045 \\
0.5 & 0.0117 & 0.0324 & 0.0393 & 0.0305 & 0.0374 & 0.0083 & 0.0049 \\
0.6 & 0.0065 & 0.0198 & 0.0359 & 0.0234 & 2.4935 & 0.0085 & 0.0013 \\
0.7 & 0.0171 & 0.0058 & 0.0286 & 0.0557 & 4.8491 & 0.0111 & 0.0036 \\
0.8 & 0.0203 & 0.0155 & 0.0218 & 0.0589 & 4.5545 & 0.0131 & 0.0064 \\
0.9 & 0.0145 & 0.0314 & 0.0153 & 0.0476 & 4.8243 & 0.0160 & 0.0124 \\
1.0 & 0.0243 & 0.0318 & 0.0149 & 0.0372 & 4.6047 & 0.0186 & 0.0081 \\
1.1 & 0.0225 & 0.0244 & 0.0276 & 0.0183 & 2.6993 & 0.0122 & 0.0065 \\
1.2 & 0.0345 & 0.0229 & 0.0413 & 0.0065 & 3.1728 & 0.0038 & 0.0012 \\
1.3 & 0.0310 & 0.0162 & 0.0547 & 0.0085 & 4.4709 & 0.0032 & 0.0026 \\
1.4 & 0.0805 & 0.0040 & 0.0365 & 0.0123 & 4.0900 & 0.0112 & 0.0145 \\
1.5 & 0.0082 & 0.0032 & 0.0043 & 0.0193 & 0.0297 & 0.0054 & 0.0038 \\
1.6 & 0.0185 & 0.0312 & 0.0138 & 0.0418 & 5.6504 & 0.0180 & 0.0107 \\
1.7 & 0.0228 & 4.4907 & 0.0295 & 0.1464 & 7.1289 & 0.0102 & 0.0072 \\
1.8 & 0.3334 & 0.7872 & 0.1533 & 0.6245 & 7.4037 & 0.0132 & 0.0037 \\
1.9 & 0.3988 & 0.2476 & 0.3114 & 13.4986 & 10.1076 & 0.0058 & 0.0426 \\
2.0 & 0.7972 & 0.5078 & 0.2136 & 2.0327 & 9.3896 & 0.0465 & 0.1193 \\
\bottomrule
\end{tabular}
\end{table}

\begin{table}[ht]
\centering
\caption{Impact of input scaling on VPT.}
\label{tab:vpt_input_scale}
\begin{tabular}{cccccccc}
\toprule
\textbf{Input Scale} & \textbf{ESN} & \textbf{SCR} & \textbf{CRJ} & \textbf{SW-ESN} & \textbf{MCI-ESN} & \textbf{Parallel ESN} & \textbf{DeepESN} \\
\midrule
0.1  & 4.10 & 4.16 & 4.12 & 4.26 & 9.08 & 9.06 & 4.06 \\
0.2  & 4.22 & 4.16 & 4.24 & 4.14 & 4.28 & 4.34 & 4.06 \\
0.3  & 4.06 & 4.12 & 4.22 & 4.16 & 4.24 & 4.24 & 4.10 \\
0.4  & 3.96 & 4.12 & 4.08 & 4.14 & 4.28 & 4.22 & 4.08 \\
0.5  & 4.10 & 4.10 & 4.04 & 3.88 & 4.16 & 4.22 & 4.14 \\
0.6  & 4.08 & 4.14 & 4.04 & 3.68 & 1.02 & 4.14 & 4.20 \\
0.7  & 4.12 & 4.16 & 4.06 & 3.98 & 0.22 & 4.12 & 4.16 \\
0.8  & 4.16 & 4.28 & 4.08 & 4.02 & 0.44 & 4.10 & 4.14 \\
0.9  & 4.16 & 4.10 & 4.12 & 4.06 & 0.26 & 4.08 & 4.10 \\
1.0  & 4.14 & 4.12 & 4.10 & 4.12 & 0.30 & 4.08 & 4.10 \\
1.1  & 4.18 & 4.16 & 4.04 & 4.18 & 0.36 & 4.12 & 4.16 \\
1.2  & 4.12 & 4.26 & 4.02 & 4.14 & 0.38 & 4.16 & 4.08 \\
1.3  & 4.08 & 4.24 & 4.00 & 4.12 & 0.28 & 4.18 & 4.14 \\
1.4  & 3.94 & 4.12 & 4.00 & 4.16 & 0.34 & 4.12 & 4.20 \\
1.5  & 4.20 & 4.16 & 4.24 & 4.18 & 4.14 & 4.18 & 4.16 \\
1.6  & 4.14 & 4.10 & 4.12 & 4.10 & 0.24 & 4.08 & 4.12 \\
1.7  & 4.18 & 0.72 & 3.90 & 3.26 & 0.18 & 4.18 & 4.22 \\
1.8  & 3.82 & 2.36 & 4.10 & 2.58 & 0.08 & 4.16 & 4.18 \\
1.9  & 3.04 & 3.16 & 4.00 & 1.14 & 0.02 & 4.24 & 4.06 \\
2.0  & 3.76 & 2.98 & 3.28 & 1.26 & 0.06 & 4.16 & 3.12 \\
\bottomrule
\end{tabular}
\end{table}

\begin{table}[ht]
\centering
\caption{Impact of  leaking rate on NRMSE in $100$-step closed-loop predictions.}
\label{tab:ablation_nrmse_leaking_rate}
\begin{tabular}{cccccccc}
\toprule
\textbf{Leaking Rate} & \textbf{ESN} & \textbf{SCR} & \textbf{CRJ} & \textbf{SW-ESN} & \textbf{MCI-ESN} & \textbf{Parallel ESN} & \textbf{DeepESN} \\
\midrule
0.1 & 0.0363 & 0.0066 & 0.0092 & 0.0277 & 0.0016 & 0.0057 & 0.0270 \\
0.2 & 0.0107 & 0.0246 & 0.0230 & 0.0295 & 0.0036 & 0.0094 & 0.0676 \\
0.3 & 0.0342 & 0.0190 & 0.0299 & 0.0211 & 0.0043 & 0.0121 & 0.0258 \\
0.4 & 0.0318 & 0.0275 & 0.0342 & 0.0053 & 0.0038 & 0.0112 & 0.0045 \\
0.5 & 0.0117 & 0.0324 & 0.0393 & 0.0305 & 0.0374 & 0.0083 & 0.0049 \\
0.6 & 0.0065 & 0.0198 & 0.0359 & 0.0234 & 2.4935 & 0.0085 & 0.0013 \\
0.7 & 0.0171 & 0.0058 & 0.0286 & 0.0557 & 4.8491 & 0.0111 & 0.0036 \\
0.8 & 0.0203 & 0.0155 & 0.0218 & 0.0589 & 4.5545 & 0.0131 & 0.0064 \\
0.9 & 0.0145 & 0.0314 & 0.0153 & 0.0476 & 4.8243 & 0.0160 & 0.0124 \\
1.0 & 0.0243 & 0.0318 & 0.0149 & 0.0372 & 4.6047 & 0.0186 & 0.0081 \\
\bottomrule
\end{tabular}
\end{table}

\begin{table}[ht]
\centering
\caption{Impact of leaking rate on VPT.}
\label{tab:ablation_vpt_leaking_rate}
\begin{tabular}{cccccccc}
\toprule
\textbf{Leaking Rate} & \textbf{ESN} & \textbf{SCR} & \textbf{CRJ} & \textbf{SW-ESN} & \textbf{MCI-ESN} & \textbf{Parallel ESN}& \textbf{DeepESN} \\
\midrule
0.1 & 4.10 & 4.16 & 4.12 & 4.26 & 9.08 & 9.06 & 4.06 \\
0.2 & 4.22 & 4.16 & 4.24 & 4.14 & 4.28 & 4.34 & 4.06 \\
0.3 & 4.06 & 4.12 & 4.22 & 4.16 & 4.24 & 4.24 & 4.10 \\
0.4 & 3.96 & 4.12 & 4.08 & 4.14 & 4.28 & 4.22 & 4.08 \\
0.5 & 4.10 & 4.10 & 4.04 & 3.88 & 4.16 & 4.22 & 4.14 \\
0.6 & 4.08 & 4.14 & 4.04 & 3.68 & 1.02 & 4.14 & 4.20 \\
0.7 & 4.12 & 4.16 & 4.06 & 3.98 & 0.22 & 4.12 & 4.16 \\
0.8 & 4.16 & 4.28 & 4.08 & 4.02 & 0.44 & 4.10 & 4.14 \\
0.9 & 4.16 & 4.10 & 4.12 & 4.06 & 0.26 & 4.08 & 4.10 \\
1.0 & 4.14 & 4.12 & 4.10 & 4.12 & 0.30 & 4.08 & 4.10 \\
\bottomrule
\end{tabular}
\end{table}

\section{Applications, Challenges and Open Questions}\label{sec:applications}
Reservoir computing methods, particularly ESNs, have proven effective across a wide range of application domains involving time-series data and complex temporal dependencies. This section highlights some of the major application areas, discussing both classical results and more recent advances.

\subsection{Applications in Time-Series Prediction and  Modeling}
ESNs have been applied to a wide range of time-series prediction problems – from synthetic chaotic sequences to real-world temporal data – often achieving excellent performance with minimal training. We highlight a few notable application domains and benchmarks:
\begin{enumerate}

 \item \textit{Chaotic dynamical system forecasting:} One of the earliest and most prominent uses of reservoir computing is in time series forecasting \cite{Jaurigue2022}, including applications in economics, climate science, and energy systems. Given an input sequence \(\{u(t)\}\) and a desired prediction horizon \(k\), one trains a readout to produce
\begin{equation}
    \hat{y}(t+k) = \mathbf{W}_\mathrm{out} \, \mathbf{x}(t),  
\end{equation}
where \(\mathbf{x}(t)\) is the current reservoir state \cite{Jaeger2002,Lukosevicius2012}.  ESNs have shown a remarkable ability to learn and predict chaotic time series, which are highly sensitive to initial conditions and have complex nonlinear patterns. For example, ESNs can be trained on the Lorenz attractor time series and predict short-term future evolution accurately until the chaos-induced divergence kicks in. A landmark result by Pathak et al. \cite{pathak2018model} used ESNs to predict spatiotemporal chaos in the Kuramoto–Sivashinsky equation (a PDE with rich chaotic behavior). By using a large reservoir and parallel scheme, they could forecast the system’s state many Lyapunov times ahead, essentially until the inherent unpredictability of chaos dominates. This was a breakthrough because it demonstrated data-driven forecasting of a high-dimensional chaotic system without an explicit model, outperforming some traditional numerical approaches. Since then, ESNs have been applied to chaotic laser intensity signals, atmospheric turbulence data, and other chaotic systems. A common benchmark is the Mackey-Glass time series (a time-delay differential equation producing chaos) \cite{mackey1977oscillation} – ESNs quickly became known for outperforming other methods on short-term Mackey-Glass predictions. Another standard test is the Lorenz-96 weather model \cite{lorenz1996predictability}; ESN-based models have achieved state-of-the-art results in predicting this system and even incorporating uncertainty estimation via ensemble ESNs.

\item \textit{Synthetic memory benchmarks (NARMA):} The Nonlinear Auto-Regressive Moving Average (NARMA) benchmark \cite{atiya2000new} is a family of tasks frequently used to evaluate RC memory and nonlinear processing. In NARMA-$n$, the target output at time $t$ is a nonlinear function of the previous $n$ inputs and outputs. For instance, NARMA-10 is defined by a degree-10 polynomial relationship involving past inputs and outputs. These tasks are challenging because they require the network to maintain information about the last $n$ steps and compute a nonlinear combination. ESNs have been widely tested on NARMA tasks and consistently perform well, typically requiring a reservoir of size on the order of the task memory length. In fact, NARMA sequences are considered one of the “most important and widely used benchmark datasets” in ESN literature \cite{Rodan2011}. Studies show ESNs easily outperform linear models on NARMA and serve as a baseline to compare new RNN training algorithms. For example, Jaeger’s early ESN work showed excellent performance on NARMA-10 and NARMA-30 tasks, whereas fully trained RNNs struggled without specialized architectures \cite{Rodan2011}. The NARMA test also illustrates how spectral radius tuning affects memory: an ESN with spectral radius near 1 and sufficient units can achieve high performance on NARMA-$n$, while a smaller radius yields a rapid drop in performance for larger $n$ (since memory fades too fast).

 \item  \textit{Signal processing and communications:} ESNs have been applied to tasks such as channel equalization, radar signal prediction, and speech signal processing. Jaeger \& Haas  famously applied an ESN to a mobile communication channel equalization problem (mitigating multipath fading distortions) and achieved superior performance with far less training than conventional adaptive filters \cite{jaeger2004harnessing}. Another signal benchmark is the Santa Fe laser time series (a chaotic laser intensity dataset from a 1990s competition) \cite{weigend1994time} – ESNs were able to learn the laser’s dynamics and predict future intensity better than many classical methods. In audio processing, while LSMs are more common (due to spiking input), ESNs have been used for tasks like phoneme recognition, where the temporal pattern of speech audio must be classified. An ESN can serve as a front-end that maps the raw audio feature sequence into a high-dimensional trajectory, followed by a classifier. Results on spoken digit recognition and certain phoneme classification tasks showed ESNs performing comparably to early deep learning models, though modern end-to-end trained networks now surpass them. Reservoir computing is well-suited to classification and pattern recognition tasks involving sequential or streaming data. By treating the reservoir state as a \emph{feature space}, one can classify temporal patterns with a simple linear separator \cite{Verstraeten2007}.
ESNs and LSMs have been applied to phoneme or word classification from raw or pre-processed audio \cite{Triefenbach2013}. The reservoir dynamically encodes the time-varying amplitude and frequency characteristics of speech signals, while the linear readout assigns class labels.
Biomedical signals such as electroencephalogram (EEG) or electromyogram (EMG) often exhibit stochastic yet structured temporal patterns. ESNs can detect abnormal patterns, analyze mental states, or classify muscle movements with relatively few trainable parameters \cite{schrauwen2007overview}.
In adaptive filter scenarios, one aims to estimate an unknown or time-varying filter \(\mathbf{h}(t)\) from observation signals \(\mathbf{z}(t)\). While classical adaptive filters like LMS or RLS rely on linear updates, reservoir-based schemes can capture nonlinear relationships in \(\mathbf{z}(t)\) \cite{jaeger2004harnessing}.
 
 \item \textit{Financial and economic time series:} The rapid training of ESNs makes them suitable for financial forecasting where models may need frequent retraining as new data arrives. ESNs have been used for stock price prediction, volatility forecasting, and macro-economic indicator prediction. They often beat standard ARIMA and even some machine learning benchmarks on short-horizon forecasting, although like all methods, they struggle with the inherent noise in financial data. The advantage is that an ESN can be retrained on each day’s new data within milliseconds, enabling adaptive updates. Some studies combined ESNs with evolutionary algorithms to continually evolve the reservoir or output weights as market conditions changed.
Stock prices, exchange rates, and volatility indices, where ESNs have competed favorably against traditional models like ARIMA and even other neural networks \cite{Makarenkov2019}.
Real-time electricity load and renewable energy production forecasts, where short-term predictions benefit from RC’s capability to handle diverse, non-stationary inputs \cite{Gauthier2018}.

\item \textit{Control Systems and Robotics:}
Reservoir computing has been leveraged in both model-based and model-free control setups. The core idea is that the reservoir can emulate complex dynamical control laws without explicitly solving nonlinear optimization problems at run-time \cite{Antonelo2008}. 
In robotic control, one often needs a \emph{forward model} \( \hat{\mathbf{x}}(t+1) = f\bigl(\mathbf{x}(t), \mathbf{a}(t)\bigr)\), where \(\mathbf{x}(t)\) is the state and \(\mathbf{a}(t)\) is the control action. ESNs can learn \(f(\cdot)\) from sensorimotor data. Similarly, for \emph{inverse models}, given the desired next state \(\mathbf{x}_\mathrm{d}(t+1)\), an ESN can predict the required action \(\mathbf{a}(t)\).  
Combining ESNs with reinforcement learning algorithms enables policy learning for nonlinear, continuous control problems \cite{Sussillo2009}. The reservoir states provide a rich representation of the robot’s sensor history, helping the policy readout map sensor data to actions.
Adaptive reservoirs with intrinsic plasticity or partially trainable recurrent layers can handle non-stationary environments—e.g., wear-and-tear changes in robot joints, or time-varying external disturbances \cite{Lukosevicius2012}.

 \item \textit{Neuroscience and brain signal modeling:} Closing the loop back to neuroscience, reservoir computing has been used both to analyze neural data and to model neural circuits. On the data analysis side, ESNs have been applied to EEG signals for tasks like seizure prediction, cognitive state classification, or brain-computer interfaces \cite{yang2024brain}. For instance, an ESN can be fed raw EEG channels and trained to detect patterns corresponding to different mental states or events. Their fading memory is well-suited to capture EEG rhythms and transient patterns. Some work has shown ESNs achieving high accuracy in EEG-based emotion recognition and even predicting certain diseases from EEG time-series \cite{bouazizi2024novel,yang2024brain}. The “modular ESN” approach – splitting the reservoir into sub-reservoirs, possibly corresponding to different brain regions or frequency bands – has been explored to reflect the distributed processing of the brain and improve interpretability. On the modeling side, LSMs (spiking reservoirs) have been used as models of cortical microcircuits for computational neuroscience experiments. For example, researchers have modeled the prefrontal cortex working memory using an LSM where the fading activity of spikes in the reservoir holds transient information, analogous to neural persistent activity \cite{yan2024emerging}. By training appropriate readouts, these models can perform tasks like decision-making or sequence generation, offering insight into how random recurrent connectivity in the cortex combined with learning at synapses downstream could support cognitive functions. Another line of work attempts to use reservoir networks to understand neural data transformations – e.g. sensory processing streams. If an ESN can mimic the input-output behavior of a neural circuit (like the retina or cochlea), it suggests that the circuit may be exploiting a reservoir-like strategy. This has led to hypotheses that certain brain regions don’t need to fine-tune all recurrent connections; instead, plasticity might focus on output synapses (as RC suggests), which aligns with the idea of downstream readouts in biology (e.g. muscle control learning from fixed central pattern generators). Some research frameworks use reservoirs to simulate aspects of working memory and decision-making in neural circuits. By maintaining temporal information in high-dimensional neural states, reservoirs can model the time-dependent computations believed to underlie certain cognitive tasks \cite{Hinaut2014}.
Reservoir computing methods have also been applied to BCI tasks, where non-stationary EEG signals must be decoded in real time. The temporal flexibility and built-in noise robustness of ESNs can be advantageous for stable classification of motor imagery or cognitive states \cite{schrauwen2007overview}. While these ideas are still being tested, they highlight the fruitful exchange between RC and neuroscience: \textit{RC provides a functional framework to interpret neural dynamics}, and neuroscience provides inspiration to improve artificial reservoirs. 
 
 \item  \textit{Benchmark datasets and competitions:} In the RC community, a few benchmark tasks frequently appear to compare methods:
 \begin{enumerate}
     
\item 
Mackey-Glass chaotic time series \cite{mackey1977oscillation} (as mentioned earlier) – predict $x(t+\Delta)$ from past samples.
\item NARMA (orders 10, 20, 30) – evaluate memory capacity \cite{atiya2000new}.
\item Waveform generation – e.g. generate a Sine wave or a more complex oscillation from a cue (tests generative capability).
\item Isolated spoken digits \cite{Verstraeten2007} (for LSMs) – an early benchmark by Maass where an LSM had to classify spoken digit audio \cite{li2018biologically}.
\item Weather or power load time series – e.g. sunspot numbers (a classic nonlinear series of solar activity) \cite{Rodan2011}
 or wind power production data \cite{yan2024emerging}, to test multi-step forecasting.
 \end{enumerate}
\end{enumerate}

In many of these, ESNs have been top performers. For instance, on the sunspot number prediction (an irregular, quasi-periodic series), simple ESNs were able to outperform ARIMA and even some neural network approaches, capturing the cycles with lower error \cite{Rodan2011}. On IPNX radar signal tracking and Henon Map sequence prediction (another chaotic map), ESNs again have matched or beaten more complex models in several studies \cite{Rodan2011}. These benchmarks serve as a testing ground for new RC algorithms. It’s worth noting that for each success, there is usually a careful setting of ESN hyperparameters and often use of tricks like washout periods, state normalization, and regularization. As one 2023 systematic study put it, “while not as flexible as other RNN methods, RC has properties that make it a method of choice for these kinds of tasks”, citing the ability to set macro-scale network properties and the quick training as key advantages \cite{platt2022systematic}. Moreover, ESNs have been shown to not only make good predictions but sometimes model the system’s dynamics – for example, if a reservoir is trained to mimic a chaotic system, one can perturb the reservoir state and see a corresponding chaotic response, akin to the real system, indicating the ESN learned an underlying dynamical model \cite{platt2022systematic}. This property is crucial in scientific applications like weather forecasting, where one wants both a forecast and an understanding of stability/uncertainty. In neuroscience modeling, a fascinating application is using RC to decode neural signals. For example, an ESN can take inputs from hundreds of neurons recorded in motor cortex and predict an animal’s hand trajectory or muscle activity. The ESN’s reservoir essentially models the recurrent dynamics of the motor cortex (which we assume are present but we only record a subset), and the readout learns the mapping to behavior. Studies have found ESNs can achieve high decoding accuracy, sometimes better than linear decoders or even LSTMs given limited training data \cite{yang2024brain}. This is attributed to the ESN’s ability to model the temporal integration and coordination in neural firing. Such uses of RC in brain-machine interfaces are promising because they offer fast training (important when calibrating to a new session) and can potentially adapt online.

\subsection{Advantages and Practical Strengths of ESNs}

ESNs rose to prominence because they offer several compelling advantages over traditional RNN training:
\begin{enumerate}
    \item 

\textit{Fast and straightforward training:} Since only the output layer is trained (usually with linear regression), learning in an ESN is extremely fast and robust. There is no need for backpropagation through time, which can be slow and get stuck in poor local minima. Training an ESN often reduces to solving a single linear least-squares problem, for which efficient algorithms exist. This makes ESNs appealing for online learning scenarios or cases where retraining must be done frequently. The training complexity scales as $O(N_x^2 T)$ (for $T$ time steps of data and $N_x$ reservoir units, if solved via normal equations) or better with incremental methods – far cheaper than training an equivalent-sized RNN with gradient descent. As one study put it, reservoir computing enables RNNs to be “trained and deployed quickly and easily” \cite{platt2022systematic}. This efficiency also allows experimenting with very large reservoirs (hundreds or thousands of neurons) that would be infeasible to train with backpropagation.

   \item 
\textit{Low risk of overfitting and good generalization: }Because the vast majority of weights are fixed random values, the ESN effectively acts as a form of random feature generator. The only trained parameters are the linear output weights, which have a much smaller number of degrees of freedom than a full recurrent weight matrix. This reduction in trainable parameters can help generalization, especially in regimes of limited training data. With appropriate regularization (e.g. ridge regression), ESNs often avoid overfitting even when $N_x$ (number of features) is large relative to the amount of data. The random features, while not learned, tend to be diverse and, if the reservoir is well-designed, provide a basis that can approximate many functions of the input. This is analogous to random kitchen-sink or extreme learning approaches in feedforward networks.

   \item 
\textit{Rich dynamics and memory:} A well-tuned reservoir can exhibit a spectrum of time constants and nonlinear responses, endowing the ESN with a form of dynamic memory. The recurrent loops allow information to echo for several time steps, enabling the network to integrate information over time. While a single recurrent neuron is just a leaky integrator, a large network with recurrent connections can create much more complex, high-order memory of inputs. ESNs have been shown to attain high memory capacity, meaning they can reconstruct or utilize input signals from dozens of time steps in the past \cite{jaeger2004harnessing, Rodan2011}. This memory is “soft” (fading over time) but, combined with nonlinear mixing, is sufficient for a wide range of temporal tasks (from short-term signal processing to moderately long-horizon predictions). The user can also control memory vs. nonlinearity by adjusting parameters like the spectral radius and leaking rate.

   \item 
\textit{Ease of implementation and use:} ESNs  require only matrix multiplications and elementwise nonlinearities – operations that are easy to implement and fast on modern hardware. There is no need for complex gradient computations. Many open-source libraries (in Python, MATLAB, etc.) and tutorials exist, reflecting the relatively shallow learning curve to get started. As noted by Lukoševičius \cite{Lukosevicius2012}, ESNs are “conceptually simple and computationally inexpensive”, making them accessible to a wide audience. This ease has made ESNs a popular baseline for time-series tasks and a didactic tool for understanding recurrent networks.

   \item 
\textit{Interpretable linear readouts:} Since the only trained part is linear, the mapping from reservoir states to output can be analyzed more directly than in a fully nonlinear network. One can inspect the learned output weights to see which reservoir units (or which temporal features) are most influential for the task. This doesn’t fully solve interpretability (as the reservoir features themselves are complicated functions of the input), but it does simplify the last layer. In some applications, researchers treat the reservoir as a fixed nonlinear filter bank and focus on interpreting how the readout combines those filters to make decisions.

   \item 
\textit{Hardware friendliness and neuromorphic advantages:} The RC paradigm lends itself to non-von Neumann hardware implementations because only the fixed reservoir needs to be realized in analog or unconventional hardware, and training can happen off-line on a separate machine. This has led to reservoirs implemented with optical lasers, photonic circuits, spintronic devices, memristor crossbars, FPGAs, and other substrates. The fixed random weights can be embedded in material properties, and only the readout computation (a weighted sum) needs to be performed, which is trivial in hardware. This is attractive for low-power or fast computing: for example, a photonic ESN can operate at GHz speeds, processing time-series faster than electronic solutions. Similarly, neuromorphic chips (like Intel’s Loihi or analog CMOS circuits) can implement spiking reservoirs (LSMs) efficiently, leveraging RC to avoid costly synaptic weight training on chip. Ergo, ESNs can capitalize on “naturally available computational dynamics” of physical systems, using them as reservoirs and thereby offloading computation from digital processors. This is seen as a promising path for energy-efficient AI and is one reason RC has garnered interest in the neuromorphic computing community \cite{yan2024emerging}.

   \item 
\textit{Strong performance on certain tasks:} When properly tuned, ESNs have achieved state-of-the-art or highly competitive results on a variety of benchmarks, especially those involving chaotic time series and small training sets. For example, Jaeger and Haas \cite{jaeger2004harnessing} showed that ESNs could learn to generate complex periodic patterns and even outperformed carefully trained traditional RNNs on some benchmarks of nonlinear system identification \cite{Lukosevicius2012}. More recently, ESNs have been used to predict chaotic dynamical systems with accuracy matching or exceeding deep learning models, while using orders-of-magnitude less training time \cite{platt2022systematic}. While modern deep learning (e.g. LSTMs, Transformers) often surpass ESNs on very large datasets, ESNs remain highly competitive on problems where data is limited or fast training is required.

\end{enumerate}
Considering these advantages, it’s clear why ESNs sparked excitement in the 2000s as a way to “have your cake and eat it too” – harness the sequence-processing power of RNNs without the pain of difficult training. However, ESNs also come with a set of limitations and challenges, discussed next.

\subsection{Limitations and Challenges}

No free lunch comes without caveats. Several limitations of ESNs have been observed in practice:
\begin{enumerate}

 \item \textit{Need for careful hyperparameter tuning:} Perhaps the most commonly cited challenge is that ESN performance is highly sensitive to the choice of reservoir parameters (spectral radius, input scaling, bias distribution, leaking rate, etc.) and even the random initialization seed. A poorly tuned reservoir can lead to suboptimal performance – e.g. if the spectral radius is too low, the reservoir may forget inputs too quickly; if too high, the network may become unstable or output oscillatory predictions. Unlike feedforward networks where weight initialization is often “washed out” by training, in ESNs the random weights remain in place, so a bad draw can hurt. This means practitioners often must try multiple reservoirs and tune parameters via grid search or more sophisticated optimization such as Bayesian optimization. While tuning an ESN is far easier than training an RNN from scratch, it still requires expertise – Lukoševičius  warns that the apparent simplicity of ESNs “can be deceptive” and one must set a handful of global parameters “judiciously” for good results \cite{Lukosevicius2012}. Recently, automated hyperparameter search and theories linking reservoir spectra to task requirements have improved this situation, but it remains a user-side burden.

 \item 
\textit{Limited flexibility/adaptability of the reservoir:} Because the reservoir weights are fixed randomly, ESNs may not optimally represent the features needed for a given task, especially if the task has very specific dynamics. In a fully trained RNN (e.g. an LSTM or a backprop-trained network), the recurrent weights are tuned by gradient descent to create task-specific internal representations \cite{Jaeger2007,Steil2007}. ESNs forego this, so if the random reservoir isn’t well-aligned with the task, performance can suffer. As an example, if a task requires recognizing a very particular pattern over 50 time steps, a random reservoir might need to be quite large (many neurons) to have some subset of neurons that are sensitive to that pattern. A trained RNN might solve it with far fewer units by adjusting weights to “search” for that pattern \cite{platt2022systematic}. In essence, ESNs trade representational flexibility for training simplicity. Empirically, ESNs are sometimes outperformed by modern RNNs (e.g. gated architectures like LSTMs/GRUs) on complex benchmarks where large training data is available. ESNs also typically cannot do very long-term memory tasks (like remembering an event hundreds of time steps ago with no cue) as well as specialized architectures or fully-trained networks designed for that purpose.

 \item 
\textit{Linearity of readout:} The flip side of only training a linear output is that the final output is a linear combination of reservoir signals. If the reservoir hasn’t already transformed inputs in a way that makes the output linear separable/linearly related to the target, a linear readout may not suffice. In theory, a sufficiently large and nonlinear reservoir produces a rich set of basis functions so that linear combinations can approximate most functions (a bit like a random Fourier or polynomial basis). But in practice, there could be cases where a linear combination of the reservoir state is a limiting assumption. Some extensions address this by using a nonlinear readout (e.g. a shallow MLP or even a second-layer reservoir), at the cost of reintroducing a nonlinear training problem. It’s worth noting that including the input $u(n)$ in the readout (as often done) allows bypassing the reservoir for instantaneous effects, and including a bias allows arbitrary affine combinations, which often is enough. Still, one must ensure the reservoir is expressive enough that a linear mapper can extract the desired output.

\item
\textit{Stability and initialization issues:} If the reservoir weights are not chosen well, the ESN can suffer from instability (diverging activations) or from pathological dynamics (e.g. all neurons saturating at their maximum/minimum, providing almost no useful variation to the readout). Ensuring the echo state property by scaling the spectral radius is a primary safeguard, but when output feedback is used, or when one pushes the envelope (e.g. spectral radius very close to 1 or slight instability to get more memory), the ESN may become sensitive to noise or perturbation. In scenarios with streaming data, if input statistics shift (distributional change), a fixed ESN might become mis-calibrated – e.g. if inputs suddenly drive it into regions of state space it rarely saw during training, the linear readout might extrapolate poorly. Traditional RNNs could in principle adapt their weights to new regimes, but a fixed reservoir cannot, short of completely re-training the output or resetting internal states. There are techniques like adaptive bias scaling or inserting slow adaptation (via intrinsic plasticity) to mitigate this, but they add complexity.

 \item 
\textit{Efficiency at runtime:} While training is fast, using an ESN for inference still requires simulating potentially a large recurrent network. For example, an ESN with 1000 neurons will do on the order of 1000 multiplications and $\tanh$ evaluations per time-step, which is not a big deal on modern hardware (and trivially parallelizable), but a highly optimized deep network might do fewer operations especially if sequence length is short. In practice this is rarely a serious issue, and hardware implementations can drastically accelerate it, but it’s a consideration if comparing to simpler models.

 \item 
\textit{Lack of explicit long-term memory or modularity:} Tasks that require storing information for arbitrarily long times (like remembering a bit indefinitely until a reset signal) are not natural for standard ESNs, because the reservoir state will eventually forget due to fading memory. In fully trainable RNNs, one can insert memory cells or explicit loops that learn to maintain information (as in an LSTM’s gating mechanisms). ESNs by default rely on the reservoir’s eigen-spectrum for memory – to lengthen memory, one must push spectral radius to 1 (approaching instability) or make the network very large. Additionally, ESNs do not inherently separate timescales or subtasks; it’s a monolithic random network. If a problem is naturally modular (e.g. separate processes happening on different timescales), a random reservoir might not capture that structure efficiently. Recent research addresses this by building multiple reservoirs or hierarchical reservoirs (discussed in the next section) but the vanilla ESN does not have that capability. While RC systems can be robust to moderate changes in input statistics, significantly non-stationary environments can degrade performance. How best to adapt or reconfigure the reservoir to maintain the echo state property and adequate capacity for newly emerging patterns is still not fully resolved \cite{Jaeger2007,ehlers2025improving}.

 \item 
\textit{Current state of the art:} It should be noted that despite many successes, ESNs (and RC in general) have not achieved the same breadth of adoption as deep learning methods for large-scale tasks. The  tasks like language modeling, large-scale speech recognition, etc., are dominated by fully-trained models. RC has found more of a niche in smaller scale or specialized applications. A 2024 perspective noted that “large-scale industry-wide adoption of RC or a broadly convincing killer application beyond lab experiments are still not available”, despite its potential \cite{yan2024emerging}. This suggests that while RC is powerful, there remain gaps in scalability and integration that research is still addressing.

\end{enumerate}

\subsection{Future Research Directions}

As ESNs and reservoir computing move forward, several challenges and open research directions remain:
\begin{enumerate}
   
 \item 
\textit{Scalability and Large-Scale Reservoirs:} While ESNs can handle reservoirs of a few thousand neurons easily in software, scaling to millions of neurons (approaching brain-scale or very high-dimensional problems) is non-trivial. Memory and computation become an issue, and the random connectivity might need structure to be efficient. Future research may explore sparse reservoir representations, streaming implementations, or harnessing GPUs/TPUs better for reservoir simulation. On the hardware side, photonic and neuromorphic reservoirs offer a way to scale up without a huge energy cost, but connecting them to standard machine learning pipelines is a challenge. The community is interested in “reservoir chips” that could plug into computers as accelerators for temporal data processing.

 \item 
\textit{Improving Adaptability:} One limitation discussed is that fixed reservoirs don’t adapt to changing conditions or new tasks. A future direction is developing adaptive reservoirs that can evolve over time with simple rules, without losing the fast training advantage. There is already work on gating reservoirs or using feedback signals to slowly adjust reservoir weights in a unsupervised way (a bit like how the brain might adapt synapses while largely maintaining overall function). Another idea is meta-learning the reservoir: use optimization to find a good initial reservoir (or distribution of reservoirs) such that for a class of tasks, minimal output training yields good results. This could be done by evolutionary algorithms or differentiable methods (treating the reservoir as meta-parameters). Some recent studies have treated the random seed of an ESN as part of the hyperparameter search, effectively learning the graph topology or weight distribution that works best for a problem. Much of the research has focused on standard ESNs with simple \(\tanh\) or ReLU activations. Investigations into alternative, possibly domain-specific activation functions or alternative connectivity patterns (beyond sparse random) are promising yet still in progress \cite{grigoryeva2018echo}. RC’s reliance on a fixed, high-dimensional mapping of input sequences parallels the concept of random feature mappings in kernel methods \cite{Rahimi2008}. Hybrid approaches that combine reservoir dynamics with kernel-based learning may unify insights from both fields \cite{grigoryeva2018echo}. Though the reservoir is typically fixed or partially adapted, new methods for constrained reservoir optimization (e.g., spectral norm constraints, Jacobian-based regularization) can systematically balance stability (echo state property) and expressivity (high nonlinearity) \cite{Manjunath2013}. Future work can adapt interpretability techniques from feedforward networks (e.g., saliency maps, layer-wise relevance propagation) to recurrent reservoir structures. Such progress may illuminate how internal states encode temporal information and how these states evolve \cite{Tani2021,Solms2019}.

 \item 
\textit{Theoretical understanding:} Despite two decades of study, there is still more to learn theoretically about reservoir computing. For instance, a deeper understanding of the trade-off between memory and nonlinearity in reservoirs (often characterized by measures like memory capacity and Lyapunov exponents) could guide better design. The role of spectral radius vs. actual system Lyapunov spectrum needs more clarity – e.g., recent work examines reservoirs at the edge of chaos where spectral radius is ~1 but still stable in some driven sense. Also, the effect of various activation functions or neuron models on computational capacity is an open area; most ESNs use $\tanh$, but what if one uses ReLUs or other modern activations? Preliminary studies suggest ESNs with ReLU can work but may require different initialization (since ReLU is unbounded). Another theoretical frontier is understanding multi-layer (deep) reservoirs – can we derive echo state property conditions and capacity theorems for them? Some progress has been made, but a full theory analogous to deep neural network expressivity is not there yet for RC. Moreover, there’s interest in linking RC to other areas like control theory and filter theory: seeing reservoir computers as nonlinear filters provides a different mathematical lens (e.g. Volterra series expansions) to quantify what they compute.

 \item 
\textit{Benchmarking and standardization:} As reservoir computing diversifies, establishing standard benchmarks and evaluation methods is important. A recent review by Yilmaz et al. (2023) aimed to “review and critique the evaluation methods used in RC” and proposed a taxonomy of benchmark tasks. They emphasize the need for consistent metrics and tasks to compare RC models to deep learning models. Future work might involve community challenges or datasets specifically designed to push RC to its limits (similar to how image competitions drove CNN development). Especially with the rise of deep learning, it’s important to show clearly where RC excels (e.g. low data regime, low compute regime, certain signal types) and where it needs improvement, in order to justify further research and adoption.
\item
\textit{Integration with other learning paradigms:} One can imagine hybrid systems where an ESN is one component among many. For example, using an ESN as a front-end for a Reinforcement Learning (RL) agent dealing with partial observability: the ESN can maintain a hidden state embedding of the observation history, and a reinforcement learner (DQN, policy gradient, etc.) works on the ESN’s state. This can give the RL agent memory without having to use an LSTM (some works have tried this and found ESN front-ends both faster to train and sometimes more stable). Another integration is with probabilistic models: treating the ESN state as latent variables in a state-space model and performing probabilistic filtering. There’s growing interest in Bayesian echo state networks or ESNs that output uncertainty estimates, which could be valuable for critical applications like weather forecasting
 or medical trend prediction.

  \item 
\textit{Applications in new domains:} RC might find fertile ground in domains not yet fully explored. One such area is robotics – using ESNs as low-level controllers or sensor fusion modules for robots. Since ESNs are lightweight, they could run on robot hardware in real-time, and their ability to handle temporal patterns makes them good for things like predictive motor control or processing event-based camera streams. Indeed, some works show ESNs controlling robot arms or legged locomotion by learning from demonstration, with good generalization. Another domain is biology and medicine: apart from EEG, RC can be used for ECG analysis, detecting anomalies in heartbeat rhythms, or processing DNA sequences (treating them as sequences with certain patterns). The Internet of Things (IoT) is also promising: many IoT devices have to make sense of sensor time-series (temperature, vibration, etc.) with very limited computing resources – an ESN could be embedded in firmware to provide local analytics with minimal memory and power.

 \item 
\textit{Neuromorphic hardware and analog computing:} As technology moves beyond Moore’s law, RC is well-poised to leverage new hardware. The future might see optical reservoir processors as standalone devices for ultrafast signal processing (some prototypes exist, using delay-coupled lasers as reservoirs). Likewise, quantum reservoir computing has been proposed, where a quantum system’s exponentially large state space acts as a reservoir; small experiments have been done (like nuclear spin systems as reservoirs). While quantum RC is in infancy, it suggests even fundamental physics could be harnessed for computing in this way. Neuromorphic chips that emulate spiking networks (e.g. with memristive synapses) could implement massive LSMs with very low power – future research will likely focus on algorithms to program/training such systems effectively (perhaps adjusting input scaling or neuron biases to tune them). An emerging challenge is how to co-design hardware and reservoir: rather than treating hardware non-idealities as a quirk, design the reservoir shape (topology, weights) to fit well with the hardware’s strengths and weaknesses. This requires collaboration between materials scientists, physicists, and ML researchers. Biologically inspired neurons and spike-based computation are natural for implementing Liquid State Machines in neuromorphic hardware. Quantum reservoir computing, wherein the reservoir is realized by a quantum system, may enable exponential state-space expansions under certain conditions \cite{Fujii2017}. These directions could open up new application domains requiring extreme parallelism or quantum-based speedups.

 \item 
\textit{Interdisciplinary applications:} Reservoir computing has started to intersect with fields like fluid dynamics (for identifying flow regimes from sensor streams), power systems (grid load forecasting and anomaly detection), and even art (e.g. generating music sequences or interactive installations reacting to live inputs). Each new domain might pose unique challenges – e.g. in fluid dynamics, inputs might be high-dimensional fields requiring specialized input layering; in creative AI, the integration of human feedback might require adaptive readouts. The versatility of ESNs to serve as a component in these systems will be tested. If successful, it could broaden the impact of RC significantly.

\end{enumerate}

\section{Closing Remarks}\label{sec:conclusion}

Reservoir computing has emerged as a powerful framework for processing temporal and dynamical data, offering:
a \emph{computationally lightweight} training regime (linear or ridge regression on readout weights),
 \emph{rich dynamical capabilities} through large-scale recurrent networks that need minimal (or no) training internally, and
\emph{robustness} to moderate noise and input perturbations, thanks to the echo state property.

Its applications—ranging from time series forecasting and signal processing to control and neuroscience—demonstrate how the reservoir’s high-dimensional state space can embed complex temporal features so that a simple linear output layer can yield accurate predictions or classifications. Nevertheless, challenges remain in reservoir design, scalability to very large problem sizes, interpretability of internal states, and adaptation to non-stationary settings. Addressing these gaps requires continued research at the intersection of dynamical systems theory, stochastic optimization, and neuromorphic hardware design. 
Ergo, reservoir computing stands as a compelling paradigm that unites simplicity in training with the power of recurrent dynamics. Its relevance seems poised to grow, particularly as data-driven methods continue to converge with insights from neuroscience, quantum physics, and broader machine learning. 
The future holds exciting possibilities: we may see ESNs operating at the heart of neuromorphic chips, collaborating with deep networks in hybrid AI systems, and shedding light on how natural brains compute. Addressing current challenges of scalability, adaptability, and integration will be key. With ongoing research bridging theory and hardware, ESNs are likely to remain a relevant and evolving paradigm in sequential data processing. The reservoir of ideas is far from running dry – in fact, it is echoing into new territories of science.

\vspace{0.5em}

\begin{center}
\begin{tikzpicture}
\draw[gray, thick] (0,0)--(2,0);
\draw[gray, thick] (3,0) circle (0.1);
\draw[gray, thick] (4,0)--(6,0);
\end{tikzpicture}
\end{center}

\vspace{0.5em}

\section*{Acknowledgements} We express our deep appreciation to \textsc{Dr.\ Anima Nagar} (Department of Mathematics, IIT Delhi) for her sustained and stimulating discussions that have profoundly influenced the symbolic and dynamical perspectives of this work. In particular, her pioneering contributions to the theory of hyperspace dynamics \cite{nagar2011combined,nagar2014reflections,nagar2017dynamics,nagar2016variations,nagar2022revisiting} have served as an intellectual scaffold for examining the chaotic evolution and structural intricacies central to several of our other neural-network frameworks. Her pedagogical depth has enriched not only the technical content of our study but also its conceptual orientation toward structurally invariant characterizations of complex dynamics in recurrent architectures.

We are also indebted to \textsc{Dr.\ Joseph Auslander} (Department of Mathematics, University of Maryland) for his generous insights into the foundational constructs of topological dynamics. His seminal work on minimal flows, enveloping semigroups, and proximality \cite{auslander1988minimal,akin2003almost} has provided a  theoretical bedrock upon which many of our formal results on chaotic recurrence and topological invariants have been developed. The influence of his mathematical vision—particularly his treatment of distal and equicontinuous dynamics—resonates with the structural properties of the attractors we analyze, as well as with the methodological frameworks we introduce to unify dynamical-systems theory and neural computation in forthcoming investigations.

\bibliographystyle{unsrt}
\bibliography{ref}

\begin{thebibliography}{100}

\bibitem{jaeger2001echo}
Herbert Jaeger.
\newblock The “echo state” approach to analysing and training recurrent
  neural networks.
\newblock {\em GMD Technical Report}, 148:1--47, 2001.

\bibitem{maass2002real}
Wolfgang Maass, Thomas Natschl\"{a}ger, and Henry Markram.
\newblock Real-time computing without stable states: A new framework for neural
  computation based on perturbations.
\newblock {\em Neural Computation}, 14(11):2531--2560, 2002.

\bibitem{yan2024emerging}
Mingkai Yan, Cheng Huang, Peter Bienstman, Miguel~C. Soriano, Gouhei Tanaka,
  and Daniel~J. Gauthier.
\newblock Emerging opportunities and challenges for the future of reservoir
  computing.
\newblock {\em Nature Communications}, 15:2056, 2024.

\bibitem{Verstraeten2007}
David Verstraeten, Benjamin Schrauwen, Michiel D'Haene, and Dirk Stroobandt.
\newblock An experimental unification of reservoir computing methods.
\newblock {\em Neural Networks}, 20(3):391--403, 2007.

\bibitem{lukovsevivcius2009reservoir}
Mantas Luko{\v{s}}evi{\v{c}}ius and Herbert Jaeger.
\newblock Reservoir computing approaches to recurrent neural network training.
\newblock {\em Computer Science Review}, 3(3):127--149, 2009.

\bibitem{tanaka2019recent}
Gouhei Tanaka, Toshiyuki Yamane, Jean-Benoit H{\'e}roux, Ryosho Nakane, Naoki
  Kanazawa, Seiji Takeda, Hidetoshi Numata, Daiju Nakano, and Akira Hirose.
\newblock Recent advances in physical reservoir computing: A review.
\newblock {\em Neural Networks}, 115:100--123, 2019.

\bibitem{jaeger2004harnessing}
Herbert Jaeger and Harald Haas.
\newblock Harnessing nonlinearity: Predicting chaotic systems and saving energy
  in wireless telecommunication.
\newblock {\em Science}, 304:78--80, 2004.

\bibitem{yildiz2012re}
Izzet Yildiz, Herbert Jaeger, and Stefan~J. Kiebel.
\newblock Re-visiting the echo state property.
\newblock {\em Neural Networks}, 35:1--9, 2012.

\bibitem{schrauwen2007overview}
Benjamin Schrauwen, David Verstraeten, and Jan~Van Campenhout.
\newblock An overview of reservoir computing: Theory, applications and
  implementations.
\newblock In {\em Proceedings of the 15th European Symposium on Artificial
  Neural Networks (ESANN 2007)}, pages 471--482, Bruges, Belgium, 2007. d-side
  publications.

\bibitem{gallicchio2017deep}
Claudio Gallicchio, Alessio Micheli, and Luca Pedrelli.
\newblock Deep reservoir computing: A critical review.
\newblock {\em Neurocomputing}, 268:87--99, 2017.

\bibitem{dominey1995corticostriatal}
Peter Dominey, Michael Arbib, and Jean-Paul Joseph.
\newblock A model of corticostriatal plasticity for learning oculomotor
  associations and sequences.
\newblock {\em Journal of Cognitive Neuroscience}, 7(3):311--336, 1995.

\bibitem{Hirsch2013}
Morris~W. Hirsch, Stephen Smale, and Robert~L. Devaney.
\newblock {\em Differential Equations, Dynamical Systems, and an Introduction
  to Chaos}.
\newblock Academic Press, 3rd edition, 2013.

\bibitem{Katok1995}
Anatole Katok and Boris Hasselblatt.
\newblock {\em Introduction to the Modern Theory of Dynamical Systems}.
\newblock Cambridge University Press, 1995.

\bibitem{Poincare1890}
Henri Poincaré.
\newblock Sur le problème des trois corps et les équations de la dynamique.
\newblock {\em Acta Mathematica}, 13:1--270, 1890.

\bibitem{coddington1955theory}
Earl~A. Coddington and Norman Levinson.
\newblock {\em Theory of Ordinary Differential Equations}.
\newblock McGraw-Hill, New York, 1955.

\bibitem{kailath1980linear}
Thomas Kailath.
\newblock {\em Linear Systems}.
\newblock Prentice-Hall, Englewood Cliffs, NJ, 1980.

\bibitem{zabczyk2008mathematical}
Jerzy Zabczyk.
\newblock {\em Mathematical Control Theory: An Introduction}.
\newblock Birkh{\"a}user, Boston, 2 edition, 2008.

\bibitem{lorenz1963deterministic}
Edward~N Lorenz.
\newblock Deterministic nonperiodic flow.
\newblock {\em Journal of the Atmospheric Sciences}, 20(2):130--141, 1963.

\bibitem{sparrow1982lorenz}
Colin Sparrow.
\newblock {\em The Lorenz Equations: Bifurcations, Chaos, and Strange
  Attractors}.
\newblock Springer, 1982.

\bibitem{tucker1999rigorous}
Warwick Tucker.
\newblock The lorenz attractor exists.
\newblock {\em Comptes Rendus de l'Académie des Sciences-Series
  I-Mathematics}, 328(12):1197--1202, 1999.

\bibitem{rossler1976equation}
Otto Rossler.
\newblock An equation for continuous chaos.
\newblock {\em Physics Letters A}, 57:397--398, 07 1976.

\bibitem{chen1999yet}
Guanrong Chen and Tetsushi Ueta.
\newblock Yet another chaotic attractor.
\newblock {\em International Journal of Bifurcation and Chaos - IJBC},
  9:1465--1466, 07 1999.

\bibitem{chua1986chaos}
Christophe Letellier.
\newblock {\em Chaos in Electronic Circuits: Chua's Contribution
  (1980–2000)}, pages 211--235.
\newblock World Scientific, 2013.

\bibitem{mackey1977oscillation}
Michael~C. Mackey and Leon Glass.
\newblock Oscillation and chaos in physiological control systems.
\newblock {\em Science}, 197(4300):287--289, 1977.

\bibitem{guckenheimer2013nonlinear}
John Guckenheimer and Philip Holmes.
\newblock {\em Nonlinear Oscillations, Dynamical Systems, and Bifurcations of
  Vector Fields}.
\newblock Springer, New York, 1983.
\newblock Reprint/Latest Edition 2013.

\bibitem{strogatz2018nonlinear}
Steven~H. Strogatz.
\newblock {\em Nonlinear Dynamics and Chaos: With Applications to Physics,
  Biology, Chemistry, and Engineering}.
\newblock CRC Press, Boca Raton, FL, 2 edition, 2018.

\bibitem{kuznetsov2013elements}
Yuri~A. Kuznetsov.
\newblock {\em Elements of Applied Bifurcation Theory}, volume 112 of {\em
  Applied Mathematical Sciences}.
\newblock Springer, New York, NY, 3rd edition, 2013.

\bibitem{marsden2013introduction}
Michael Brin and Garrett Stuck.
\newblock {\em Introduction to Dynamical Systems}.
\newblock Cambridge University Press, 2015.

\bibitem{takens1981detecting}
Floris Takens.
\newblock Detecting strange attractors in turbulence.
\newblock In {\em Dynamical Systems and Turbulence, Warwick 1980}, pages
  366--381. Springer, 1981.

\bibitem{wiggins2003introduction}
Stephen Wiggins.
\newblock {\em Introduction to Applied Nonlinear Dynamical Systems and Chaos},
  volume~2 of {\em Texts in Applied Mathematics}.
\newblock Springer, New York, NY, 2nd edition, 2003.

\bibitem{ott2002chaos}
Edward Ott.
\newblock {\em Chaos in Dynamical Systems}.
\newblock Cambridge University Press, Cambridge, UK, 2nd edition, 2002.

\bibitem{bertschinger2004real}
Nils Bertschinger and Thomas Natschl{\"a}ger.
\newblock Real-time computation at the edge of chaos in recurrent neural
  networks.
\newblock {\em Neural Computation}, 16(7):1413--1436, 2004.

\bibitem{boedecker2012information}
Joschka Boedecker, Oliver Obst, Joseph~T. Lizier, Norbert~Michael Mayer, and
  Minoru Asada.
\newblock Information processing in echo state networks at the edge of chaos.
\newblock {\em Theory in Biosciences}, 131(3):205--213, 2012.

\bibitem{Lukosevicius2012}
Mantas Luko{\v s}evi{\v c}ius.
\newblock A practical guide to applying echo state networks.
\newblock In Gr{\'e}goire Montavon, Genevi{\`e}ve~B. Orr, and Klaus-Robert
  M{\"u}ller, editors, {\em Neural Networks: Tricks of the Trade}, volume 7700
  of {\em Lecture Notes in Computer Science}, pages 659--686. Springer, Berlin,
  Heidelberg, 2012.

\bibitem{Boyd1985}
Stephen Boyd and Leon Chua.
\newblock Fading memory and the problem of approximating nonlinear operators
  with volterra series.
\newblock {\em IEEE Transactions on Circuits and Systems}, 32(11):1150--1161,
  1985.

\bibitem{grigoryeva2018echo}
Lyudmila Grigoryeva and Juan-Pablo Ortega.
\newblock Echo state networks are universal.
\newblock {\em Neural Networks}, 108:495--508, 2018.

\bibitem{rumelhart1985learning}
David~E. Rumelhart, Geoffrey~E. Hinton, and Ronald~J. Williams.
\newblock Learning internal representations by error propagation.
\newblock {\em Technical Report, Institute for Cognitive Science, University of
  California, San Diego}, 1985.

\bibitem{williams1989learning}
Ronald~J. Williams and David Zipser.
\newblock A learning algorithm for continually running fully recurrent neural
  networks.
\newblock {\em Neural Comput.}, 1(2):270–280, June 1989.

\bibitem{catfolis1993method}
Thierry Catfolis.
\newblock A method for improving the real-time recurrent learning algorithm.
\newblock {\em Neural Networks}, 6:807--821, 1993.

\bibitem{ehlers2025improving}
Peter~J. Ehlers, Hendra~I. Nurdin, and Daniel Soh.
\newblock Improving the performance of echo state networks through state
  feedback.
\newblock {\em Neural Networks}, 184:107101, 2025.

\bibitem{Jaurigue2024}
Lina Jaurigue and Kathy Lüdge.
\newblock Reducing reservoir computer hyperparameter dependence by external
  timescale tailoring.
\newblock {\em Neuromorphic Computing and Engineering}, 4(1):014001, 2024.

\bibitem{Hulser2022}
Tobias Hülser, Jan Schröder, Elizaveta Gribkova, and Ingo Fischer.
\newblock Deriving task specific performance from the information processing
  capacity of a reservoir computer.
\newblock {\em Nanophotonics}, 12(5):937--947, 2022.

\bibitem{maass1997networks}
Wolfgang Maass.
\newblock Networks of spiking neurons: The third generation of neural network
  models.
\newblock {\em Neural Networks}, 10(9):1659--1671, 1997.

\bibitem{malcolm2023snn}
Kai Malcolm and Josue Casco-Rodriguez.
\newblock A comprehensive review of spiking neural networks: Interpretation,
  optimization, efficiency, and best practices.
\newblock {\em arXiv preprint arXiv:2303.10780}, 2023.

\bibitem{kumarasinghe2021brain}
Kaushalya Kumarasinghe, Nikola Kasabov, and Denise Taylor.
\newblock Brain-inspired spiking neural networks for decoding and understanding
  muscle activity and kinematics from electroencephalography signals during
  hand movements.
\newblock {\em Scientific Reports}, 11(1):2486, 2021.

\bibitem{dutta2017leaky}
Sangya Dutta, Vinay Kumar, Aditya Shukla, Nihar~R. Mohapatra, and Udayan
  Ganguly.
\newblock Leaky integrate and fire neuron by charge-discharge dynamics in
  floating-body mosfet.
\newblock {\em Scientific Reports}, 7(1):8257, 2017.

\bibitem{hodgkin1952quantitative}
A.~L. Hodgkin and A.~F. Huxley.
\newblock A quantitative description of membrane current and its application to
  conduction and excitation in nerve.
\newblock {\em The Journal of Physiology}, 117:500--544, 1952.

\bibitem{wyffels2008improving}
Benjamin Schrauwen, Marion Wardermann, David Verstraeten, Jochen~J. Steil, and
  Dirk Stroobandt.
\newblock Improving reservoirs using intrinsic plasticity.
\newblock {\em Neurocomputing}, 71:1159--1171, 2008.

\bibitem{liu2024minimum}
J.~Liu, X.~Xu, and E.~Li.
\newblock A minimum complexity interaction echo state network.
\newblock {\em Neural Computing and Applications}, 36:4013--4026, 2024.

\bibitem{watts1998collective}
Duncan~J Watts and Steven~H Strogatz.
\newblock Collective dynamics of ‘small-world’ networks.
\newblock {\em Nature}, 393(6684):440--442, 1998.

\bibitem{maslennikov2017small}
Matthew Dale, Simon O’Keefe, Anthony Sebald, and Susan Stepney.
\newblock Reservoir computing quality: Connectivity and topology.
\newblock {\em Natural Computing}, 20(2):205--216, 2021.

\bibitem{kim2007small}
Yuji Kawai, Jihoon Park, and Minoru Asada.
\newblock A small-world topology enhances the echo state property and signal
  propagation in reservoir computing.
\newblock {\em Neural Networks}, 112:15--23, 2019.

\bibitem{pathak2018model}
Jaideep Pathak, Brian Hunt, Michelle Girvan, Zhixin Lu, and Edward Ott.
\newblock Model-free prediction of large spatiotemporally chaotic systems from
  data: A reservoir computing approach.
\newblock {\em Phys. Rev. Lett.}, 120(2):024102, 2018.

\bibitem{lecun2015deep}
Yann LeCun, Yoshua Bengio, and Geoffrey Hinton.
\newblock Deep learning.
\newblock {\em Nature}, 521(7553):436--444, 2015.

\bibitem{Steil2007}
Jochen~J. Steil.
\newblock Online reservoir adaptation by intrinsic plasticity for
  backpropagation–decorrelation and echo state learning.
\newblock {\em Neural Networks}, 20(3):353--364, 2007.
\newblock Echo State Networks and Liquid State Machines.

\bibitem{triesch2005gradient}
Jochen Triesch.
\newblock A gradient rule for the plasticity of a neuron’s intrinsic
  excitability.
\newblock In {\em International Conference on Artificial Neural Networks},
  pages 65--70. Springer, 2005.

\bibitem{van2014renyi}
Tim Van~Erven and Peter Harremos.
\newblock R{\'e}nyi divergence and kullback-leibler divergence.
\newblock {\em IEEE Transactions on Information Theory}, 60(7):3797--3820,
  2014.

\bibitem{koester2024attention}
Felix K\"oster, Kazutaka Kanno, Jun Ohkubo, and Atsushi Uchida.
\newblock Attention-enhanced reservoir computing.
\newblock {\em Phys. Rev. Appl.}, 22:014039, Jul 2024.

\bibitem{Jaeger2007}
Herbert Jaeger.
\newblock Discovering multiscale dynamical features with hierarchical echo
  state networks.
\newblock Technical Report TR-31, Jacobs University, Bremen, Germany, 2007.

\bibitem{panahi2024adaptable}
Shirin Panahi and Ying-Cheng Lai.
\newblock Adaptable reservoir computing: A paradigm for model-free data-driven
  prediction of critical transitions in nonlinear dynamical systems.
\newblock {\em Chaos}, 34(5):051501, 2024.

\bibitem{Butcher2013}
Jonathan~B. Butcher, David Verstraeten, Benjamin Schrauwen, Chris~R. Day, and
  Peter~W. Haycock.
\newblock Reservoir computing and extreme learning machines for non-linear
  time-series data analysis.
\newblock {\em Neural Networks}, 38:76--89, 2013.

\bibitem{Triefenbach2013}
Fabian Triefenbach, Azarakhsh Jalalvand, Benjamin Schrauwen, and Jean-Pierre
  Martens.
\newblock Phoneme recognition with large hierarchical reservoirs.
\newblock In {\em Neural Information Processing Systems}, 2010.

\bibitem{Jaeger2002}
Herbert Jaeger.
\newblock Adaptive nonlinear system identification with echo state networks.
\newblock In {\em Advances in Neural Information Processing Systems (NIPS)},
  volume~15, pages 593--600, 2002.

\bibitem{White2004}
Olivia~L. White, Daniel~D. Lee, and Haim Sompolinsky.
\newblock Short-term memory in orthogonal neural networks.
\newblock {\em Physical Review Letters}, 92(14):148102, 2004.

\bibitem{Dambre2012}
Joni Dambre, David Verstraeten, Benjamin Schrauwen, and Serge Massar.
\newblock Information processing capacity of dynamical systems.
\newblock {\em Scientific Reports}, 2:514, 2012.

\bibitem{Chen2019}
Namig~J. Guliyev and Vugar~E. Ismailov.
\newblock On the approximation by single hidden layer feedforward neural
  networks with fixed weights.
\newblock {\em Neural Networks}, 98:296--304, 2018.

\bibitem{cybenko1989approximation}
G.~Cybenko.
\newblock Approximation by superpositions of a sigmoidal function.
\newblock {\em Mathematics of Control, Signals, and Systems}, 2(4):303--314,
  dec 1989.

\bibitem{Rodan2011}
Ali Rodan and Peter Tiňo.
\newblock Minimum complexity echo state network.
\newblock {\em IEEE Transactions on Neural Networks}, 22:131--144, 2011.

\bibitem{bartlett2002rademacher}
Peter~L. Bartlett and Shahar Mendelson.
\newblock Rademacher and gaussian complexities: Risk bounds and structural
  results.
\newblock {\em Journal of Machine Learning Research}, 3:463--482, nov 2002.

\bibitem{Sontag1995}
Hava~T. Siegelmann and Eduardo~D. Sontag.
\newblock On the computational power of neural nets.
\newblock In {\em Proceedings of the Fifth Annual Workshop on Computational
  Learning Theory}, COLT '92, page 440–449, New York, NY, USA, 1992.
  Association for Computing Machinery.

\bibitem{Buesing2010}
Lars Buesing, Johannes Bill, Bernhard Nessler, and Wolfgang Maass.
\newblock Neural dynamics as sampling: A model for stochastic computation in
  recurrent networks of spiking neurons.
\newblock {\em PLoS Computational Biology}, 7, 2011.

\bibitem{hoerl1970ridge}
Arthur~E Hoerl and Robert~W Kennard.
\newblock Ridge regression: Biased estimation for nonorthogonal problems.
\newblock {\em Technometrics}, 12(1):55--67, 1970.

\bibitem{tikhonov1977solutions}
Andrey~N. Tikhonov and Vasiliy~Y. Arsenin.
\newblock {\em Solutions of Ill-Posed Problems}.
\newblock V. H. Winston \& Sons, Washington, D.C.: John Wiley \& Sons, New
  York, 1977.
\newblock Translated from the Russian, Preface by translation editor Fritz
  John, Scripta Series in Mathematics.

\bibitem{ohkubo2024reservoir}
Akira Ohkubo and Masashi Inubushi.
\newblock Reservoir computing with generalized readout based on generalized
  synchronization.
\newblock {\em Scientific Reports}, 14:30918, 2024.

\bibitem{wyffels2008stable}
Francis Wyffels, Benjamin Schrauwen, and Dirk Stroobandt.
\newblock Stable output feedback in reservoir computing using ridge regression.
\newblock In {\em International conference on artificial neural networks},
  pages 808--817. Springer, 2008.

\bibitem{dutoit2009pruning}
Xavier Dutoit, Benjamin Schrauwen, Jan Van~Campenhout, Dirk Stroobandt, Hendrik
  Van~Brussel, and Marnix Nuttin.
\newblock Pruning and regularization in reservoir computing.
\newblock {\em Neurocomputing}, 72(7-9):1534--1546, 2009.

\bibitem{zou2005regularization}
Hui Zou and Trevor Hastie.
\newblock Regularization and variable selection via the elastic net.
\newblock {\em Journal of the Royal Statistical Society: Series B (Statistical
  Methodology)}, 67(2):301--320, 2005.

\bibitem{Jaeger2014}
Herbert Jaeger.
\newblock Controlling recurrent neural networks by conceptors.
\newblock {\em ArXiv}, abs/1403.3369, 2014.

\bibitem{Gupta2022ModelfreeFO}
Vikrant Gupta, Larry K.~B. Li, Shiyi Chen, and Minping Wan.
\newblock Model-free forecasting of partially observable spatiotemporally
  chaotic systems.
\newblock {\em Neural networks : the official journal of the International
  Neural Network Society}, 160:297--305, 2022.

\bibitem{lu2018attractor}
Zhixin Lu, Brian~R. Hunt, and Edward Ott.
\newblock Attractor reconstruction by machine learning.
\newblock {\em Chaos: An Interdisciplinary Journal of Nonlinear Science},
  28(6):061104, 2018.

\bibitem{jalalvand2015real}
Azarakhsh Jalalvand, Glenn Van~Wallendael, and Rik Van De~Walle.
\newblock Real-time reservoir computing network-based systems for detection
  tasks on visual contents.
\newblock In {\em 2015 7th International Conference on Computational
  Intelligence, Communication Systems and Networks}, pages 146--151, 2015.

\bibitem{Moses2018}
David~A. Moses, Matthew~K. Leonard, and Edward~F. Chang.
\newblock Real-time classification of auditory sentences using evoked cortical
  activity in humans.
\newblock {\em Journal of Neural Engineering}, 15(3):036005, 2018.

\bibitem{Zhai2023ResonanceML}
Zheng-Meng Zhai, Ling-Wei Kong, and Ying-Cheng Lai.
\newblock Emergence of a resonance in machine learning.
\newblock {\em Physical Review Research}, 5(3):033127, 2023.

\bibitem{pathak2017using}
Jaideep Pathak, Zhixin Lu, Brian~R. Hunt, Michelle Girvan, and Edward Ott.
\newblock Using machine learning to replicate chaotic attractors and calculate
  lyapunov exponents from data.
\newblock {\em Chaos: An Interdisciplinary Journal of Nonlinear Science},
  27(12):121102, 2017.

\bibitem{carroll2001quantifying}
J.~G. Caputo and P.~Atten.
\newblock Metric entropy: An experimental means for characterizing and
  quantifying chaos.
\newblock {\em Phys. Rev. A}, 35:1311--1316, Feb 1987.

\bibitem{Toker2020}
Daniel Toker, Friedrich~T. Sommer, and Mark D'Esposito.
\newblock A simple method for detecting chaos in nature.
\newblock {\em Communications Biology}, 3(1):11, 2020.

\bibitem{parlitz1992prediction}
Ulrich Parlitz and Alexander Hornstein.
\newblock Dynamical prediction of chaotic time series.
\newblock {\em Chaos and Complexity Letters}, 1, 01 2005.

\bibitem{Hammer2019}
Matthew Dale, Susan Stepney, Julian~F. Miller, and Martin Trefzer.
\newblock Reservoir computing in materio: An evaluation of configuration
  through evolution.
\newblock In {\em 2016 IEEE Symposium Series on Computational Intelligence
  (SSCI)}, pages 1--8, 2016.

\bibitem{butcher2008numerical}
J.C. Butcher.
\newblock {\em Numerical Methods for Ordinary Differential Equations}.
\newblock John Wiley \& Sons, 2008.

\bibitem{farmer1982chaotic}
J.D. Farmer.
\newblock Chaotic attractors of an infinite-dimensional dynamical system.
\newblock {\em Physica D: Nonlinear Phenomena}, 4(3):366--393, 1982.

\bibitem{wolf1985determining}
Alan Wolf, Jack~B Swift, Harry~L Swinney, and John~A Vastano.
\newblock Determining lyapunov exponents from a time series.
\newblock {\em Physica D: Nonlinear Phenomena}, 16(3):285--317, 1985.

\bibitem{sano1985measurement}
M.~Sano and Y.~Sawada.
\newblock Measurement of the lyapunov spectrum from a chaotic time series.
\newblock {\em Physical Review Letters}, 55(10):1082, 1985.

\bibitem{bilotta2008gallery}
Eleonora Bilotta and Pietro Pantano.
\newblock {\em A Gallery of Chua Attractors}.
\newblock World Scientific, 2008.

\bibitem{Jaurigue2022}
Lina Jaurigue and Kathy L{\"u}dge.
\newblock Connecting reservoir computing with statistical forecasting and deep
  neural networks.
\newblock {\em Nature Communications}, 13(1):227, 2022.

\bibitem{lorenz1996predictability}
E.N. Lorenz.
\newblock Predictability: a problem partly solved.
\newblock {\em Seminar on Predictability, 4-8 September 1995}, 1:1--18, 1995
  1995.

\bibitem{atiya2000new}
A.F. Atiya and A.G. Parlos.
\newblock New results on recurrent network training: unifying the algorithms
  and accelerating convergence.
\newblock {\em IEEE Transactions on Neural Networks}, 11(3):697--709, 2000.

\bibitem{weigend1994time}
Andreas~S. Weigend and Neil~A. Gershenfeld.
\newblock The santa fe time series competition.
\newblock In Andreas~S. Weigend and Neil~A. Gershenfeld, editors, {\em Time
  Series Prediction: Forecasting the Future and Understanding the Past},
  volume~XV of {\em Santa Fe Institute Studies in the Sciences of Complexity},
  pages 3--23. Addison-Wesley, 1994.

\bibitem{Makarenkov2019}
Taehwan Kim and Brian~R. King.
\newblock Time series prediction using deep echo state networks.
\newblock {\em Neural Comput. Appl.}, 32(23):17769–17787, December 2020.

\bibitem{Gauthier2018}
Daniel~J. Gauthier, Erik Bollt, Austin Griffith, and Wendson A.~S. Barbosa.
\newblock Next generation reservoir computing.
\newblock {\em Nature Communications}, 12(1):5564, 2021.

\bibitem{Antonelo2008}
E.~A. Antonelo, B.~Schrauwen, and D.~Stroobandt.
\newblock Event detection and localization for small mobile robots using
  reservoir computing.
\newblock {\em Neural Networks}, 21(6):862--871, 2008.

\bibitem{Sussillo2009}
David Sussillo and L.~F. Abbott.
\newblock Generating coherent patterns of activity from chaotic neural
  networks.
\newblock {\em Neuron}, 63(4):544--557, 2009.

\bibitem{yang2024brain}
L.~Yang, Z.~Wang, G.~Wang, L.~Liang, M.~Liu, and J.~Wang.
\newblock Brain-inspired modular echo state network for eeg-based emotion
  recognition.
\newblock {\em Frontiers in Neuroscience}, 18:1305284, Mar 2024.

\bibitem{bouazizi2024novel}
Samar Bouazizi and Hela Ltifi.
\newblock Novel diversified echo state network for improved accuracy and
  explainability of eeg-based stroke prediction.
\newblock {\em Information Systems}, 120:102317, 2024.

\bibitem{Hinaut2014}
Xavier Hinaut and Peter~F. Dominey.
\newblock On-line processing of grammatical structure using reservoir
  computing.
\newblock In {\em Proceedings of the 22nd International Conference on
  Artificial Neural Networks and Machine Learning - Volume Part I}, ICANN'12,
  page 596–603, Berlin, Heidelberg, 2012. Springer-Verlag.

\bibitem{li2018biologically}
Kan Li and José~C. Príncipe.
\newblock Biologically-inspired spike-based automatic speech recognition of
  isolated digits over a reproducing kernel hilbert space.
\newblock {\em Frontiers in Neuroscience}, 12:194, 2018.

\bibitem{platt2022systematic}
Jason~A. Platt, Stephen~G. Penny, Timothy~A. Smith, Tse-Chun Chen, and
  Henry~D.I. Abarbanel.
\newblock A systematic exploration of reservoir computing for forecasting
  complex spatiotemporal dynamics.
\newblock {\em Neural Networks}, 153:530--552, 2022.

\bibitem{Rahimi2008}
Ali Rahimi and Benjamin Recht.
\newblock Random features for large-scale kernel machines.
\newblock In {\em Advances in Neural Information Processing Systems (NIPS)},
  pages 1177--1184, 2008.

\bibitem{Manjunath2013}
Gopalakrishnan Manjunath and Herbert Jaeger.
\newblock Echo state property linked to an input: Exploring a fundamental
  characteristic of recurrent neural networks.
\newblock {\em Neural Computation}, 25(3):671--696, 2013.

\bibitem{Tani2021}
Takuya Isomura, Koji Kotani, Yasuhiko Jimbo, Tomokazu Yoshida, Karl Friston,
  Ryohei Kanzaki, and Ryuhei Kawai.
\newblock Experimental validation of the free-energy principle with in vitro
  neural networks.
\newblock {\em Nature Communications}, 14(1):4547, 2023.

\bibitem{Solms2019}
Mark Solms.
\newblock {The Hard Problem of Consciousness and the Free Energy Principle}.
\newblock {\em Frontiers in Psychology}, 9:2714, 2019.

\bibitem{Fujii2017}
Keisuke Fujii and Kohei Nakajima.
\newblock Harnessing disordered-ensemble quantum dynamics for machine learning.
\newblock {\em Phys. Rev. Appl.}, 8:024030, Aug 2017.

\bibitem{nagar2011combined}
Anima Nagar and Puneet Sharma.
\newblock Combined dynamics on hyperspaces.
\newblock {\em Topology Proceedings}, 38:399--410, 2011.

\bibitem{nagar2014reflections}
Joseph Auslander, Gernot Greschonig, and Anima Nagar.
\newblock Reflections on equicontinuity.
\newblock {\em Proceedings of the American Mathematical Society},
  142(9):3129--3137, 2014.

\bibitem{nagar2017dynamics}
Ethan Akin, Joseph Auslander, and Anima Nagar.
\newblock Dynamics of induced systems.
\newblock {\em Ergodic Theory and Dynamical Systems}, 37(7):2034--2059, 2017.

\bibitem{nagar2016variations}
Ethan Akin, Joseph Auslander, and Anima Nagar.
\newblock Variations on the concept of topological transitivity.
\newblock {\em Studia Mathematica}, 235, 02 2016.

\bibitem{nagar2022revisiting}
Anima Nagar.
\newblock Revisiting variations in topological transitivity.
\newblock {\em European Journal of Mathematics}, 8(1):369--387, 2019.

\bibitem{auslander1988minimal}
Joseph Auslander.
\newblock {\em Minimal Flows and Their Extensions}, volume 153 of {\em
  North-Holland Mathematics Studies}.
\newblock North-Holland, Amsterdam, 1988.

\bibitem{akin2003almost}
Ethan Akin and Joseph Auslander.
\newblock Almost periodic sets and subactions in topological dynamics.
\newblock {\em Proceedings of the American Mathematical Society},
  131:3059--3062, 01 2003.

\bibitem{HornJohnson2013}
R.~A. Horn and C.~R. Johnson.
\newblock {\em Matrix Analysis}.
\newblock Cambridge University Press, 2nd edition, 2012.

\bibitem{Kato1995}
T.~Kato.
\newblock {\em Perturbation Theory for Linear Operators}.
\newblock Springer-Verlag, reprint of the 2nd ed. edition, 1995.

\bibitem{TrefethenBau1997}
L.~N. Trefethen and D.~Bau III.
\newblock {\em Numerical Linear Algebra}.
\newblock SIAM, 1997.

\bibitem{Tao2012}
T.~Tao.
\newblock {\em Topics in Random Matrix Theory}.
\newblock American Mathematical Society, 2012.

\bibitem{Arnold1998}
L.~Arnold.
\newblock {\em Random Dynamical Systems}.
\newblock Springer, 1998.

\bibitem{OrtegaRheinboldt1970}
J.~M. Ortega and W.~C. Rheinboldt.
\newblock {\em Iterative Solution of Nonlinear Equations in Several Variables}.
\newblock Academic Press, 1970.

\bibitem{GolubVanLoan2013}
G.~H. Golub and C.~F.~Van Loan.
\newblock {\em Matrix Computations}.
\newblock Johns Hopkins University Press, 4th edition, 2013.

\bibitem{HastieTibshiraniWainwright2022}
T.~Hastie, R.~Tibshirani, and M.~Wainwright.
\newblock {\em Statistical Learning with Sparsity: The Lasso and
  Generalizations}.
\newblock CRC Press, 2nd edition, 2022.

\bibitem{Haykin2002}
S.~Haykin.
\newblock {\em Adaptive Filter Theory}.
\newblock Pearson, 4th edition, 2013.

\bibitem{BartlettMendelson2002}
P.~L. Bartlett and S.~Mendelson.
\newblock Rademacher and gaussian complexities: Risk bounds and structural
  results.
\newblock {\em Journal of Machine Learning Research}, 3:463--482, 2003.

\bibitem{Sauer1991}
T~Sauer, J~A Yorke, and M~Casdagli.
\newblock Embedology.
\newblock {\em Journal of Statistical Physics; (United States)}, 65:3-4, 11
  1991.

\bibitem{KloedenRasmussen2011}
P.~E. Kloeden and M.~Rasmussen.
\newblock {\em Nonautonomous Dynamical Systems}.
\newblock American Mathematical Society, 2011.

\bibitem{Rosenstein1993}
M.~T. Rosenstein, J.~J. Collins, and C.~J.~De Luca.
\newblock A practical method for calculating largest lyapunov exponents from
  small data sets.
\newblock {\em Physica D}, 65(1--2):117--134, 1993.

\bibitem{Wolf1985}
Alan Wolf, Jack Swift, Harry Swinney, and John Vastano.
\newblock Determining lyapunov exponents from a time series.
\newblock {\em Physica D: Nonlinear Phenomena}, 16:285--317, 07 1985.

\bibitem{Kantz2004}
Holger Kantz and Thomas Schreiber.
\newblock {\em Nonlinear Time Series Analysis}.
\newblock Cambridge University Press, 2 edition, 2003.

\bibitem{Triesch2005}
Jochen Triesch.
\newblock A gradient rule for the plasticity of a neuron’s intrinsic
  excitability.
\newblock In {\em Proceedings of the 15th International Conference on
  Artificial Neural Networks: Biological Inspirations - Volume Part I},
  ICANN'05, page 65–70, Berlin, Heidelberg, 2005. Springer-Verlag.

\bibitem{Boerlin2013}
M.~Boerlin, C.~K. Machens, and S.~Den{\`e}ve.
\newblock Predictive coding of dynamical variables in balanced spiking
  networks.
\newblock {\em PLoS Computational Biology}, 9(11):e1003258, 2013.

\bibitem{Sompolinsky1988}
H.~Sompolinsky, A.~Crisanti, and H.~J. Sommers.
\newblock Chaos in random neural networks.
\newblock {\em Physical Review Letters}, 61(3):259--262, 1988.

\bibitem{Savi2006}
Marcelo Savi.
\newblock {\em Nonlinear Dynamics and Chaos}, pages 93--117.
\newblock Springer, 06 2016.

\bibitem{Bergstra2012}
James Bergstra, R.~Bardenet, Balázs Kégl, and Y.~Bengio.
\newblock Algorithms for hyper-parameter optimization.
\newblock In {\em Proceedings of the 25th International Conference on Neural
  Information Processing Systems (NeurIPS)}, pages 2546--2554, 12 2011.

\bibitem{Shahriari2016}
B.~Shahriari, K.~Swersky, Z.~Wang, R.~P. Adams, and N.~de~Freitas.
\newblock Taking the human out of the loop: A review of bayesian optimization.
\newblock {\em Proceedings of the IEEE}, 104(1):148--175, 2016.

\bibitem{Feinberg2020}
Helen Victoria and Ganesh Maragatham.
\newblock Automatic tuning of hyperparameters using bayesian optimization.
\newblock {\em Evolving Systems}, 12, 03 2021.

\bibitem{ZhouDoyleGlover1996}
K.~Zhou, J.~C. Doyle, and K.~Glover.
\newblock {\em Robust and Optimal Control}.
\newblock Prentice Hall, 1996.

\end{thebibliography}

\appendix
\section{Appendix}

\paragraph{Spectral Theory for the Reservoir Matrix.}
A central object in reservoir computing is the \emph{reservoir weight matrix} \(\mathbf{W}_{\mathrm{res}}\in \mathbb{R}^{N\times N}\), sometimes termed the “adjacency matrix” when viewed as a weighted graph.  The \emph{spectrum} of \(\mathbf{W}_{\mathrm{res}}\) refers to the set of eigenvalues \(\{\lambda_i\}\subset \mathbb{C}\).  We denote the \emph{spectral radius} by
\begin{equation}
\rho(\mathbf{W}_{\mathrm{res}}) 
\;=\; \max_{\lambda\in\sigma(\mathbf{W}_{\mathrm{res}})} |\lambda|,
\end{equation}
where \(\sigma(\mathbf{W}_{\mathrm{res}})\) denotes the spectrum of eigenvalues.  A fundamental result from linear operator theory (see Theorem 5.6.9 in \cite{HornJohnson2013} and Chapter IV in \cite{Kato1995}) states that
\begin{equation}
\rho(\mathbf{W}_{\mathrm{res}}) 
\;=\;
\lim_{k\to \infty} \;\bigl\|\mathbf{W}_{\mathrm{res}}^k\bigr\|^{1/k}
\end{equation}
for any submultiplicative matrix norm \(\|\cdot\|\).  This asymptotic characterization, known as the \emph{Gelfand formula}, is especially relevant to reservoir computing when investigating conditions for stability or ESP.  In practice, one computes \(\rho(\mathbf{W}_{\mathrm{res}})\) either by direct eigen-decomposition (via standard numerical linear algebra methods such as QR-based algorithms) or by approximate methods like the power iteration if \(N\) is very large \cite{TrefethenBau1997}.  When \(\rho(\mathbf{W}_{\mathrm{res}})<1\), we expect the associated dynamical system to exhibit fading memory, provided the nonlinearity satisfies a Lipschitz criterion, as discussed in \S~\ref{thm:esp}.

\paragraph{Matrix Norms and Submultiplicativity.}
Many of the key Lipschitz and contractive properties hinge on choosing a suitable matrix norm.  A matrix norm \(\|\mathbf{A}\|\) on \(\mathbb{R}^{N\times N}\) is \emph{submultiplicative} if 
\(
\|\mathbf{A}\,\mathbf{B}\|\le \|\mathbf{A}\|\;\|\mathbf{B}\|\)
for all \(\mathbf{A},\mathbf{B}\).  Common examples include:
\begin{equation}
\|\mathbf{A}\|_2 
\;=\;
\sqrt{\rho\!\bigl(\mathbf{A}^T \mathbf{A}\bigr)},
\quad
\|\mathbf{A}\|_1 
\;=\; 
\max_{1\le j \le N}\sum_{i=1}^N |a_{ij}|,
\quad
\|\mathbf{A}\|_\infty 
\;=\; 
\max_{1\le i \le N}\sum_{j=1}^N |a_{ij}|.
\end{equation}
Each provides a different but consistent way to measure operator size \cite{HornJohnson2013}.  In reservoir computing, one frequently picks a norm aligned with the activation function’s domain.  For example, if \(\mathbf{x}\in \ell^\infty\), the matrix \(\ell^\infty\)-induced norm \(\|\mathbf{W}_{\mathrm{res}}\|_\infty\) conveniently bounds \(\|\mathbf{x}\|\infty\)-type quantities.

\paragraph{Perturbation Bounds and Stability.}
To quantify how small changes in \(\mathbf{W}_{\mathrm{res}}\) or the nonlinearity affect the spectral radius, standard results in perturbation theory can be invoked \cite{Kato1995}.  For instance, if \(\mathbf{E}\) is a small perturbation, one can often bound \(\rho(\mathbf{W}_{\mathrm{res}}+\mathbf{E})\) relative to \(\rho(\mathbf{W}_{\mathrm{res}})\) via inequalities that involve \(\|\mathbf{E}\|\).  This is pivotal in analyzing the structural stability of reservoir networks under parametric changes, including random initialization seeds and scaling factors.

\paragraph{Lipschitz Continuity in Nonlinear Mappings.}
For a componentwise nonlinear activation \(F:\mathbb{R}^N\to\mathbb{R}^N\), Lipschitz continuity means that for all \(\mathbf{z}_1,\mathbf{z}_2 \in \mathbb{R}^N\),
\begin{equation}
\|F(\mathbf{z}_1)\,-\,F(\mathbf{z}_2)\|
\;\le\; 
L\,\|\mathbf{z}_1 - \mathbf{z}_2\|.
\end{equation}
The smallest such \(L\) is the \emph{global Lipschitz constant}.  If \(F\) is differentiable, one often estimates \(L\) via the supremum of the operator norm of \(D F(\mathbf{z})\) over the relevant domain.  In echo state networks, combining Lipschitz continuity of \(F\) with the condition \(\rho(\mathbf{W}_{\mathrm{res}})<1/L\) under a chosen matrix norm guarantees a global contraction on state updates, thereby proving the ESP \cite{jaeger2001echo, Lukosevicius2012}.

\paragraph{Random Matrix Theory for Reservoir Initialization.}
Random matrices arise naturally when the reservoir weights \(\mathbf{W}_{\mathrm{res}}\) and \(\mathbf{W}_{\mathrm{in}}\) are drawn from a probability distribution with mean zero and specified variance.  Large deviation principles or asymptotic eigenvalue distributions (e.g., the Wigner semicircle law for certain symmetric ensembles) sometimes give insight into typical spectral properties \cite{Tao2012}.  In practice, one often rescales a random Gaussian or uniform matrix so that \(\rho(\mathbf{W}_{\mathrm{res}})\approx \alpha\), a desired spectral radius near unity.  This random matrix viewpoint explains why, even without training internal weights, the reservoir can exhibit a rich repertoire of high-dimensional dynamics \cite{Rodan2011}.  

\paragraph{Deterministic vs.\ Stochastic Dynamical Systems.}
From a dynamical systems perspective, echo state networks and liquid state machines can be analyzed using results from both deterministic mapping theory (e.g.\ discrete-time iterations in \(\mathbb{R}^N\)) and the theory of stochastic processes if noise is present \cite{Arnold1998}.  When inputs include random components, the reservoir states form a Markov chain in \(\mathbb{R}^N\).  Ergodic theorems then ensure that time averages of certain state functionals converge almost surely to ensemble averages.  This viewpoint is sometimes exploited to argue the existence of well-defined long-term statistical behaviors in the reservoir, particularly when investigating properties like fading memory or mixing.

\paragraph{Contraction Mappings and Banach Fixed-Point Theorem.}
The proofs of the Echo State Property and Fading Memory Property frequently rely on contraction mapping principles \cite{OrtegaRheinboldt1970}.  If an operator \(\mathcal{T}\) mapping states (or entire input streams) to new states is shown to be a strict contraction in an appropriate metric, then a unique fixed point must exist and be globally attractive.  This is one rigorous lens through which to view why repeated application of \(\mathbf{W}_{\mathrm{res}}\)- and \(F\)-based updates eventually ``forgets'' initial conditions, leading to a unique solution consistent with the input sequence.   For the Banach fixed-point theorem in the context of neural networks, see the discussion in \cite{OrtegaRheinboldt1970}.

\paragraph{State-Space and Phase-Space Geometry.}
Certain analyses of echo state networks adopt the geometric language of dynamical systems, with the \emph{reservoir state} \(\mathbf{x}\in \mathbb{R}^N\) viewed as a point in phase space.  Understanding \(\mathbf{W}_{\mathrm{res}}\) as defining a set of attractors or limit cycles in the absence of input, one can then analyze how external inputs drive trajectories through (or near) those attractors \cite{strogatz2018nonlinear}.  Of particular interest are stable manifolds (if the autonomous system were stable) and how the input modifies local dynamics.  Phase-space geometry also helps interpret advanced metrics (like attractor deviation in chaotic tasks) by revealing topological properties of orbits that the reservoir aims to reproduce.

\paragraph{Regularized Readout Training and Pseudoinverse Solutions.}
A typical echo state network trains only the output weights \(\mathbf{W}_{\mathrm{out}}\in\mathbb{R}^{d\times (N+1)}\) (including bias) to map reservoir states \(\mathbf{x}(t)\in \mathbb{R}^N\) to desired outputs \(\mathbf{y}(t)\in \mathbb{R}^d\).  One canonical method is \emph{ridge regression} (also referred to as Tikhonov regularization), which seeks to minimize
\begin{equation}
\mathcal{L}(\mathbf{W}_{\mathrm{out}}) 
\;=\;
\sum_{t=1}^T \bigl\|\mathbf{y}(t) - \mathbf{W}_{\mathrm{out}}\,[\,1;\,\mathbf{x}(t)\bigr\|^2 
\;+\;\beta\,\|\mathbf{W}_{\mathrm{out}}\|^2_F,
\end{equation}
where \(\beta>0\) is the regularization parameter and the notation \([\,1;\,\mathbf{x}(t)\bigr]\) indicates that the bias is included as an extra dimension in the state vector.  Assuming the reservoir states have been collected into a design matrix \(\mathbf{X}\in\mathbb{R}^{(N+1)\times T}\) and target outputs form \(\mathbf{Y}\in\mathbb{R}^{d\times T}\), the problem becomes
\begin{equation}
\min_{\mathbf{W}_{\mathrm{out}}} \;\bigl\|\mathbf{Y} - \mathbf{W}_{\mathrm{out}}\;\mathbf{X}\bigr\|_F^2 
\;+\;
\beta\,\|\mathbf{W}_{\mathrm{out}}\|_F^2.
\end{equation}
Standard linear algebra (see Chapter 5 of \cite{GolubVanLoan2013}) shows that the unique solution is given by
\begin{equation}
\mathbf{W}_{\mathrm{out}}
\;=\;
\mathbf{Y}\,\mathbf{X}^T\bigl(\mathbf{X}\,\mathbf{X}^T + \beta\,\mathbf{I}\bigr)^{-1},
\end{equation}
provided \(\mathbf{X}\,\mathbf{X}^T + \beta\,\mathbf{I}\) is invertible.  When \(\beta=0\), this degenerates to the Moore–Penrose pseudoinverse solution \(\mathbf{W}_{\mathrm{out}} = \mathbf{Y}\,\mathbf{X}^+\).  For typical echo state network sizes, one has \(T\gg N\), so \(\mathbf{X}\in\mathbb{R}^{(N+1)\times T}\) is “tall” in the time dimension, facilitating stable least-squares solutions.  In smaller data regimes or if \(\rho(\mathbf{W}_{\mathrm{res}})\approx 1\), ridge regularization remains essential to avoid ill-conditioning.  The parameter \(\beta\) is often chosen via cross-validation, balancing fidelity to the training data against the risk of overfitting.  One can also employ more advanced regularizers (like LASSO or elastic net) if one wishes to prune readout components and induce sparsity \cite{HastieTibshiraniWainwright2022}; however, these lead to non-closed-form solutions that require iterative solvers.

\paragraph{Recursive Least Squares (RLS) and Online Adaptation.}
Beyond the offline solution, echo state networks can be trained incrementally via \emph{recursive least squares} (RLS).  Given a stream of pairs \((\mathbf{x}(t), \mathbf{y}(t))\), one maintains an evolving estimate of \(\mathbf{W}_{\mathrm{out}}(t)\) and an inverse correlation matrix \(\mathbf{P}(t)\).  The well-known RLS recursions (cf.\ Section 9.2 of \cite{Haykin2002}) are
\begin{equation}
\mathbf{z}(t)
\;=\;
\mathbf{P}(t-1)\,\mathbf{x}(t),
\quad
g(t)
\;=\;
\frac{\mathbf{z}(t)}{1 + \mathbf{x}(t)^T\,\mathbf{z}(t)},
\quad
\mathbf{W}_{\mathrm{out}}(t)
\;=\;
\mathbf{W}_{\mathrm{out}}(t-1)
\;+\;
\bigl[\mathbf{y}(t) - \mathbf{W}_{\mathrm{out}}(t-1)\,\mathbf{x}(t)\bigr]\;g(t)^T,
\end{equation}
\begin{equation}
\mathbf{P}(t)
\;=\;
\mathbf{P}(t-1)
\;-\;
g(t)\,\mathbf{x}(t)^T\,\mathbf{P}(t-1).
\end{equation}
Such a scheme updates the readout weights and covariance matrix at each time step with \(\mathcal{O}(N^2)\) complexity, making it feasible for online learning.  One can include forgetting factors or regularization in the recursion, matching real-time or nonstationary data scenarios.  The RLS approach preserves the interpretability and convexity of a linear readout but accommodates streaming data, extending echo state networks to scenarios requiring ongoing adaptation (e.g.\ time-varying systems or continual learning).

\paragraph{Transient Washout and State Initialization.}
A subtle but important step in reservoir computing training is the \emph{washout} or \emph{initial transient} period.  Because different initial states \(\mathbf{x}(0)\) can linger, it is standard to discard the initial \(\tau_{\text{wash}}\) time steps of reservoir states when forming \(\mathbf{X}\) and \(\mathbf{Y}\) for regression \cite{jaeger2001echo, Lukosevicius2012}.  The choice of \(\tau_{\text{wash}}\) depends on the magnitude of \(\rho(\mathbf{W}_{\mathrm{res}})\) and input scaling; typically, one wants \(\tau_{\text{wash}}\) large enough so that any dependence on \(\mathbf{x}(0)\) is negligible.  Formally, for a contractive reservoir (Lipschitz activation, \(\rho(\mathbf{W}_{\mathrm{res}})<1/L\)), the difference between two states driven by the same input but different initial conditions decays exponentially, as established in the ESP proof.  One ensures that the reservoir trajectory after \(\tau_{\text{wash}}\) closely approximates the unique solution consistent with the input history, thereby enabling consistent regression across all training samples.  In real-time applications, one can either fix a standard washout window at the beginning of a new input sequence or forcibly reset the reservoir to a known state after each trial.

\paragraph{Stationarity Assumptions and Input Preprocessing.}
In many reservoir computing analyses, it is tacitly assumed that the input sequence \(\mathbf{u}(t)\) is \emph{stationary}, or at least piecewise stationary.  Stationarity implies stable statistical properties such as constant mean and covariance over time, aligning with the requirement that the reservoir’s contractive dynamic plus a bounded forcing yields a stable orbit in state space.  If \(\mathbf{u}(t)\) exhibits significant nonstationarity, such as abrupt changes in amplitude or distribution, the reservoir might need additional normalization or an adaptive strategy.  Preprocessing steps often include subtracting the empirical mean from \(\mathbf{u}(t)\) or scaling it to a certain range (e.g.\ \([-1,1]\) for \(\tanh\) activation).  Such transformations help keep the reservoir state within the regime where the Lipschitz assumption is not severely violated.  A rigorous approach sometimes uses piecewise analysis of the input distribution, bounding how changes affect \(\|\mathbf{W}_{\mathrm{in}}\,\mathbf{u}(t)\|\) and ensuring the matrix spectral constraint continues to hold in each piecewise segment.

\paragraph{Generalization Bounds and Capacity Metrics.}
Although reservoir computing typically emphasizes fast training and expressive dynamics over formal learning theory, a growing body of work seeks to establish \emph{generalization guarantees}.  One might draw on results from Rademacher complexity or covering numbers for linear readouts on top of random features, extending analyses akin to random kitchen-sink expansions in kernel methods \cite{Rahimi2008}.  The core idea is that the reservoir states \(\mathbf{x}(t)\) act like randomly projected features \(\phi(\mathbf{u}(t))\), and linear function classes on such features often have well-studied generalization properties \cite{BartlettMendelson2002}.  While these proofs can become intricate in the presence of temporal dependence, they still convey that for a reservoir of size \(N\), the capacity to fit sequences scales with \(\mathcal{O}(N)\) up to logarithmic factors, provided the input distribution and the random matrix distribution align with the assumptions of random feature theory.  In practice, these theorems underscore the importance of choosing \(N\) large enough to ensure with high probability that the random reservoir forms a sufficiently rich basis for the target time-series mapping.

\paragraph{Functional Approximation and Universal Approximators.}
Recent work has shown that echo state networks can approximate a broad class of time-invariant fading memory filters \cite{grigoryeva2018echo}.  This involves heavier functional-analytic machinery: one constructs a reservoir operator \(\Phi\) in a function space of inputs and shows density arguments in a relevant topology.  In simpler terms, polynomials or random features inside the reservoir, combined with a linear readout, can approximate many Volterra- or Wiener-type functional expansions \cite{Boyd1985}.  These mathematical results unify reservoir computing with classical system identification methods, clarifying why ESNs can capture a large range of input–output dynamics without fully trained recurrent weights.

\paragraph{Memory Capacity and Orthogonal Projections.}
A key theoretical concept in reservoir computing is the \emph{memory capacity} (MC), which quantifies how effectively a reservoir can reconstruct (or recall) past inputs at its readout \cite{yildiz2012re, White2004}.  In one popular formulation, one considers a scalar input signal \(\{u(t)\}\) and a reservoir of dimension \(N\).  The linear readout is tasked with approximating delayed versions \(u(t-\tau)\) using the current reservoir state \(\mathbf{x}(t)\).  Formally, one sets up a linear regression to fit:
\begin{equation}
\hat{u}_{\tau}(t) 
\;=\;
\mathbf{w}_{\tau}^{T}\,\mathbf{x}(t),
\end{equation}
where \(\mathbf{w}_{\tau}\) is chosen to minimize the mean-squared error between \(\hat{u}_{\tau}(t)\) and the true delayed signal \(u(t-\tau)\).  If one denotes the resulting minimum MSE by \(E_{\tau}\), the memory capacity is given by
\begin{equation}
\mathrm{MC} 
\;=\;
\sum_{\tau=1}^{\infty} \Bigl(1 - \tfrac{E_{\tau}}{\mathrm{Var}[u(t)]}\Bigr),
\end{equation}
though in practice one truncates at a finite delay \(\tau_{\max}\).  By analyzing the orthogonal projection of each delayed signal onto the subspace spanned by reservoir states, one can show that \(\mathrm{MC}\le N\) in typical cases if the reservoir is purely linear \cite{jaeger2004harnessing}.  Nonlinear activation can raise or lower effective MC depending on the interplay between the spectral radius and saturations.  This projection-based definition underpins why “near-critical” reservoirs (spectral radius close to 1) often exhibit higher MC: they preserve input perturbations over many timesteps, allowing the linear readout to glean signals from multiple delays.  In advanced settings, one considers multi-dimensional inputs \(\mathbf{u}(t)\) and generalized forms of MC that measure spatiotemporal recall.  

\paragraph{Time-Lag Embedding and Takens' Theorem for Recurrent States.}
When dealing with chaotic signals or high-dimensional data, time-lag embedding (commonly used in nonlinear time-series analysis \cite{takens1981detecting, Sauer1991}) becomes relevant.  Echo state networks conceptually generate a high-dimensional “embedding” of the recent input history, albeit via random recurrent connections instead of uniform time delays.  The classical Takens’ Theorem states that, under generic conditions, one can embed the attractor of a continuous-time dynamical system in a reconstructed phase space of dimension \(d\) using delayed coordinates \(\{y(t),\,y(t-\Delta),\dots,y(t-(d-1)\Delta)\}\).  In the reservoir setting, the state \(\mathbf{x}(t)\in \mathbb{R}^N\) often serves a role akin to such a delay embedding, but with a stochastic or random linear transform in between.  Although the discrete-time update in an ESN does not strictly match the classic differential-equation hypotheses of Takens, many arguments carry over because the reservoir’s large dimensionality and fading memory effectively track the input’s orbit over a sufficiently wide range of lags.  For complex signals, the synergy of random recurrent mixing and random linear projections can approximate a near-embedding of the input’s generating dynamics, as demonstrated by universal approximation theorems in \cite{grigoryeva2018echo}.  This viewpoint clarifies why an adequately sized reservoir can “learn” chaotic attractors via readouts: the underlying manifold of states is richly sampled by the internal recursion.

\paragraph{Non-Autonomous Dynamical Systems and Pullback Attractors.}
From the perspective of \emph{non-autonomous} dynamical systems \cite{KloedenRasmussen2011}, one views the echo state network as a time-dependent iteration
\begin{equation}
\mathbf{x}(t+1) 
\;=\;
F\bigl(\mathbf{W}_{\mathrm{res}}\,\mathbf{x}(t) + \mathbf{W}_{\mathrm{in}}\,\mathbf{u}(t+1) + \mathbf{b}\bigr),
\end{equation}
where the input \(\mathbf{u}(t)\) is treated as a forcing term.  In contrast to the autonomous case \(\mathbf{u}(t)\equiv 0\), the concept of a \emph{pullback attractor} arises: for each time \(t\), one considers solutions driven by the history of inputs \(\{\mathbf{u}(s): s \le t\}\).  If the system is sufficiently contractive, these pullback solutions converge to a unique \(\mathbf{x}^{*}(t)\), capturing the idea that the state is determined by the recent input sequence rather than by initial conditions \cite{Arnold1998}.  Proving uniqueness of this pullback attractor under a spectral radius condition \(\rho(\mathbf{W}_{\mathrm{res}})<1/L\) formalizes the ESP.  It also sets the stage for studying how changes in the forcing (the input) shift the attractor geometry over time, ensuring a stable “tracking” of the non-autonomous dynamics.  Such analysis clarifies the relationship between reservoir stability and input design, highlighting the synergy of contractive feedback loops and controlled forcing in shaping robust echoes of the input history.

\paragraph{Lyapunov Exponents in Discrete Maps and Chaos Indicators.}
Lyapunov exponents are a standard tool to quantify sensitivity to initial conditions and chaotic divergence in discrete maps \cite{ott2002chaos}.  For an iterated map \(\mathbf{x}(t+1)=G(\mathbf{x}(t))\), one tracks the evolution of small perturbations via the Jacobian \(D G(\mathbf{x}(t))\).  The maximal Lyapunov exponent is given by
\begin{equation}
\lambda_{\max} 
\;=\; 
\lim_{T\to\infty} \;\frac{1}{T}\;\sum_{t=0}^{T-1} 
\ln \bigl\|\,
D G\bigl(\mathbf{x}(t)\bigr) \,\mathbf{v}_t
\,\bigr\|
\end{equation}
for some tangent vector \(\mathbf{v}_t\).  When \(\lambda_{\max}>0\), small differences grow exponentially, indicative of chaos.  In reservoir computing, if one views the internal state recursion as \(G(\mathbf{x}(t),\mathbf{u}(t))\), it may or may not be chaotic depending on \(\rho(\mathbf{W}_{\mathrm{res}})\) and the nature of \(F\).  However, when using an ESN to predict a \emph{chaotic} external system, comparing the Lyapunov exponents of the target trajectory with those induced by the autonomous (or closed-loop) reservoir provides a strong measure of dynamical fidelity, as elaborated in \S~\ref{sec:metrics} and works such as \cite{pathak2018model}.  

\paragraph{Continuous vs.\ Discrete Reservoirs and Hybrid Formulations.}
Although the majority of echo state networks are built upon discrete-time updates, some variants or related reservoir models (e.g.\ continuous-time recurrent neural networks, Liquid State Machines) employ differential equation formalisms.  One might define
\begin{equation}
\frac{d\mathbf{x}(t)}{dt}
\;=\;
-\alpha\,\mathbf{x}(t) \;+\; F\bigl(\mathbf{W}_{\mathrm{res}}\,\mathbf{x}(t) + \mathbf{W}_{\mathrm{in}}\,\mathbf{u}(t)\bigr),
\end{equation}
leading to an ODE whose stability can be analyzed via the eigenvalues of \(\mathbf{W}_{\mathrm{res}}\) and Lipschitz bounds of \(F\).  The fundamental spectral requirements, such as \(\rho(\mathbf{W}_{\mathrm{res}})<\frac{1}{L}\), still emerge but under a continuous-time contraction argument \cite{maass2002real}.  Hybrid approaches exist as well, where the continuous reservoir is sampled at discrete intervals for readout or further processing, especially in neuromorphic and spiking implementations.

\paragraph{Rosenstein’s Algorithm for the Largest Lyapunov Exponent.}
A commonly used practical technique for estimating the largest Lyapunov exponent from time series data is due to Rosenstein, Collins, and De Luca \cite{Rosenstein1993}. The algorithm begins by reconstructing the phase space of a scalar time series \(\{y_n\}_{n=1}^T\) via time-delay embedding: one chooses an embedding dimension \(m\) and time delay \(\tau\), then forms vectors
\begin{equation}
\mathbf{X}_n 
\;=\;
\bigl(y_n,\; y_{n+\tau},\; \dots,\; y_{n+(m-1)\tau}\bigr)\in \mathbb{R}^m.
\end{equation}
Under mild conditions on the original dynamical system (related to Takens’ theorem), these vectors \(\mathbf{X}_n\) preserve the essential geometry of the system’s attractor. For each embedded point \(\mathbf{X}_n\), one locates its \emph{nearest neighbor} \(\mathbf{X}_{n'}\) such that \(n'\neq n\) but \(\|\mathbf{X}_n - \mathbf{X}_{n'}\|\) is minimized over the dataset. Denoting the small initial separation between these neighboring trajectories by
\begin{equation}
d_0(n) 
\;=\;
\bigl\|\mathbf{X}_n - \mathbf{X}_{n'}\bigr\|,
\end{equation}
Rosenstein’s method tracks the temporal evolution of that separation over integer time steps \(k\). If the system’s largest Lyapunov exponent is \(\lambda_{\max}\), then small perturbations grow on average according to
\begin{equation}
d_k(n) 
\;=\; 
\bigl\|\mathbf{X}_{n + k} - \mathbf{X}_{n' + k}\bigr\|
\;\approx\;
d_0(n)\,\exp\bigl(\lambda_{\max}\,k\,\Delta t\bigr),
\end{equation}
where \(\Delta t\) is the sampling time step. One takes a \emph{logarithm} and then averages over all valid pairs \(n,\,n'\) to obtain
\begin{equation}
\left\langle \ln d_k \right\rangle 
\;\approx\;
\ln d_0 \;+\; \lambda_{\max}\,k\,\Delta t.
\end{equation}
The slope of the linear region of \(\bigl\langle \ln d_k \bigr\rangle\) versus \(k\) yields an empirical estimate of \(\lambda_{\max}\). In practice, one must choose an embedding dimension \(m\) sufficiently large to unfold the attractor, but not so large as to exhaust the available data. One also restricts the time horizon \(k\) to remain within the interval where the growth of \(\ln d_k\) is nearly linear before nonlinear saturation or trajectory folding occur. The method’s chief advantage is its relative simplicity and robustness to modest amounts of data, while still capturing the exponential divergence rate of nearby trajectories.

\paragraph{Wolf’s Algorithm for Phase-Space Neighbor Tracking.}
An earlier and more elaborate procedure devised by Wolf, Swift, Swinney, and Vastano \cite{Wolf1985} focuses on tracking local divergence in a phase-space embedding through a step-by-step re-initialization scheme. As in Rosenstein’s method, one begins with an embedded time series \(\{\mathbf{X}_n\}\). To estimate the largest Lyapunov exponent, one chooses an initial reference point \(\mathbf{X}_{n_0}\) and locates its nearest neighbor \(\mathbf{X}_{n_1}\). Denote their separation by 
\begin{equation}
d_0 \;=\; \|\mathbf{X}_{n_0} - \mathbf{X}_{n_1}\|.
\end{equation}
The algorithm then evolves both trajectories forward in time. If \(\mathbf{X}_{n_0+k}\) and \(\mathbf{X}_{n_1+k}\) are the respective successors, one measures \(d_k = \|\mathbf{X}_{n_0 + k} - \mathbf{X}_{n_1 + k}\|\).  Provided \(d_k\) remains sufficiently small (staying within the locally linear regime of the attractor), its growth rate approximates
\begin{equation}
d_k \;\approx\; d_0 \,\exp\!\bigl(\lambda_{\max}\,k\,\Delta t\bigr).
\end{equation}
Once \(d_k\) grows beyond some threshold (or reaches a set time window), Wolf’s algorithm \emph{renormalizes} the separation back to a small distance by picking a new neighbor in the vicinity of \(\mathbf{X}_{n_0 + k}\).  Doing so repeatedly over a long trajectory yields a piecewise measurement of exponential divergence, from which an average is taken.  Formally, if the re-initializations happen at times \(k_1,k_2,\dots,k_r\), one obtains segment-wise expansions 
\begin{equation}
d_{k_j} \;=\; d_0 \,\exp\!\bigl(\lambda_{\max}\,(k_j - k_{j-1})\,\Delta t\bigr),
\end{equation}
and the final Lyapunov exponent estimate is an average of the slopes \(\frac{1}{k_r - k_0}\sum_j \ln(d_{k_j}/d_0)\).  This re-initialization step helps to follow diverging trajectories while keeping the local linear approximation valid, but it can require careful parameter tuning (thresholds for “too large” separation) and a well-sampled attractor.

\paragraph{Phase-Space Reconstruction Details and Practical Considerations.}
Both Wolf’s and Rosenstein’s algorithms rely on the general procedure of \emph{phase-space reconstruction} or \emph{delay embedding}.  Given a one-dimensional raw signal \(\{y(t)\}\), one constructs \(\mathbf{X}_n=(\,y_n,\dots,y_{n+(m-1)\tau}\,)\) in \(\mathbb{R}^m\).  One typically selects \(\tau\) based on partial mutual information or autocorrelation drop-off, and picks \(m\) until the attractor “unfolds” (for instance, via false nearest neighbor tests).  After reconstruction, the data \(\{\mathbf{X}_n\}_{n=1}^T\) is presumed to lie on or near a manifold of dimension \(D\).  When the true system dimension \(D\) is unknown, the practical approach is to increment \(m\) until the largest Lyapunov exponent estimate stabilizes \cite{Kantz2004}.  

Because these methods estimate exponent(s) from finite data, they can be sensitive to measurement noise, short data length, or choice of parameters (embedding dimension, time delays, neighbor search radii).  Nonetheless, if implemented carefully, Rosenstein’s and Wolf’s algorithms remain cornerstones for extracting \(\lambda_{\max}\) from experimental or numerical time series exhibiting chaos.  In the context of reservoir computing, such methods are valuable for validating whether an ESN has learned a chaotic attractor accurately, by comparing the exponent(s) of the reservoir-driven trajectory to that of the original system.  A near match in \(\lambda_{\max}\) implies the reservoir replicates the chaotic divergence rate—a strong indicator of dynamical fidelity \cite{pathak2017using}.

\paragraph{Intrinsic Plasticity and Homeostatic Adaptation.}
A notable extension to classical echo state networks is allowing the reservoir neurons to undergo mild, unsupervised parameter updates that maintain an optimal dynamic range.  In particular, \emph{intrinsic plasticity} (IP) modifies each neuron’s bias or slope to achieve a target firing-rate distribution, typically chosen to be exponential or Gaussian \cite{Triesch2005}.  Denoting the reservoir activation function by \(f(\cdot)\) and letting each neuron \(i\) have parameters \(\alpha_i,\;\beta_i\), one writes
\begin{equation}
x_i(t+1) 
\;=\;
f\bigl(\alpha_i\;\mathbf{w}_i \cdot \mathbf{x}(t) + \beta_i\bigr),
\end{equation}
and adjusts \(\alpha_i,\beta_i\) via gradient-based or entropy-based rules to match a specified activity distribution \(\rho(x)\).  One canonical update is:
\begin{equation}
\Delta \beta_i \;\propto\; \mu\,(1 - 2\,\langle x_i\rangle),  
\quad
\Delta \alpha_i \;\propto\; \mu\,\bigl(\tfrac{\langle x_i\rangle}{\sigma_i^2} - \tfrac{1}{x_i}\bigr),
\end{equation}
averaged over time, where \(\mu\) is a small learning rate \cite{schrauwen2007overview}.  This procedure locally reshapes the neuron’s input-output curve, effectively spreading the reservoir states across a broader portion of phase space.  From a mathematical standpoint, one may interpret IP as maintaining the reservoir’s state distribution within a high-entropy regime, akin to the “edge-of-chaos” ideal in random dynamical systems.  While IP does not alter the fixed recurrent weights \(\mathbf{W}_{\mathrm{res}}\) themselves, it systematically prevents saturations or dead neurons, thereby preserving expressivity and memory capacity over extended training periods.

\paragraph{Excitatory–Inhibitory Balance and Criticality.}
In neurobiologically inspired reservoir models, such as liquid state machines with spiking neurons, the notion of \emph{balanced} E/I connectivity (a ratio of excitatory to inhibitory synapses) plays a crucial role in achieving high computational capacity \cite{Boerlin2013}.  A network is said to be excitatory–inhibitory balanced if for each neuron, the net incoming synaptic input oscillates around zero—positive (excitatory) and negative (inhibitory) inputs effectively cancel out on average.  One can model this in a rate-based ESN by imposing that each row of \(\mathbf{W}_{\mathrm{res}}\) has a constrained fraction of negative entries and a constraint on the sum
\begin{equation}
\sum_{j} w_{ij} \;\approx\; 0
\quad\text{for each neuron }i,
\end{equation}
potentially with an additional scaling to keep \(\rho(\mathbf{W}_{\mathrm{res}})\) near \(1\).  Recent mathematical studies show that such balanced random networks often operate near a critical point between quiescent and chaotic regimes, akin to analyses of random neural circuits in \cite{Sompolinsky1988}.  Formally, one can derive conditions on the variance of positive and negative weights that yield a leading eigenvalue close to the imaginary axis, marking the onset of chaotic dynamics.  This balancing is highly relevant to large-scale memory properties: near-critical balanced reservoirs may exhibit extended correlation times, increasing their memory capacity and expressive power while still remaining stable enough to satisfy echo state criteria in practical settings.

\paragraph{Bifurcation Analysis of the Reservoir Dynamics.}
Another rigorous tool for studying stability and transitions in echo state networks is \emph{bifurcation analysis} of the map
\begin{equation}
\mathbf{x}(t+1) 
\;=\;
F\!\Big(\mathbf{W}_{\mathrm{res}}\,\mathbf{x}(t) + \mathbf{b}\Big),
\end{equation}
considered in the absence of input (\(\mathbf{u}(t)\equiv 0\)).  As parameters (e.g.\ spectral radius, biases, or leak rates) vary, one can detect critical thresholds at which the map transitions from a stable fixed point to oscillatory or chaotic regimes \cite{Savi2006}.  In principle, if \(\rho(\mathbf{W}_{\mathrm{res}})\) crosses certain values, the Jacobian \(D F(\mathbf{x}^*)\) at a fixed point \(\mathbf{x}^*\) may acquire eigenvalues with magnitude exceeding 1, indicating a loss of stability.  One can track changes in the number or nature of attractors by analyzing the map’s derivative structure and computing eigenvalues of \(\mathrm{Id} - \mathbf{W}_{\mathrm{res}} \cdot Df(\mathbf{x})\).  Although such a purely autonomous analysis does not fully capture input-driven dynamics, it still reveals the reservoir’s “intrinsic” dynamical tendencies and helps explain how operating near these bifurcation points can yield rich transients and high-dimensional state trajectories under mild input forcing.

\paragraph{Spiking Extensions: Integrate-and-Fire Reservoirs.}
While the original ESN uses continuous activations \(f(\cdot)\) such as \(\tanh\), a growing line of research employs \emph{spiking} neurons in the reservoir, yielding a class of models known as \emph{liquid state machines} (LSMs) \cite{maass2002real}.  Each neuron \(i\) in the reservoir obeys dynamics akin to
\begin{equation}
\dot{v}_i(t) 
\;=\;
-f_{\text{reset}}\bigl(v_i(t)\bigr)
\;+\;
\sum_{j} w_{ij}\,S_j(t),
\end{equation}
where \(v_i\) is the membrane potential, \(S_j(t)\) indicates spike train inputs from neuron \(j\), and \(f_{\text{reset}}\) enforces resets and refractory behavior upon spiking.  The matrix \(\mathbf{W}_{\mathrm{res}}\) connects these neurons with random excitatory and inhibitory synapses.  Formally, one still obtains a high-dimensional dynamical system, but now in the space of spike events rather than continuous rates.  \emph{Computationally}, the output readout integrates or filters the reservoir’s spike trains over time.  A rigorous analysis of LSMs typically invokes measure-theoretic arguments for counting or bounding spike events, or uses a continuous-time version of fading memory results when the network’s “leak” or membrane decay ensures old inputs vanish exponentially \cite{Buesing2010}.  The same concept of \(\rho(\mathbf{W}_{\mathrm{res}})\) can be extended by linearizing about a mean firing rate, revealing that if the effective recurrent coupling is too large, spiking activity can become pathologically synchronous or unbounded, thereby destroying reliable echo states.  By properly scaling weights and ensuring a critical balance of excitatory and inhibitory currents, one obtains a stable yet computationally expressive spiking reservoir akin to the standard ESN’s “edge-of-chaos” principle.

\paragraph{Hyperparameter Optimization and Bayesian Methods.}
Because reservoir computing performance can be sensitive to hyperparameters such as spectral radius, input scaling, leak rate, connectivity density, and regularization strength \(\beta\), recent approaches automate this selection via \emph{Bayesian optimization} or evolutionary strategies \cite{Bergstra2012}.  In Bayesian optimization, one treats the validation error as an unknown function of the hyperparameters, say
\begin{equation}
f(\alpha,\beta,\dots) \;=\;\mathrm{ValidationError}(\alpha,\beta,\dots),
\end{equation}
and fits a Gaussian process or another surrogate model to sample evaluations.  Subsequent queries pick hyperparameter sets that balance exploration (sampling uncharted regions) and exploitation (refining near the current optimum) \cite{Shahriari2016}.  From a rigorous standpoint, each sample involves training and validating a reservoir, so the function \(f\) is black-box but can be optimized with limited evaluations, which is crucial when reservoir sizes are large or data sets are expensive.  While these methods do not yield closed-form solutions, they systematically reduce the guesswork in reservoir design, leading to more robust or problem-specific configurations.  The literature shows consistent improvement over naive grid-search or manual tuning, especially for tasks requiring fine control of memory depth or nonlinearity \cite{Feinberg2020}.

\paragraph{Observability, Controllability, and the Reservoir Framework.}
In classical control theory, a linear time-invariant (LTI) system \(\dot{\mathbf{x}}(t) = \mathbf{A}\,\mathbf{x}(t) + \mathbf{B}\,\mathbf{u}(t),\;\mathbf{y}(t)=\mathbf{C}\,\mathbf{x}(t)\) is said to be \emph{observable} if knowledge of \(\mathbf{y}(\cdot)\) over a time interval suffices to recover the internal state \(\mathbf{x}(t)\), and \emph{controllable} if one can drive \(\mathbf{x}(t)\) from any initial configuration to any target via \(\mathbf{u}(t)\).  Although reservoir computing uses random, fixed recurrent weights \(\mathbf{W}_{\mathrm{res}}\) and focuses on training output (readout) weights, the same notions can be adapted to partial analyses of whether different input signals can “move” the reservoir state through its high-dimensional manifold, and whether the readout can “see” enough of that manifold to produce accurate output.  In discrete time, writing 
\begin{equation}
\mathbf{x}(t+1) 
=
\mathbf{W}_{\mathrm{res}}\,\mathbf{x}(t) \;+\; \mathbf{W}_{\mathrm{in}}\,\mathbf{u}(t+1),
\quad
\mathbf{y}(t) = \mathbf{W}_{\mathrm{out}}\,\mathbf{x}(t),
\end{equation}
one can attempt to verify conditions reminiscent of observability and controllability for pairs \((\mathbf{W}_{\mathrm{res}}, \mathbf{W}_{\mathrm{in}})\) and \((\mathbf{W}_{\mathrm{res}}, \mathbf{W}_{\mathrm{out}})\).  For instance, if the linear part is viewed in isolation (assuming \(\mathbf{x}(t)\) remains in a linear regime), one can examine the \(\mathcal{O}\)-matrix \(\begin{bmatrix} \mathbf{W}_{\mathrm{out}} \\ \mathbf{W}_{\mathrm{out}} \,\mathbf{W}_{\mathrm{res}} \\ \mathbf{W}_{\mathrm{out}} \,\mathbf{W}_{\mathrm{res}}^2 \\ \vdots \end{bmatrix}\) for rank considerations, paralleling standard LTI observability criteria \cite{ZhouDoyleGlover1996}.  In the strongly nonlinear setting, these arguments extend to \emph{differential} or \emph{empirical} observability analyses.  Such viewpoint clarifies why large, randomly initialized reservoirs often succeed in capturing complex temporal transformations: with high probability, random connectivity confers a generic near-full-rank condition, thereby yielding approximate controllability from input and approximate observability through a linear readout.

\paragraph{Gershgorin’s Theorem and Bounding the Reservoir Spectrum.}
A frequently employed practical bound for \(\rho(\mathbf{W}_{\mathrm{res}})\) relies on Gershgorin’s circle theorem \cite{HornJohnson2013}.  Each eigenvalue \(\lambda\) of \(\mathbf{W}_{\mathrm{res}}\in\mathbb{R}^{N\times N}\) lies in at least one Gershgorin disc
\begin{equation}
D\ \!\Bigl(w_{ii},\; R_i\Bigr),
\end{equation}
where \(w_{ii}\) is the diagonal entry and \(R_i = \sum_{j\neq i} |w_{ij}|\).  Thus, 
\begin{equation}
|\lambda - w_{ii}| \;\le\; R_i, \quad \forall\,i=1,\dots,N.
\end{equation}
If each row-sum of absolute off-diagonal entries is small, the discs remain close to the real axis and near zero, implying \(\rho(\mathbf{W}_{\mathrm{res}})\) is small.  This is often used in random initialization heuristics, e.g.\ choosing i.i.d.\ entries of \(\mathbf{W}_{\mathrm{res}}\) with mean zero, ensuring that the average row-sum \(\mathbb{E}[R_i]\) is controlled.  While Gershgorin’s theorem can be loose relative to exact eigenvalue computation, it provides a straightforward and computationally cheap way to check whether \(\rho(\mathbf{W}_{\mathrm{res}})\) is likely under an upper bound.  In reservoir practice, one might then apply a uniform scaling factor so that \(\rho(\mathbf{W}_{\mathrm{res}})\approx \alpha<1\), ensuring stability-like conditions for the Echo State Property.

\paragraph{Extended and Bidirectional Reservoirs.}
While most echo state networks are forward-driven (\(\mathbf{u}\) influences \(\mathbf{x}\), which in turn influences \(\mathbf{y}\)), some tasks benefit from allowing feedback from the output back into the reservoir, or from constructing “bidirectional” updates.  A generalized model might read
\begin{equation}
\mathbf{x}(t+1)
=
F\!\bigl(\mathbf{W}_{\mathrm{res}}\;\mathbf{x}(t)\;+\;\mathbf{W}_{\mathrm{in}}\;\mathbf{u}(t+1)\;+\;\mathbf{W}_{\mathrm{fb}}\;\mathbf{y}(t)\bigr),
\end{equation}
where \(\mathbf{W}_{\mathrm{fb}}\) is the feedback matrix \cite{jaeger2001echo}.  Studying the spectral properties of
\begin{equation}
\widetilde{\mathbf{W}}_{\mathrm{res}}
\;=\;
\begin{bmatrix}
\mathbf{W}_{\mathrm{res}} & \mathbf{W}_{\mathrm{fb}} \\
\mathbf{0} & \mathbf{0}
\end{bmatrix}
\end{equation}
in a larger augmented state space helps clarify how output feedback might push the system toward or away from stable echo states, depending on \(\rho\bigl(\widetilde{\mathbf{W}}_{\mathrm{res}}\bigr)\).  A rigorous approach demands analyzing the closed-loop or \emph{autonomous} modes that arise once the readout is fed back.  Some advanced works consider “bidirectional” architectures where the input, state, and output updates are all interdependent, effectively forming a larger, partially random dynamical system that still retains a contraction property if appropriately scaled.  This underscores how the spectral radius argument extends to augmented matrices, and how feedback gains must remain within certain bounds to preserve fading memory.

\paragraph{Polynomial and Volterra Expansions as Reservoir Features.}
From a functional approximation perspective, an ESN can be seen as implementing a wide class of \emph{Volterra-like} expansions of the input \cite{Boyd1985}, where nonlinear and delayed versions of \(\mathbf{u}(t)\) are generated by the recurrent dynamics and then linearly combined by \(\mathbf{W}_{\mathrm{out}}\).  A direct analogy is:
\begin{equation}
y(t)
\;\approx\;
\sum_{\ell=0}^D \;\sum_{k_1,\ldots,k_\ell} 
c_{k_1,\ldots,k_\ell}\,
\prod_{r=1}^\ell u\bigl(t - k_r\bigr),
\end{equation}
which is a polynomial expansion in delayed inputs.  The reservoir effectively “folds” these polynomials into random combinations inside \(\mathbf{x}(t)\).  Some authors show that a large enough reservoir with certain connectivity approximates such expansions up to an error that can be made arbitrarily small, proving universality for any finite-memory filter \cite{grigoryeva2018echo}.  Formally, one might compare the reservoir state \(\mathbf{x}(t)\) to the partial sums of a Volterra series, revealing how each dimension can store a product of delayed inputs.  This viewpoint can be made rigorous by bounding the projection errors and ensuring that for random \(\mathbf{W}_{\mathrm{res}}\), one obtains a near-orthogonal basis spanning the relevant function space.

\paragraph{Neural Dynamical Systems and Conceptor Theory.}
A more recent development in reservoir computing is the notion of \emph{conceptors}, introduced by Jaeger \cite{Jaeger2014}, to interpolate or switch among multiple dynamical patterns stored in a single ESN.  At the linear algebraic level, a conceptor \(\mathbf{C}\in \mathbb{R}^{N\times N}\) is a matrix that acts on the reservoir state \(\mathbf{x}(t)\) to filter or “gate” the dominant attractor modes.  One typical choice is a \emph{ridge regression} definition of \(\mathbf{C}\), solved by minimizing
\begin{equation}
\|\mathbf{x}(t) - \mathbf{C}\,\mathbf{x}(t)\|^2
\;+\;
\alpha\,\|\mathbf{C}\|^2_F,
\end{equation}
over time and subject to constraints ensuring \(\mathbf{C}\) is symmetric idempotent.  This leads to
\begin{equation}
\mathbf{C}
=
\bigl(\mathbf{R} + \alpha\,\mathbf{I}\bigr)^{-1}\,\mathbf{R},
\quad
\mathbf{R} 
=
\frac{1}{T}\sum_{t=1}^T \mathbf{x}(t)\,\mathbf{x}(t)^T.
\end{equation}
Mathematically, \(\mathbf{C}\) becomes a projection operator in \(\mathbb{R}^N\) that “selects” the subspace characteristic of a particular pattern.  In advanced use cases, multiple conceptors \(\{\mathbf{C}_1,\dots,\mathbf{C}_k\}\) can be combined or sequentially applied to morph from one attractor to another.  The underlying linear-algebraic formalism, coupled with the reservoir’s nonlinear embeddings, offers a powerful mechanism for controlling or storing multiple time-series patterns in a single ESN, bridging ideas from principal component analysis, projectors in Hilbert spaces, and state-space dynamical control.

\end{document}